\documentclass[letterpaper,12pt]{article}
\pagestyle{plain}

\usepackage{natbib} \bibpunct{(}{)}{;}{a}{}{,}
\usepackage{fullpage}
\usepackage{amsmath,amssymb,amsthm}
\usepackage{enumerate}
\usepackage{appendix}
\usepackage{graphicx}
\usepackage{caption}
\usepackage{subcaption}
\usepackage{color}
\usepackage{hyperref}
\hypersetup{
        colorlinks   = true,
        linkcolor    = blue,
        citecolor    = green
}
\usepackage{authblk}
\usepackage{algorithm}
\usepackage{algpseudocode}

\newcommand{\E}{\mathbb{E}}
\newcommand{\R}{\mathbb{R}}
\newcommand{\IR}{\R}
\newcommand{\IN}{\mathbb{N}}
\newcommand{\IP}{\mathbb{P}}
\newcommand{\IE}{\E}

\newcommand{\bbO}{\mathbb{O}}

\newcommand{\xbar}{\bar{x}}

\newcommand{\Bhat}{\hat{B}}
\newcommand{\Rhat}{\hat{R}}

\newcommand{\Vhat}{\hat{V}}
\newcommand{\What}{\hat{W}}
\newcommand{\Xhat}{\hat{X}}
\newcommand{\bhat}{\hat{b}}
\newcommand{\rhat}{\hat{r}}

\newcommand{\Rhatpara}{\Rhat_{\parallel}}

\newcommand{\Vhatpara}{\Vhat_{\parallel}}
\newcommand{\Whatpara}{\What_{\parallel}}

\newcommand{\Rhatperp}{\Rhat_{\perp}}

\newcommand{\Vhatperp}{\Vhat_{\perp}}
\newcommand{\Whatperp}{\What_{\perp}}

\newcommand{\Btilde}{\tilde{B}}

\newcommand{\Qtilde}{\tilde{Q}}
\newcommand{\Rtilde}{\widetilde{R}}
\newcommand{\Stilde}{\tilde{S}}
\newcommand{\Vtilde}{\widetilde{V}}
\newcommand{\Wtilde}{\widetilde{W}}

\newcommand{\Sigmatilde}{\widetilde{\Sigma}}

\newcommand{\btilde}{\tilde{b}}
\newcommand{\rtilde}{\tilde{r}}

\newcommand{\btrue}{b}

\newcommand{\regu}{\gamma} %Regularizer; were using r, but some collison.

\newcommand{\Rtildepara}{\Rtilde_{\parallel}}
\newcommand{\Vtildeperp}{\Vtilde_\perp}
\newcommand{\Vtildepara}{\Vtilde_{\parallel}}
\newcommand{\Wtildeperp}{\Wtilde_\perp}
\newcommand{\Wtildepara}{\Wtilde_{\parallel}}

\newcommand{\tti}{{2,\infty}}

\newcommand{\argmin}[1]{\underset{#1}{\operatorname{argmin}}}
\newcommand{\argmax}[1]{\underset{#1}{\operatorname{argmax}}}
\newcommand{\diag}{\operatorname{diag}}
\newcommand{\CCA}{\operatorname{CCA}}
\newcommand{\CCAregu}{\CCA_{\regu}}
\newcommand{\CCAdagger}{\CCA_{\dagger}}

\newcommand{\RDPG}{\operatorname{RDPG}}
\newcommand{\ridge}{\operatorname{ridge}}

\newcommand{\gLASSO}{\operatorname{gLASSO}}
\newcommand{\rLASSO}{\operatorname{rLASSO}}
\newcommand{\trace}{\operatorname{trace}}
\newcommand{\Bern}{\operatorname{Bern}}
\newcommand{\Dirichlet}{\operatorname{Dirichlet}}
\newcommand{\Var}{\operatorname{Var}}
\newcommand{\Cov}{\operatorname{Cov}}

\newcommand{\calB}{\mathcal{B}}
\newcommand{\calN}{\mathcal{N}}
\newcommand{\calT}{\mathcal{T}}

\newcommand{\Op}[1]{O_P\!\left( #1 \right)}
\newcommand{\op}[1]{o_P\!\left( #1 \right)}
\newcommand{\Omegap}[1]{\Omega_P\!\left( #1 \right)}
\newcommand{\Thetap}[1]{\Theta_P\!\left( #1 \right)}

\newcommand{\ttiRate}{\xi}
\newcommand{\spectralRate}{\eta}
\newcommand{\subspaceRate}{\zeta}

\newtheorem{theorem}{Theorem}[section]
\newtheorem{lemma}[theorem]{Lemma}
\newtheorem{corollary}[theorem]{Corollary}
\newtheorem{remark}[theorem]{Remark}
\newtheorem{definition}[theorem]{Definition}
\newtheorem{assumption}{Assumption}

\newcommand{\lassoreg}{\alpha} % symbol to use for the regularization param.
\newcommand{\netnoise}{N} % symbol to use for the noise in the network on covariates regression.
\newcommand{\covarcov}{q} % symbol to use for covariance of the covariates in the simulation study
\newcommand{\extradeltarate}{\omega_n}

\begin{document}

\title{Testing for correlation between network structure and high-dimensional node covariates}
\author[1]{Alexander Fuchs-Krei{\ss}\textsuperscript{*}}
\author[2]{Keith Levin\textsuperscript{*}}

\affil[1]{\small Leipzig University}
\affil[2]{\small University of Wisconsin--Madison}

\let\svthefootnote\thefootnote
\let\thefootnote\relax\footnotetext{* Equal contribution; author order was determined by coin flip.}
\let\thefootnote\svthefootnote

\maketitle

\begin{abstract}
In many application domains, networks are observed with node-level features.
In such settings, a common problem is to assess whether or not nodal covariates are correlated with the network structure itself.
Here, we present four novel methods for addressing this problem.
Two of these are based on a linear model relating node-level covariates to latent node-level variables that drive network structure.
The other two are based on applying canonical correlation analysis to the node features and network structure, avoiding the linear modeling assumptions.
We provide theoretical guarantees for all four methods when the observed network is generated according to a low-rank latent space model endowed with node-level covariates, which we allow to be high-dimensional.
Our methods are computationally cheaper and require fewer modeling assumptions than previous approaches to network dependency testing.
We demonstrate and compare the performance of our novel methods on both simulated and real-world data.
\end{abstract}

\section{Introduction} \label{sec:intro}
In a vast range of scientific disciplines, data takes the form of collections of interacting entities that are naturally described by networks \citep{Newman2018}.
For example, social media data typically comprise users, pairs of whom engage in friendship or following relations \citep{Golbeck2013,McCArmJoh2013}.
Co-authorship and collaboration networks describe which pairs of researchers work together \citep{JiJinKeLi2022}.
In neuroscience, networks arise in the form of connectome data, in which nodes correspond to brain regions or individual neurons and edges encode strength of connections between brain regions or formation of synapses between neurons \citep{Sporns2012,SotZal2019,mouseBrain,insectBrain}.
Networks arise in bioinformatics and omics in the form of interaction networks, in which the nodes correspond to genes or proteins, and edges encode co-expression or involvement in the same biological processes \citep{Alon2003,HakPinRobLov2008,BarGulLos2011,geneCoexp}.
Network data is common in economics as a way to describe trade among countries \citep{Chaney2014,SetZadBah2022,SosAreTor2024} or correlation structure among asset prices \citep{EllGolJac2014,Acemoglu2015,SosBet2023}.

In many settings such as those described above, networks are observed along with node-level covariates that may or may not be related to network structure.
In social media data, user-level features are typically available in the form of user post histories or basic demographic and background information.
Nodes in co-authorship networks are typically endowed with biographical information, current or previous institutional affiliations, and authors' research interests.
In connectome data, brain regions may be accompanied by cortical thickness measurements \citep[see, e.g.,][]{cortical} or transcriptome data \citep{brainTranscriptome}.
Nodes in omics interaction networks may be accompanied by annotations or known auxiliary data about what traits or processes a gene or protein is implicated in \citep{MKSCOFB02}.
Trade networks may have node covariates in the form of country-level data such as GDP, demographic makeup, geographic information or membership in treaty organizations.

When such node-level data is available, a natural question concerns whether or not node covariates are associated with network structure.
These covariates may drive network structure, as in social networks in which users who attend the same college or who share hobbies are more likely to form friendships.
In the other direction, there are settings where network structure may drive covariates, such as in the presence of contagion.
A fundamental task for network researchers is to identify which, if any, node-level features are predictive of network structure or vice versa.

Let us consider a specific example: In a social media setting, suppose that we observe which users are connected via an undirected friendship relation.
In addition to the network structure, suppose that we observe many features for each user (e.g., age, gender, interests, and so on).
When modeling this network and its relation to node features, if we care about interaction terms or we are unsure about how features relate to network structure (e.g., a linear or non-linear relationship), we might want to include higher-order interaction terms and series expansions of node-level attributes.
Such a feature vector can quickly grow to intractable size.
For social media analysis, it is clearly of interest to know to what extent the network constitutes new information: if node features explain the network structure well, we can potentially ignore the network and restrict our attention to node features when performing downstream tasks such as node classification.
If the covariates are predictive of network structure, this may also help us in link prediction tasks or to understand a new vertex for which we only observe the features.
On the other hand, if the network is not well explained by the features, the network structure constistutes additional information that must be taken into account in modeling and analysis.

Of course, beyond the prediction setting just sketched, the relation between network and node features may also be of scientific interest in and of itself.
In the same social media setting, for example, it might be of interest to social scientists to understand whether connections between social media accounts are related to, for example, the topics or pages that users engage with.
In protein-protein interaction data, scientists might be interested in whether or not interactions of certain proteins are related to their (known) functions or other annotation information.
In studying trade networks, it may be of interest to understand which country-level features are related to trade or other dyadic relations among nations.

Abstracting the above examples, our goal in this work is to assess whether or not observed node-level features explain network structure (or vice versa).
We approach this problem by assuming that the observed network structure is driven by low-dimensional latent variables associated to the nodes.
If these latent variables were known, we could directly assess whether or not they are associated with the observed node-level covariates.
Of course, in practice, latent network structure is not observed and must be estimated.
A second challenge arises from the possibility that node-level covariates may be high-dimensional, complicating the process of assessing which, if any, of them are predictive for the network structure.
Handling both of these complications is the main technical problem addressed in this paper.

We approach the network dependency testing problem by supposing that the network is generated via a random dot product graph \citep[RDPG;][]{AthFisLevLyzParQinSusTanVogPri2018}, a latent space network model in which latent node-level vectors, termed {\em latent positions}, drive network structure.
The RDPG includes the widely-used stochastic blockmodel and its variants \citep[e.g.,][]{HolLasLei1983,DCSBM,MMSBM,GaoMaZhaZho2018,JinKeLuo2024} as special cases, and is the restriction of the broader class of graphons to the case of finite spectra \citep[see, e.g.,][for details]{Lei2021,Lovasz2012}.
We then aim to ask how the observed features relate to the unobserved latent positions.
To this end, we suggest two approaches, one model-based and the other model-free.
In the model-based approach, we relate the features and the latent positions through a high-dimensional linear model (see Section~\ref{sec:setup}), using ridge and LASSO regularization to deal with high-dimensional node covariates.
In the model-free approach, we do not make any explicit modeling assumptions on the relation between latent positions and node covariates, and instead use canonical correlation analysis (CCA) between the node features and the latent positions to detect potential dependency.
While we focus in the present work on the RDPG, we anticipate that our results, particularly those related to our CCA-based methods, can be extended to a broader class of node-exchangeable network models, such as those in \cite{HofRafHan2002,Lovasz2012,Lei2021}.

\paragraph{Contribution}
This paper contributes to the study of network dependency testing.
We introduce four basic methods for assessing whether network structure is correlated with node-level covariates, under the assumption that the network is generated from a low-rank latent space model.
Two of these methods are based on linear regression (see Section~\ref{subsec:mod_approach}).
The other two are based on CCA (see Section~\ref{subsec:cca}).
Our methods are general enough to incorporate high-dimensional node-level features, either via regularization or via high-dimensional or sparse CCA methods, and since they rely only on matrix-vector products and leading singular values, they are computationally efficient compared to previous methods for this task.

\subsection{Prior Work}

Incorporating node-level covariates into network models has long been of interest to the statistical network analysis community.
A prominent line of work in this direction concerns network regression, in which the aim is to use network structure, be it latent or observed, to predict node-level respones.
For example, \cite{AchAgtTroParPri2023} % https://arxiv.org/pdf/2305.02473, https://appliednetsci.springeropen.com/articles/10.1007/s41109-023-00598-8
posit a network model in which the latent positions of an RDPG lie on a manifold.
The estimated latent positions are used to predict node-level responses.
Similar methods aimed at the network regression setting have been pursued previously in the literature \citep[see, e.g.,][and references therein]{LiLevZhu2019,HayLev2024,WanPowSwePau2024,HayFreLev2025}.
Unlike the present setting, these models make the assumption that the node-level covariates are responses in a regression and are thus inherently low-dimensional, obviating the possibility of applying these methods to the high-dimensional setting without substantial modification.

Much recent work in network regression has focused on causal inference \citep[see, e.g.,][]{McFSha2023,ChaPau2024,NatWarPau2025,HayFreLev2025} wherein we observe responses at each of the vertices and aim to assess causal effects of node- or network-level treatment.
We stress that while the present work is related to network regression, we do not aim to answer causal questions here, and our methods are largely agnostic as to the direction of causal influence between the node-level covariates and the network structure.
Similarly, while recent work has explored the identifiability issues that arise when both contagion and homophily are present in networks \citep{ShaTho2011}, we are not concerned here with determining which of these two dynamics are at play.
Rather, our goal in the present work is only to test whether or not node-level covariates are associated with network structure, rather than determining which drives the other. 

Many methods have been developed over the past fifteen years aiming to leverage node covariates to improve estimation of latent network structure such as community memberships, latent positions or graphon structure \citep{XuKeWanCheChe2012,YanMcALes2013,ZhaLevZhu2016,BinVogRoh2017,SuWonLee2020,ChaOlhWol2022,JamYuaGayArr2024}.
Since they are focused on estimation of latent network structure, these methods do not directly yield a way to assess whether node features are predictive of (or predicted by) network structure.
Indeed, most of these methods make the implicit assumption that some or all observed node covariates are related to latent network structure.
A related line of work relates latent variables to observed node covariates via a parametric model \citep{PerWol2013,YanJiaFieLen2018,WanPauBoe2023,LiXuZhu2024,ZhaNiu2025}.
As a side effect of their strong modeling assumptions, it is possible that these methods can be modified to test for whether or not covariates are related to latent network structure, either by leveraging asymptotic distributional results \citep{PerWol2013,YanJiaFieLen2018,LiXuZhu2024} or via computationally intensive Bayesian inference \citep[e.g.,][]{WanPauBoe2023,ZhaNiu2025}.
To the best of our knowledge, however, none of the above-cited papers apply their methods to network dependency testing.

\cite{ZhaXuZhu2022} proposed a model in which node-level latent variables drive both network structure and the observed node covariates via two different generalized linear models (GLMs).
They developed a method for estimating the latent variables and the GLM parameters and demonstrated the utility of their model in link prediction for missing entries of the adjacency matrix.
Their proposed model is similar to ours in its use of node-level latent variables, but the presence of two models, one for the network and one for the node covariates, renders their approach ill-suited to the task of directly testing dependence between node features and network structure.

Most similar to the present work,
\cite{FosHof2015} %https://www.tandfonline.com/doi/full/10.1080/01621459.2015.1008697#d1e93
considered network dependency testing under a Hoff-style latent space model \citep{HofRafHan2002,Hoff2005}.
They consider a setting in which node-level latent positions and observed covariates are jointly normally distributed.
Under the null hypothesis, the covariance between the latent positions and observed covariates is zero.
Our work follows a similar intuition, albeit under a different, much broader class of models (see Sections~\ref{subsec:mod_approach} and~\ref{subsec:cca} below).
Our methods apply under much weaker assumptions, relying on permutation tests rather than distributional assumptions, and do not require computationally intensive sampling methods. Moreover, we allow for high-dimensional covariates and provide theoretical guarantees even for methods that use estimated latent positions rather than the true latent positions in a na\"{i}ve plug-in approach.

The other previous work closest to the methods presented here is due to \cite{LeeShePriVog2019}, who proposed to assess network dependency by applying a diffusion maps embedding to the observed network and testing for dependency between these embeddings and the node covariates using distance correlation \citep{SzeRiz2013,SzeRiz2014}.
Among the key assumptions in their work is that the network is generated according to a node-exchangeable model \citep{Aldous1981,Lovasz2012,OrbRoy2015}.
As such, our assumption in the present work that the network be generated from a random dot product graph is only a slight restriction of this assumption \citep[i.e., the equivalent of imposing spectral decay conditions on the network-generating graphon; see][]{Lei2021}.
Our CCA-based methods, introduced in Section~\ref{subsec:cca} below, make similar modeling assumptions to those in \cite{LeeShePriVog2019}, but we avoid the computational and analytical complexity introduced by choosing the diffusion map parameter $t$ and constructing the distance correlation statistic, which requires computing all pairwise distances among vertices' embeddings and node covariates.

Elsewhere in the literature, there is also an interest in studying the relation between network structure and edge-level covariates, such as in \cite{CBC24}.
In that work, the network latent positions are used to capture the structure that is {\em not} explained by the edge-level covariates.
In a related line of work, \cite{MMHCAP22} considered a block structure as residual randomness after controlling for assortativity derived from a categorical vertex covariate.
That is, on top of the block structure, two vertices are more likely to form an edge when they share the same covariates.
Broadly similar ideas were pursued in \cite{LatRobOua2018} under the graphon, motivated by goodness-of-fit testing. 
While in principle these modelling strategies can be used to develop a test for the assortative impact of a single categorical covariate in a block model, their interest lies in community detection, and dependency testing is not pursued.

More broadly, the question of whether and when node-level attributes are helpful for inference has been explored in other areas.
For example, \cite{LevPriLyz2020} characterized the conditions under which node-level covariates are or are not beneficial in the context of vertex nomination problems \citep{MarPriCop2011,Coppersmith2014,FisLyzPaoChePri2015}.
The importance of incorporating node covariates has also been highlighted in the related graph matching literature \citep[see, e.g.,][]{LiJohSusPriLyz2024}.

\subsection{Roadmap and Notation}

The remainder of the paper is organized as follows:
In Section~\ref{sec:setup}, we give a detailed description of the network dependency problem and our proposed methods.
In Section~\ref{sec:results}, we present our main theoretical results pertaining to these novel methods.
Section~\ref{sec:expts} contains experiments demonstrating the performance of our methods on both simulated and real-world data.
We conclude with a brief summary and discussion of future research questions in Section~\ref{sec:conclusion}.

Before proceeding to present our methods, we pause to establish notation.
We write $[n]=\{1,2,\dots,n\}$ to denote the integers $1$ through $n$.
We write $I$ for the identity matrix and $J$ for the matrix of all ones, with the sizes of these matrices clear from the context.
For any matrix $M\in\IR^{m\times n}$, we denote by $M_{ik}$ its $(i,k)$-th entry, by $M_{\cdot k}$ the $k$-th column of $M$, and by $M_{i\cdot}$ the $i$-th row of $M$.
We write $M_i=M_{i\cdot}^\top $ to denote the $i$-th row of $M$ taken as a column vector.
For a vector $a\in\IR^p$ and a set of indices $S\subseteq[p]$,  we write $a_S\in\IR^p$ to denote a vector with $(a_S)_j=0$ for $j\notin S$ and $(a_S)_j=a_j$ for $j\in A$.
We let $\|a\|_p=\left( \sum_{i=1}^n|a_i|^p \right)^{1/p}$ denote the $p$-norm of vector $a$.
In the case of the Euclidean norm, $p=2$, we omit the index: $\|a\|=\|a\|_2$.
Using this notation, we denote by $\|M\|_{2,\infty}=\max_{i\in[m]}\|M_i\|$ the maximum of the Euclidean norms of the rows of $M$, better known as the $(\tti)$-norm \citep{CapTanPri2019}.
We write $\|M\|_F$ to denote the Frobenius norm of $M$ and $\|M\|$ to denote its spectral norm.
We use $\bbO_d$ to denote the space of all $d$-by-$d$ orthogonal matrices.
We make standard use of Landau notation for asymptotics, writing $f(n) = O( g(n) )$ to denote that $f(n)/g(n)$ is bounded by a constant as $n \rightarrow \infty$, $f(n) = o( g(n) )$ to denote that $f(n)/g(n) \rightarrow 0$ as $n\rightarrow \infty$ and $f(n) = \Omega( g(n) )$ to denote that $g(n) = O( f(n) )$.
We write $f(n) = \Theta( g(n) )$ if both $f(n) = O( g(n) )$ and $f(n) = \Omega( g(n) )$.
We also use standard probabilistic analogues of these.
For example, for a sequence of random variables $Y_n$, we write $Y_n = \op{ 1 }$ to indicate that $Y_n$ converges to zero in probability.

\section{Setup and Methodology} \label{sec:setup}

We assume that our data takes the form of a network on $n$ vertices with adjacency matrix $A \in \IR^{n \times n}$, in which each vertex $i=1,2,\dots,n$ has an associated feature vector $Z_i \in \IR^p$.
We will allow throughout that $p$ depends on $n$ (possibly even $p>n$).
Exact restrictions on their relation are provided in the individual theorems.
The network may be either binary, in which case all entries of $A$ are in $\{0,1\}$, or weighted, in which case the entries of $A$ are elements of $\IR$, typically but not necessarily non-negative.
For the sake of simplicity, we assume that the observed network is undirected, so that $A = A^\top$, though most of our results presented below can be extended straightforwardly to the case of a directed network.
We make no assumption about the presence or absence of self-edges (i.e., the diagonal entries of $A$), as the diagonal elements of $A$ have no influence on the asymptotic behavior of the estimators that we consider below.

We model the observed network as coming from a random dot product graph \citep[RDPG;][]{AthFisLevLyzParQinSusTanVogPri2018}, a low-rank latent space network model in which each vertex $i \in [n]$ has an associated {\em latent position} $X_i \in \IR^d$, drawn i.i.d.~according to some distribution $F$ on $\IR^d$.
Conditional on these latent positions, the edges of the network are generated independently in such a way that
\begin{equation} \label{eq:simple_RDPG}
\left\{ A_{ij} - X_i^\top X_j : 1 \le i < j \le n \right\}
\end{equation}
is a collection of independent mean-zero random variables.
When this is the case, we write
\begin{equation*}
(A,X) \sim \RDPG( F,n ),
\end{equation*}
where we have collected the latent positions $X_1,X_2,\dots,X_n \in \IR^d$ in the rows of $X \in \IR^{n \times d}$ for notational simplicity.
Most typically under the RDPG \citep[see, e.g.,][]{SusTanFisPri2012}, $F$ is such that the latent positions obey $0 \le X_i^\top X_j \le 1$, and we take $A_{ij} \sim \Bern( X_i^\top X_j )$.
We stress again that the setting considered here is more general: we require only that the elements of $A-XX^\top$ be independent (up to symmetry) and mean zero conditional on $X$.
We do not necessarily make any particular distributional assumptions on the edges beyond standard moment decay conditions (see Section~\ref{sec:results} for further discussion of this point).
We assume throughout that the dimension $d$ of the latent space is constant with respect to $n$, though we expect that many of our results can be extended to the case where $d$ is growing suitably slowly with $n$ \citep[see, e.g., the bounds in][]{LLL22}. 
Finally, while we assume here that the observed network is distributed as an RDPG, we anticipate that our results can be extended to related models, namely any suitably-structured graphon such as the generalised RDPG \citep{GRDPG} or the graph root distribution \citep{Lei2021}, albeit at the expense of more extensive bookkeeping.

To model association (or lack thereof) between node features and network structure, we assume that the latent positions and the node-level features are drawn i.i.d.~according to a joint distribution on $\IR^d \times \IR^p$.
That is, we assume that $\{ (X_i,Z_i) : i=1,2,\dots, n \}$ is a collection of i.i.d.~random variables.
If the covariance $\Cov( X_i, Z_i ) \in \IR^{d \times p}$ is non-zero, this indicates that node-level features and (latent) network structure are related.
For ease of notation, we will always assume that $Z_i$ contains an intercept term.
Just as we collect the latent positions into the rows of $X \in \IR^{n \times d}$, we collect the node attributes into the rows of $Z \in \IR^{n \times p}$.
We remind the reader that our goal in this work is simply to test association between network structure and node-level features.
As discussed in Section~\ref{sec:intro}, we do not wish to make any assumptions in the present work about the direction of causal influence between the features and the latent positions.
Rather, our only assumption is that the network (as encoded by $A$) and the node-level features $Z_1,Z_2,\dots,Z_n$ are independent conditional on $X$.

\begin{remark} \label{rem:ZfromA}
In many settings, it may be appealing to allow the node covariates to depend directly on $A$, rather than on the latent network structure encoded by $X$.
For example, the feature vector $Z_i$ may include the degree $d_i = \sum_j A_{ij}$ of vertex $i \in [n]$.
Alternatively, we may wish to model node-level features as arising from a contagion process, whereby the features $Z_i$ of vertex $i$ depend in some way on the average features of its neighbors, e.g., $Z_i=\sum_j A_{ij} Z_j / d_i$.
Under the RDPG model assumed above, we anticipate that many of these settings can be treated by the methods that we develop below.
For example, the tools developed and deployed in \cite{HayLev2024} and \cite{HayFreLev2025} show that we can often treat $A$ and $X X^\top$ as (approximately) interchangeable, a property that we use frequently in the sequel.
Unfortunately, these settings where features are allowed to depend directly on $A$ (rather than relating to $A$ only via $X$) require far more bookkeeping and analysis.
Thus, for the sake of space and simplicity, we leave this interesting setting for future work.
\end{remark}

\begin{remark} \label{rem:rotnonID}
It is well known \citep[see, e.g.,][]{AthFisLevLyzParQinSusTanVogPri2018}, and can be seen directly from Equation~\eqref{eq:simple_RDPG}, that for any orthogonal matrix  %$Q\in\IR^{d\times d}$
$Q\in\bbO_d$, the matrix of latent positions $XQ$ yields the same distribution over networks as does $X$.
Thus, the latent positions $X_1,X_2,\dots,X_n$ are identifiable only up to some unknown orthogonal transformation.
Interestingly, with minor exception in Section~\ref{subsec:mod_approach} below, this will be of no consequence in our results to follow, as this non-identifiability will typically ``cancel out''.
\end{remark}

\subsection{Model-Based Approach} \label{subsec:mod_approach}

In the first of our two broad approaches, we assume that the latent positions $X$ and the features $Z$ are related according to
\begin{equation} \label{eq:model}
X=ZB+E,
\end{equation}
where $B \in \IR^{p \times d}$ is an unknown matrix of coefficients, $Z\in\IR^{n\times p}$ is the matrix of features with independent rows,
$E\in\IR^{n\times d}$ is a noise matrix with independent mean-zero rows, generated independently of %$B$ and
 $Z$.
In other words, for $k\in[d]$, we assume that the $k$-th latent dimension $X_{\cdot k}$ is related to the features $Z$ via a linear model with coefficient vector $B_{\cdot k}$.
We stress that in this model, we assume independent latent positions (inherited from the independence structure of $Z$ and $E$), but this does not preclude the fact that edges are independent only {\em conditionally} on $X$.
Moreover, the columns of $Z$, $E$, and $X$ may be dependent.
Our goal is to estimate $B$, which will in turn tell us which features (i.e., columns of $Z$), if any, explain the network structure encoded by $X$.
In particular, non-zero rows of $B$ correspond to columns of $Z$ that are informative about network structure.
Said another way, for a vertex $i\in[n]$, if for dimension $k \in [p]$ the row $B_{k,\cdot}^\top \in \IR^d$ is equal to zero, then this indicates that the $k$-th feature $Z_{ik}$ is irrelevant to explaining the latent positions.
That is, feature $k \in [p]$, is unrelated to the network structure.
Similarly, if a column $B_{\cdot k} \in \IR^p$ for $k\in[d]$ is equal to zero, this indicates that the $k$-th coefficient of the latent position $X_{ik}$ is not explained by any of the observed node covariates.
Thus, to understand the relation between the features and the network under this model, it suffices to estimate $B$.

As discussed in Remark~\ref{rem:rotnonID}, for any orthogonal transformation %$Q \in \IR^{d \times d}$
$Q\in\bbO_d$, the matrices $X$ and $XQ$ give rise to the same distribution over networks, so that we cannot hope to distinguish $X$ from $XQ$ based only on the adjacency matrix $A$.
Analogously, right-multiplying Equation~\eqref{eq:model} by $Q$ implies that we can only hope to estimate $BQ$ for some unknown $Q$.
This non-identifiability is typically not an issue for downstream applications \citep[see discussion in][]{AthFisLevLyzParQinSusTanVogPri2018}.
Unfortunately, the matrix $Q$ may obfuscate sparsity patterns present in $B$ unless an entire row of $B$ equals zero.
Therefore, in the high-dimensional setting, we require sparsity in the sense that most rows of $B$ are equal to zero, i.e., that the corresponding node feature is irrelevant for explaining the latent positions.
One way to encourage this sparsity is to split Equation~\eqref{eq:model} into $d$ separate regressions, one for each latent dimension, and encourage sparsity in each of the $d$ regressions.
That is,
\begin{equation} \label{eq:separateregs}
  X_{\cdot k}=Zb^{(k)}+E_{\cdot k} ~\text{ for }~k=1,2,\dots,d,
\end{equation}
where $b^{(1)},b^{(2)},\dots,b^{(d)}\in\IR^p$ correspond to the columns of $B$ encoding the regression coefficients.
The coefficient vector $B_j\in\IR^d$ corresponding to the $j$-th node covariate is then given by $B_j=(b_j^{(1)},b_j^{(2)},\dots,b_j^{(d)})^\top$.

We perform the regression in Equation~\eqref{eq:separateregs} for each of the $d$ dimensions of the latent positions.
To deal with high-dimensional node features, we consider using either ridge regression or LASSO, which we discuss in detail below in Sections~\ref{subsubsec:ridge} and~\ref{subsubsec:LASSO}, respectively.
Of course, since we do not have access to the latent positions $X$ directly, we must plug in estimates $\Xhat$ for them.
This estimation can be achieved using, for example, the adjacency spectral embedding \citep[ASE;][]{SusTanFisPri2012} or related methods \citep{XieXu2020,XieXu2021,WuXie2022,LLL22}.
This yields estimates
\begin{equation} \label{eq:model-based:separateregs}
\bhat_r^{(k)} =
\argmin{b\in\IR^p}\frac{1}{n}\left\|\Xhat_{\cdot k}-Zb\right\|^2
+ \lassoreg_k\|b\|_r^r , \text{ for } k\in[d] ,
\end{equation}
where $r=1$ corresponds to the regular LASSO and $r=2$ corresponds to ridge regression,
and $\lassoreg_1,\lassoreg_2,\dots,\lassoreg_d \in [0,\infty)$ are regularization parameters.
We collect the estimates $\bhat_r^{(1)},\bhat_r^{(2)},\dots,\bhat_r^{(d)}$ in the columns of a matrix $\Bhat_r$, that is, $\Bhat_{r,ik}=\bhat_{r,i}^{(k)}$.
Moreover, we write for easier recognition
\begin{equation*}
\Bhat^{\ridge}:=\Bhat_2~\text{ and }~\Bhat^{\rLASSO}:=\Bhat_1.
\end{equation*}
If $B$ is indeed row-sparse, then the $d$ separate regressions in Equation~\eqref{eq:separateregs} share the same sparsity structure, which can be promoted directly using group LASSO or multi-task learning \citep[see, e.g, Chapters 4 and 8 of][]{vdGB11}. 
As such, we might instead consider the estimator
\begin{equation} \label{eq:def:BhatgLASSO}
\Bhat^{\gLASSO}
= \begin{bmatrix}
\Bhat_{\gLASSO,1}^\top \\ \Bhat_{\gLASSO,2} \\ \vdots \\ \Bhat_{\gLASSO,p}^\top
\end{bmatrix}
=\argmin{\beta\in\IR^{p\times d}}
	\frac{1}{nd}\left\|\Xhat-Z\beta\right\|_F^2
	+\frac{\lassoreg}{\sqrt{d}}\sum_{j=1}^p\|\beta_j\|,
\end{equation}
where $\lassoreg \ge 0$ is a regularization parameter.

For selection of the tuning parameters $\lassoreg_1,\lassoreg_2,\dots,\lassoreg_d$ in Equation~\ref{eq:model-based:separateregs} or $\lassoreg$ in Equation~\ref{eq:def:BhatgLASSO}, we use cross-validation \citep{HTW15}.
Note that a ridge penalty of the type $\sum_{j=1}^p\|\beta_j\|^2$ corresponds to the case $\lassoreg_1=\lassoreg_2=\cdots=\lassoreg_d$ in the separate regression and will therefore not be treated separately.
If regularization is not required by the user, it is of course possible to set $\lassoreg=0$ in all estimators (in fact, all these estimators are identical in this case).
We study this case when we analyze ridge regression in Section~\ref{subsubsec:ridge} below.

\begin{remark} \label{rem:discussion_model_based_approach:dim_selection}
The estimation procedure used to produce $\Xhat$, as used above in Equations~\eqref{eq:model-based:separateregs} and~\eqref{eq:def:BhatgLASSO}, is well-studied in the literature \citep{SusTanFisPri2012,AthFisLevLyzParQinSusTanVogPri2018,LLL22}, but we note that it requires knowledge of the dimension $d$.
Theoretically sound dimension selection is widely recognized as one of the major open problems in latent space network models, but many heuristic approaches exist that work well in practice \citep[see, e.g.,][]{ZhuGho2006,HanYanFan2019,LiLevZhu2020,CheRocRohYu2021} and have a variety of performance guarantees. We state our results below under the assumption that $d$ has been selected correctly, in light of the fact that many of these methods come with theoretical guarantees that the correct dimension is selected with high probability for suitably large networks.
\end{remark}

We prove in Section~\ref{sec:results} that both ridge regression and LASSO are suitable for estimation of $B$, even though we use $\Xhat$ instead of $X$ as the response in the regression.
Note finally that considering Equation~\eqref{eq:model} as a model rather than the true data generating process is conceptually possible in the same situations as for regular linear models (i.e., outside of a network context).
To illustrate this point, consider the case of additional, unobserved, structured terms.
That is, suppose that we fit the model in Equation~\eqref{eq:model} while the true data generating process has the form
\begin{equation} \label{eq:long_regression}
X=ZB+T+E,
\end{equation}
where $T$ is an unobserved term.
Since $Z$ is assumed to contain an intercept, we may assume that $T$ is mean zero.
As in standard regression, whether or not $B$ can be estimated from $X$ and $Z$ alone depends on the relation between $T$ and $Z$.
Figure~\ref{fig:DAG} shows directed acyclic graphs (DAGs) for typical situations.
Not observing $T$ will not introduce a bias in the estimation of $B$ if $T$ is independent of $Z$ (see Figure~\ref{fig:DAG1}).
Hence, the short regression in Equation~\eqref{eq:model} gives an unbiased estimate of $B$.
If $T$ can be modeled as a function of $Z$, as in Figure~\ref{fig:DAG2}, the meaning of matrix $B$ in the short regression Equation~\eqref{eq:model} would differ from its meaning in Equation~\eqref{eq:long_regression}: it comprises the direct effect of $Z$ on $X$ (as captured by $B$ in \eqref{eq:long_regression}) and the indirect effect of $Z$ on $X$ through $T$.
We do not consider this problematic because, since $T$ is unobserved, one might arguably care about the overall effect only, and this can be estimated. 
A problem would arise, however, if $T$ confounds $Z$ and $X$, as in Figure~\ref{fig:DAG3}.
In this case, the model in Equation~\eqref{eq:model} cannot be used to give an accurate estimate of $B$ in Equation~\eqref{eq:long_regression}.
While our goal here is not causal inference, we note that this is a classical problem in the causal inference literature.
To solve it, one would require, for example, an instrument.
We do not pursue this further here, but assume instead that we are in a setup where a model like that in Equation~\eqref{eq:model} can be reasonably deployed.

\begin{figure}
\centering
\begin{subfigure}{0.3\textwidth}
\includegraphics[width=\textwidth]{./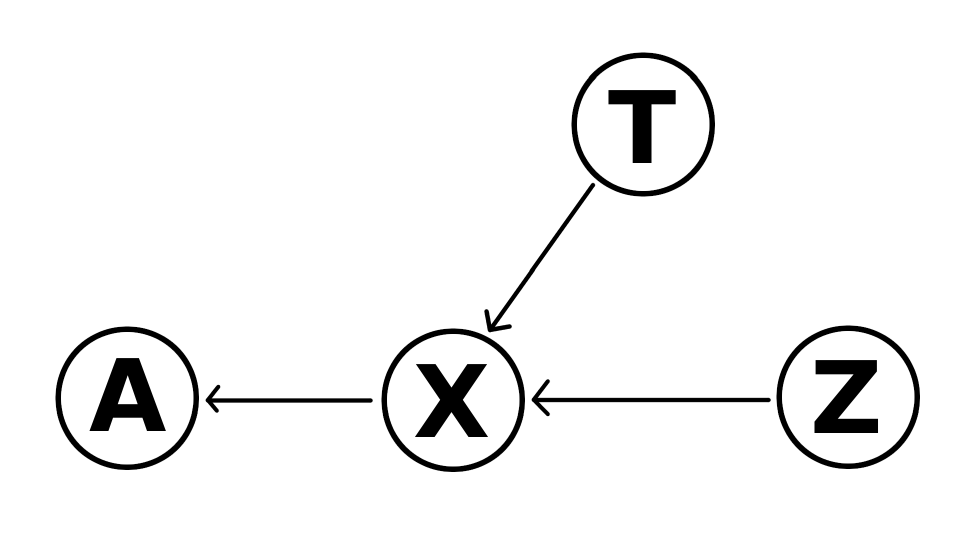}
\caption{$T$ is marginally independent of $Z$}
\label{fig:DAG1}
\end{subfigure}
\begin{subfigure}{0.3\textwidth}
\includegraphics[width=\textwidth]{./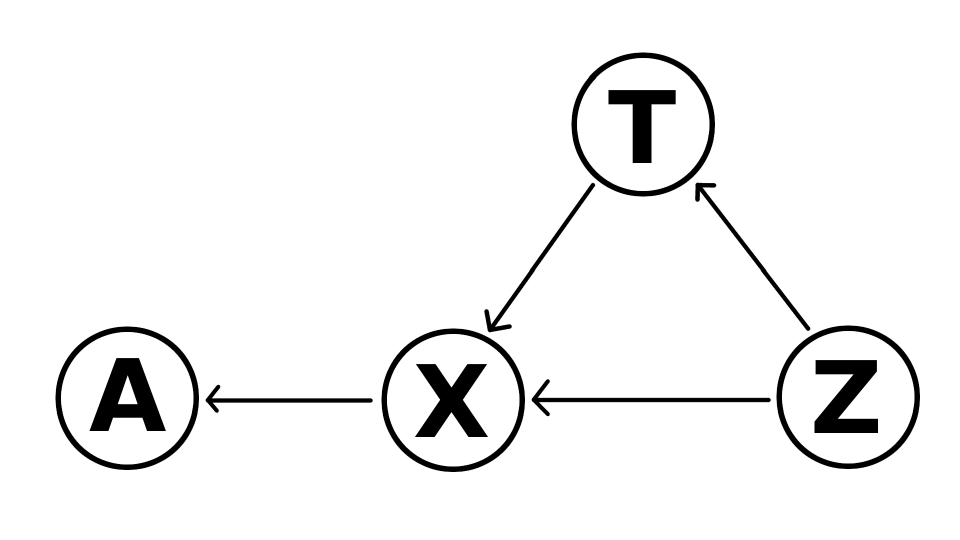}
\caption{ $T$ is a mediator between $X$ and $Z$}
\label{fig:DAG2}
\end{subfigure}
\begin{subfigure}{0.3\textwidth}
\includegraphics[width=\textwidth]{./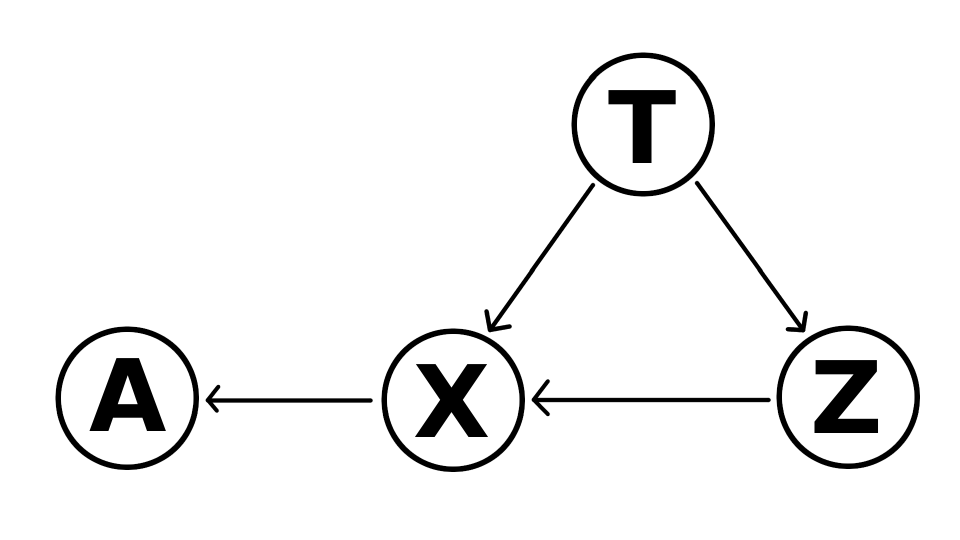}
\caption{$T$ confounds $X$ and $Z$}
\label{fig:DAG3}
\end{subfigure}
\caption{Graphs describing the different ways how $T$ can impact the variables of interest. A node in the above graphs is a function of its parents and independent noise. }
\label{fig:DAG}
\end{figure}

\subsection{Model-Free Approach} \label{subsec:cca}

In many settings, we may wish to avoid assuming an explicit relation between $X$ and $Z$ as in Equation~\eqref{eq:model}, but we may still be willing to believe that the network $A$ is generated according to a low-rank model as in Equation~\eqref{eq:simple_RDPG}.
If the latent positions $X_1,X_2,\dots,X_n \in \R^d$ were observed, a classical model-free way of assessing the presence of a relation between the latent network structure and the observed features $Z_1,Z_2,\dots,Z_n$ would be canonical correlation analysis \citep[CCA;][]{Hotelling1936}.
We measure the association between the latent positions and the node covariates according to how well we can correlate their projections onto a common one-dimensional space:
\begin{equation} \label{eq:def:CCA:classical}
\begin{aligned}
\rho_{X,Z} = \max_{u \in \IR^{d}} \max_{v \in \IR^{p}}~
& \frac{1}{n} \sum_{i=1}^n u^\top X_i v^\top Z_i
	- \left( \frac{1}{n} \sum_{i=1}^n u^\top X_i \right)
	\left( \frac{1}{n} \sum_{i=1}^n v^\top Z_i \right) \\
&~~~\text{ subject to }~ \\
\frac{1}{n} & \sum_{i=1}^n \left( u^\top \! X_i \right)^2
	\!-\! \left(  \frac{1}{n} \sum_{i=1}^n u^\top \! X_i \right)^2
	\!=
	\frac{1}{n} \sum_{i=1}^n \left( v^\top \! Z_i \right)^2
	\!-\! \left(  \frac{1}{n} \sum_{i=1}^n v^\top \! Z_i \right)^2
	\!= 1 .
\end{aligned} \end{equation}
In other words, we aim to find directions $u$ and $v$ in $\IR^d$ and $\IR^p$, respectively, along which our two data sets are maximally correlated.
The optimal choices of $u$ and $v$ correspond to the {\em first canonical directions} for $X$ and $Z$.
The resulting value of the optimization problem is the CCA alignment score or, as we will often call it in the sequel, the {\em CCA coefficient}.
If this alignment score is suitably far from zero, then this constitutes evidence that the latent positions and the node-level features are related.
We note that the CCA alignment in Equation~\eqref{eq:def:CCA:classical} can be extended to allow for multiple canonical directions by choosing collections of orthonormal vectors $u_1,u_2,\dots,u_r \in \IR^d$ and $v_1,v_2,\dots,v_r \in \IR^p$.
We largely focus here on the case of one canonical direction for the sake of space and simplicity.
That said, our theoretical results established below, which draw from tools in the subspace perturbation literature \citep[see, e.g.,][]{CaiZha2018,CapTanPri2019}, apply also to this more general multi-dimensional setting.

As has been observed elsewhere \citep[see, e.g.,][]{Muirhead1982,SchMul2000,KesLev2023}, the CCA problem in Equation~\eqref{eq:def:CCA:classical} can be recast in terms of finding the singular values of a matrix:
\begin{equation} \label{eq:def:CCA:projection}
\rho_{X,Z} 
= \max_{u \in \IR^d, \|u\|=1} \max_{v \in \IR^p, \|v\|=1}
u^\top \Sigmatilde_X^{-1/2} \Sigmatilde_{X,Z} \Sigmatilde_Z^{-1/2} v ,
\end{equation}
where, writing
\begin{equation} \label{eq:def:Mn}
M_n = I - \frac{1}{n} J \in \IR^{n \times n}
\end{equation}
for the ``centering matrix'',
\begin{equation} \label{eq:def:SigmatildeMarginal}
\Sigmatilde_X = \frac{1}{n} X^\top M_n X \in \IR^{d \times d}
~\text{ and }~
\Sigmatilde_Z = \frac{1}{n} Z^\top M_n Z \in \IR^{p \times p}
\end{equation}
are the the sample covariance matrices of $X$ and $Z$, respectively,
and
\begin{equation} \label{eq:def:SigmatildeXZ}
\Sigmatilde_{X,Z} = \frac{1}{n} X^\top M_n Z \in \IR^{d \times p}
\end{equation}
is the sample covariance matrix between the latent positions $X$ and the node features $Z$.
We note that we include tildes on all three of these matrices to stress that they are finite-sample estimates of respective population quantities $\Sigma_X$, $\Sigma_{X,Z}$ and $\Sigma_Z$.
Thus, in light of Equation~\eqref{eq:def:CCA:projection}, we may equivalently examine the leading singular value(s) of
\begin{equation} \label{eq:def:CCAclassic}
\CCA(X,Z)
= \Sigmatilde_X^{-1/2} \Sigmatilde_{X,Z} \Sigmatilde_Z^{-1/2}
\in \R^{d \times p} .
\end{equation}

Of course, in practice, the latent positions $X$ are unobserved and thus cannot be used in the optimization in Equation~\eqref{eq:def:CCA:projection}.
To account for this, as with our model-based approaches presented in Section~\ref{subsec:mod_approach}, we will replace $X$ with an estimate $\Xhat$ and proceed with the CCA alignment of $\Xhat$ and $Z$ instead, %as outlined in Algorithm~\ref{alg:model-free:ASE}.
examining the singular values of
\begin{equation} \label{eq:def:CCA:ASE}
\CCA( \Xhat, Z )
= \Sigmatilde_{\Xhat}^{-1/2} \Sigmatilde_{\Xhat,Z} \Sigmatilde_Z^{-1/2} .
\end{equation}

The main challenge in obtaining a performance guarantee for the CCA matrix in Equation~\eqref{eq:def:CCA:ASE} arises from the fact that we have replaced the matrix of latent positions $X \in \IR^{n \times d}$ with estimates $\Xhat \in \IR^{n \times d}$.
As discussed previously in Remark~\ref{rem:rotnonID}, this comes at the cost of rotational nonidentifiability, in the sense that $\Xhat$ recovers $X$ only up to right-multiplication by some unknown orthogonal transformation.
The key technical challenge in our proofs lies in accounting for this orthogonal non-identifiability and showing that the resulting error between $\Xhat$ and a suitably transformed $X$ is not too large, so that the CCA alignment score of $\Xhat$ and $Z$ is close to that of $X$ and $Z$.
We will see that the orthogonal non-identifiability poses no particular challenge, as the CCA coefficient defined in Equation~\eqref{eq:def:CCA:projection} is ultimately a property of subspace alignment, and is thus invariant to orthogonal transformations of the rows of either $X$ or $Z$.

In building the matrix in Equation~\eqref{eq:def:CCA:ASE}, as in our model-based approaches, a user must select an embedding dimension, and Remark~\ref{rem:discussion_model_based_approach:dim_selection} applies here as well.
In the event that the node covariates are high-dimensional, a range of CCA methods exist for handling such data, including those designed to account for structure (e.g., sparsity) in the features.
In our theoretical results below, we assume that the dimension $p$ of the features $Z \in \IR^{n \times p}$ is fixed with respect to $n$.
We anticipate, however, that standard methods for sparse or high-dimensional CCA could be deployed here \citep[see, e.g.,][]{WitTib2009,SonSchRamHas2016,GaoMaZho2017} as could regularized covariance estimation methods \citep[see, e.g.,][and citations therein]{FisSun2011,LedWol2018,LeLevBicLev2020}.

The method just described has the disadvantage of needing to select the latent dimension $d$.
Since $X$ is not observed, an alternative to working with $X$ is to directly apply CCA between $A$ and $Z$, and examine the behavior of $\CCA(A,Z)$.
In so doing, we avoid having to choose a latent dimension $d$, but this comes at the cost of having introduced another high-dimensional feature: the $n$-dimensional rows of the adjacency matrix $A$.
As with estimating $\Sigmatilde_Z$, there are a range of methods available to handle the resulting high-dimensional covariance matrix.
We use a simple shrinkage estimate, and consider the singular values of the matrix
\begin{equation} \label{eq:def:CCAfullnet}
\CCA_\regu(A,Z)
= 
\left( \Sigmatilde_A + \regu I \right)^{-1/2} A^\top M_n Z \Sigmatilde_Z,
\end{equation}
where $\regu = \regu_n \ge 0$ is a regularization parameter to be specified below, $M_n$ is the centering matrix as in Equation~\eqref{eq:def:Mn}, and $\Sigmatilde_A$ is defined analogously to $\Sigmatilde_X$ and $\Sigmatilde_Z$ above,
\begin{equation*}
\Sigmatilde_A = \frac{1}{n} A^\top M_n A \in \IR^{n \times n} .
\end{equation*}
We will see in our results below that a wide range of values for $\regu$ result in satisfactory behavior of $\CCA_\regu(A,Z)$, meaning that this is a far less fraught model selection problem than selecting the embedding dimension.

\section{Main Results} \label{sec:results}

As mentioned in Section~\ref{sec:setup}, we consider a more general edge noise model than the ``classical'' RDPG \citep{AthFisLevLyzParQinSusTanVogPri2018}.
Rather than making explicit distributional assumptions on the edges in our network (e.g., Bernoulli edges), our theoretical results require only that the adjacency matrix and certain quantities derived from it concentrate about their population analogues at a suitably fast rate.
We state in Assumptions~\ref{assum:spectralRate},~\ref{assum:ttiRate} and~\ref{assum:subspaceRate} below the rates that are relevant for our discussion, and describe afterwards two settings in which these rates can be explicitly given.

\begin{assumption} \label{assum:spectralRate} %previously called A0
Conditional on $X$, the random network $A$ has expectation $XX^\top$ and obeys
\begin{equation} \label{eq:assum:spectralRate}
\left\| A - XX^\top \right\| = \Op{ \spectralRate_n }
\end{equation}
for some sequence of non-negative reals $(\spectralRate_n)_{n=1}^\infty$.
\end{assumption}

Since we do not observe $X$, our results rely on estimating $X$ (up to orthogonal non-identifiability, as discussed in Remark~\ref{rem:rotnonID}).
A number of methods for estimating $X$ are available, depending on the precise choice of model for how $A$ varies about $X$ \citep[see, for example,][]{RohChaYu2011,SusTanFisPri2012,LeiRin2015,LevLyz2017,LTAPP17,LevAthTanLyzYouPri2017,AthFisLevLyzParQinSusTanVogPri2018,XieXu2020,XieXu2021,WuXie2022,LLL22}.
Rather than choosing one of these particular methods or models, we formulate our results under more general assumptions on how well the estimate $\Xhat$ recovers the true latent positions $X$.

\begin{assumption} \label{assum:ttiRate} %previously called A1
We can produce a sequence of estimates $\Xhat\in\IR^{n\times d}$ of the latent positions $X \in \IR^{n \times d}$ such that there exists a sequence of random orthogonal matrices %$Q\in\IR^{d\times d}$
$Q\in\bbO_d$ satisfying
\begin{equation*}
\left\| \Xhat - X Q \right\|_{\tti}
= \Op{ \ttiRate_n }
\end{equation*}
for some sequence of non-negative reals $(\ttiRate_n)_{n\in\IN}$.
\end{assumption}

Our results for the model-free approaches presented in Section~\ref{subsec:cca} depend on a slightly stronger assumption on the edge distribution than that in Assumption~\ref{assum:spectralRate}.

\begin{assumption} \label{assum:subspaceRate}
The entries of $A-XX^\top$ are such that, for any sequence of $W,V \in \R^{n \times d}$ conditionally independent of $A-XX^\top$ given $X$ and having columns of bounded norm,
\begin{equation*}
\left\| W^\top \left( A - X X^\top \right) V \right\|
= \Op{ \subspaceRate_n }
\end{equation*}
for some sequence of non-negative reals $(\subspaceRate_n)_{n\in\IN}$.
\end{assumption}

The above assumptions concern how quickly the observed network concentrates about its expectation and how quickly an associated estimate of the latent positions $X$ recovers them (as measured in two-to-infinity norm) and how the error $A-XX^\top$ interacts with fixed subspaces.
Before proceeding, we mention a pair of results from the literature under two different network models and indicate the choices of $\spectralRate_n$, $\ttiRate_n$ and $\subspaceRate_n$ that they imply.

In the case of binary networks, consider the following simplification of Lemma 5 from \cite{LTAPP17}. %https://projecteuclid.org/journals/electronic-journal-of-statistics/volume-8/issue-2/Perfect-clustering-for-stochastic-blockmodel-graphs-via-adjacency-spectral-embedding/10.1214/14-EJS978.pdf
\begin{theorem}[\citet{LTAPP17}, Theorem 5] \label{thm:vince}
Let $(A,X)$ follow an RDPG with binary edges and suppose that $\IE X_1X_1^\top \in \IR^{d \times d}$ is full rank with $d$ distinct eigenvalues.
There exists a sequence of orthogonal matrices %$Q\in\IR^{d\times d}$
$Q\in\bbO_d$ such that
\begin{equation*}
\left\|\Xhat- X Q \right\|_{\tti} = \Op{ \sqrt{\frac{d}{n}} \log^2 n } .
\end{equation*}
\end{theorem}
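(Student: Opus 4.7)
The plan is to follow a standard adjacency spectral embedding analysis. Take $\Xhat = \hat U_A \lvert \Lambdahat_A \rvert^{1/2}$ to be given by the top-$d$ eigendecomposition of $A$, and compare it to the population analogue obtained from the rank-$d$ eigendecomposition $P := X X^\top = U_P \Lambda_P U_P^\top$, writing $X = U_P \Lambda_P^{1/2} W$ for some orthogonal $W \in \bbO_d$. The orthogonal matrix $Q \in \bbO_d$ in the theorem then arises from a Procrustes alignment of $\hat U_A$ to $U_P$, or equivalently from the SVD of $U_P^\top \hat U_A$, composed with $W$.

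First I would establish eigenvalue concentration. By the law of large numbers, $X^\top X / n \to \E X_1 X_1^\top$, which is positive definite with $d$ distinct eigenvalues by assumption, so the nonzero eigenvalues of $P$ are of order $\Thetap{n}$ with a spectral gap of the same order. Combined with Assumption~\ref{assum:spectralRate} for binary edges (which holds with $\spectralRate_n = \sqrt{n}$ via matrix Bernstein), Weyl's inequality shows that the top $d$ eigenvalues of $A$ are themselves $\Thetap{n}$, so $\bigl\| \lvert \Lambdahat_A \rvert^{-1/2} \bigr\| = \Op{ n^{-1/2} }$, and Davis--Kahan yields $\| \hat U_A W_Q - U_P \| = \Op{ n^{-1/2} }$ for some $W_Q \in \bbO_d$.

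To upgrade these spectral-norm bounds into a two-to-infinity control on $\Xhat - X Q$, I would use a leading-order decomposition
\begin{equation*}
\Xhat - X Q \;=\; (A - P)\, \hat U_A \lvert \Lambdahat_A \rvert^{-1/2} \;+\; R ,
\end{equation*}
where $R$ collects higher-order remainder terms that can be controlled in $(\tti)$-norm using $\| X \|_{\tti} = \Op{ 1 }$, $\| U_P \|_{\tti} = \Op{ \sqrt{ \log n / n } }$, and the spectral-norm bounds on the subspace perturbation established above. The task then reduces to bounding $\bigl\| (A - P) \hat U_A \bigr\|_{\tti}$, and the principal obstacle is that $\hat U_A$ depends on every row of $A$, so a direct row-wise Bernstein argument fails.

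The standard remedy, and the main technical ingredient, is a leave-one-out construction. For each $i \in [n]$, define $A^{(i)}$ by replacing the $i$-th row and column of $A$ with the corresponding entries of $P$, and let $\hat U_A^{(i)}$ denote its top-$d$ eigenvectors. The decomposition
\begin{equation*}
(A - P)_{i \cdot} \hat U_A \;=\; (A - P)_{i \cdot} \hat U_A^{(i)} \;+\; (A - P)_{i \cdot} \bigl( \hat U_A - \hat U_A^{(i)} \bigr)
\end{equation*}
splits the $i$-th row into a piece independent of row $i$ of $A$, to which a Bernstein inequality applied to $n$ bounded Bernoullis dotted against a fixed $d$-dimensional orthonormal frame yields $\Op{ \sqrt{d \log n} }$, plus a coupling piece. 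For the coupling piece I would apply Davis--Kahan to $A - A^{(i)}$, whose spectral norm is $\Op{ \sqrt{\log n} }$ (the norm of a single perturbed row of bounded Bernoullis), giving $\lVert \hat U_A - \hat U_A^{(i)} \rVert = \Op{ \sqrt{\log n}/\sqrt{n} }$ and hence a coupling contribution of order $\Op{ \log n }$. A union bound over $i \in [n]$ and division by $\lvert \Lambdahat_A \rvert^{1/2} \asymp \sqrt{n}$ produce the claimed rate $\Op{ \sqrt{d/n}\, \log^2 n }$, with the two logarithms tracking the union bound and the iterated Davis--Kahan step in the leave-one-out comparison.
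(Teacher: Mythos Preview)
The paper does not prove this statement; it is quoted verbatim as a result from \citet{LTAPP17} (introduced as ``the following simplification of Lemma 5 from \cite{LTAPP17}'') and is used only to furnish concrete choices of $\ttiRate_n$, $\spectralRate_n$ and $\subspaceRate_n$ in Assumptions~\ref{assum:spectralRate}--\ref{assum:subspaceRate}. There is therefore no in-paper proof to compare against.

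Your sketch is broadly the modern route to such $(2,\infty)$ bounds, but one step is mis-stated. You write that $\lVert A - A^{(i)} \rVert = \Op{\sqrt{\log n}}$, describing it as ``the norm of a single perturbed row of bounded Bernoullis.'' In fact $A - A^{(i)}$ is a rank-two matrix whose spectral norm is governed by the Euclidean norm of the $i$-th row of $A-P$, which is $\Thetap{\sqrt{n}}$, not $\sqrt{\log n}$. With the eigengap of order $n$, Davis--Kahan still yields $\lVert \hat U_A - \hat U_A^{(i)} \rVert = \Op{n^{-1/2}}$, so your final rate is unaffected, but the stated justification for the coupling piece would need to be corrected. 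More substantively, the leave-one-out argument typically requires an additional self-bounding or bootstrapping step to control $\lVert \hat U_A^{(i)} \rVert_{\tti}$ uniformly in $i$ (since the Bernstein bound on $(A-P)_{i\cdot}\hat U_A^{(i)}$ uses $\lVert \hat U_A^{(i)} \rVert_{\tti}$, not just its spectral norm), and your sketch elides this recursion.
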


This result lets us choose $\ttiRate_n= \sqrt{ d/n } \log^2n$ in Assumption~\ref{assum:ttiRate}.
Using Theorem 5.2 from \cite{LeiRin2015} and trivially upper bounding the row sums of $XX^\top$ by $n$, we may choose $\spectralRate_n = \sqrt{n}$ in Assumption~\ref{assum:spectralRate}.
Using standard concentration inequalities \citep{Vershynin2020}, we may choose $\subspaceRate_n = d \log n$.

In order to give these rates for weighted networks, as well as for many of our results in the sequel, we will make use of standard concentration inequalities for sub-gamma random variables \citep{BLM13,Vershynin2020}.

\begin{definition}
A centered random variable $Y$ is called {\em $(\nu,b)$-sub-gamma} for $\nu,b>0$ if for all $t \in\left(-\frac{1}{b},\frac{1}{b}\right)$,
\begin{equation*}
 \log\IE e^{t Y}\leq\frac{t^2\nu}{2(1-b|t|)}.
\end{equation*}
\end{definition}

Now, in the case of weighted networks, consider the following result, adapted from \cite{LLL22}.
\begin{theorem}[\citet{LLL22}, Lemma 5, Theorem 6] \label{thm:LLL} %https://jmlr.org/papers/v23/19-1056.html
Suppose that conditional on $X \in \IR^{n \times d}$, the entries of $A-XX^\top$ are independent $(\nu,b)$-subgamma random variables.
Then
\begin{equation*}
\left\| A - X X^\top \right\| = \Op{ \sqrt{(\nu + b^2) n} \log n }.
\end{equation*}
Suppose further that 
\begin{equation*}
(\nu+b^2) n \log^2 n = o\left( \lambda_d^2( XX^\top ) \right).
\end{equation*}
Then there exists a sequence of orthogonal matrices %$Q \in \IR^{d \times d}$
$Q\in\bbO_d$ such that
\begin{equation*}
\left\|\Xhat - X Q\right\|_{\tti}
\le \frac{ Cd\sqrt{ \nu + b^2 } \log n }{ \lambda_d^{1/2}( X^\top X ) }
+ \frac{ C dn (\nu+b^2) \kappa( X^\top X ) \log^2 n }
	{ \lambda_d^{3/2}( X^\top X ) },
\end{equation*}
where $\kappa( M )$ denotes the condition number of matrix $M$.
\end{theorem}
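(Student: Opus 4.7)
The plan is to treat the two assertions separately and then combine them. For the spectral norm bound, I would apply a matrix Bernstein inequality. Writing $E = A - XX^\top$ as a sum of independent mean-zero symmetric rank-two random matrices $E_{ij}(e_i e_j^\top + e_j e_i^\top)$ indexed by pairs $i \le j$, with each $E_{ij}$ being $(\nu,b)$-sub-gamma, the matrix variance parameter is bounded by a constant times $n\nu$ and the effective range is of order $b$. A sub-gamma version of matrix Bernstein (as in Tropp, or Bandeira--van Handel) then delivers $\|E\| = \Op{\sqrt{n\nu}\log n + b\log n}$, which collapses to the claimed $\Op{\sqrt{(\nu+b^2)n}\log n}$.

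For the two-to-infinity bound, fix notation by letting $UDU^\top$ be the top-$d$ eigendecomposition of $A$, so that $\Xhat = UD^{1/2}$, and letting $U_0 D_0 U_0^\top$ be the analogous decomposition of $XX^\top$, so that $X$ equals $U_0 D_0^{1/2}$ up to an orthogonal factor that I absorb into $Q$. Writing $A = XX^\top + E$ and expanding the perturbed spectral projector via a von Neumann series yields a first-order decomposition
\begin{equation*}
\Xhat - XQ \;=\; E X (X^\top X)^{-1} T + R,
\end{equation*}
where $T$ is a bounded operator absorbing eigenvalue-scaling factors and $R$ collects all quadratic-in-$E$ remainder terms. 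For the leading term I bound row-by-row: the $i$-th row is $E_{i\cdot} X (X^\top X)^{-1} T$, a vector of sums of independent sub-gamma random variables conditional on $X$. A scalar sub-gamma Bernstein bound combined with a union over the $n$ rows yields the first summand $Cd\sqrt{\nu+b^2}\log n / \lambda_d^{1/2}(X^\top X)$ after tracking the $d$ columns and the eigenvalue normalization carried by $T$.

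The main obstacle is the quadratic remainder $R$: the naive estimate $\|R\|_{\tti} \le \|R\|$ loses a factor of $\sqrt{n}$ and gives a suboptimal rate. The remedy is a leave-one-out device: for each $i \in [n]$, form an auxiliary matrix $A^{(i)}$ that agrees with $A$ everywhere except that its $i$-th row and column are replaced by their conditional expectations under $X$, so that the spectral embedding $\Xhat^{(i)}$ of $A^{(i)}$ is independent of $E_{i\cdot}$. Comparing $\Xhat_i$ with $\Xhat^{(i)}_i$ through a Procrustes alignment (which contributes the factor $\kappa(X^\top X)$), using part (i) to control $\|\Xhat - \Xhat^{(i)}\|$ in spectral norm, and applying a decoupled sub-gamma Bernstein bound on the relevant sums involving $E_{i\cdot}$, produces the second summand with the $\lambda_d^{3/2}(X^\top X)$ denominator and the extra $\log^2 n$ arising from the spectral bound of part (i) combined with a further union over rows.

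The hard part throughout is bookkeeping: one must carefully track how many powers of the eigengap $\lambda_d(X^\top X)$, the condition number $\kappa(X^\top X)$, and $\|E\|$ accumulate in each layer of the von Neumann expansion and in each step of the leave-one-out comparison, so that the two contributions assemble without slack into exactly the two summands displayed. The assumption $(\nu+b^2) n \log^2 n = o(\lambda_d^2(XX^\top))$ is precisely the condition that makes the quadratic remainder negligible compared to the linear term and allows convergence of the von Neumann series, so verifying this signal-to-noise inequality at each step closes the argument.
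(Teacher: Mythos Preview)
The paper does not prove this statement at all: Theorem~\ref{thm:LLL} is quoted directly from \citet{LLL22} (their Lemma~5 and Theorem~6) and is used here only as a black box to instantiate the rates $\spectralRate_n$, $\ttiRate_n$, and $\subspaceRate_n$ in Assumptions~\ref{assum:spectralRate}--\ref{assum:subspaceRate}. There is therefore no ``paper's own proof'' to compare against.

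That said, your sketch is a faithful outline of how the cited result is actually established in \citet{LLL22} and related work: the spectral-norm bound via a sub-gamma matrix Bernstein inequality, and the $(2,\infty)$-bound via a first-order perturbation expansion together with a leave-one-out decoupling to control the quadratic remainder row-by-row. The bookkeeping caveats you flag---tracking powers of $\lambda_d(X^\top X)$, $\kappa(X^\top X)$, and the accumulated $\log n$ factors through the expansion and the Procrustes alignment---are exactly the technical content of the original proof. If you were asked to supply a self-contained argument, your plan is the right one; for the purposes of the present paper, however, a citation suffices.
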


Thus, under the setting of Theorem~\ref{thm:LLL}, we may choose
\begin{equation*}
\spectralRate_n = \sqrt{\nu + b^2} \log n
\end{equation*}
in Assumption~\ref{assum:spectralRate} and
\begin{equation*}
\ttiRate_n= 
\frac{ Cd\sqrt{ \nu + b^2 } \log n }{ \lambda_d^{1/2}( X^\top X ) }
+ \frac{ C dn (\nu+b^2) \kappa( X^\top X ) \log^2 n }
	{ \lambda_d^{3/2}( X^\top X ) } 
\end{equation*}
in Assumption~\ref{assum:ttiRate}.
Using standard concentration inequalities \citep[see, e.g., Proposition 22 in][]{LLL22}, we may take
\begin{equation*}
\subspaceRate_n = d \sqrt{\nu + b^2} \log n
\end{equation*}
in Assumption~\ref{assum:subspaceRate}.

Some of our results below make use of the following additional assumption on the covariance structure of $X$ and $Z$.

\begin{assumption} \label{assum:invertible}
The $(X_i,Z_i)$ pairs are drawn i.i.d.~according to a distribution on $\IR^d \times \IR^p$ such that $(X_1,Z_1) \in \IR^{d + p}$ has covariance
\begin{equation} \label{eq:def:Sigmafull}
\Sigma = \begin{bmatrix} \Sigma_X & \Sigma_{X,Z} \\
			\Sigma_{X,Z}^\top & \Sigma_Z \end{bmatrix}
\in \IR^{(d+p) \times (d+p)},
\end{equation}
with both $\Sigma_X \in \IR^{d \times d}$ and $\Sigma_Z \in \IR^{p \times p}$ invertible.
\end{assumption}

\subsection{Model-Based Approaches}

We begin by establishing results for a pair of network association testing methods under parametric modeling assumptions.
In particular, we consider settings where the latent positions $X$ are related to the observed nodal covariates $Z$ via the linear model in Equation~\eqref{eq:model}.

\subsubsection{Ridge regression} \label{subsubsec:ridge}

Our model in Equation~\eqref{eq:model} assumes that the latent positions (i.e., the rows of $X \in \IR^{n \times d}$) are related to the node-level covariates (i.e., the rows of $Z \in \IR^{n \times p}$) via a linear transformation encoded by $B \in \IR^{p \times d}$.
Ignoring for now the fact that the latent positions are accessible only via the observed adjacency matrix $A$, this suggests regressing the latent positions against the nodal covariates.
While we might consider an estimation procedure for all $d$ columns of $B$ at once, we leave this matter for future work and instead consider the admittedly more na\"{i}ve approach of modeling the $d$ columns of $X$ via $d$ separate regressions:
\begin{equation} \label{eq:regression:column}
X_{\cdot, k} = Z b^{(k)} + E_{\cdot, k}~\text{ for }~k \in [d].
\end{equation}
Recall the ridge regression estimators $b_2^{(k)}$ from Equation~\eqref{eq:model-based:separateregs} with $r=2$.
For simplicity of notation, we denote these estimates in this section by $\bhat^{(k)}$.
These can be expressed as
\begin{equation} \label{eq:def:bhatk}
\bhat^{(k)} = \left( \frac{1}{n} Z^\top Z + \lassoreg_k I_p \right)^{-1}
                \frac{1}{n} Z^\top \Xhat_{\cdot, k},
\end{equation}
where for each $k \in [d]$, $\lassoreg_k \ge 0$ is a regularization parameter, which we allow to change for each column of $X \in \IR^{n \times d}$.
An immediate challenge arises from the fact that we do not have access to $X$, except through $A$.
In addition, as discussed in Remark~\ref{rem:rotnonID}, $X$ is identified only up to an orthogonal transformation $Q$.
If the true $X$ were available, we could obtain the following oracle estimators for any orthogonal matrix $Q$
\begin{equation} \label{eq:def:btildek}
\btilde^{(k)}_Q = \left( \frac{1}{n} Z^\top Z + \lassoreg_k I_p \right)^{-1}
                \frac{1}{n} Z^\top (XQ)_{\cdot, k}.
\end{equation}
The following result shows that the estimates $\bhat^{(k)}$ based on the estimated latent positions $\Xhat$ are close to the oracle estimates $\btilde^{(k)}_Q$, once we choose $Q$ so as to account for the orthogonal non-identifiability in the latent space.

\begin{theorem} \label{thm:ridge:convergence}
Suppose that $(A,X) \sim \RDPG(F,n)$ with the distribution $F$ corresponding to the model in Equation~\eqref{eq:regression:column}.
In addition to Assumptions~\ref{assum:spectralRate} and~\ref{assum:ttiRate}, suppose that the nodal covariates $Z$ are such that for all $i \in [n]$ and $j \in [p]$, $Z_{ij}$ is a $(\nu_Z,b_Z)$-subgamma random variable with $(\nu_Z+b_Z^2)\log (p \vee n) = O( n )$.
Then there is a sequence of random orthogonal matrices $Q=Q_n \in \bbO_d$ such that
\begin{equation*}
\max_{k \in [d]} \left\| \bhat^{(k)} - \btilde^{(k)}_Q \right\|
= \Op{ \frac{\ttiRate_n}{\min_{k \in [d]} \lassoreg_k } }.
\end{equation*}
\end{theorem}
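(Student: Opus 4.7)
My plan is to reduce the bound to the closed form of ridge regression, paired with Assumption~\ref{assum:ttiRate} and a standard spectral-norm bound on the design matrix $Z$. The starting point is the algebraic identity
\[
\bhat^{(k)} - \btilde_Q^{(k)}
= \left( \tfrac{1}{n} Z^\top Z + \lassoreg_k I_p \right)^{-1}
  \tfrac{1}{n} Z^\top \bigl( \Xhat - XQ \bigr)_{\cdot, k},
\]
valid for any $Q \in \bbO_d$ and obtained by subtracting Equation~\eqref{eq:def:btildek} from Equation~\eqref{eq:def:bhatk}. Since $\tfrac{1}{n}Z^\top Z$ is positive semidefinite, the inverse above has spectral norm at most $1/\lassoreg_k$, so
\[
\bigl\| \bhat^{(k)} - \btilde_Q^{(k)} \bigr\|
\le \frac{1}{\lassoreg_k}\,
    \bigl\| \tfrac{1}{n} Z^\top ( \Xhat - XQ )_{\cdot, k} \bigr\|.
\]
Passing to the maximum over $k\in[d]$ will supply the factor $1/\min_k \lassoreg_k$ in the stated rate.

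Next I would control the remaining factor uniformly in $k$. Writing $v^{(k)} = (\Xhat - XQ)_{\cdot,k}$, each entry of $\Xhat - XQ$ is bounded by $\|\Xhat - XQ\|_{\tti}$, so $\|v^{(k)}\| \le \sqrt{n}\,\|\Xhat - XQ\|_{\tti}$ and hence
\[
\bigl\| \tfrac{1}{n} Z^\top v^{(k)} \bigr\|
\le \frac{\|Z\|}{n}\,\|v^{(k)}\|
\le \frac{\|Z\|}{\sqrt{n}}\,\|\Xhat - XQ\|_{\tti}.
\]
Choosing $Q$ to be the orthogonal matrix supplied by Assumption~\ref{assum:ttiRate} turns the final factor into $\Op{\ttiRate_n}$, uniformly in $k$.

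The only remaining ingredient is $\|Z\| = \Op{\sqrt{n}}$. Decomposing $Z = \mathbf{1}\mu^\top + (Z - \mathbf{1}\mu^\top)$ with $\mu = \IE Z_1$, the rank-one mean part contributes spectral norm $\sqrt{n}\|\mu\| = O(\sqrt{n})$. For the centered remainder I would deploy a standard spectral-norm concentration inequality for random matrices with independent sub-gamma entries, either matrix Bernstein or a covering-net argument over the unit sphere in $\IR^p$ combined with the scalar sub-gamma tail bound. The hypothesis $(\nu_Z + b_Z^2)\log(p \vee n) = O(n)$ is exactly calibrated so that the resulting deviation is of order $\sqrt{n}$ even when $p$ grows with $n$: the $\log(p \vee n)$ factor absorbs the union bound over an $\varepsilon$-net of cardinality $e^{O(p)}$ (plus the $\log n$ that accompanies any sub-gamma tail).

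The main technical obstacle is this last spectral-norm control, since it is the only point at which the growing ambient dimension $p$ enters substantively and where the sub-gamma tail hypothesis is used; everything else is deterministic algebra following from the ridge closed form and $(2,\infty)$-to-$2$ norm conversion. Once $\|Z\|=\Op{\sqrt{n}}$ is established, combining the three displayed bounds and taking a maximum over $k \in [d]$ immediately delivers $\max_k \| \bhat^{(k)} - \btilde_Q^{(k)} \| = \Op{\ttiRate_n / \min_k \lassoreg_k}$, as claimed.
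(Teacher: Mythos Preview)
Your argument has a genuine gap at the spectral-norm step. The claim $\|Z\|=O_P(\sqrt n)$ does not follow from the stated hypothesis: for an $n\times p$ random matrix with independent rows, the operator norm is generically of order $\sqrt n+\sqrt p$, and the $\sqrt p$ contribution cannot be suppressed without further structure. The assumption $(\nu_Z+b_Z^2)\log(p\vee n)=O(n)$ permits $p$ as large as $e^{cn}$, vastly beyond $p=O(n)$. Your covering-net accounting is off by an exponent: an $\varepsilon$-net of the unit sphere in $\IR^p$ has cardinality $e^{O(p)}$, so a union bound over the net costs $O(p)$ in the exponent, not $O(\log p)$; the hypothesis absorbs $\log p$, not $p$. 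In addition, the theorem only assumes independent \emph{rows} of $Z$ with marginally sub-gamma entries, not independent entries, so matrix-Bernstein arguments for independent entries are not directly available either.

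The paper sidesteps $\|Z\|$ entirely by bounding the vector $\tfrac1n Z^\top(\Xhat-XQ)_{\cdot,k}$ coordinatewise:
\[
\Bigl|\bigl[\tfrac1n Z^\top(\Xhat-XQ)_{\cdot,k}\bigr]_j\Bigr|
\;\le\;\Bigl(\max_{i\in[n]}\bigl|(\Xhat-XQ)_{i,k}\bigr|\Bigr)\cdot\frac1n\sum_{i=1}^n|Z_{ij}|,
\]
and then uses scalar sub-gamma concentration together with a union bound over $j\in[p]$ (this is where the $\log(p\vee n)$ in the hypothesis is actually used) to obtain $\max_{j\in[p]}\tfrac1n\sum_i|Z_{ij}|=O_P(1)$. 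Combined with Assumption~\ref{assum:ttiRate}, this yields a uniform $O_P(\ttiRate_n)$ bound on every coordinate of $\tfrac1n Z^\top(\Xhat-XQ)_{\cdot,k}$, which is then fed into the same ridge identity and $1/\lassoreg_k$ spectral bound that you use. The key difference is that the paper's route requires only a union bound over $p$ scalar averages rather than control of an $n\times p$ operator norm, and it is this that the hypothesis is calibrated to support.
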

A proof can be found in Appendix~\ref{apx:ridge}.

\begin{corollary} \label{cor:ridge:lassoreg}
Suppose that the ridge regularization parameters $\lassoreg_1,\lassoreg_2,\dots,\lassoreg_d$ are chosen according to 
\begin{equation*}
\lassoreg_k = \frac{ \| \btrue^{(k)} \|^2 }{ \gamma \sigma_k^2 },
\end{equation*}
where $\gamma = \lim_{n\to\infty} p/n \in (0,\infty)$ and $\sigma_k^2=\Var E_{1k}$.
Then, with $Q$ as in Theorem~\ref{thm:ridge:convergence},
\begin{equation*}
\max_{k \in [d]} \left\| \btilde^{(k)}_Q - \bhat^{(k)} \right\|
= \Op{ \gamma \ttiRate_n 
        \max_{k \in [d]} \frac{ \sigma_k^2 }{ \| \btrue^{(k)} \|^2 } } .
\end{equation*}
\end{corollary}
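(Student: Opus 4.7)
The plan is to deduce Corollary~\ref{cor:ridge:lassoreg} as a direct specialization of Theorem~\ref{thm:ridge:convergence}. That theorem already delivers, under precisely the same hypotheses, the bound
\begin{equation*}
\max_{k \in [d]} \left\| \bhat^{(k)} - \btilde_Q^{(k)} \right\|
= \Op{ \ttiRate_n / \min_{k \in [d]} \lassoreg_k } ,
\end{equation*}
so the entire content of the corollary reduces to computing $\min_k \lassoreg_k$ under the proposed choice of penalties and folding the result into the $\Op{\cdot}$ rate.

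Concretely, I would substitute $\lassoreg_k = \|\btrue^{(k)}\|^2 / (\gamma \sigma_k^2)$ and observe that $\gamma$, $\sigma_k^2$, and $\btrue^{(k)}$ are all deterministic population quantities, so
\begin{equation*}
\min_{k \in [d]} \lassoreg_k
= \min_{k \in [d]} \frac{\|\btrue^{(k)}\|^2}{\gamma \sigma_k^2}
= \left( \gamma \max_{k \in [d]} \frac{\sigma_k^2}{\|\btrue^{(k)}\|^2} \right)^{-1} .
\end{equation*}
Taking reciprocals and multiplying through by $\ttiRate_n$ yields exactly the stated conclusion. Since no step introduces new randomness or new assumptions, the hypotheses required for Theorem~\ref{thm:ridge:convergence} (the RDPG with the linear model~\eqref{eq:regression:column}, the sub-gamma tails on the entries of $Z$, and Assumptions~\ref{assum:spectralRate} and~\ref{assum:ttiRate}) transfer verbatim and there is nothing further to verify.

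In short, there is no genuine obstacle: the corollary is algebraic bookkeeping on top of the theorem. The only implicit regularity needed is that the ratios $\sigma_k^2 / \|\btrue^{(k)}\|^2$ are well-defined and finite (i.e., each $\btrue^{(k)} \neq 0$) and that $\gamma \in (0,\infty)$, both of which are already built into the statement. Any attempt to motivate the particular choice of $\lassoreg_k$ (e.g., from a Bayesian perspective in which $\btrue^{(k)}$ is treated as having isotropic prior variance $\|\btrue^{(k)}\|^2 / p$ and $E_{\cdot k}$ has variance $\sigma_k^2$, making this the ratio of noise-to-signal variance up to the aspect ratio $\gamma$) belongs to the surrounding discussion rather than the proof itself, and I would keep the formal argument to the three-line substitution above.
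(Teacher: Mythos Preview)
Your proposal is correct and matches the paper's treatment: the paper does not even write out a separate proof of the corollary, treating it as an immediate consequence of Theorem~\ref{thm:ridge:convergence} via exactly the substitution you describe. Your three-line computation of $\min_{k}\lassoreg_k$ and the resulting rate is precisely what is intended.
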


\begin{remark} \label{rem:BQ}
Recall that we collected the estimates produced by Equation~\eqref{eq:model-based:separateregs} with $r=2$ in the columns of the matrix
\begin{equation*}
\Bhat^{\ridge}
= \begin{bmatrix} \bhat_2^{(1)} & \bhat_2^{(2)} & \cdots & \bhat_2^{(d)}
\end{bmatrix} .
\end{equation*}
Analogously, we may construct for any orthogonal matrix $Q\in\bbO_d$, the matrix
\begin{equation*}
\Btilde_Q
=\begin{bmatrix} \btilde^{(1)}_Q & \btilde^{(2)}_Q &\cdots 
		&\btilde^{(d)}_Q \end{bmatrix} \in \IR^{p \times d},
\end{equation*}
where for $k=1,2,\dots,d$, the $\btilde^{(k)}_Q$ are the oracle estimates defined in Equation~\eqref{eq:def:btildek}.
By definition, for $\ell=1,2,\dots,p$,
\begin{equation*}
\Btilde_{Q,\ell\cdot}
=\left[\left( \frac{1}{n} Z^\top Z + \lassoreg_k I_p \right)^{-1}\right]_{\ell\cdot}\frac{1}{n} Z^\top
	XQ.
\end{equation*}
Thus, while there is a non-identifiability associated with this regression, it applies only to the basis for the latent positions $X$, and not to the node-level covariates $Z$.
This means that there is no ambiguity in which node features do or do not contribute to variation in latent positions, even if those latent positions are never observed.
For example, since $Q$ is orthogonal, the Euclidean norms of the rows of $\Btilde_Q$ are invariant to $Q$.
Thus, one can compare the magnitudes of rows of $\Btilde_Q$ to compare the influences of different node features on network structure, even though this structure is unobserved.
\end{remark}

\subsubsection{LASSO} \label{subsubsec:LASSO}
Analogously to our ridge regression results above, we prove a convergence rate for our group LASSO-based estimator in Equation~\eqref{eq:def:BhatgLASSO}, following the arguments and general proof structure in Chapters 6.2.2 and 8.6 of \cite{vdGB11}.
The main challenge is again that we must account for the fact that we work with $\Xhat$ rather than $X$ and that $\Xhat$ recovers $XQ$ for some unknown random orthogonal matrix $Q$, rather than recovering $X$ directly.

In the LASSO literature, estimation rates often depend on the compatibility constant.
In typical settings, one considers the covariates ($Z$ in our notation) fixed and proves results conditionally on these covariates.
In such a setting, the compatibility constant is, as the name suggests, a constant.
Here, we would like to allow for randomness in $Z$, and thus our compatibility ``constant'' is random, similar to the situation in \cite{KMP25}.
The convergence rate of our estimator thus depends on the behavior of the {\em random multitask compatibility constant}.

\begin{definition} \label{def:compatibility}
Given a set of indices $S \subseteq [p]$, define $\calB_{p,d}(S) \subseteq \IR^{p \times d}$ to be the set of all matrices $\beta\in\IR^{p\times d}$ with rows $\beta_j$ such that $\sum_{j\in S^c}\|\beta_j\|\leq3\sum_{j\in S}\|\beta_j\|$.
We call the random variable
\begin{equation} \label{eq:def:phi}
\phi_S = \min_{\beta\in\calB_{p,d}(S)} \frac{1}{\sqrt{nd}}
	\left\|Z\beta\right\|_F\frac{\sqrt{d|S|}}{\sum_{j\in S}\|\beta_j\|},
\end{equation}
the {\em random multitask compatibility constant}.
\end{definition}
Since our compatibility constant is random, we require what we call a {\em Random Multitask Compatibility Condition} for the set $S$, namely $1/\phi_S=O_P(1)$.
For a given $S$ and fixed non-random $Z$, $\phi_S>0$ is a classical condition in high-dimensional statistics.
See Section 8.6.3 of \citet{vdGB11} and references therein for a discussion and comparison to other conditions.

Similarly to the classical LASSO literature, our results require tail bounds on the correlations between the noise and the covariates.
Equation~\eqref{eq:def:calTn} gives a precise statement of the event that has to hold with high probability.
To guarantee the existence of such tail bounds, we require some regularity conditions on the noise in the form of subgamma tail bounds \citep[see][]{BLM13,Vershynin2020}.
We use the sub-gamma property to prove a concentration result on sums with random coefficients, which appears as Lemma~\ref{lem:cond_subGamma} in Appendix~\ref{app:proofsLASSO}.
Using this result, we can formulate the main result of this section.

\begin{theorem} \label{thm:LASSO:convergence_rate}
Under the model in Equation~\eqref{eq:model}, assume that $Z$ and $E$ are independent of one another and have i.i.d.~rows.
Under Assumption~\ref{assum:ttiRate}, suppose that for each $k\in[d]$, $E_{1k}$ is $(\nu,b)$-sub-gamma and that, for each $j\in[p]$, $Z_{1j}^2-\IE(Z_{1j}^2)$ is $(\nu_Z,b_Z)$-sub-gamma.
Suppose furthermore that $\log pd = o(n)$ and $\sup_{n\in\IN}\max_{j\in[p]}\IE(Z_{1j}^2)<\infty$.

For any $\epsilon>0$, there are positive constants $K_{\epsilon}$ and $N_{\epsilon}$ such that for
\begin{equation} \label{eq:def:lassoreg0n}
\lassoreg_{0,n}=K_{\epsilon}\max\left\{\frac{\ttiRate_n}{\sqrt{d}},\frac{\log\max(pd,n)}{\sqrt{n}}\right\},
\end{equation}
$\lassoreg \geq 2\lassoreg_{0,n}$, and $n\geq N_{\epsilon}$, it holds with probability at least $1-5\epsilon$ that
\begin{equation} \label{eq:thm:LASSO:convergence_rate:1}
\frac{1}{nd}\left\|Z\left(\Bhat_{\gLASSO}-BQ\right)\right\|_F^2
+\frac{\lassoreg}{\sqrt{d}}\sum_{j=1}^p\left\|\Bhat_{\gLASSO,j}-Q^\top B_j\right\|
\leq\frac{4\lassoreg^2|S|}{\phi_S^2}.
\end{equation}
Let, in addition, $\extradeltarate\to\infty$ be arbitrary and $\phi_S^{-1}=\Op{1}$.
Then, there exists a choice of regularization parameter $\lassoreg$ with $\lassoreg=\Op{\max\left\{\frac{\extradeltarate}{\sqrt{d}}\xi_n,\frac{\log\max(pd,n)}{\sqrt{n}}\right\}}$ such that
\begin{equation} \label{eq:thm:LASSO:convergence_rate:2}
\frac{1}{\sqrt{d}}\sum_{j=1}^p\left\|\Bhat_{\gLASSO,j}-Q^\top B_j\right\|
=\Op{|S|\max\left\{\frac{\extradeltarate}{\sqrt{d}}\xi_n,\frac{\log\max(pd,n)}{\sqrt{n}}\right\}}.
\end{equation}
\end{theorem}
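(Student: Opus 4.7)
The plan is to adapt the template for group LASSO in Chapters 6.2.2 and 8.6 of \cite{vdGB11} to two extra difficulties: we regress on the estimated latent positions $\Xhat$ rather than on $X$, and the identifiable parameter is $BQ$ for the rotation $Q$ from Assumption~\ref{assum:ttiRate}. Writing $\Delta:=\Bhat^{\gLASSO}-BQ$ and using $\|(BQ)_j\|=\|B_j\|$, the optimality of $\Bhat^{\gLASSO}$ in Equation~\eqref{eq:def:BhatgLASSO} yields the basic inequality
\begin{equation*}
\frac{1}{nd}\|Z\Delta\|_F^2
+\frac{\lassoreg}{\sqrt{d}}\sum_j\|\Bhat^{\gLASSO}_j\|
\leq\frac{2}{nd}\langle Z\Delta,\Xhat-ZBQ\rangle
+\frac{\lassoreg}{\sqrt{d}}\sum_j\|B_j\|.
\end{equation*}
Substituting the model $X=ZB+E$ into $\Xhat-ZBQ=EQ+(\Xhat-XQ)$ splits the cross term into a classical noise contribution and a network-estimation-error contribution; the whole analysis revolves around controlling these two contributions simultaneously.

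The probabilistic step, which I expect to be the main obstacle, is to show that both contributions are bounded by $\lassoreg_{0,n}/\sqrt{d} \cdot \sum_j\|\Delta_j\|$, thus defining the event $\calT_n$ of Equation~\eqref{eq:def:calTn}. Using $\langle Z\Delta, M\rangle = \sum_j \Delta_j^\top (Z^\top M)_j$ with Cauchy--Schwarz gives $|\langle Z\Delta, M\rangle| \leq \|Z^\top M\|_{\tti}\sum_j\|\Delta_j\|$, reducing the task to bounding $\|Z^\top(\Xhat-XQ)\|_{\tti}$ and $\|Z^\top EQ\|_{\tti}$. For the network-error term, the crude inequality $\|Z^\top(\Xhat-XQ)\|_{\tti}\leq\|\Xhat-XQ\|_{\tti}\max_j\|Z_{\cdot j}\|_1$, combined with Assumption~\ref{assum:ttiRate} and sub-gamma concentration of $n^{-1}\|Z_{\cdot j}\|_1$ (using the sub-gamma bound on $Z_{1j}^2-\IE Z_{1j}^2$ with a union bound over $j\in[p]$), produces a bound of order $n\ttiRate_n$, which accounts for the term $\ttiRate_n/\sqrt{d}$ in $\lassoreg_{0,n}$. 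For the noise term, I would condition on $Z$: since each column of $EQ$ is again an i.i.d.\ sub-gamma vector with the same parameters (the rotation is harmless column-by-column), Lemma~\ref{lem:cond_subGamma} makes every entry of $Z^\top EQ$ conditionally centered sub-gamma, and a union bound over the $pd$ entries yields $\|Z^\top EQ\|_{\tti}/n=\Op{\log\max(pd,n)/\sqrt{n}}$, accounting for the second term of $\lassoreg_{0,n}$. The subtlety is that these bounds must hold uniformly in $\Delta$, which is why the event is formulated in operator-style rather than pointwise, and care is needed to separate the randomness in $Z$ from that in $E$ via conditioning.

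On $\calT_n$ with $\lassoreg\geq 2\lassoreg_{0,n}$, the standard argument goes through: using $B_j=0$ for $j\in S^c$ and the triangle inequality to bound $\sum_j\|B_j\|-\sum_j\|\Bhat^{\gLASSO}_j\|\leq\sum_{j\in S}\|\Delta_j\|-\sum_{j\in S^c}\|\Delta_j\|$, we rearrange the basic inequality to
\begin{equation*}
\frac{1}{nd}\|Z\Delta\|_F^2+\frac{\lassoreg}{\sqrt{d}}\sum_{j\in S^c}\|\Delta_j\|
\leq\frac{2\lassoreg}{\sqrt{d}}\sum_{j\in S}\|\Delta_j\|,
\end{equation*}
which implies both $\Delta\in\calB_{p,d}(S)$ and hence, via Definition~\ref{def:compatibility}, $\sum_{j\in S}\|\Delta_j\|\leq\sqrt{d|S|}\|Z\Delta\|_F/(\sqrt{nd}\,\phi_S)$. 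Adding $\lassoreg/\sqrt{d}\cdot\sum_{j\in S}\|\Delta_j\|$ to both sides, plugging in the compatibility bound, and applying Young's inequality to the resulting quadratic in $\|Z\Delta\|_F/\sqrt{nd}$ produces Equation~\eqref{eq:thm:LASSO:convergence_rate:1}.

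For the second assertion, I would set $\lassoreg=C\max\{\extradeltarate\ttiRate_n/\sqrt{d},\log\max(pd,n)/\sqrt{n}\}$ for a sufficiently large constant $C$. Because $\extradeltarate\to\infty$, the sequence of constants $K_\epsilon$ can be absorbed so that the probability of $\calT_n$ tends to one as $n\to\infty$, and the bound in Equation~\eqref{eq:thm:LASSO:convergence_rate:1} holds with probability tending to one. Dropping the non-negative prediction-error term on the left-hand side and invoking the hypothesis $\phi_S^{-1}=\Op{1}$ then yields Equation~\eqref{eq:thm:LASSO:convergence_rate:2}.
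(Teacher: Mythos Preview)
Your proposal follows essentially the same route as the paper (basic inequality, define the event $\calT_n$ via a $\|\cdot\|_{\tti}$ bound, split into an $E$-term and an $(\Xhat-XQ)$-term, then the standard cone-and-compatibility argument). There is, however, one genuine gap in the noise step.

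You write that ``each column of $EQ$ is again an i.i.d.\ sub-gamma vector with the same parameters (the rotation is harmless column-by-column)''. This is not justified: the alignment matrix $Q$ from Assumption~\ref{assum:ttiRate} is \emph{random} and depends on $\Xhat$ and $X=ZB+E$, hence on $E$ itself. You therefore cannot treat $(EQ)_{ik}=\sum_\ell E_{i\ell}Q_{\ell k}$ as a fixed linear combination of the $E_{i\ell}$, and conditioning on $Z$ alone does not help. The paper avoids this entirely by observing that what you actually need is the $(\tti)$-norm of $Z^\top EQ$, and since $Q$ acts on the right, the $j$-th row satisfies
\[
\bigl\|(Z^\top EQ)_{j\cdot}\bigr\|=\bigl\|(Z^\top E)_{j\cdot}\,Q\bigr\|=\bigl\|(Z^\top E)_{j\cdot}\bigr\|,
\]
so $\|Z^\top EQ\|_{\tti}=\|Z^\top E\|_{\tti}$ and $Q$ disappears. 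One then union-bounds over the $pd$ entries $E_{\cdot k}^\top Z_{\cdot j}$ of $Z^\top E$ and applies Lemma~\ref{lem:cond_subGamma} to each, exactly as you intended. With this one-line fix your argument goes through.

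Two minor remarks: your displayed inequality on $\calT_n$ has slightly off constants (the paper obtains a factor $3$ on the right and $\tfrac{2}{nd}$ on the left; either version still yields the cone condition with constant $3$ used in Definition~\ref{def:compatibility}); and for the $(\Xhat-XQ)$-term the paper uses the same crude bound you propose, so your treatment there matches.
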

A detailed proof can be found in Appendix~\ref{app:proofsLASSO}.

\begin{remark} \label{rem:LASSO:simplify1}
The requirement of Theorem \ref{thm:LASSO:convergence_rate} that all entries of $Z$ have the same tail properties is only a mild restriction compared to allowing every variable to have its own sub-gamma parameters, in light of the fact that one typically rescales the features to be of comparable orders when performing regression. Similarly, a uniform tail bound on the columns of $E$ seems, in consideration of the previous discussion, reasonable to assume.
\end{remark}
\begin{remark} \label{rem:LASSO:simplify2}
In the proof of Theorem \ref{thm:LASSO:convergence_rate}, we give precise definitions for $N_{\epsilon}$ and $K_{\epsilon}$. Hence, it is possible to remove the asymptotic view point from Equation~\eqref{eq:thm:LASSO:convergence_rate:2} and prove a finite sample result by replacing these definitions in \eqref{eq:def:lassoreg0n} and \eqref{eq:thm:LASSO:convergence_rate:1}. In fact, by investigating the interplay of all involved constants, for any given $n$, it is possible to choose $N_{\epsilon}<n$ (even $N_{\epsilon}=1$ is possible) at the expense of an increase in $K_{\epsilon}$. In this way, one can produce a finite sample statement that holds for all $n\in\IN$.
\end{remark}

To illustrate our above results, let us consider a concrete situation.
Suppose that $p\to\infty$ and that we are in the setting of Theorem~\ref{thm:vince}, in which case we have $\ttiRate_n=\sqrt{d/n} \log^2 n$.
Choose furthermore $\extradeltarate=\sqrt{\log p}\to\infty$. Then, the convergence rate of Equation~\eqref{eq:thm:LASSO:convergence_rate:2} in Theorem~\ref{thm:LASSO:convergence_rate} simplifies to
\begin{equation*} 
\frac{1}{\sqrt{d}}\sum_{j=1}^p\left\|\Bhat_{\gLASSO,j}-Q^\top B_j\right\|
= \Op{|S|\frac{\max\left\{ \sqrt{\log p}\log^2n,\log (pd \vee n) \right\} }
	{\sqrt{n}} }.
\end{equation*}
In typical LASSO results \citep[see, e.g., Corollary 6.2 in][]{vdGB11}, the convergence rate in $\ell_1$-norm is of the order $|S|\sqrt{n^{-1} \log p}$. 
The rate implied by Theorem~\ref{thm:LASSO:convergence_rate} is slightly worse by logarithmic factors.
This is for two reasons.
First, because we observe $\Xhat$ rather than $X$, we must control the error between these two terms by ensuring that certain events hold with high probability.
Using Theorem~\ref{thm:vince} for this purpose yields the additional $\log^2 n$ term.
Secondly, we allow for random covariates $Z$, which we handle by a slightly different exponential inequality in Lemma~\ref{lem:cond_subGamma}.
The union bounds needed for this argument incur the additional $\log pd$ term in our rate.

Theorem~\ref{thm:LASSO:convergence_rate} states that there is a random orthogonal transformation $Q$ such that each row of $\Bhat_{\gLASSO}$ lies close to the corresponding row of $BQ$.
Since $Q$ is unknown in practice, it is only of limited use to directly interpret the values of $\Bhat_{\gLASSO}$.
If one is interested in the actual effects of the covariates, one can instead interpret $\Bhat_{\gLASSO}\Bhat_{\gLASSO}^\top$ which approximates $BQ(BQ)^\top=BB^\top$, since $Q$ is orthogonal.
This allows us to interpret the size of the overall effect of the covariates and their relations to each other, as discussed in Remark~\ref{rem:BQ}.

\subsection{Results for CCA-Based Approaches} \label{subsec:cca-results}

Motivated by relaxing the linear model assumptions of the regression-based methods above, we now present two methods based on CCA.
These methods aim to detect correlation between the node-level features and network structure as encoded in the latent positions without assuming anything about how $X$ is generated from $Z$ (or vice versa).
One way to assess whether network structure is correlated with node-level features $Z \in \IR^{n \times p}$ is to consider the CCA alignment between $Z$ and the latent positions of the network.
That is, we look at the ordered singular values of
%\begin{equation*} \begin{aligned}
%&\CCA( Z, X ) \\
%&~~~=
%\left( \frac{1}{n} \sum_{i=1}^n (X_i-\xbar) (X_i-\xbar)^\top \!\! \right)^{\!\!-1/2} \!
%\left( \frac{1}{n} \sum_{i=1}^n (X_i-\xbar) (Z_i-\zbar)^\top \!\! \right) \!
%\left( \frac{1}{n} \sum_{i=1}^n (Z_i-\zbar) (Z_i-\zbar)^\top \!\! \right)^{\!\!-1/2} 
%\! \! \! \! ,
%\end{aligned} \end{equation*}
%where $\xbar = n^{-1} \sum_i X_i \in \IR^d$ and $\zbar = n^{-1} \sum_i Z_i$.
%Recalling the centering matrix $M_n \in \IR^{n \times n}$ as defined in Equation~\eqref{eq:def:Mn}, we can simplify the above expression to
\begin{equation} \label{eq:def:CCA:XtoZ}
\CCA( Z, X )  
= \Sigmatilde_X^{-1/2} 
\Sigmatilde_{X,Z} 
\Sigmatilde_Z^{-1/2},
\end{equation}
where $M_n \in \IR^{n \times n}$ is the centering matrix defined in Equation~\eqref{eq:def:Mn}, $\Sigmatilde_{X,Z}$ is as defined in Equation~\eqref{eq:def:SigmatildeXZ}, and $\Sigmatilde_X \in \IR^{d \times d}$ and $\Sigmatilde_Z \in \IR^{p \times p}$ are as defined in Equation~\eqref{eq:def:SigmatildeMarginal}.

In particular, we have the associated CCA coefficient
\begin{equation} \label{eq:def:rho:XtoZ}
\rho_{X,Z} = \left\| \CCA( Z, X ) \right\|.
\end{equation}

\begin{remark} \label{rem:covar-reg}
We note that the definition of the CCA coefficient defined in Equation~\eqref{eq:def:rho:XtoZ} relies on covariance estimates $\Sigmatilde_X, \Sigmatilde_Z$ and $\Sigmatilde_{X,Z}$.
In the high-dimensional regime where $p \rightarrow \infty$, as considered in Sections~\ref{subsubsec:ridge} and~\ref{subsubsec:LASSO}, the latter two estimates will be unstable in the absence of regularization of some form.
The simplest approach to this is to replace, for example, $\Sigmatilde_Z$ with a regularized version
\begin{equation*}
\frac{1}{n} Z^\top M_n Z + \gamma I
\end{equation*}
for some small $\gamma > 0$.
Of course, high-dimensional covariance estimation is a well-studied problem, and far more sophisticated methods have been proposed \citep[see, e.g.,][and references therein]{BicLev2008,FisSun2011,LedWol2012,Touloumis2015,YaoZheBai2015}.
In what follows, for the sake of space and simplicity, we restrict our attention to the case where both $d$ and $p$ are fixed with respect to $n$.
We anticipate that the results in the sequel can be extended to the regime where $p$ grows with $n$ by replacing $\Sigmatilde_Z$ with a suitably-chosen regularized estimate of $\Sigma_Z$ from this extensive literature.
In our high-dimensional experiments presented in Section~\ref{sec:expts}, we use the regularization method from \cite{FisSun2011}.
\end{remark}

As discussed previously, in practice, we do not have access to the true latent positions $X$, but rather must estimate them (up to some rotational non-identifiability) as $\Xhat$.
Substituting these estimates for the true latent positions in Equation~\eqref{eq:def:CCA:XtoZ}, we consider the leading singular values of
\begin{equation} \label{eq:def:CCA:XhatZ}
\CCA( \Xhat, Z ) = \left( \frac{1}{n} \Xhat^\top M_n \Xhat \right)^{-1/2}
		\frac{ \Xhat^\top M_n Z }{ n } ~ \Sigmatilde_Z^{-1/2} .
\end{equation}

The following result establishes that CCA applied to the estimated latent positions and the node-level features correctly estimates the correlation between the node features and the latent network structure. 
A detailed proof is given in Appendix~\ref{apx:cca:XtoZ}.

In what follows, we assume that the latent positions $X_1,X_2,\dots,X_n$ are drawn i.i.d.~with a non-singular covariance matrix $\Sigma_X \in \IR^{d \times d}$.
We note that this assumption is with minimal loss of generality: if $\Sigma_X$ has rank $d' < d$, we may make a suitable change of basis and restrict our attention to a $d'$-dimensional subspace, yielding $d'$-dimensional latent positions with an invertible covariance matrix.

\begin{theorem} \label{thm:CCA:XhatZ}
Suppose that Assumptions~\ref{assum:spectralRate},~\ref{assum:ttiRate}
and~\ref{assum:invertible} hold,
% Lemma ~\ref{lem:svals:X},
%Lemma~\ref{lem:control:SigmaX}; same assumptions as lem:svals:X
and suppose further that the quantity $\ttiRate_n$ in Assumption~\ref{assum:ttiRate} obeys $\ttiRate_n = o( 1 )$.
Then
\begin{equation*}
\rho_{\Xhat,Z} = \left\| \CCA( \Xhat, Z ) \right\|
\end{equation*}
satisfies
\begin{equation*}
\left| \rho_{\Xhat,Z} - \rho_{X,Z} \right| = \op{ 1 }.
\end{equation*}
\end{theorem}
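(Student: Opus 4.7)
The plan is to reduce the comparison to one where the orthogonal non-identifiability has been absorbed and then to use standard matrix perturbation arguments. First, observe that the CCA matrix is equivariant under orthogonal rotation of $X$: for any $Q \in \bbO_d$ we have $\Sigmatilde_{XQ} = Q^\top \Sigmatilde_X Q$ and $\Sigmatilde_{XQ,Z} = Q^\top \Sigmatilde_{X,Z}$, so $\CCA(XQ,Z) = Q^\top \CCA(X,Z)$, and hence $\rho_{XQ,Z} = \rho_{X,Z}$. I would therefore take $Q$ to be the (random) orthogonal matrix guaranteed by Assumption~\ref{assum:ttiRate}, and aim to show
\begin{equation*}
\bigl\| \CCA(\Xhat,Z) - Q^\top \CCA(X,Z) \bigr\| = \op{1},
\end{equation*}
from which the theorem follows by Weyl's inequality applied to singular values.

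Next I would split the target difference additively across the three factors $\Sigmatilde_{\Xhat}^{-1/2}$, $\Sigmatilde_{\Xhat,Z}$, $\Sigmatilde_Z^{-1/2}$, and bound each replacement error separately. Writing $\Delta = \Xhat - XQ$, Assumption~\ref{assum:ttiRate} gives $\|\Delta\|_{\tti} = \Op{\ttiRate_n}$ and hence $\|\Delta\|_F \le \sqrt{n}\, \|\Delta\|_{\tti} = \Op{\sqrt{n}\, \ttiRate_n}$. Since $\|M_n\| \le 1$ and, by Assumption~\ref{assum:invertible} together with the weak law of large numbers, $\|X\|_F = \Op{\sqrt{n}}$ and $\|Z\|_F = \Op{\sqrt{n}}$, decomposing
\begin{equation*}
\Sigmatilde_{\Xhat} - \Sigmatilde_{XQ}
= \tfrac{1}{n} \Delta^\top M_n \Delta + \tfrac{1}{n} \Delta^\top M_n XQ + \tfrac{1}{n} (XQ)^\top M_n \Delta
\end{equation*}
and similarly for $\Sigmatilde_{\Xhat, Z} - \Sigmatilde_{XQ, Z}$ yields $\|\Sigmatilde_{\Xhat} - \Sigmatilde_{XQ}\| = \Op{\ttiRate_n} = \op{1}$ and $\|\Sigmatilde_{\Xhat,Z} - \Sigmatilde_{XQ,Z}\| = \Op{\ttiRate_n} = \op{1}$.

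The main technical obstacle is passing from a spectral bound on $\Sigmatilde_{\Xhat} - \Sigmatilde_{XQ}$ to one on $\Sigmatilde_{\Xhat}^{-1/2} - \Sigmatilde_{XQ}^{-1/2}$, since the inverse square root is not Lipschitz without a lower bound on the spectrum. For this I would invoke the weak law of large numbers: under Assumption~\ref{assum:invertible}, $\Sigmatilde_X \to \Sigma_X$ in probability, where $\Sigma_X$ is invertible, so with probability tending to one the smallest eigenvalue of $\Sigmatilde_{XQ} = Q^\top \Sigmatilde_X Q$ is bounded below by some constant $\lambda_X/2 > 0$. On this event the standard perturbation identity for the matrix square root \citep[see, e.g.,][Theorem VII.5.7]{bhatia} gives $\|\Sigmatilde_{\Xhat}^{-1/2} - \Sigmatilde_{XQ}^{-1/2}\| = \Op{\ttiRate_n}$, and an analogous but simpler argument (not needing $\hat{\cdot}$-replacement) gives $\Sigmatilde_Z^{-1/2} = \Sigma_Z^{-1/2} + \op{1}$. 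Combining these bounds with the uniform (in probability) norm control on $\Sigmatilde_{XQ,Z}$, $\Sigmatilde_X^{-1/2}$, and $\Sigmatilde_Z^{-1/2}$, a telescoping argument on the three-factor product yields $\|\CCA(\Xhat,Z) - Q^\top \CCA(X,Z)\| = \op{1}$. Applying Weyl's inequality, $|\rho_{\Xhat,Z} - \rho_{XQ,Z}| = \op{1}$, and the rotational invariance from the first step completes the proof.
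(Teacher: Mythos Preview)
Your proof is correct and takes a genuinely different route from the paper's. You fix $Q$ to be the orthogonal matrix from Assumption~\ref{assum:ttiRate}, telescope the three-factor product $\Sigmatilde_{\Xhat}^{-1/2}\Sigmatilde_{\Xhat,Z}\Sigmatilde_Z^{-1/2}$, and control the inverse-square-root perturbation via an eigenvalue lower bound coming from $\Sigmatilde_X \to \Sigma_X$. The paper instead writes the SVD $\tfrac{1}{\sqrt{n}}M_n X = WRV^\top$ and observes that $\Sigmatilde_X^{-1/2}\cdot\tfrac{1}{\sqrt{n}}X^\top M_n = VW^\top$ is a partial isometry, so that $\CCA(X,Z) = VW^\top \cdot \tfrac{1}{\sqrt{n}}M_n Z\,\Sigmatilde_Z^{-1/2}$; since $\bigl\|\tfrac{1}{\sqrt{n}}M_n Z\,\Sigmatilde_Z^{-1/2}\bigr\| \le 1$ by construction, everything reduces to bounding $\bigl\|Q^\top \Vhat\What^\top - VW^\top\bigr\|$, which is handled by a $\sin\Theta$ argument via \cite{CaiZha2018} and \cite{YuWanSam2014}, with $Q$ chosen specifically to align these subspaces rather than taken from Assumption~\ref{assum:ttiRate}. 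Your approach is more elementary in that it avoids Davis--Kahan entirely; the paper's approach is slicker in that it sidesteps any matrix-square-root perturbation lemma and never needs to separately bound $\|\Sigmatilde_Z^{-1/2}\|$ or $\|\Sigmatilde_{X,Z}\|$. Both give the same $\Op{\ttiRate_n}$ rate. One minor redundancy in your sketch: since $\Sigmatilde_Z^{-1/2}$ is common to $\CCA(\Xhat,Z)$ and $\CCA(XQ,Z)$, you only need $\|\Sigmatilde_Z^{-1/2}\| = \Op{1}$, not the stronger convergence $\Sigmatilde_Z^{-1/2} \to \Sigma_Z^{-1/2}$.
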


Following the intuitive idea of performing CCA between the node-level features of the network with the latent positions, it is natural to consider CCA alignment of the node covariates and the adjacency matrix itself.
That is, we look at the ordered singular values of
\begin{equation*}
\CCA( A, Z )
= \Sigmatilde_A^{-1/2} \Sigmatilde_{A,Z} \Sigmatilde_Z^{-1/2},
\end{equation*}
where $\Sigmatilde_Z$ is as defined in Equation~\eqref{eq:def:SigmatildeMarginal} and $\Sigmatilde_A$ is defined analogously as
\begin{equation} \label{eq:def:SigmatildeA}
\Sigmatilde_A 
%= \frac{1}{n} \sum_{i=1}^n (A_i-\Abar) (A_i-\Abar)^\top 
%= \frac{1}{n} (A- \Abar \onevec_n^\top )^\top (A- \Abar \onevec_n^\top)
%= \frac{1}{n} A^\top A - \Abar \Abar^\top
= \frac{1}{n} A M_n A \in \IR^{ n \times n },
\end{equation}
and, analogously to Equation~\eqref{eq:def:SigmatildeXZ},
\begin{equation} \label{eq:def:SigmatildeAZ}
\Sigmatilde_{A,Z}
%= \frac{1}{n} \sum_{i=1}^n (A_i-\Abar)(Z_i-\Zbar)
%= \frac{1}{n} (A - \Abar \onevec_n^\top)^\top ( Z- \onevec_n \Zbar^\top )
= \frac{1}{n} A M_n Z.
\end{equation}

Unfortunately, doing this na\"{i}vely will result in a trivial CCA alignment score close to $1$, owing to the fact that $A$ is typically near full-rank.
To account for this problem, we regularize the eigenvalues of $\Sigmatilde_A$, and consider the regularized full-network CCA matrix
\begin{equation} \label{eq:def:CCA:fullnet}
\CCAregu( A, Z )
= \left( \Sigmatilde_A + \regu I \right)^{-1/2}
        \frac{1}{n} A M_n Z \Sigmatilde_Z^{-1/2} ,
\end{equation}
where $\regu > 0$.
This yields a corresponding CCA coefficient
\begin{equation} \label{eq:def:rho:AtoZ}
\rho_{A,Z}^{(\regu)} = \left\| \CCAregu( A, Z ) \right\| .
\end{equation}

As in Theorem~\ref{thm:CCA:XhatZ} above, under suitable conditions, this (regularized) full-network CCA method also correctly detects the presence or absence of correlation between the node-level features and network structure.
A detailed proof is given in Appendix~\ref{apx:cca:fullnet}.

\begin{theorem} \label{thm:CCA:fullnet}
%assumptions for Lemma~\ref{lem:fullnettrue:singval}
% and for Lemma~\ref{lem:svalgrowth:XandSigmatildeXX}
Under Assumption~\ref{assum:invertible},
%assums for Lemma~\ref{lem:full2true:signalclose}
suppose that Assumption~\ref{assum:spectralRate} holds with
\begin{equation} \label{eq:assum:fullnet:spectralRate:growth}
\spectralRate_n = o( n )
\end{equation}
and Assumption~\ref{assum:subspaceRate} holds with
\begin{equation} \label{eq:assum:fullnet:subspaceRate:growth}
\subspaceRate_n = o( n ) .
\end{equation}
Let $\rho_{A,Z}^{(\regu)}$ be as in Equation~\eqref{eq:def:rho:AtoZ} and let $\rho_{X,Z}$ as in Equation~\eqref{eq:def:rho:XtoZ}. 
Then, provided that the regularization parameter $\regu$ is positive and obeys $\regu = O(n)$,
\begin{equation*}
\left| \rho_{A,Z}^{(\regu)} - \rho_{X,Z} \right| 
=  \Op{ \frac{\spectralRate_n}{\sqrt{\regu n} }
+ \frac{|\regu|}{n} + \frac{ \subspaceRate_n }{ n^{3/2} } } .
\end{equation*}
\end{theorem}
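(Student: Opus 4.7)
The plan is to introduce the intermediate coefficient $\rho_{XX^\top,Z}^{(\regu)}:=\|\CCAregu(XX^\top,Z)\|$ obtained by plugging the population matrix $XX^\top$ in place of $A$ in Equation~\eqref{eq:def:CCA:fullnet}, and apply the triangle inequality
\begin{equation*}
\left|\rho_{A,Z}^{(\regu)}-\rho_{X,Z}\right|
\;\leq\; \left|\rho_{A,Z}^{(\regu)}-\rho_{XX^\top,Z}^{(\regu)}\right|
+\left|\rho_{XX^\top,Z}^{(\regu)}-\rho_{X,Z}\right|.
\end{equation*}
The second piece isolates the bias introduced by the regularizer, while the first piece isolates the fluctuations of $A$ about $XX^\top$.

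For the regularization-bias piece, I would compute $\Sigmatilde_{XX^\top}=\tfrac{1}{n}XX^\top M_nXX^\top=X\Sigmatilde_XX^\top$ and take the SVD $X\Sigmatilde_X^{1/2}=UDV^\top$, so that $\Sigmatilde_{XX^\top}=UD^2U^\top$ has $d$ nonzero eigenvalues, each of order $n$ by Assumption~\ref{assum:invertible} together with concentration of $\Sigmatilde_X$. Since $\tfrac{1}{n}XX^\top M_nZ=X\Sigmatilde_{X,Z}$ has column space inside $\operatorname{col}(U)=\operatorname{col}(X)$, the $\regu^{-1/2}(I-UU^\top)$ component of $(\Sigmatilde_{XX^\top}+\regu I)^{-1/2}$ annihilates it, and $\CCAregu(XX^\top,Z)$ shares nonzero singular values with the $d\times p$ matrix $(D^2+\regu I)^{-1/2}DV^\top\Sigmatilde_X^{-1/2}\Sigmatilde_{X,Z}\Sigmatilde_Z^{-1/2}$. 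Comparing to $\CCA(X,Z)$ using orthogonality of $V$, the regularization error reduces to $\|(D^2+\regu I)^{-1/2}D-I_d\|\cdot\rho_{X,Z}=\Op{\regu/\lambda_{\min}(D^2)}=\Op{\regu/n}$.

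For the substitution piece, set $N:=A-XX^\top$, $K_A:=\Sigmatilde_A+\regu I$, and $K_P:=\Sigmatilde_{XX^\top}+\regu I$, and decompose
\begin{equation*}
\CCAregu(A,Z)-\CCAregu(XX^\top,Z)
= K_A^{-1/2}\tfrac{1}{n}NM_nZ\Sigmatilde_Z^{-1/2}
+\left(K_A^{-1/2}-K_P^{-1/2}\right)X\Sigmatilde_{X,Z}\Sigmatilde_Z^{-1/2}.
\end{equation*}
The first summand is handled by $\|K_A^{-1/2}\|\leq\regu^{-1/2}$ combined with $\|NM_nZ/n\|\leq\|N\|\|M_nZ\|/n=\Op{\spectralRate_n/\sqrt{n}}$ (from Assumption~\ref{assum:spectralRate} and the fact that $p$ is fixed, so $\|M_nZ\|=\Op{\sqrt{n}}$), yielding $\Op{\spectralRate_n/\sqrt{\regu n}}$. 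For the second summand I would use the resolvent identity $M^{-1/2}=\pi^{-1}\int_0^\infty s^{-1/2}(M+sI)^{-1}\,ds$ to write
\begin{equation*}
K_A^{-1/2}-K_P^{-1/2}
=\pi^{-1}\int_0^\infty s^{-1/2}(K_A+sI)^{-1}(\Sigmatilde_{XX^\top}-\Sigmatilde_A)(K_P+sI)^{-1}\,ds.
\end{equation*}
Two facts then cooperate: first, $(K_P+sI)^{-1}X\Sigmatilde_{X,Z}$ can be computed explicitly from the SVD of $X\Sigmatilde_X^{1/2}$ and has operator norm $O(\sqrt{n}/(n+\regu+s))$, so the effective regularization seen by $X\Sigmatilde_{X,Z}$ is of order $n$, not $\regu$; and second, $\|(K_A+sI)^{-1}\|\leq(\regu+s)^{-1}$, while the bilinear form $\|(\Sigmatilde_{XX^\top}-\Sigmatilde_A)Y\|$ acting on the low-rank object $X\Sigmatilde_{X,Z}$ can be controlled via Assumption~\ref{assum:subspaceRate} applied to $W^\top NV$, with $W$ taken from an orthonormal basis of $\operatorname{col}(X)$ and $V$ from $M_nZ/\sqrt{n}$ (both $n\times d$ with bounded-norm columns and conditionally independent of $N$ given $X$), yielding $\Op{\subspaceRate_n}$. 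Integrating in $s$ produces the $\Op{\subspaceRate_n/n^{3/2}}$ contribution.

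The main obstacle is the second summand of the substitution piece: the direct bound $\|K_A^{-1/2}-K_P^{-1/2}\|\cdot\|X\Sigmatilde_{X,Z}\|\lesssim\regu^{-3/2}\spectralRate_n\sqrt{n}$ is far too weak and does not match the claimed rate. The resolvent representation combined with the observation that $X\Sigmatilde_{X,Z}$ lives in the well-conditioned $d$-dimensional signal subspace of $K_P$, and the use of the bilinear subspace assumption rather than the cruder spectral bound on $\Sigmatilde_{XX^\top}-\Sigmatilde_A$, is what produces the correct tight dependence on both $\regu$ and $\subspaceRate_n$.
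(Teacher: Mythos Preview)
Your route is genuinely different from the paper's. The paper compares $\CCAregu(A,Z)$ directly to the \emph{unregularized} object $\CCAdagger(XX^\top,Z)$ (built with the Moore--Penrose inverse of $\Sigmatilde_{XX^\top}$), shows its nonzero singular values coincide with those of $\CCA(X,Z)$, and then controls $\CCAregu(A,Z)-\CCAdagger(XX^\top,Z)$ by expanding both regularized and pseudoinverse roots in the SVDs of $AM_n/\sqrt{n}$ and $XX^\top M_n/\sqrt{n}$ and invoking Davis--Kahan/Cai--Zhang subspace perturbation to align $\Whatpara$ with $\Wtildepara$. Your strategy of inserting the regularized intermediate $\CCAregu(XX^\top,Z)$ and handling the $A\to XX^\top$ substitution via the integral representation $M^{-1/2}=\pi^{-1}\int_0^\infty s^{-1/2}(M+sI)^{-1}\,ds$ is more elementary: the bias piece reduces to a scalar computation on $D$, and the substitution piece avoids any explicit sin-$\Theta$ bounds.

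There is, however, a mistake in your accounting of the second summand. With $\|(K_A+sI)^{-1}\|\le(\regu+s)^{-1}$, $\|K_P-K_A\|=\Op{\spectralRate_n}$ (which follows from Assumption~\ref{assum:spectralRate} alone), and your correct bound $\|(K_P+sI)^{-1}X\Sigmatilde_{X,Z}\Sigmatilde_Z^{-1/2}\|=\Op{\sqrt{n}/(n+\regu+s)}$, the integral evaluates to
\[
\int_0^\infty \frac{s^{-1/2}\,\spectralRate_n\sqrt{n}}{(\regu+s)(n+\regu+s)}\,ds
\;\asymp\;\frac{\spectralRate_n}{\sqrt{\regu n}},
\]
not $\subspaceRate_n/n^{3/2}$. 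Your proposed use of Assumption~\ref{assum:subspaceRate} does not sharpen this: once $K_P-K_A$ is applied, the result contains a term $\tfrac{1}{n}NM_nP\,v$ which leaves $\operatorname{col}(X)$, and the left factor $(K_A+sI)^{-1}$ then acts on it only through the crude bound $(\regu+s)^{-1}$; moreover, to invoke Assumption~\ref{assum:subspaceRate} on the left side one would need $\Whatpara^\top N$, but $\Whatpara$ depends on $A$ and hence is not conditionally independent of $N$. Extracting the genuine $\subspaceRate_n/n^{3/2}$ contribution requires exactly the Davis--Kahan alignment that the paper carries out.

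That said, this error is harmless for the theorem as stated: your corrected bound for the second summand is $\Op{\spectralRate_n/\sqrt{\regu n}}$, which is already present from the first summand. Your argument therefore yields $|\rho_{A,Z}^{(\regu)}-\rho_{X,Z}|=\Op{\spectralRate_n/\sqrt{\regu n}+\regu/n}$, and this implies the stated bound trivially (and in fact does not require Assumption~\ref{assum:subspaceRate} at all).
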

\begin{corollary} \label{cor:CCA:fullnet:consistent}
Under the setting of Theorem~\ref{thm:CCA:fullnet}, suppose that
$\spectralRate_n^2 / n \ll \regu \ll n$.
Then
\begin{equation*}
\left| \rho_{A,Z}^{(\regu)} - \rho_{X,Z} \right| 
= \op{ 1 } .
\end{equation*}
\end{corollary}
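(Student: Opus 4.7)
The plan is to simply substitute the growth conditions from the corollary hypothesis into the three-term bound supplied by Theorem~\ref{thm:CCA:fullnet} and verify that each term is $\op{1}$. Since Theorem~\ref{thm:CCA:fullnet} already does the heavy lifting, the corollary's proof amounts to an asymptotic bookkeeping exercise.

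First, I would recall from Theorem~\ref{thm:CCA:fullnet} that
\begin{equation*}
\left| \rho_{A,Z}^{(\regu)} - \rho_{X,Z} \right|
= \Op{ \frac{\spectralRate_n}{\sqrt{\regu n}}
+ \frac{|\regu|}{n} + \frac{\subspaceRate_n}{n^{3/2}} },
\end{equation*}
so it suffices to check that each of the three summands tends to zero in probability (indeed, deterministically) under the hypothesis $\spectralRate_n^2/n \ll \regu \ll n$.

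For the first term, the lower bound $\regu \gg \spectralRate_n^2/n$ rearranges to $\regu n \gg \spectralRate_n^2$, equivalently $\sqrt{\regu n} \gg \spectralRate_n$, so $\spectralRate_n / \sqrt{\regu n} = o(1)$. For the second term, the upper bound $\regu \ll n$ gives directly $|\regu|/n = o(1)$ (we use $\regu > 0$ from the hypothesis of Theorem~\ref{thm:CCA:fullnet}, so the absolute value is redundant). For the third term, we note that Assumption~\ref{assum:subspaceRate} is invoked in Theorem~\ref{thm:CCA:fullnet} only with the growth condition $\subspaceRate_n = o(n)$, so in particular $\subspaceRate_n / n^{3/2} = o(n^{-1/2}) = o(1)$.

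Combining the three gives $|\rho_{A,Z}^{(\regu)} - \rho_{X,Z}| = \op{1}$. The main ``obstacle'' is really just noticing that the middle and third terms require no additional assumption beyond what is already in the hypotheses of Theorem~\ref{thm:CCA:fullnet}, so that the only nontrivial role of the corollary's extra assumption is to control the first term by forcing $\regu$ to dominate $\spectralRate_n^2/n$. Nothing beyond elementary asymptotic manipulation is needed.
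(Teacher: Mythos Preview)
Your proposal is correct and matches the paper's intended approach: the paper states the corollary without an explicit proof, treating it as an immediate consequence of the three-term bound in Theorem~\ref{thm:CCA:fullnet}, and your term-by-term verification is exactly the bookkeeping that justifies it.
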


\begin{remark}
It is tempting to optimize the rate in Theorem~\ref{thm:CCA:fullnet} with respect to the regularization parameter $\regu = \regu_n \ge 0$ by choosing $\regu = C\spectralRate_n^{2/3} n^{1/3}$.
Under the typical setting (see Theorems~\ref{thm:vince} and~\ref{thm:LLL} above), $\subspaceRate_n$ and $\spectralRate_n/\sqrt{n}$ are both of at most polylogarithmic order and so a natural option is to ignore the logarithmic factors and choose the (approximately) optimal $\regu = C n^{2/3}$.
We note, however, that $\spectralRate_n^2/n \ll \regu \ll n$ is sufficient to ensure consistency of the regularized full-network CCA method, in the sense that
\begin{equation*}
\left| \rho_{A,Z}^{(\regu)} - \rho_{X,Z} \right| = \op{ 1 }.
\end{equation*}
In our experiments in Section~\ref{sec:expts}, we take $\regu = \sqrt{n}$ throughout.
\end{remark}

%% Pretty sure we never actually have to define this. Keeping it just in case.
%Without loss of generality \citep[see, e.g., the discussion of identifiability  in][]{AthFisLevLyzParQinSusTanVogPri2018}, write the latent positions matrix $X$ as
%\begin{equation} \label{eq:X:spectraldecomp}
%X = U S^{1/2},
%\end{equation}
%where $U \in \IR^{n \times d}$ has orthonormal columns and $S = \diag(s_1,s_2,\dots,s_d) \in \IR^{d \times d}$.
%\begin{equation*}
%\Xhat = \Uhatpara \Shat^{1/2},
%\end{equation*}
%where $\Uhatpara \in \IR^{n \times d}$ has the leading $d$ orthonormal eigenvectors of $A$ as its columns and $\Shat$ is a diagonal matrix containing the corresponding eigenvalues.

\section{Experiments} \label{sec:expts}
In this section, we evaluate our proposed methods on a range of data, both simulated and from real-world sources.
Since, in applying all of the methods presented above, our ultimate goal is to assess whether or not the latent positions $X$ are associated to the node covariates $Z$, we describe in Section \ref{subsec:test} a permutation test to formally test for such association.
In the following Section \ref{subsec:simulations}, we begin by assessing our methods on simulated data.
Our focus lies on investigating the effects of model choice and of the strength and structure of correlation between the node covariates and latent positions on the permutation tests.
In Section~\ref{subsec:data_analysis}, we apply our methods to a pair of network data sets collected in the real world.

\subsection{Permutation Tests}
\label{subsec:test}
Our aim is to formally test the null-hypothesis \emph{no association between $X$ and $Z$}. In our model-based methods described in Section~\ref{subsec:mod_approach}, this corresponds to testing the hypothesis
\begin{equation*}
H_0 : B=0 .
\end{equation*}
In the case of our CCA-based methods described in Section~\ref{subsec:cca}, this corresponds to testing
\begin{equation*}
H_0 : \E ~\Sigma_X^{-1/2} (X_1 \!-\! \E X_1)^\top (Z_1\!-\! \E Z_1) 
	\Sigma^{-1/2}_Z = 0 ,
\end{equation*}
where $\Sigma_X$ and $\Sigma_Z$ are the population covariances of $X_1$ and $Z_1$, respectively.
In all cases, we use permutation testing.
That is, for a given choice of test statistic
\begin{equation*}
T_n = T( X_1,X_2,\dots,X_n ; Z_1,Z_2,\dots,Z_n ),
\end{equation*}
we repeatedly randomly permute the rows of $Z$ to obtain replicates of the form
\begin{equation*}
T\left( X_1,X_2,\dots,X_n ;
	Z_{\pi(1)},Z_{\pi(2)},\dots,Z_{\pi(n) } \right),
\end{equation*}
and compare these replicates to our original observed statistic.

\paragraph{LASSO-based methods}
\label{subsec:gLASSO}
For both regular and group LASSO, we choose $T_n$ to be the \emph{covariance test} introduced by \cite{LTTT14}.
See Appendix \ref{apx:ctest_glasso} for further details.
In the case of the regular LASSO, we separately test for association of each latent dimension $k\in[d]$ with the covariates and apply a Bonferroni correction to account for multiple testing.

For the group LASSO, \cite{LTTT14} does not provide a specific formula for $T_n$.
Therefore, we derive in Appendix \ref{apx:ctest_glasso} a formula for the covariance test statistic for the group LASSO in our case (or, more generally, in the case of multivariate regression; see Example 4.2 in \cite{HTW15} for a brief introduction).
These results may be of independent interest.
Note that using group LASSO avoids the multiple testing issues encountered in the regular LASSO.

\paragraph{Ridge Regression}
For ridge regression, the covariance test statistic used above cannot be directly generalized.
Instead, we take $T_n$ simply to be the $L^2$-norm of the estimate obtained when the tuning parameter is chosen via cross-validation.
Since each repetition of the permutation step requires running cross-validation again, computations are relatively slow for this statistic.

\paragraph{CCA based methods}
For the CCA-based methods, we use the CCA value as a test statistic.
That is, our test statistic is
\begin{equation*}
T( X_1,X_2,\dots,X_n ; Z_1,Z_2,\dots,Z_n )
=
\left\| 
	\frac{1}{n} \Sigmatilde_X^{-1/2} X^\top M_n Z \Sigmatilde_Z^{-1/2} 
\right\| .
\end{equation*}
In contrast to the penalized methods above, CCA has the advantage that we do not have to choose a tuning parameter. Therefore, the computations are comparatively fast and the permutation test can be computed efficiently.

\subsection{Simulations}
\label{subsec:simulations}
In this section, we evaluate our proposed methods empirically in six quite different scenarios that are designed to illustrate the advantages and shortcomings of our methods.
Four of these scenarios follow the model in Equation~\eqref{eq:model}.
The remaining two scenarios follow a different structure.

\subsubsection{Data Generating Processes}
\label{subsubsec:DGPs}
The data generating processes of the first four scenarios all have the following structure:
We specify a covariance matrix $\Sigma_Z \in \IR^{p \times p}$ and generate random node covariates according to
\begin{equation*}
Z_i \overset{\text{ind.}}{\sim}\calN(0,\Sigma_Z) ~\text{ for }~i \in [n] .
\end{equation*}
We generate a matrix $E \in \IR^{n \times d}$ according to
\begin{equation*}
E = [E_{ij}] \in \IR^{n \times d}
~\text{ where }~
E_{ij} \overset{\text{ind.}}{\sim} \calN(0,2)
~\text{ for }~i \in [n] \text{ and } j \in [d] .
\end{equation*}
We then specify a matrix of coefficients $B \in \IR^{p\times d}$ and construct latent positions and an adjacency matrix according to
\begin{equation*}
X = s ZB + E,\qquad A=XX^\top +\netnoise,
\end{equation*}
where $s \ge 0$ corresponds to a signal strength parameter and
\begin{equation*}
\netnoise_{ij}\overset{\text{ind.}}{\sim}\calN\left(0,(s^2+1)^2\right)
~\text{ for }~1 \le i \le j \le n .
\end{equation*}
Thus, our first four scenarios will all follow the model in Equation~\eqref{eq:model}, differing only in the choice of coefficients $B\in\IR^{p\times d}$, node feature covariance matrix $\Sigma_Z\in\IR^{p\times p}$ and signal strength $s\geq0$.
By varying $s \ge 0$, we change the distance from the base hypothesis
\begin{equation*}
H_0: \text{No covariate impact on latent positions},
\end{equation*}
which is true when $s=0$.
In Remark~\ref{rem:impc} below, we discuss in more detail how $s$ measures this deviation in terms of signal to noise ratio (SNR).

We describe now the first four data generation scenarios.
Note that the random steps below are only executed once in our experiments.
Afterwards, the chosen values remain fixed for all replicates.
Moreover, after generating the matrix $B$, we rescale its columns such that $\|B_{\cdot r}\|_2\in\{0,1\}$ for all $r=1,2,\dots,d$.
\begin{itemize}
\item[(i)] \textbf{No Sparsity.}
\begin{equation*}
\Sigma_Z:=\begin{bmatrix}
1      & \covarcov   & \dots  & \covarcov   & \covarcov \\
\covarcov   & 1      & \ddots & \vdots & \vdots \\
\vdots & \ddots & \ddots & 1      & \covarcov \\
\covarcov   & \covarcov   & \dots  & \covarcov   & 1
\end{bmatrix}, \quad
\covarcov=\frac{1}{p-1},\quad 
B_{kr}\sim\calN\left(\frac{1}{k},\frac{1}{9k^2}\right).
\end{equation*}
The different entries of $B$ are assigned independently.
\item[(ii)] \textbf{Entry-wise sparsity.} $\Sigma_Z$ as in (i) and $B_{kr}=0$ except for $\lfloor (\log n)/2d \rfloor$ randomly selected entries, which are independently distributed as $\calN(0,1)$.
\item[(iii)] \textbf{Row-wise sparsity.} $\Sigma_Z$ as in (i) and $B_{kr}=0$ except for $\lfloor (\log n)/2\rfloor$ randomly selected rows. For each selected $k \in [p]$, we generate $B_{kr}\overset{\text{ind.}}{\sim}\calN(0,1)$ independently for $r\in[d]$.
\item[(iv)] \textbf{High correlation.} $B$ is generated as in (iii) and
\begin{equation*}
\Sigma_Z :=
\sigma_0^2
\begin{bmatrix}
1          & \covarcov       & \covarcov^2 & \dots  & \covarcov^{p-1} \\
\covarcov       & 1          & \covarcov   & \dots  & \covarcov^{p-2} \\
\vdots     &  \ddots    & \ddots & \ddots & \vdots \\
\covarcov^{p-2} & \dots      & \covarcov   & 1      & \covarcov \\
\covarcov^{p-1} & \covarcov^{p-2} & \dots  & \covarcov   & 1
\end{bmatrix}
,\,
\covarcov=0.95,
\end{equation*}
with $\sigma_0$ chosen such that the maximal eigenvalue of $\Sigma_Z$ equals $2$.
\end{itemize}
\begin{remark}
\label{rem:impc}
Note that, by construction, the largest eigenvalue of $\Sigma_Z$ equals $2$ in all four cases described above.
For any column $B_{\cdot r}$ of $B$, we hence have
\begin{equation*}
\IE\left(\left|Z_i^\top B_{\cdot r}s\right|^2\right)
=
s^2B_{\cdot r}^\top \Sigma_Z B_{\cdot r}\leq 2s^2,
\end{equation*}
since the columns $B_{\cdot r}$ are standardized. 
Since $E_{ij}\sim\calN(0,2)$, we have a signal-to-noise-ratio (SNR) of at most $s^2$ in the regression of $X$ on $Z$.
Thus, $s\ge 0$ quantifies the level of the impact of the covariates on the latent positions as desired.
However, an undesired consequence of this is that the latent positions $X$ are more dispersed on average. Thus, the edge-level noise $\netnoise$ becomes less important. To counteract this effect, we increase the strength of the noise accordingly. To argue that the edge noise $\netnoise$ has exactly the right strength, we first compute the strength of the signal $X$ to be
\begin{equation*} \begin{aligned}
\IE(\|X\|_F^2)
&= \IE \trace XX^\top 
=\sum_{i=1}^n\IE X_i^\top X_i
=\sum_{i=1}^n\IE (Z_i^\top Bs+E_{i\cdot})(Z_i^\top Bs+E_{i\cdot})^\top  \\
&= \sum_{i=1}^n\IE\left(s^2Z_i^\top BB^\top Z_i+E_{i\cdot}E_{i\cdot}^\top \right)
=\sum_{i=1}^ns^2\trace B^\top  \left( \IE Z_iZ_i^\top  \right) B + 2dn \\
&= n s^2 \underbrace{\trace B^\top \Sigma_Z B}_{\leq2d}+2dn\leq2dn(s^2+1).
\end{aligned} \end{equation*}
On the other hand, the noise level is given by $\sqrt{ \IE\|\netnoise\|_F^2 }=n(s^2+1)$, because the noise is applied to $XX^\top $.
Hence, for our choices, the SNR in the regression of $A$ on $X$ is bounded by $2d$, which is independent of $s$.
Thus, the variance of the edge noise $\netnoise$ is always of the same order as the variation of the latent positions, regardless of our choice of $s$. In this sense, $s$ affects the importance of the covariates on the latent positions, but does not effect the SNR in the regression of $A$ on $X$.
\end{remark}

Our remaining two scenarios, which we number (v) and (vi) below, will deviate from the model in Equation~\eqref{eq:model}, providing an illustration of how our methods behave in the presence of misspecification.
In particular, these two additional models violate the linearity assumption in Equation~\eqref{eq:model}.
First, we replace the model in Equation~\eqref{eq:model} with a generalized linear model for the covariates $Z$ and the latent positions $X$.
This allows us to, for example, model latent positions that can be used to generate Bernoulli edges.
\begin{itemize}
\item[(v)] \textbf{Network} Let $Z$ and $B$ be as in (iii). Define $\alpha_{ir}$ for $i \in [n]$ and $r \in [d]$ according to
\begin{equation*}
\log \alpha_{ir}:=Z_{i\cdot}B_{\cdot r}s+E_{ir},
\end{equation*}
where $E_{ir}\sim\calN(0,2)$. Generate the latent positions according to $X_i\sim \Dirichlet(\alpha_{i\cdot})$, and generate $A$ as a binary random dot product graph with latent positions $X$.
\end{itemize}
Similarly to Remark~\ref{rem:impc}, the SNR in the regression of $\alpha$ in the covariates $Z$ is bounded from above by $s^2$.
Moreover, in the Dirichlet distribution, the entries of $\alpha_{i\cdot}$ are scaled to sum to $1$. Therefore, in contrast to Scenarios (i)-(iv), we do not have to manually re-scale the variance of the error in the generation of $A$ from $X$.

Finally, we consider a scenario that is completely different from the model in Equation~\eqref{eq:model} in the sense that the latent positions are not derived from the covariates but rather the covariates reflect a structure in the latent positions.
\begin{itemize}
\item[(vi)] \textbf{Assortative Mixing} Let $g_1,g_2,\dots,g_n\sim\mathcal{U}(\{1,2,3,4\})$ be uniformly random independent group assignments. We now assign each vertex $i$, conditionally on $g_i$, a latent position $X_i\sim\Dirichlet(\alpha_{g_i})$, where $\alpha_k\in\IR^4$ is a vector with all entries equal to one except for its $k$-th position, which is $100$.
For example, when $k=2$, $\alpha_2=(1, 100, 1, 1)$. $A$ is then generated as a binary random dot product graph with these latent positions. When two latent positions $X_i$, $X_j$ take values close to the corresponding expected values, then the connection probability between $i$ and $j$ is $0.94$ if $i$ and $j$ are in the same group and around $0.02$ otherwise. The first covariate $Z_{i1}$ of each vertex $i$ is generated from the group assignments via
\begin{equation*}
Z_{i1}=s\left(g_i-\frac{5}{2}\right)+\epsilon_i,
\end{equation*}
with $\epsilon_i\overset{\text{ind.}}{\sim}\calN(0,5/4)$.
The remaining $p-1$ covariates are drawn independently from $\calN(0,5/2)$.
\end{itemize}
Similarly to the other cases above, we have chosen the parameters such that all covariates have the same mean. Moreover, for $s=1$, they all have the same variance. The SNR in the generation of $Z_{i1}$ is exactly $s^2$.

\subsubsection{Results}
In order to asses finite-sample performance, we consider $1,000$ Monte Carlo repetitions in each of the six scenarios outlined above. We begin by assessing if all tests hold the required level on $H_0: s=0$ in all six scenarios.
Note that scenarios (i)-(iii) are identical in this case.
We apply all permutation tests with nominal level $\alpha=0.05$ and consider each scenario with $n=100$ vertices and $p=200$ covariates.
We set the number of latent dimensions to be $d=4$ and we use $100$ permutations for the permutation tests. The results are shown in Table \ref{tab:H0level}. We note that the tests hold the level in essentially all cases. There is only one slight deviation in the case of high-correlation, scenario (iv), for the LASSO.
By investigating QQ-plots, we have also shown that the distribution of the test statistic is as it should be. We present details in Appendix~\ref{apx:additional_emp_res:simulations}.

\begin{table}
\centering
\footnotesize
\caption{\footnotesize Percentages of rejections (Est.) in the case of $s=0$, i.e., the hypothesis $H_0$, for the different tests in the respective scenarios (all calibrated to level $5\%$). Note that, for $s=0$, scenarions (i)-(iii) are identical. The lower (LB) and upper (UB) bounds correspond to $99\%$ pointwise confidence regions.} 
\begin{tabular}{l|rrr|rrr|rrr|rrr}
  \hline
Scenario & \multicolumn{3}{c}{(i)-(iii)} & \multicolumn{3}{c}{(iv)} & \multicolumn{3}{c}{(v)} & \multicolumn{3}{c}{(vi)} \\
 & Est. & LB & UB & Est. & LB & UB & Est. & LB & UB & Est. & LB & UB \\ 
  \hline
LASSO  & 0.066 & 0.046 & 0.086 & 0.088 & 0.065 & 0.111 & 0.047 & 0.030 & 0.064 & 0.055 & 0.036 & 0.074 \\ 
gLASSO & 0.064 & 0.044 & 0.084 & 0.062 & 0.042 & 0.082 & 0.048 & 0.031 & 0.065 & 0.052 & 0.034 & 0.070 \\ 
ridge  & 0.056 & 0.037 & 0.075 & 0.046 & 0.029 & 0.063 & 0.055 & 0.036 & 0.074 & 0.048 & 0.031 & 0.065 \\ 
CCA    & 0.060 & 0.041 & 0.079 & 0.048 & 0.031 & 0.065 & 0.058 & 0.039 & 0.077 & 0.045 & 0.028 & 0.062 \\ 
netCCA & 0.053 & 0.035 & 0.071 & 0.052 & 0.034 & 0.070 & 0.060 & 0.041 & 0.079 & 0.057 & 0.038 & 0.076 \\ 
   \hline
\end{tabular}
\label{tab:H0level}
\end{table}

We turn next to assessing the performance of the tests when $H_0$ does not hold, that is, when $s > 0$.
We have argued in Section \ref{subsubsec:DGPs} that, in each scenario, $s^2$ can be interpreted as the signal to noise ratio (SNR).
Therefore, we call $s$ the root signal to noise ratio (rSNR). 
The results are shown in Figure \ref{fig:power}. In Scenario (i), which corresponds to a linear model without sparsity, we see that all methods perform equally well (with the methods based on the linear model performing slightly better, and within those the LASSO-based methods seem to have a slight advantage over the ridge-based method).
Note that we obtain good power already for rSNR equal to one.
As soon as sparsity is introduced in the linear model, the LASSO-based methods perform best: For entry-wise sparsity, Scenario (ii), the regular LASSO strictly outperforms all other methods. In the case of row-wise sparsity, Scenario (iii), the group LASSO performs equally well as the regular LASSO. 
If, in addition, high correlation is introduced, as in Scenario (iv), all methods perform similarly with slight advantages for the group LASSO (since the latent dimension selection that we used produced a high number of dimensions in this case, we have capped the number of latent dimensions in this scenario by $20$ to reduce the computational burden).
When we turn to non-linear models, e.g., Scenario (v), the CCA-based methods show their appeal. Here, the network-based CCA method performs best among all methods with both LASSO based methods being close competitors. Comparing scenarios (iv) and (v) to Scenarios (i)-(iii), we observe that slightly higher rSNRs are required in Scenarios (iv) and (v) in order to obtain the same power as in Scenarios (i)-(iii).
In the case of Scenario (v), a similar effect was observed in \cite{FosHof2015}, and was attributed to the reduced amount of information available in a binary network compared to a weighted network.
Finally, if we reverse the order of dependence in Equation~\eqref{eq:model}, i.e., the covariates are generated from the network as in Scenario (vi), the LASSO-based methods perform well for low rSNRs but are unable to keep the trend and are eventually outperformed by the network CCA method.
Note lastly that CCA and ridge behave similarly in Scenarios (i), (iii), (v), and (vi). One explanation for this behavior is that both methods aim to identify linear relations between covariates and latent positions and that at the same time the regularization used for the CCA (see Remark \ref{rem:covar-reg}) is actually very similar to the ridge penalty, as can be seen in Equation~\eqref{eq:def:bhatk}.

\begin{figure}
\includegraphics[width=\textwidth]{./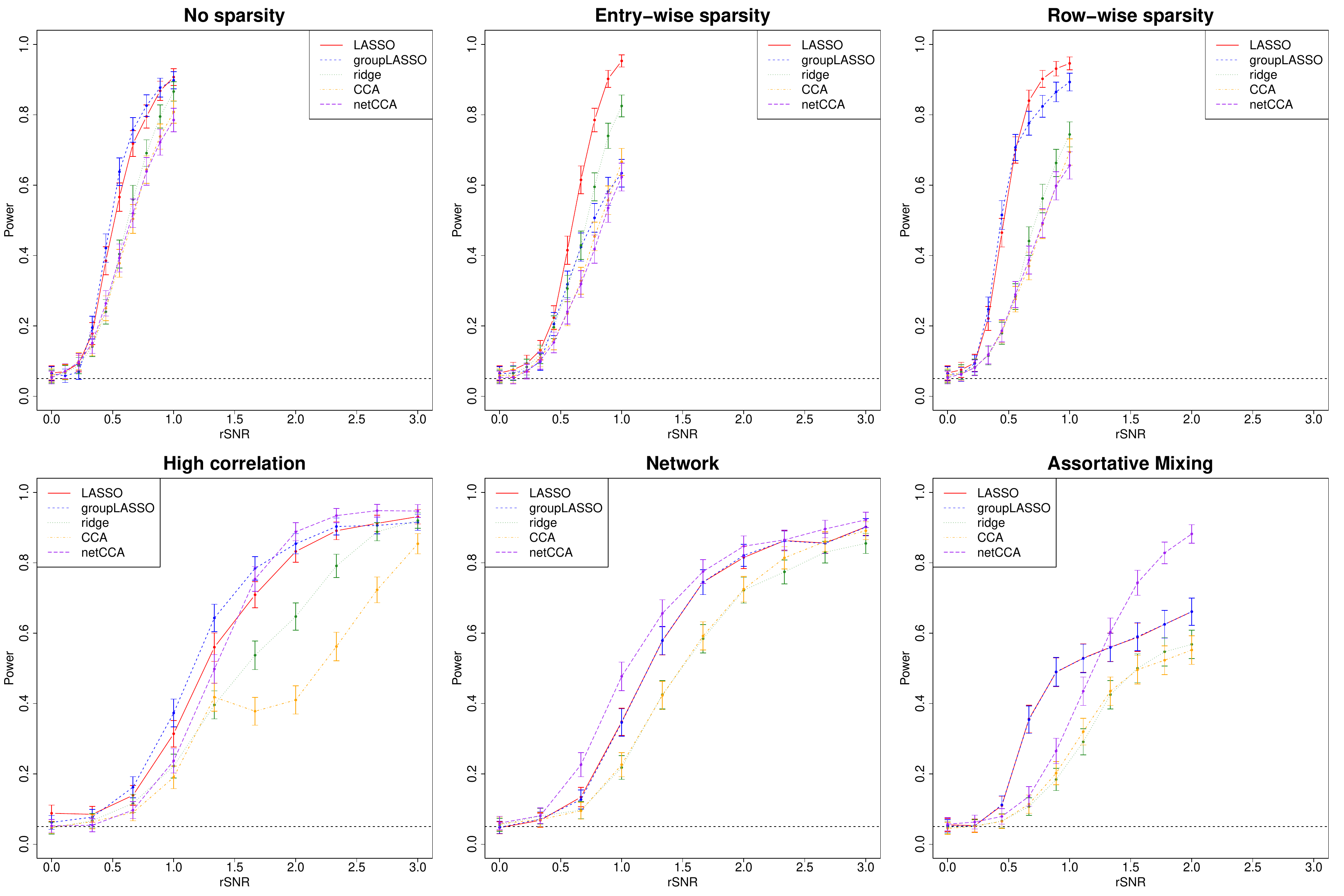}
\caption{Rejection rates as a function of rSNR for the five proposed methods in the six model scenarios. The error bars show $95\%$ point-wise confidence intervals for the rejection rate. The dashed line indicates the nominal $5\%$-level.}
\label{fig:power}
\end{figure}

Next, we consider the behavior of our proposed methods as the number of observations $n$ and the dimension $p$ vary.
We empirically investigate their behavior in the row-wise sparse setting, Scenario (iii), and in the assortative mixing case, Scenario (vi).
The results are shown in Figure \ref{fig:pn_stream}.
In Scenario (iii), the model assumptions of all methods are true, and thus all perform well even in low SNR settings.
On the other hand, Scenario (vi) was difficult for all tests.
When looking at the plots in the left column, it becomes visible that the performance of all tests improves as $n$ increases.
In addition, we note that for the high-dimensional case (i.e., when $n$ is not much larger than $p$), the LASSO-based methods perform slightly better in Scenario (iii). 
In the case $n>p$, all methods catch up with the LASSO-based methods. 
In contrast, in Scenario (vi), the network CCA method performs remarkably well in all cases other than $n<p$.
The right column of Figure~\ref{fig:pn_stream} shows the test performances in a very high-dimensional regime. Here we see that the LASSO-based methods roughly keep their performance while the CCA-based methods show their real strength primarily in the low-dimensional cases, in which they outperform the LASSO methods. Recall that the true data generating processes are very sparse in the covariates. Since the LASSO-based methods make specific use of this sparsity, it is plausible that they maintain higher power in the high-dimensional regimes. 
The CCA-based methods applied here do not promote sparsity, but it is to be expected that methods based on sparse CCA will be able to maintain high power in high-dimensional regimes.
We leave exploration of this to future work.

\begin{figure}
\includegraphics[width=\textwidth]{./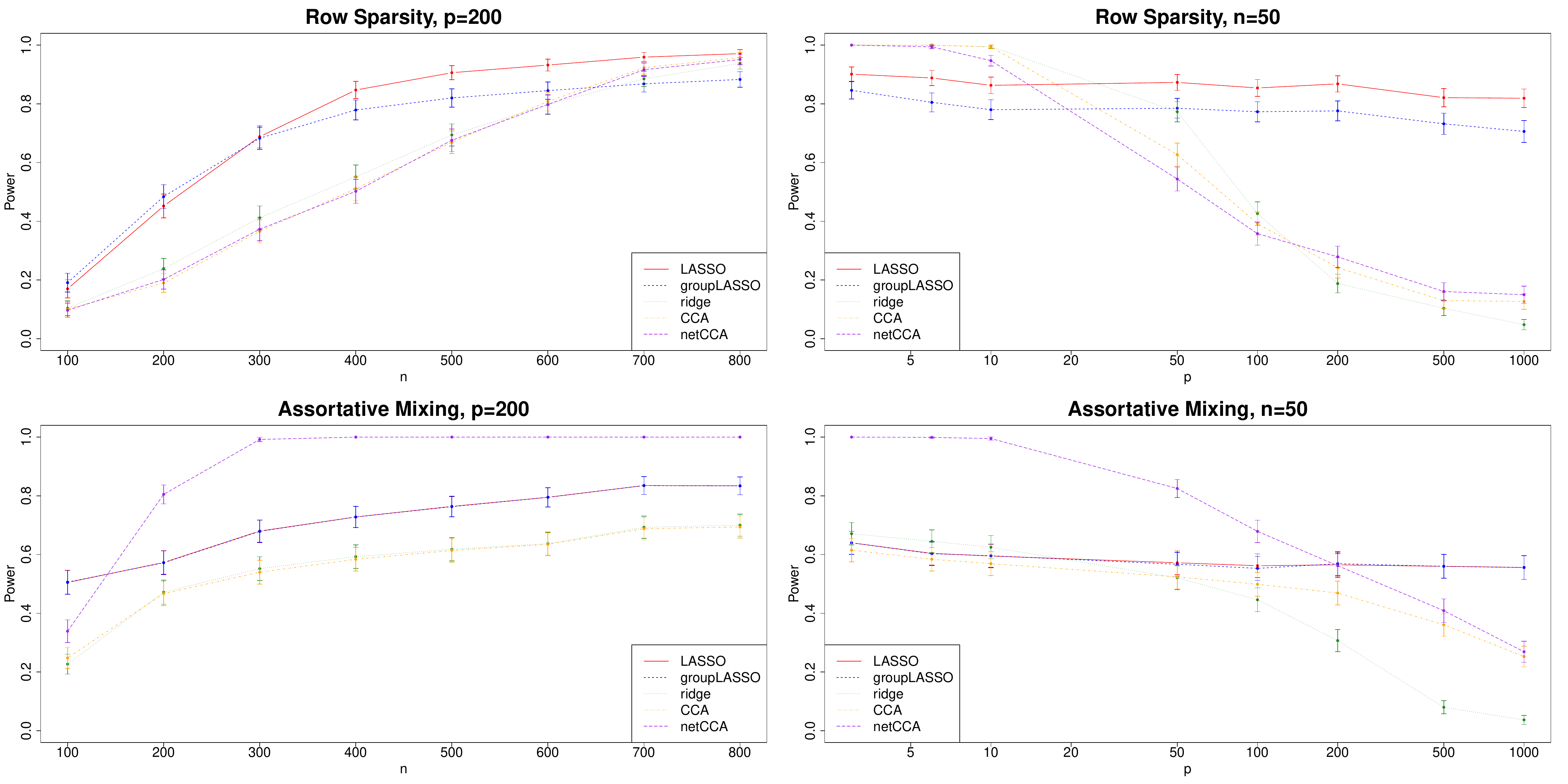}
\caption{Rejection rates of the five different tests for different choices of $p$ and $n$ in Scenarios (iii) and (vi). Top-left: Scenario (iii) with two non-zero rows and rSNR equal to $0.3$. Top-right: Scenario (iii) with two non-zero rows and rSNR equal to $1$. Bottom-left: Scenario (vi) with sSNR equal to $1$. Bottom-right: Scenario (vi) with rSNR equal to $2$.}
\label{fig:pn_stream}
\end{figure}

Finally, we compare our method to the distance correlation method presented in \cite{LeeShePriVog2019}. To this end, we compare the performance of all methods in Scenario (v) with $s=1$, that is, we have chosen a non-linear model.
We consider two different growth regimes: a high-dimensional regime in which we fix $n=50$ and consider increasing values of $p$, and a moderate-dimensional regime, where we fix $p=10$ and consider increasing values of $n$.
The left panel of Figure \ref{fig:runtime:pstream} shows the power of all methods as a function of $p$ when $n=50$.
We denote the method by \cite{LeeShePriVog2019} by \emph{Lee}. Note that we have chosen a fairly difficult situation to test the methods. As previously, the LASSO methods seem to hold their power when $p$ increases, while network CCA and Lee et al.'s method show superior performance for low values of $p$, which deteriorates when $p$ increases. In contrast, if we fix $p=10$ and let $n$ increase, as we did in the left panel of Figure \ref{fig:runtime:nstream}, all methods show an increasing performance, but again network CCA performs best out of the tested methods.
The right panels of Figures \ref{fig:runtime:pstream} and \ref{fig:runtime:nstream} show the average run-time of a single fit in the corresponding set-up on the desktop computer of one the authors (Intel Core Ultra 7 265K (3.90 GHz)). 
We can see that while network CCA and Lee et al.~perform similarly in terms of power, network CCA runs orders of magnitude faster.
Similarly, the LASSO-based methods also have a faster run-time. 
Interestingly, Lee et al.'s method does not exhibit an increasing run-time as $p$ increases. 
Hence, extrapolating, it is possible that our methods run slower than the method in \cite{LeeShePriVog2019} in extremely high-dimensional cases, but our simulations suggest that the LASSO-based methods will exhibit better power in this setting.
On the other hand, when $p$ is fixed and $n$ increases, Lee et al.~show a considerable increase in computation time.
Visually, it appears that from our methods, only network CCA shows a similar slope in the increase, likely because of the matrix-matrix products it requires, while the computation time for the other methods is only slowly increasing with $n$.

\begin{figure}
\begin{subfigure}{\textwidth}
\includegraphics[width=\textwidth]{./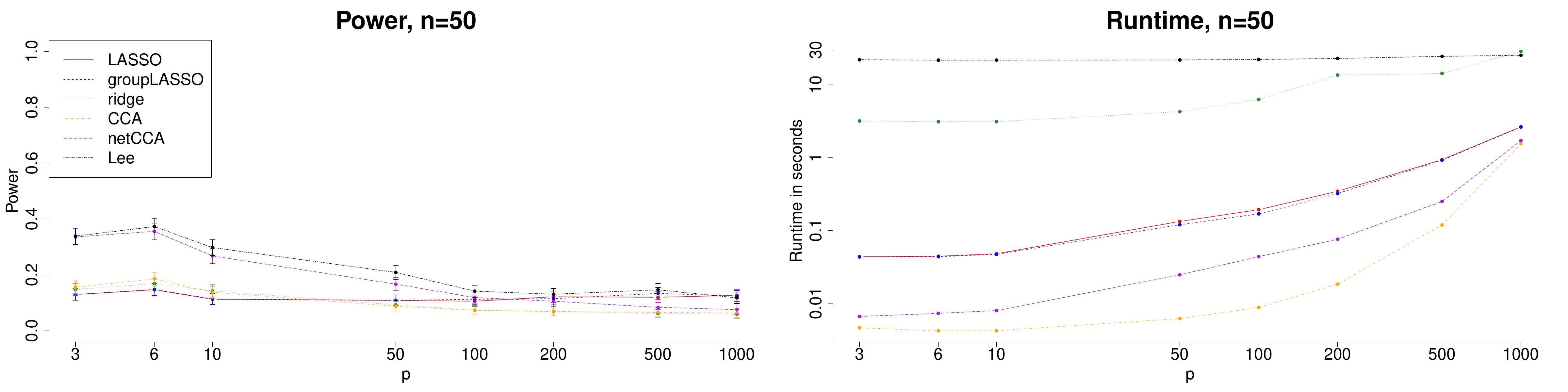}
\caption{$n=50$ is fixed and $p$ varies.}
\label{fig:runtime:pstream}
\end{subfigure}
\begin{subfigure}{\textwidth}
\includegraphics[width=\textwidth]{./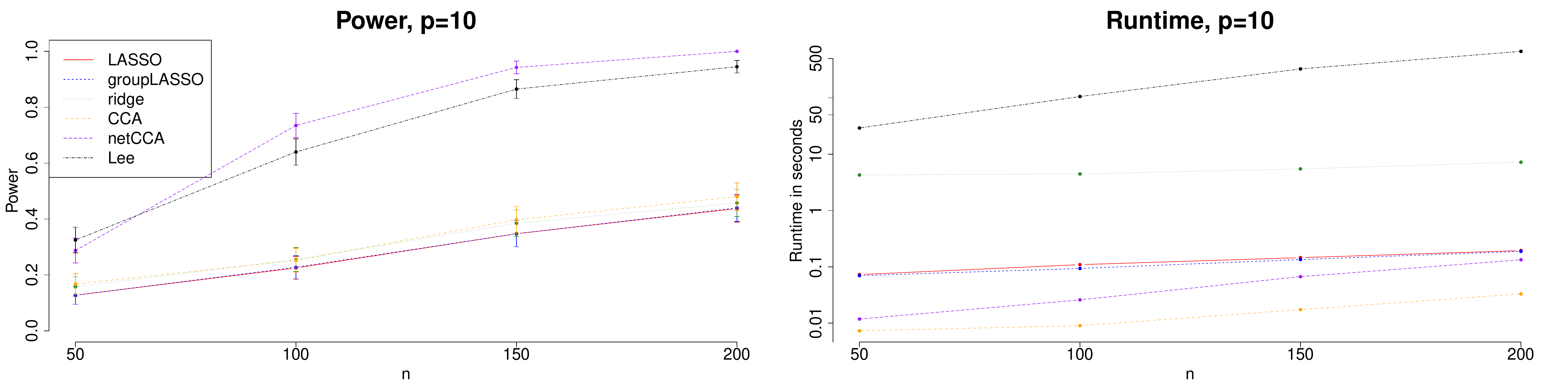}
\caption{$p=10$ is fixed and $n$ varies. To reduce the computational burden, we have considered $400$ permutation replicates to produce these figures.}
\label{fig:runtime:nstream}
\end{subfigure}
\caption{Plots show the rejection rates (left) and run-times (right) of the different tests for different choices of $p$ and $n$ in Scenario (v) with $s=1$. The method by \cite{LeeShePriVog2019} is labeled {\em Lee}.}
\label{fig:runtime}
\end{figure}

%Finally, we aim to present the estimation performance on the estimates of $B$ itself. We therefore focus on the row-wise sparsity setting. Since the estimate is only identified up to an orthogonal transformation, we first Procrustes align the latent positions and their ASE estimates. Then, we use the alignment matrix to transform the estimate $\hat{B}$ to make it comparable to the true $B$. In Figures \ref{fig:hist_LASSO}-\ref{fig:hist_ridge}, we show histograms of the resulting estimators. As expected, the LASSO estimates experience some shrinkage and therefore underestimate the true values. The ridge estimator seem to be not suitable for estimation. 
%
%\begin{figure}
%\includegraphics[width=\textwidth]{./Figures/hist_LASSO.pdf}
%\caption{Histograms of the LASSO estimates (after Procrustes alignment) in the row-wise sparsity scenario, vertical lines show true values of $B$}
%\label{fig:hist_LASSO}
%\end{figure}
%
%\begin{figure}
%\includegraphics[width=\textwidth]{./Figures/hist_gLASSO.pdf}
%\caption{Histograms of the Group-LASSO estimates (after Procrustes alignment) in the row-wise sparsity scenario, vertical lines show true values of $B$}
%\label{fig:hist_gLASSO}
%\end{figure}
%
%\begin{figure}
%\includegraphics[width=\textwidth]{./Figures/hist_ridge.pdf}
%\caption{Histograms of the ridge estimates (after Procrustes alignment) in the row-wise sparsity scenario, vertical lines show true values of $B$}
%\label{fig:hist_ridge}
%\end{figure}
%
%
%

\subsection{Data Analysis} \label{subsec:data_analysis}
In order to illustrate how our algorithms can be deployed in practice, we apply them to two data sets: one a protein-protein interaction network and one network of Wikipedia articles. All of the analyses presented below were carried out on the authors' desktop computers in reasonable wall-clock time.

\subsubsection{Protein-protein interactions}
\label{subsubsec:data_analysis:protein}
We study a network data set that was explored by \cite{MKSCOFB02}. The authors provide a data set of proteins produced by yeast cells and interactions among those proteins. Interactions have been found via different methods and we restrict to those interactions that the authors labeled with high confidence.
The network is depicted in Figure \ref{fig:protein_network}. As covariates, we take the $13$ different protein categories provided in the same data set. 
More precisely, for each protein, the covariate vector lies in $\{0,1\}^{13}$, and has a single entry equal to $1$ defining its category. There is one category \emph{uncharacterized}, in which we also manually put ten proteins that were present in the data set but were otherwise not labeled with a category by the original authors. The network comprises $988$ vertices and $2,455$ edges, with $13$ covariates. Thus, this data set is comparatively low-dimensional.

Before we can apply our methods, we must select the dimension of the latent space.
In order to choose the latent dimension in a data driven fashion, we employ here the method of \citet{LiLevZhu2020}, which selects $14$ latent dimensions.
A scree plot of the eigenvalues of the adjacency matrix is given in Figure \ref{fig:spectrum:protein} of Appendix \ref{apx:additional_emp_res:data}.
We apply the permutation tests based on group LASSO, ridge estimation, and the two CCA-based tests as described in Section~\ref{subsec:test}.
In each case, we perform $500$ random permutation repetitions to obtain a $p$-value. All tests reject the null-hypothesis of no covariate impact at all conventional levels. In fact, the $p$-value in all four tests is $0$.
Moreover, the group LASSO and ridge methods output an estimate $\hat{B}$ for the matrix $B$ in \eqref{eq:model}. We take the norm $\|\hat{B}_k\|_2$ of the $k$-th row of $\hat{B}$ as an indicator of the importance of the covariate $k$, i.e., in this case, of the $k$-th protein category. We choose in this way the four most relevant protein categories according to the group LASSO, and show in Figure \ref{fig:protein_network} the protein-protein interaction network with vertices colored according to these four categories. It visually appears that vertices from these categories are well-connected to one another compared to the background rate, suggesting that these covariates are indeed associated with variation in network structure. 
We investigate this further in Appendix~\ref{apx:additional_emp_res:data} by showing the projections of the features on the latent dimensions.

%To further illustrate the utility of our novel group LASSO method, we visualize the feature projections, that is, the columns of $Z\Bhat^{\gLASSO}$, or, in other words, the latent positions predicted by our model. $Z\hat{B}^{\gLASSO}$ contains in its $14$ columns a feature projection for each of the $14$ latent dimensions. Figures \ref{fig:protein_LD1} and \ref{fig:protein_LD2} in Appendix \ref{apx:additional_emp_res} show the network plot, where the vertex colors are indicative for the feature projection (there is one network plot for each of the $14$ feature projections). Different feature projections appear to be used to distinguish some clusters of proteins from the others, e.g., feature projection 1 in Figure \ref{fig:protein_LD1} identifies higher degree vertices in the central cluster. Feature projection 4 in the same Figure seems to differentiate between three smaller clusters below the central cluster.

\begin{figure}
\centering
\includegraphics[width=\textwidth]{./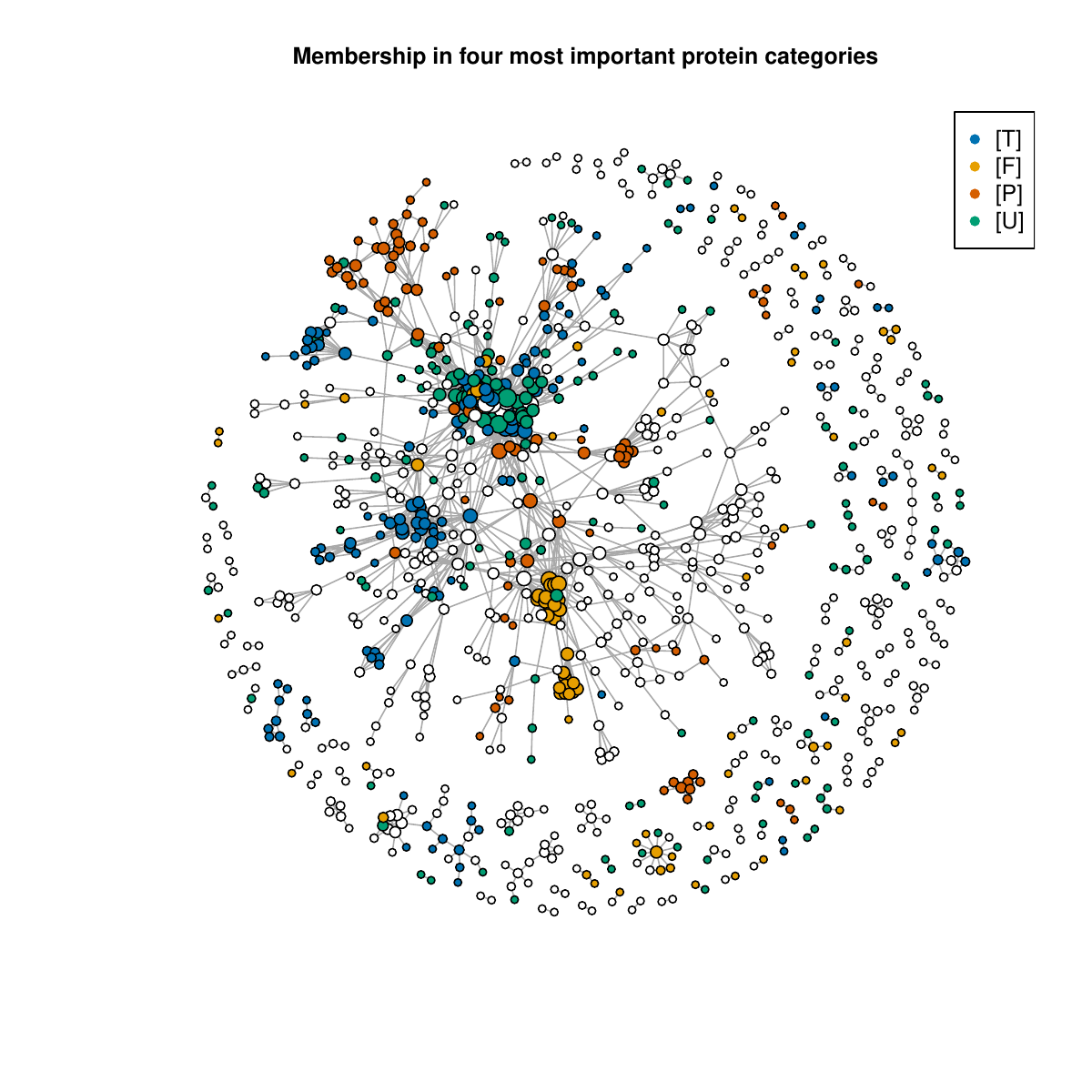}
\caption{Protein network analyzed in Section~\ref{subsubsec:data_analysis:protein}. Vertex sizes are proportional to their degree and the vertices are colored according to the four most relevant protein categories as identified by our LASSO-based method: [T] - Transcription, [F] - Protein fate, [P] - Translation, [U] - Uncharacterized}
\label{fig:protein_network}
\end{figure}

\subsubsection{Wikipedia}
We turn now to a network analyzed in \citet{RCS21}\footnote{The dataset is available for download from \url{http://snap.stanford.edu/data/wikipedia-article-networks.html}}. The vertices of the network correspond to Wikipedia pages about chameleons. 
An edge between two pages means that there is a mutual link.
The network has $2,277$ vertices and $31,371$ edges, and we have removed self-links.
For each page, the presence of certain nouns is recorded.
Moreover, the average monthly traffic on each page between October 2017 and November 2018 is available.
We construct from these data a feature vector for each Wikipedia page that comprises the log-average-monthly-traffic, the log-number-of-nouns, and a binary vector encoding which nouns are present in the article.
This yields $3,134$-dimensional node covariates.

Figure \ref{fig:spectrum:chameleon} in Appendix \ref{apx:additional_emp_res:data} shows the eigenvalues of the adjacency matrix of the network. We again used the model selection procedure of \citet{LiLevZhu2020}, which suggests $55$ as the latent dimension.
We apply here our tests based on group LASSO and both CCA tests. All three permutation tests reject the hypothesis of no relation between the features and the network with a p-value of $0$ based on $500$ permutation replicates.

We visually verify that the covariates capture some structure in the network by showing the feature projections of highest variance in Figure \ref{fig:chameleon_X}.
More precisely, note that $Z\Bhat^{\gLASSO}$ contains in its columns the estimated projections of the covariates on the feature space.
In this way, we obtain an estimate for each of the $55$ latent dimensions.
Instead of showing all $55$ projections, Figure \ref{fig:chameleon_X} shows those with the highest variance.
The colors of the vertices correspond to the values of the projection and the size of the vertices is proportional to their degree.
The plot suggests that the first two projections distinguish two clusters on the left of the network. The third and the fifth projections seem to identify some high-degree vertices in the center of the network.
Once again, this indicates that the features identified by our LASSO-based method do indeed vary with network structure.

\begin{figure}
\centering
\includegraphics[height=0.9\textheight]{./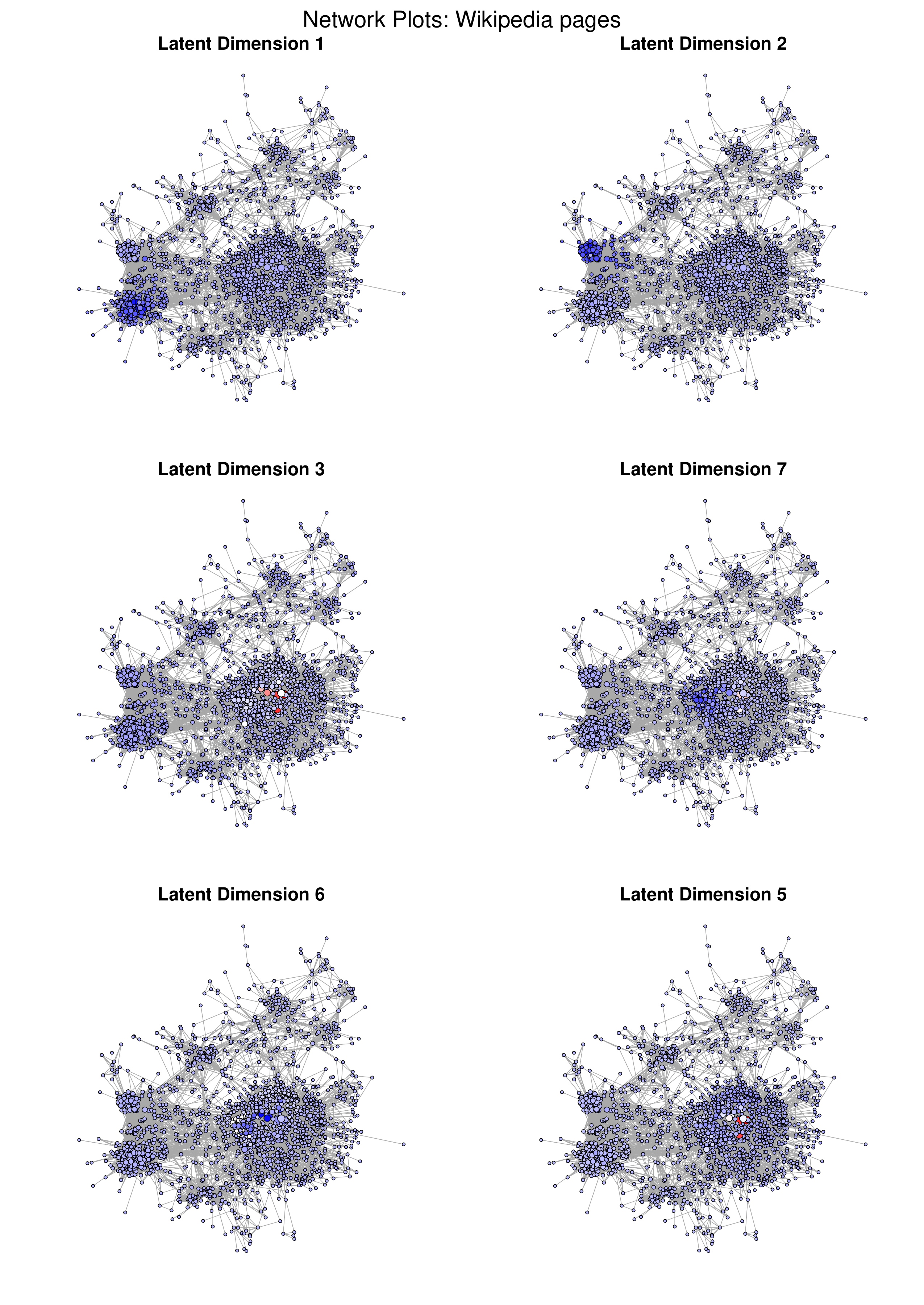}
\caption{Plots of the Wikipedia network with vertices colored according to four different feature projections (i.e., four different columns of $Z\Bhat^{\gLASSO}$. The blue end of the spectrum corresponds to negative values and the red end of the spectrum corresponds to positive values. Vertex sizes are proportional to their degrees.}
\label{fig:chameleon_X}
\end{figure}

\section{Discussion and Conclusion} \label{sec:conclusion}
In this paper, we have considered the problem of high-dimensional network association testing, in which we aim to understand whether observed network structure is associated with (potentially high-dimensional) vertex features.
To this end, we considered a model in which the network is generated via an RDPG, and we suggest two types of methods: model-based and model-free methods.
The model-based methods assume that the latent positions of the RDPG are linearly related to the vertex features.
We then estimate this relation using ridge, LASSO and group LASSO regression.
In the model-free approach, we are agnostic as to the relation between latent positions and features, and instead we aim to estimate potential relations between them using CCA between features and either latent positions or rows of the adjacency matrix.
All of our proposed methods operate via permutation tests, and we have applied them to both simulated and real world data in both low- and high-dimensional settings.
Our theoretical and empirical results suggest that when a linear model relating features to network structure is correct, our model-based methods perform quite well, while our CCA-based methods perform better when these linearity assumptions are violated.

The five methods that we have suggested come with different advantages and disadvantages. The main restriction for the model-based methods is that they rely on a linear model. However, in that case, we are able to prove performance guarantees in the form of an oracle inequality for the ridge regression and a convergence rate for the group LASSO, even in the high-dimensional case.
We relaxed the linearity assumption by using CCA to test for network association. These methods all require an initial estimate of the latent positions which comes with its own challenges, e.g., selecting the latent dimension. Therefore, we finally presented a network CCA method that performs CCA alignment between the adjacency matrix and network features. For these two methods, we have shown consistency in the low-dimensional regime.
Our empirical results show that their performance remains good in high-dimensional settings, provided one regularizes appropriately.

This research can be extended in several interesting directions.
Our focus here was correlation, that is, linear dependence.
A natural extension is to consider non-linear dependence, i.e., to test for dependence as opposed to correlation.
Distance correlation \citep{LeeShePriVog2019} actually tests for independence, but we saw in our empirical results that it is computationally expensive compared to our proposed methods.
One possible next step would be to study inference, e.g., using de-biasing or post-LASSO in our LASSO-based methods. Similarly, we have not discussed variable screening, i.e., selecting the most relevant features with high-probability. Likely, under a so-called beta-min condition \citep[see, e.g.,][Section 7.7]{vdGB11}, the LASSO theory can be extended to such results, but it is unclear how the estimated latent positions affect its behavior in this regard.

In another direction, it would be interesting to extend these methods to multiple network observations. This could either consider repeated independent realisations of the same network, or the observation of several networks in which the exact relation to the covariates is different in each network, but, for example, the identity of relevant covariates is the same across all networks. Lastly, it is worthwhile to consider Remark~\ref{rem:ZfromA}, i.e., settings in which the covariates derive from the network itself rather than latent positions.
This setting promises a more general set of tools, but will doubtless require more careful bookkeeping and analysis than that undertaken here.

\section*{Acknowledgements} \label{sec:acknowledgements}
The authors wish to thank
Cencheng Shen,
Elizaveta Levina, 
Youngser Park,
Carey E. Priebe,
Karl Rohe, 
Jonathan Stewart,
Ji Zhu
and the attendees of the University of Wisconsin--Madison IFDS Ideas Seminar
for helpful comments and discussion that greatly improved the paper.
The authors also thank Youjin Lee for sharing R implementations related to \cite{LeeShePriVog2019}.
Computations for this work were done using resources of the Leipzig University Computing Center.

\section*{Funding} \label{sec:Funding}
This research was funded by the Deutsche Forschungsgemeinschaft (DFG, German
Research Foundation) – 534099487.
KL was supported in part by the University of Wisconsin--Madison Office of the Vice Chancellor for Research and Graduate Education with funding from the Wisconsin Alumni Research Foundation.

\bibliographystyle{plainnat}
\bibliography{biblio}

\newpage

\appendix

\section{Proof of Theorem~\ref{thm:ridge:convergence}} \label{apx:ridge}

Here, we prove the convergence of our method based on ridge regression.

\begin{proof}[Proof of Theorem~\ref{thm:ridge:convergence}]
By Assumption~\ref{assum:ttiRate}, 
there is a sequence of random matrices $Q$ such that
\begin{equation} \label{eq:XhatX:uniform}
\max_{i \in [n]} \max_{k \in[d]} \left| \Xhat_{i, k} - (XQ)_{i, k} \right|
= \Op{ \ttiRate_n }.
\end{equation}
Recalling the definitions of $\bhat^{(k)}$ and $\btilde^{(k)}_Q$ 
from Equations~\eqref{eq:def:bhatk} and~\eqref{eq:def:btildek},
respectively,
\begin{equation*} 
\bhat^{(k)} - \btilde_Q^{(k)}
= \left( \frac{1}{n} Z^\top  Z + \lassoreg_k I_p \right)^{-1}
        \frac{1}{n} Z^\top  \left( \Xhat_{\cdot, k} - (XQ)_{\cdot, k} \right) .
\end{equation*}
By submultiplicativity of the norm,
\begin{equation} \label{eq:btildek:bhatk:diff}
\left\| \bhat^{(k)} - \btilde^{(k)}_Q \right\|
\le
\left\| \left( \frac{1}{n} Z^\top  Z + \lassoreg_k I_p \right)^{-1} \right\|
\left\| \frac{1}{n} Z^\top  \left(\Xhat_{\cdot,k} - (XQ)_{\cdot,k}\right) \right\|.
\end{equation}

Trivially lower-bounding the eigenvalues of $n^{-1} Z^\top  Z + \lassoreg_k I_p$ by $\lassoreg_k > 0$, we have
\begin{equation} \label{eq:covmx:spectral}
\max_{k \in [d]}
        \left\| \left( \frac{1}{n} Z^\top  Z + \lassoreg_k I_p \right)^{-1} \right\|
\le \frac{ 1 }{ \min_{k \in [d]} \lassoreg_k }.
\end{equation}
Unrolling the matrix-vector product and applying the triangle inequality, for $j \in [p]$,
\begin{equation} \label{eq:expand:entryj}
\begin{aligned}
\left|
\left[ \frac{1}{n} Z^\top  \left( \Xhat_{\cdot, k} - (XQ)_{\cdot, k} \right) \right]_j
\right|
&\le
\frac{1}{n} \sum_{i=1}^n \left| \Xhat_{i, k} - (XQ)_{i, k} \right| \left| Z_{ij} \right| \\
&\le \left( \max_{i \in [n]} \left| \Xhat_{i, k} - (XQ)_{i, k} \right| \right)
        \frac{1}{n} \sum_{i=1}^n \left| Z_{ij} \right|. 
\end{aligned} \end{equation}
Since by assumption the rows of $Z$ are independent and entries within each row are marginally $(\nu,b)$-sub-gamma, it follows by standard concentration inequalities that %with probability at least $1-O(p^{-1}n^{-2})$,
%\begin{equation*}
%\frac{1}{n} \sum_{i=1}^n \left| Z_{ij} \right|
%\le \E |Z_{1j}| + \Op{ \frac{ \sqrt{\nu + b^2} \log^{1/2} np }{ \sqrt{n} } }.
%\end{equation*}
%A union bound over all $j \in [p]$ yields that with probability at least $1-O(n^{-2})$,
\begin{equation} \label{eq:absZ:entrywise}
\begin{aligned} 
\max_{j \in [p]} \frac{1}{n} \sum_{i=1}^n \left| Z_{ij} \right|
&\le \max_{j \in [p]} \E |Z_{1j}| 
	+ \frac{ C\sqrt{\nu_Z + b_Z^2} \log^{1/2} np }{ \sqrt{n} } \\
&\le C + \Op{ \frac{ \sqrt{\nu_Z + b_Z^2} \log^{1/2} np }{ \sqrt{n} } }
= \Op{ 1 },
\end{aligned} \end{equation}
where we have used our growth assumptions on $n,p$ and $(\nu_Z,b_Z)$.
Noting that Equations~\eqref{eq:XhatX:uniform} and~\eqref{eq:absZ:entrywise} hold uniformly over all $k \in [d]$, we may bound Equation~\eqref{eq:expand:entryj} as
\begin{equation*} 
\max_{k \in [d]} \left|
\left[ \frac{1}{n} Z^\top  \left( \Xhat_{\cdot, k} - (XQ)_{\cdot, k} \right) \right]_j
\right|
= \Op{ \ttiRate_n }. 
\end{equation*}
Applying this and Equation~\eqref{eq:covmx:spectral} to Equation~\eqref{eq:btildek:bhatk:diff}, again noting that Equation~\eqref{eq:covmx:spectral} also does not depend on $k \in [d]$, it follows that
\begin{equation*} 
\max_{k \in [d]} \left\| \btilde^{(k)}_Q - \bhat^{(k)} \right\|
= \Op{ \frac{\ttiRate_n}{ \min_{k \in [d]} \lassoreg_k } } ,
\end{equation*}
as we set out to prove.
\end{proof}

\section{Proofs for Section~\ref{subsubsec:LASSO}} \label{app:proofsLASSO}
Below, we provide technical lemmas and proof details in support of Theorem~\ref{thm:LASSO:convergence_rate} for our results related to the group LASSO estimator in Equation~\eqref{eq:def:BhatgLASSO}.
For ease of notation, in this section, we simply denote the estimator by
\begin{equation*}
\Bhat
= \Bhat^{\gLASSO}
= \begin{bmatrix}
\Bhat_{\gLASSO,1}^\top  \\ \Bhat_{\gLASSO,2} \\ \vdots \\ \Bhat_{\gLASSO,p}^\top 
\end{bmatrix}
=\argmin{\beta\in\IR^{p\times d}} \frac{1}{nd}\left\|\Xhat-Z\beta\right\|_F^2 
	+\frac{\lassoreg}{\sqrt{d}}\sum_{j=1}^p\|\beta_j\| .
\end{equation*}              
In what follows, for ease of notation, we write
\begin{equation} \label{eq:def:Delta}
\Delta = \Xhat - XQ \in \IR^{n \times d} ,
\end{equation}
where $Q \in \IR^{d \times d}$ is the orthogonal matrix guaranteed by Assumption~\ref{assum:ttiRate}.
That is, $\Delta$ is the error between our estimate $\Xhat$ of the latent positions and our target $X$, after accounting for orthogonal non-identifiability of the latent positions.

\begin{lemma} \label{lem:basic_inequality}
Let $B \in \IR^{p \times d}$ denote the true matrix of coefficients in Equation~\eqref{eq:model}, and let $\Bhat \in \IR^{p \times d}$ be the estimator defined in Equation~\eqref{eq:def:BhatgLASSO}.
With $\Delta$ as defined in \eqref{eq:def:Delta},
\begin{equation*} \begin{aligned}
&\frac{1}{nd} \left\|Z\left(BQ-\Bhat\right)\right\|_F^2
        +\frac{\lassoreg}{\sqrt{d}}\sum_{j=1}^p\|\Bhat_j\| \\
&\qquad\qquad \leq
\frac{2}{nd}
  \trace \left[Z\left(\Bhat-BQ\right)\left(EQ+\Delta\right)^\top \right]
+\frac{\lassoreg}{\sqrt{d}}\sum_{j=1}^p\|B_j\| .
\end{aligned} \end{equation*}
\end{lemma}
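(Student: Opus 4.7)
The plan is to invoke the optimality of $\Bhat$ for its defining objective at the point $BQ$, then expand squared Frobenius norms using the substitution $\Xhat = ZBQ + EQ + \Delta$ that follows from the model $X = ZB + E$ together with the definition $\Delta = \Xhat - XQ$ from Assumption~\ref{assum:ttiRate}.

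First I would note the key invariance: since $Q \in \bbO_d$, the rows of $BQ$ have the same Euclidean norm as the rows of $B$, namely $\|(BQ)_j\| = \|Q^\top B_j\| = \|B_j\|$. Hence the penalty term evaluated at $BQ$ equals the penalty term evaluated at $B$, which is what allows $BQ$ (rather than the unidentifiable $B$ itself) to be used as the comparison point. By the definition of $\Bhat$ as a minimizer, we have
\begin{equation*}
\frac{1}{nd}\|\Xhat - Z\Bhat\|_F^2 + \frac{\lassoreg}{\sqrt{d}}\sum_{j=1}^p\|\Bhat_j\|
\;\leq\; \frac{1}{nd}\|\Xhat - ZBQ\|_F^2 + \frac{\lassoreg}{\sqrt{d}}\sum_{j=1}^p\|B_j\|.
\end{equation*}

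Next, I would substitute $\Xhat - Z\Bhat = Z(BQ - \Bhat) + EQ + \Delta$ and $\Xhat - ZBQ = EQ + \Delta$, expand both squared Frobenius norms, and cancel the common $\|EQ+\Delta\|_F^2$ contribution. This yields
\begin{equation*}
\frac{1}{nd}\|Z(BQ - \Bhat)\|_F^2 + \frac{2}{nd}\trace\!\left[(EQ+\Delta)^\top Z(BQ - \Bhat)\right] + \frac{\lassoreg}{\sqrt{d}}\sum_{j=1}^p\|\Bhat_j\| \;\leq\; \frac{\lassoreg}{\sqrt{d}}\sum_{j=1}^p\|B_j\|.
\end{equation*}
Moving the trace term to the right and using the cyclic property $\trace[(EQ+\Delta)^\top Z(BQ-\Bhat)] = \trace[Z(\Bhat - BQ)(EQ+\Delta)^\top]$ (with a sign flip that absorbs into the $2$) delivers the claimed inequality.

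There is no real obstacle here: this is the standard LASSO ``basic inequality'' adapted to the matrix-valued setting with group penalty, with the one wrinkle being that the response is not $X$ but the estimate $\Xhat$, for which the error decomposes cleanly as $EQ + \Delta$. The orthogonality of $Q$ plays two distinct roles that should be flagged: it makes the penalty invariant so the comparison at $BQ$ is legitimate, and it encapsulates the latent-position nonidentifiability so that $\Delta$ can be kept small via Assumption~\ref{assum:ttiRate}. The lemma itself is deterministic; the probabilistic content (controlling the cross term $\trace[Z(\Bhat - BQ)(EQ+\Delta)^\top]$ via the event $\calT_n$ and the compatibility condition) is deferred to the subsequent steps in the proof of Theorem~\ref{thm:LASSO:convergence_rate}.
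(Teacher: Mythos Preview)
Your proposal is correct and follows essentially the same approach as the paper: use optimality of $\Bhat$ at the comparison point $BQ$ (noting that the penalty is invariant under right-multiplication by $Q$), substitute $\Xhat - ZBQ = EQ + \Delta$, expand the squared Frobenius norms, and cancel the common $\|EQ+\Delta\|_F^2$ term. The only cosmetic difference is that the paper invokes the identity $\|A+B\|_F^2 = \|A\|_F^2 + \|B\|_F^2 + 2\trace AB^\top$ directly rather than writing the cross term first as $\trace[(EQ+\Delta)^\top Z(BQ-\Bhat)]$ and then rewriting it, but these are the same computation.
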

\begin{proof}
The proof largely follows that of Lemma 6.1 in \citet{vdGB11}, modified to account for the rotational non-identifiability of the latent positions.
By definition of $\Bhat$ as a minimizer in Equation~\eqref{eq:def:BhatgLASSO},
\begin{equation*} \begin{aligned}
\frac{1}{nd}\left\|\Xhat -Z\Bhat\right\|_F^2
	+\frac{\lassoreg}{\sqrt{d}}\sum_{j=1}^p\|\Bhat_j\|
&\leq\frac{1}{nd}\left\|\Xhat-ZBQ\right\|_F^2
	+\frac{\lassoreg}{\sqrt{d}}\sum_{j=1}^p\left\|[BQ]_{j\cdot}\right\| \\
&= \frac{1}{nd}\left\|\Xhat-ZBQ\right\|_F^2
        +\frac{\lassoreg}{\sqrt{d}}\sum_{j=1}^p\left\| B_j \right\|,
\end{aligned} \end{equation*}
where the equality follows from the unitary invarince of the Euclidean norm. Plugging in $X=ZB+E$ from Equation~\eqref{eq:model} and recalling $\Delta = \Xhat - XQ$ from Equation~\eqref{eq:def:Delta},
\begin{equation*}
\frac{1}{nd}\!
\left\|Z\!\left(BQ-\!\Bhat\right)\!+\!EQ+\!\Delta\right\|_F^2
+\frac{\lassoreg}{\sqrt{d}}\sum_{j=1}^p\left\|\Bhat_j\right\|
\leq
\frac{1}{nd}\! \left\|EQ+\Delta\right\|_F^2
	+\frac{\lassoreg}{\sqrt{d}}\sum_{j=1}^p\|B_j\|,
\end{equation*}
and the result follows from the fact that $\|A+B\|_F^2=\|A\|_F^2+\|B\|_F^2+2\trace AB^\top $ for matrices $A$ and $B$ of the same dimensions.
\end{proof}

For a given sequence $(\lassoreg_{0,n})_{n\in\IN}\subseteq\IR^\IN$, define the event
\begin{equation} \label{eq:def:calTn}
\calT_n =
\left\{ \max_{j\in[p]}\frac{2}{nd}
			\left\|\left(EQ + \Delta \right)^\top  Z_{\cdot j} \right\|
        \le \frac{\lassoreg_{0,n}}{\sqrt{d}} \right\},
\end{equation}
where $Q \in \bbO_d$ is the orthogonal matrix guaranteed by Assumption~\ref{assum:ttiRate}.
The main motivation for considering this event is the following Lemma \ref{lem:Lasso2}, which will be used in the proof of Theorem \ref{thm:LASSO:convergence_rate} to show the inequality in Equation \eqref{eq:lasso_bound}.
This inequality, in turn, is a basic performance bound for the LASSO.

\begin{lemma} \label{lem:Lasso2}
Let $\lassoreg_{0,n}$ and $\lassoreg$ be arbitrary with $\lassoreg\geq2\lassoreg_{0,n}>0$, and let $\calT_n$ be defined as in Equation~\eqref{eq:def:calTn}. Furthermore, let $S\subseteq[p]$ denote the indices of the non-zero rows of $B \in \IR^{p \times d}$, the true parameter matrix from the model in Equation \eqref{eq:model}.
On the event $\calT_n$, it holds that
\begin{equation} \label{eq:Lasso2:bound}
\frac{2}{nd}\left\|Z\left(\Bhat-BQ\right)\right\|_F^2
        +\frac{\lassoreg}{\sqrt{d}}\sum_{j\in S^c}\|\Bhat_j\|
\leq \frac{3\lassoreg}{\sqrt{d}} \sum_{j\in S}\|\Bhat_j-Q^\top B_j\| ,
\end{equation}
where $Q \in \bbO_d$ is the orthogonal matrix guaranteed by Assumption~\ref{assum:ttiRate}.
\end{lemma}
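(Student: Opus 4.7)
\textbf{Proof plan for Lemma~\ref{lem:Lasso2}.}
The plan is to follow the standard ``basic inequality'' argument in the high-dimensional LASSO literature (cf.\ Lemma~6.3 of \citet{vdGB11}), with modifications to handle (a) the orthogonal rotation $Q$ arising from the non-identifiability of the latent positions and (b) the fact that the response perturbation $EQ + \Delta$ combines both the regression noise $E$ and the latent position estimation error $\Delta = \Xhat - XQ$.

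First, I would start from the ``oracle-style'' basic inequality provided by Lemma~\ref{lem:basic_inequality}, which already absorbs the noise terms $E$ and $\Delta$ into a single cross-term $\frac{2}{nd}\trace[Z(\Bhat-BQ)(EQ+\Delta)^\top]$. Using cyclicity of the trace, I rewrite this cross-term as a sum over covariate indices,
\begin{equation*}
\trace\!\left[Z(\Bhat-BQ)(EQ+\Delta)^\top\right]
= \sum_{j=1}^p (\Bhat_j - Q^\top B_j)^\top (EQ+\Delta)^\top Z_{\cdot j}.
\end{equation*}
Applying Cauchy--Schwarz rowwise and invoking the event $\calT_n$ from Equation~\eqref{eq:def:calTn}, I obtain
\begin{equation*}
\frac{2}{nd}\left|\trace\!\left[Z(\Bhat-BQ)(EQ+\Delta)^\top\right]\right|
\le \frac{\lassoreg_{0,n}}{\sqrt d}\sum_{j=1}^p \|\Bhat_j - Q^\top B_j\|.
\end{equation*}
Combining this with Lemma~\ref{lem:basic_inequality} and multiplying through by $2$, the assumption $\lassoreg \ge 2\lassoreg_{0,n}$ then yields
\begin{equation*}
\frac{2}{nd}\|Z(\Bhat - BQ)\|_F^2
+ \frac{2\lassoreg}{\sqrt d}\sum_{j=1}^p\|\Bhat_j\|
\le \frac{\lassoreg}{\sqrt d}\sum_{j=1}^p\|\Bhat_j - Q^\top B_j\|
+ \frac{2\lassoreg}{\sqrt d}\sum_{j=1}^p\|B_j\|.
\end{equation*}

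Second, I would exploit the row-sparsity of $B$. Splitting each sum according to $[p] = S \cup S^c$, I use that $B_j = 0$ for $j \in S^c$, so that $\|\Bhat_j - Q^\top B_j\| = \|\Bhat_j\|$ on $S^c$, and that $\|Q^\top B_j\| = \|B_j\|$ on $S$ by orthogonality of $Q$. This lets me cancel the $S^c$ contributions to $\sum_j \|\Bhat_j - Q^\top B_j\|$ against the penalty term on the left, reducing the right-hand side to contributions from indices in $S$ only. Finally, the reverse triangle inequality $\|Q^\top B_j\| - \|\Bhat_j\| \le \|\Bhat_j - Q^\top B_j\|$ for $j \in S$ turns the combination $\frac{\lassoreg}{\sqrt d}\sum_{S}\|\Bhat_j - Q^\top B_j\| + \frac{2\lassoreg}{\sqrt d}\sum_S(\|Q^\top B_j\| - \|\Bhat_j\|)$ into $\frac{3\lassoreg}{\sqrt d}\sum_{j \in S}\|\Bhat_j - Q^\top B_j\|$, which is the desired bound.

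The argument is largely bookkeeping, and the only step that requires any care is the handling of the orthogonal matrix $Q$. In particular, the unitary invariance $\|Q^\top B_j\| = \|B_j\|$ is essential so that the penalty level $\lassoreg$ can be compared correctly on both sides of the inequality, and so that ``row sparsity of $B$'' translates to ``row sparsity of $BQ$'' (on a \emph{per-row} basis, which is exactly the level at which the group LASSO penalty acts). Once this is respected, the proof reduces to the classical group LASSO basic-inequality argument.
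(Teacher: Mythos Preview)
Your proposal is correct and follows essentially the same argument as the paper's proof: both start from Lemma~\ref{lem:basic_inequality}, bound the cross-term on $\calT_n$ via cyclicity of the trace and Cauchy--Schwarz, use $\lassoreg\ge 2\lassoreg_{0,n}$, and then split over $S$ and $S^c$ using $B_j=0$ on $S^c$ together with the reverse triangle inequality and $\|Q^\top B_j\|=\|B_j\|$. The only cosmetic difference is that the paper phrases the last step as lower-bounding $\sum_j\|\Bhat_j\|$ on the left-hand side, whereas you move terms to the right; the algebra is identical.
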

\begin{proof}
The proof follows that of Lemma 6.3 in \citet{vdGB11}.
Recall our notation that $\beta_j$ denotes the $j$-th row of a matrix $\beta$ written as a column vector and recall $\Delta = \Xhat-XQ$ from Equation~\eqref{eq:def:Delta}.
For any matrix $\beta\in\IR^{p\times d}$, by cyclicity of the trace, the triangle inequality, and the Cauchy-Schwarz inequality,
\begin{equation*} \begin{aligned}
\left|\frac{2}{nd}\trace \left(Z\beta\left(EQ+\Delta\right)^\top \right)\right|
&=\left|\frac{2}{nd}\trace \left(\beta\left(EQ+\Delta\right)^\top Z\right)\right|
\le \frac{2}{nd}\sum_{j=1}^p
	\left| \beta_j^\top \left(EQ+\Delta\right)^\top Z_{\cdot j}\right| \\
&\leq\frac{2}{nd}\sum_{j=1}^p\|\beta_j\|
	~\left\|\left(EQ+\Delta\right)^\top Z_{\cdot j}\right\| .
\end{aligned} \end{equation*}
On the event $\calT_n$, we then have
\begin{equation*}
\left|\frac{2}{nd}\trace \left(Z\beta\left(EQ+\Delta\right)^\top \right)\right|
\leq \frac{\lassoreg_{0,n}}{\sqrt{d}} \sum_{j=1}^p\|\beta_j\| .
\end{equation*}
Taking $\beta = BQ-\Bhat$, applying the above inequality to the bound in Lemma~\ref{lem:basic_inequality}, and using our assumption $\lassoreg \geq 2\lassoreg_{0,n}$,
\begin{equation} \label{eq:Lasso2:checkpt} \begin{aligned}
&\frac{2}{nd} \left\|Z\left(BQ-\Bhat\right)\right\|_F^2
        +\frac{2\lassoreg}{\sqrt{d}}\sum_{j=1}^p\|\Bhat_j\| \\
&\qquad \qquad
\leq \frac{\lassoreg}{\sqrt{d}} \sum_{j=1}^p\|\Bhat_j-Q^\top B_j\| 
+\frac{2\lassoreg}{\sqrt{d}}\sum_{j=1}^p\|B_j\| .
\end{aligned} \end{equation}
Since $B_j=0$ for all $j \in S^c$, we have
\begin{equation*}
\sum_{j=1}^p\|\Bhat_j-Q^\top B_j\|
= \sum_{j\in S}\|\Bhat_j-Q^\top B_j\|+\sum_{j\in S^c}\|\Bhat_j\|.
\end{equation*}
Applying this to Equation~\eqref{eq:Lasso2:checkpt} and again using the fact that $B_j=0$ for $j \in S^c$,
\begin{equation} \label{eq:Lasso2:checkpt2} \begin{aligned}
&\frac{2}{nd} \left\|Z\left(BQ-\Bhat\right)\right\|_F^2
        +\frac{2\lassoreg}{\sqrt{d}}\sum_{j=1}^p\|\Bhat_j\| \\
&~~~
\leq \frac{\lassoreg}{\sqrt{d}} \sum_{j \in S} \|\Bhat_j-Q^\top B_j\| 
+ \frac{\lassoreg}{\sqrt{d}} \sum_{j \in S^c}  \|\Bhat_j\|
+\frac{2\lassoreg}{\sqrt{d}}\sum_{j\in S} \|B_j\| .
\end{aligned} \end{equation}

Applying the triangle inequality and using the fact that $Q$ is an orthogonal matrix, 
\begin{equation*} \begin{aligned}
\sum_{j=1}^p\|\Bhat_j\|
%&= \sum_{j\in S}\|\Bhat_j\|+\sum_{j\in S^c}\|\Bhat_j\| \\
&\ge \sum_{j\in S}\left(\|Q^\top B_j\|-\|\Bhat_j-Q^\top B_j\|\right)
	+ \sum_{j\in S^c} \|\Bhat_j\| \\
&= \sum_{j\in S}\left(\|B_j\|-\|\Bhat_j-Q^\top B_j\|\right)
	+ \sum_{j\in S^c} \|\Bhat_j\| .
\end{aligned} \end{equation*}
Using this to lower-bound the left-hand side of Equation~\eqref{eq:Lasso2:checkpt2} and rearranging,
\begin{equation*} \begin{aligned}
\frac{2}{nd} & \left\|Z\left(BQ-\Bhat\right)\right\|_F^2
+ \frac{\lassoreg}{\sqrt{d}} \sum_{j\in S^c} \|\Bhat_j\|
\leq \frac{3\lassoreg}{\sqrt{d}} \sum_{j\in S} \|\Bhat_j-Q^\top B_j\|, 
\end{aligned} \end{equation*}
as we set out to show.
\end{proof}

The following is a basic concentration inequality \citep[see, e.g.,][]{BLM13,Vershynin2020}.
We include a proof here for the sake of completeness.

\begin{lemma} \label{lem:cond_subGamma}
Let $(Z_i,\epsilon_i)_{i\in[n]}$ be a sequence of independent, identically distributed random variables with $Z_i$ and $\epsilon_i$ independent for each $i \in[n]$.
Suppose that the $\epsilon_i$ are centered $(\nu_{\epsilon},b_{\epsilon})$-subgamma random variables and that $Z_i^2-\IE(Z_1^2)$ is $(\nu_Z,b_Z)$-sub-gamma for all $i \in [n]$.
Then, for any $c_1,c_2,t>0$,
\begin{equation*} \begin{aligned}
\IP&\left[\left|\frac{1}{\sqrt{n}}\sum_{i=1}^nZ_i\epsilon_i\right| 
> \! \sqrt{2\left(\sqrt{2\nu_Zc_1}+b_Zc_1+\IE Z_1^2 \right)\nu_{\epsilon}t}
	+b_{\epsilon} t \sqrt{\frac{\sqrt{2\nu_Z c_2n}+b_Z c_2 n
	+\IE Z_1^2}{n}}\right] \\
&\leq e^{-t}+e^{-c_1 n}+ne^{-c_2 n} .
\end{aligned} \end{equation*}
\end{lemma}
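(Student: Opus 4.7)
The plan is to condition on $Z=(Z_1,\ldots,Z_n)$, use that $\sum_i Z_i\epsilon_i/\sqrt{n}$ is then a sum of independent sub-gamma random variables with weights $Z_i/\sqrt{n}$, and then control the two empirical quantities $\frac{1}{n}\sum_i Z_i^2$ and $\max_i Z_i^2/n$ that enter the conditional sub-gamma parameters.

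First, I will observe that since $\epsilon_i$ is $(\nu_\epsilon,b_\epsilon)$-sub-gamma and $\epsilon_i \perp Z_i$, for $|t|<\sqrt{n}/(b_\epsilon \max_i|Z_i|)$ the conditional MGF satisfies
\begin{equation*}
\log \IE\!\left[\exp\!\left(\frac{t}{\sqrt n}\sum_{i=1}^n Z_i\epsilon_i\right)\Big|Z\right]
=\sum_{i=1}^n \log \IE\!\left[e^{(tZ_i/\sqrt n)\epsilon_i}\mid Z_i\right]
\le \frac{t^2 V_n}{2(1-b_\epsilon|t|\cdot \max_i|Z_i|/\sqrt n)},
\end{equation*}
where $V_n=\nu_\epsilon\sum_{i=1}^n Z_i^2/n$. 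Hence $\frac{1}{\sqrt n}\sum_i Z_i\epsilon_i$ is conditionally $(V_n,B_n)$-sub-gamma with $B_n=b_\epsilon\max_i|Z_i|/\sqrt n$, so the standard Bernstein-type tail bound for sub-gamma variables \citep[see Section 2.4 of][]{BLM13} yields
\begin{equation*}
\IP\!\left[\Big|\tfrac{1}{\sqrt n}\textstyle\sum_i Z_i\epsilon_i\Big|>\sqrt{2V_n t}+B_n t\;\Big|\;Z\right]\le e^{-t}.
\end{equation*}

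Next, I will upper-bound $V_n$ and $B_n$ deterministically on a high-probability event for $Z$. Since $Z_i^2-\IE Z_1^2$ is iid $(\nu_Z,b_Z)$-sub-gamma, the sum is $(n\nu_Z,b_Z)$-sub-gamma; applied at tail level $s=c_1 n$ this gives
\begin{equation*}
\IP\!\left[\frac{1}{n}\sum_{i=1}^n Z_i^2>\sqrt{2\nu_Z c_1}+b_Z c_1+\IE Z_1^2\right]\le e^{-c_1 n}.
\end{equation*}
For the maximum, applying the one-variable sub-gamma tail to $Z_i^2-\IE Z_1^2$ at level $s=c_2 n$ and union-bounding over $i\in[n]$ gives
\begin{equation*}
\IP\!\left[\max_{i\in[n]} Z_i^2>\sqrt{2\nu_Z c_2 n}+b_Z c_2 n+\IE Z_1^2\right]\le n e^{-c_2 n}.
\end{equation*}

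Finally, I will intersect these two events (call their intersection $\mathcal{E}$) and invoke the conditional bound on $\mathcal{E}$: on $\mathcal{E}$, the quantities $V_n$ and $B_n$ are bounded by $\nu_\epsilon(\sqrt{2\nu_Z c_1}+b_Z c_1+\IE Z_1^2)$ and $b_\epsilon\sqrt{(\sqrt{2\nu_Z c_2 n}+b_Z c_2 n+\IE Z_1^2)/n}$, respectively. Substituting these into the conditional Bernstein bound and taking expectation over $Z$ produces
\begin{equation*}
\IP\!\left[\Big|\tfrac{1}{\sqrt n}\textstyle\sum_i Z_i\epsilon_i\Big|>\sqrt{2\nu_\epsilon(\sqrt{2\nu_Z c_1}+b_Z c_1+\IE Z_1^2)t}+b_\epsilon t\sqrt{\tfrac{\sqrt{2\nu_Z c_2 n}+b_Z c_2 n+\IE Z_1^2}{n}}\right]
\end{equation*}
is at most $e^{-t}+\IP(\mathcal{E}^c)\le e^{-t}+e^{-c_1 n}+n e^{-c_2 n}$, which is the claimed bound.

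No step is genuinely difficult here; the only mild subtlety is the conditional-MGF computation, which requires being a bit careful that $|tZ_i|/\sqrt{n}<1/b_\epsilon$ for every $i$, but this holds automatically inside the radius implicit in the Bernstein bound once $B_n$ has been identified. The rest is bookkeeping to line up the constants under the square root with those stated in the lemma.
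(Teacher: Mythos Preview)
Your proposal is correct and follows essentially the same approach as the paper: condition on $Z$ to obtain a conditional sub-gamma tail with parameters $(\nu_\epsilon n^{-1}\sum_i Z_i^2,\, b_\epsilon n^{-1/2}\max_i|Z_i|)$, then control the two random parameters via the sub-gamma tail for $\sum_i(Z_i^2-\IE Z_1^2)$ at level $c_1 n$ and a union bound over the single-variable tail at level $c_2 n$, and combine by splitting on the intersection event. The constants you write out match those in the lemma exactly, so there is nothing to add.
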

\begin{proof}
For ease of notation, let $E_0$ denote the event that
\begin{equation} \label{eq:def:event:E0}
\left|\frac{1}{\sqrt{n}}\sum_{i=1}^nZ_i\epsilon_i\right|
\le \sqrt{2\left(\sqrt{2\nu_Zc_1}+b_Zc_1+\IE Z_1^2\right)\nu_{\epsilon}t}
        +b_{\epsilon} t \sqrt{\frac{\sqrt{2\nu_Z c_2n}+b_Z c_2 n
        +\IE Z_1^2}{n}} 
\end{equation}
and write $Z:=(Z_1,Z_2,\dots,Z_n)$.

We first note that conditionally on $Z_i$, $Z_i\epsilon_i$ is a $(Z_i^2\nu_{\epsilon},|Z_i|b_{\epsilon})$-sub-gamma random variable.
Thus, our independence assumptions imply that $\sum_{i=1}^n Z_i\epsilon_i$ is,
conditional on $Z_1,Z_2,\dots,Z_n$, a sub-gamma random variable with parameters
\begin{equation*}
\left( \nu_\star, b_\star \right)
=
\left(\sum_{i=1}^nZ_i^2\nu_{\epsilon},~b_{\epsilon}\max_{i\in[n]}|Z_i|\right) .
\end{equation*}
By standard concentration inequalities \citep[see][Section 2.4]{BLM13}, for any $t>0$,
\begin{equation} \label{eq:Zepsbound}
\IP\left[ \left|\frac{1}{\sqrt{n}}\sum_{i=1}^nZ_i\epsilon_i\right|
	>\sqrt{\frac{2 \nu_{\epsilon}t}{n}\sum_{i=1}^nZ_i^2 }
	+ \frac{ b_{\epsilon}t \max_{i\in[n]}|Z_i|}{\sqrt{n}}\Bigg| Z \right]
\leq e^{-t}.
\end{equation}
Define the events
\begin{equation*} \begin{aligned}
E_1 &= \left\{ \frac{1}{n}\sum_{i=1}^nZ_i^2
	\le \sqrt{2\nu_Zc_1}+b_Zc_1+\IE Z_1^2
	\right\} \\
\text{ and }~
E_2 &= \left\{ \max_{i\in[n]}|Z_i|^2
        \le \sqrt{2\nu_Z c_2 n}+b_Z c_2n+\IE Z_1^2 \right\} ,
\end{aligned} \end{equation*}
where $c_1$ and $c_2$ are positive constants.
We have
\begin{equation} \label{eq:gLASSO:splitevents} 
\IP\left[ E_0^c \right]
\le \IP\left[ E_0^c , E_1 \cap E_2 \right]
+ \IP\left[ E_1^c \right]
+ \IP\left[ E_1^c \right] .
\end{equation}

Since since the $Z_i^2-\IE Z_1^2$ are $(\nu_Z,b_Z)$-sub-gamma by assumption, $\sum_{i=1}^n\left(Z_i^2-\IE Z_1^2\right)$ is $(n\nu_Z,b_Z)$-sub-gamma, and thus
\begin{equation*} \label{eq:Zsquared}
\begin{aligned}
\IP\left[ E_1^c \right]
&= \IP\left[
	\sum_{i=1}^n\left(Z_i^2-\IE Z_1^2 \right)
	> \sqrt{2n\nu_Z\cdot c_1n}+b_Z c_1n\right]
\leq e^{-c_1n}. 
\end{aligned} \end{equation*}
Similarly, by the union bound,
\begin{equation} \label{eq:Zmax}
\begin{aligned}
\IP\left[ E_2^c \right]
&\leq
n\IP\left[|Z_i|^2\!-\!\IE Z_1^2>\sqrt{2\nu_Z c_2n}+b_Z c_2n\right]
\leq ne^{-c_2n}.
\end{aligned} \end{equation}
Applying the above two displays to Equation~\eqref{eq:gLASSO:splitevents},
\begin{equation} \label{eq:gLASSO:splitevents2} 
\IP\left[ E_0^c \right]
\le \IP\left[ E_0^c , E_1 \cap E_2 \right]
+ e^{-c_1n} + ne^{-c_2n}.
\end{equation}

Note that on the event $E_1 \cap E_2$,
\begin{equation*} \begin{aligned}
\sqrt{\frac{2 \nu_{\epsilon}t}{n}\sum_{i=1}^nZ_i^2 }
+ \frac{ b_{\epsilon}t \max_{i\in[n]}|Z_i|}{\sqrt{n}}
&\le
\sqrt{ 2  \nu_{\epsilon}t }
\sqrt{ \sqrt{2\nu_Zc_1}+b_Zc_1+\IE Z_1^2 } \\
&~~~~~~+ \frac{ b_{\epsilon}t }{ \sqrt{n} }
	\sqrt{ \sqrt{2\nu_Z c_2 n}+b_Z c_2n+\IE Z_1^2 } ,
\end{aligned} \end{equation*}
from which Equation~\eqref{eq:Zepsbound} and the definition of event $E_0$ in Equation~\eqref{eq:def:event:E0} imply
\begin{equation*}
\IP\left[ E_0^c , E_1 \cap E_2 \right] \le e^{-t} .
\end{equation*}
Applying this bound to Equation~\eqref{eq:gLASSO:splitevents2} completes the proof.
\end{proof}

The following lemma is an adaptation of Lemma A.1 in \cite{T09}, which holds under general settings for $L_2$ functions.
To provide a self-contained exposition, we state and prove a simpler version for vectors.
\begin{lemma} \label{lem:gen_triangle}
Let $d,n\in\IN$ and and suppose that $a_{i,k}\in\IR$ for all $i\in [n]$ and $k \in [d]$. 
Then it holds that
\begin{equation*}
\sqrt{\sum_{k=1}^d\left(\sum_{i=1}^na_{i,k}\right)^2}
\leq
\sum_{i=1}^n \sqrt{\sum_{k=1}^da_{i,k}^2} .
\end{equation*}
\end{lemma}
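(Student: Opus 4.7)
The inequality is a direct application of the triangle inequality for the Euclidean norm on $\IR^d$. My plan is to reinterpret both sides as norms of vectors, and then invoke the standard triangle inequality.

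Specifically, for each $i \in [n]$, define the vector $v_i = (a_{i,1}, a_{i,2}, \dots, a_{i,d})^\top \in \IR^d$. Then the $k$-th coordinate of $\sum_{i=1}^n v_i$ is exactly $\sum_{i=1}^n a_{i,k}$, so
\begin{equation*}
\sqrt{\sum_{k=1}^d \left(\sum_{i=1}^n a_{i,k}\right)^2}
= \left\| \sum_{i=1}^n v_i \right\|
\quad\text{and}\quad
\sqrt{\sum_{k=1}^d a_{i,k}^2} = \|v_i\| .
\end{equation*}
Thus the claimed inequality is precisely $\|\sum_{i=1}^n v_i\| \le \sum_{i=1}^n \|v_i\|$, which is the triangle inequality for $\|\cdot\|$ in $\IR^d$.

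To make the write-up fully self-contained, I would either cite the triangle inequality directly or prove it by induction on $n$: the base case $n=1$ is trivial, and the inductive step follows from $\|u + w\| \le \|u\| + \|w\|$, which is in turn a consequence of the Cauchy-Schwarz inequality $u^\top w \le \|u\| \|w\|$ via expanding $\|u+w\|^2 = \|u\|^2 + 2 u^\top w + \|w\|^2 \le (\|u\| + \|w\|)^2$. There is no genuine obstacle here; the only ``work'' is the notational rewrite that identifies the two sides with the norm of a sum and the sum of norms.
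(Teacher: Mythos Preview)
Your proposal is correct and essentially matches the paper's approach: the paper defines $b = \sum_i v_i$ (in your notation), writes $\|b\|^2 = \sum_k b_k \sum_i a_{i,k}$, swaps sums, and applies Cauchy--Schwarz to each inner sum $\sum_k |b_k||a_{i,k}| \le \|b\|\,\|v_i\|$ before dividing by $\|b\|$. This is exactly the Cauchy--Schwarz-to-triangle-inequality step you describe; your framing as ``recognize the triangle inequality for $\|\cdot\|$ on $\IR^d$'' is arguably cleaner than unpacking it by hand, but the substance is the same.
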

\begin{proof}
The proof is a simplified version of Lemma A.1 of \cite{T09}.
For each $k \in [d]$, define
\begin{equation*}
b_k:=\sum_{i=1}^na_{i,k},
\end{equation*}
and let $b=(b_1,b_2,\dots,b_d)^\top \in\IR^d$.
Then by the Cauchy-Schwarz inequality,
\begin{equation*}
\|b\|^2
= \langle b,b\rangle
\leq\sum_{k=1}^d|b_k|\sum_{i=1}^n|a_{i,k}|=\sum_{i=1}^n\sum_{k=1}^d|b_k||a_{i,k}|\leq\sum_{i=1}^n\|b\|\sqrt{\sum_{k=1}^d|a_{i,k}|^2}.
\end{equation*}
Dividing by $\|b\|$ and recalling the definition of $b_k$ yields
\begin{equation*}
\sqrt{\sum_{k=1}^d\left(\sum_{i=1}^na_{i,k}\right)^2}
\leq\sum_{i=1}^n\sqrt{\sum_{k=1}^d|a_{i,k}|^2},
\end{equation*}
as we set out to prove.
\end{proof}

With the above technical lemmas in place, we are ready to prove our main result on convergence of the group LASSO-based method.
\begin{proof}[Proof of Theorem~\ref{thm:LASSO:convergence_rate}]
Let $\epsilon>0$ be arbitrary and choose $c_0,c_1,c_2,c_3,N_{\epsilon}>0$ such that
\begin{equation} \label{eq:LASSO:chooseConstants} \begin{aligned}
\left( \max\left\{ pd,n \right\} \right)^{1-c_0}&\leq\epsilon,
\qquad 
&\exp\left\{ -n\left(c_1-\frac{\log pd}{n}\right) \right\}
\leq \epsilon, \\
\exp\left\{-n\left(c_2-\frac{\log pd}{n}-\frac{\log n}{n}\right) \right\}
	&\leq\epsilon,
~~~\text{ and } 
&\exp\left\{-n\left(c_3-\frac{\log p}{n}\right) \right\}  \leq \epsilon
\end{aligned}
\end{equation}
for all $n\geq N_{\epsilon}$.

Note that this is possible because $\log pd = o(n)$ by assumption.
Choose $C_1 > 0$ such that for all $n\geq N_{\epsilon}$ (after potentially increasing $N_{\epsilon}$, if needed)
\begin{equation*}
\IP\left[\max_{i\in[n]}\left\|\Delta_i\right\|>C_1\ttiRate_n\right] \leq \epsilon ,
\end{equation*}
which is possible in light of Assumption~\ref{assum:ttiRate}.
Recall that, by assumption, $E_{1k}$ is $(\nu,b)$-sub-gamma and $Z_{1j}^2-\IE(Z_{1j}^2)$ are $(\nu_Z,b_Z)$-sub-gamma.
With this notation, we may choose $C_2>0$ so that for all $n\geq N_{\epsilon}$ (again increasing $N_{\epsilon}$, if needed)
\begin{equation*} \begin{aligned}
C_2\log\max\left\{ pd,n \right\}
&\ge
\sqrt{2\left(\sqrt{2\nu_Zc_1}+b_Zc_1+ \max_{j\in[p]}\IE Z_{1j}^2 \right)
	\nu c_0\log\max\left\{pd,n\right\} } \\
&~~~~~~+ bc_0
\sqrt{ \frac{\sqrt{2\nu_Zc_2n}+b_Zc_2n+\max_{j\in[p]}\IE Z_{1j}^2 }{n}}
\log\max\left\{ pd,n \right\} ,
\end{aligned} \end{equation*}
which is possible since $\max_{j\in[p]}\IE Z_{1j}^2$ is bounded by assumption.
We assume for the remainder of the proof that $n>N_{\epsilon}$ so that all of the above bounds hold.
Finally, define
\begin{equation} \label{eq:def:Keps}
\begin{aligned}
\tilde{C}_1
&= \sqrt{\sqrt{2\nu_Zc_3}+b_Zc_3+\sup_{n\in\IN}\max_{j\in[p]}\IE(Z_{1j}^2)} \\
\text{and}~
K_{\epsilon} &= \max\left\{4\tilde{C}_1C_1,4C_2\right\}.
\end{aligned} \end{equation}
Recall the event $\calT_n$ from Equation~\eqref{eq:def:calTn}.
By Lemma~\ref{lem:Lasso2}, when the event $\calT_n$ holds,
\begin{equation*}
\sum_{j\in S^c}\|\Bhat_j- Q^\top B_j\|
\le 3  \sum_{j\in S}\|\Bhat_j-Q^\top B_j\|,
\end{equation*}
which can be seen from Equation~\eqref{eq:Lasso2:bound} using the non-negativity of the Frobenius norm and recalling that $B_j = 0$ for $j \in S^c$.
It follows that on the event $\calT_n$, the matrix $\beta=\Bhat-BQ$ is included in the set $\calB_{p,d}(S)$ in Definition~\ref{def:compatibility}.
Therefore, using Lemma \ref{lem:Lasso2} again, we obtain that on the event $\calT_n$ if $\phi_S>0$,
\begin{equation*} \begin{aligned}
\frac{2}{nd}&\left\|Z\left(\Bhat-BQ\right)\right\|_F^2
	+\frac{\lassoreg}{\sqrt{d}}\sum_{j=1}^p\|\Bhat_j-Q^\top B_j\| \\
&\leq \frac{4 \lassoreg}{\sqrt{d}}\sum_{j\in S}\|\Bhat_j-Q^\top B_j\|
\leq \frac{4\lassoreg\sqrt{|S|}}{\phi_S\sqrt{nd}}
	\left\|Z\left(\Bhat-BQ\right)\right\|_F\\
&\leq \frac{1}{nd}\left\|Z\left(\Bhat-BQ\right)\right\|_F^2
	+\frac{4\lassoreg^2|S|}{\phi_S^2}.
\end{aligned}
\end{equation*}
where the last inequality follows from the fact that $4ab\leq a^2+4b^2$.
Rearranging, when $\phi_S>0$ and the event $\calT_n$ and $\phi_S>0$ holds,
\begin{equation} \label{eq:lasso_bound}
\frac{1}{nd}\left\|Z\left(\Bhat-BQ\right)\right\|_F^2
	+ \frac{\lassoreg}{\sqrt{d}}\sum_{j=1}^p\|\Bhat_j-Q^\top B_j\|
\le \frac{4\lassoreg^2|S|}{\phi_S^2}.
\end{equation}
Note that if $\phi_S=0$, the inequality holds trivially. Equation~\eqref{eq:lasso_bound} mirrors the bounds established in Theorem 8.4 of \cite{vdGB11}, adapted to our setting and simplified slightly. Rearranging, we can deduce convergence rates from this result once we have selected $\lassoreg$ and provided that we know that $\phi_S^2$ is suitably bounded away from zero. So far, the only substantial difference between our setup and the classical LASSO \citep[e.g., as discussed in Chapters 6 through 8 of][]{vdGB11} is the definition of the set $\calT_n$, as our definition of this event must account for estimating the latent positions $X$. We have so far shown that the event $\calT_n$ implies Equation~\eqref{eq:lasso_bound}, and hence Equation~\eqref{eq:thm:LASSO:convergence_rate:1}. Thus, in order to finish the proof of Equation~\eqref{eq:thm:LASSO:convergence_rate:1}, it remains for us to establish that $\IP(\calT_n)\geq1-5\epsilon$.

By the triangle inequality,
\begin{equation} \label{eq:max_split}
\max_{j\in[p]} \frac{2}{nd}
	\left\|\left(EQ+\Delta\right)^\top  Z_{\cdot j}\right\|
\leq
\underbrace{ \max_{j\in[p]}\frac{2}{nd}\left\|\Delta^\top Z_{\cdot j}\right\|
	}_{=M_1}
+ \underbrace{\max_{j\in[p]}\frac{2}{nd}\left\|Q^\top E^\top Z_{\cdot j}\right\|
	}_{=M_2},
\end{equation}
where $\Delta$ is defined as in Equation~\eqref{eq:def:Delta} and $Q \in \IR^{d \times d}$ is the orthogonal matrix guaranteed by Assumption~\ref{assum:ttiRate}. Note that
$$\IP\left(M_1\leq\frac{\lassoreg_{0,n}}{2\sqrt{d}}, M_2\leq\frac{\lassoreg_{0,n}}{2\sqrt{d}}\right)\leq\IP\left(M_1+M_2\leq\frac{\lassoreg_{0,n}}{\sqrt{d}}\right)\leq\IP(\calT_n),$$
and, hence,
\begin{equation}
\label{eq:Tnc_bound}
\IP(\calT_n^c)\leq \IP\left[ M_1 + M_2 \ge \frac{\lassoreg_{0,n}}{\sqrt{d}} \right]
\leq \IP\left[ M_1 \ge \frac{\lassoreg_{0,n}}{2\sqrt{d}} \right]
        + \IP\left[ M_2 \ge \frac{\lassoreg_{0,n}}{2\sqrt{d}} \right].
\end{equation}
We will prove that each of the probabilities on the right-hand side of Equation~\eqref{eq:Tnc_bound} is bounded by multiples of $\epsilon$.
We first control $M_2$.
Applying a union bound over $j \in [p]$ and using the fact that $\|Q^\top E^\top Z_{\cdot j}\| = \|E^\top Z_{\cdot j}\|$ since $Q$ is orthogonal, we obtain
\begin{equation*} \begin{aligned}
\IP\left[ M_2\geq \frac{2C_2}{\sqrt{dn}}\log\max\{pd,n\} \right]
&\leq p\max_{j\in[p]}
        \IP\left[\frac{2}{nd} \left\|Q^\top E^\top Z_{\cdot j}\right\|
                \geq \frac{2C_2}{\sqrt{dn}}\log\max\{pd,n\} \right] \\
&= p\max_{j\in[p]}\IP\left[ \frac{1}{n^2d^2}\sum_{k=1}^d
        \left (E_{\cdot k}^\top Z_{\cdot j}\right)^2
	\geq\frac{C_2^2}{dn} \log^2\max\{pd,n\} \right].
\end{aligned} \end{equation*}
Applying a second union bound, this time over $k \in [d]$, and taking square roots,
\begin{equation*} \begin{aligned}
\IP\left[ M_2\geq \frac{2C_2}{\sqrt{dn}}\log\max\{pd,n\} \right]
&\le 
pd\max_{j\in[p]}\max_{k\in[d]}
\IP\left[ \frac{1}{n^2}\left(E_{\cdot k}^\top Z_{\cdot j}\right)^2
        \geq \frac{C_2^2}{n}\log^2\max\{pd,n\} \right] \\ 
&=
pd\max_{j\in[p]}\max_{k\in[d]}
\IP\left[ \left|\frac{1}{\sqrt{n}}\sum_{i=1}^nE_{ik}Z_{ij}\right|
\geq C_2\log\max\{pd,n\} \right].
\end{aligned}
\end{equation*}
Since $(E_{ik}Z_{ij})_{i\in[n]}$ is a sequence of independent random variables with $E_{ik}$ and $Z_{ij}$ being independent, $E_{ik}$ is $(\nu,b)$-sub-gamma and $Z_{ij}^2-\IE Z_{ij}^2$ is $(\nu_Z,b_Z)$-sub-gamma (both for all $i\in[n]$), the assumptions of Lemma~\ref{lem:cond_subGamma} are fulfilled. We, hence, conclude by the choice of $C_2$ that
\begin{equation*} \begin{aligned}
\IP&\left[ M_2\geq \frac{2C_2}{\sqrt{dn}}\log\max(pd,n) \right] \\
&\le pd\max_{j\in[p]}\max_{k\in[d]}
	\IP\Bigg[\left|\frac{1}{\sqrt{n}}\sum_{i=1}^nE_{ik}Z_{ij}\right|
		\geq \sqrt{2\left(\sqrt{2\nu_Zc_1}+b_Zc_1+\IE Z_{1j}^2 \right)
		\nu c_0\log\max(pd,n)} \\
&~~~~~~~~~~~~~~~~~~~~~~~~~~~~~~~~~~~~~~~~~~~~~~~~~+
b c_0\log\max(pd,n)\sqrt{\frac{\sqrt{2\nu_Zc_2n}+b_zc_2n+\IE(Z_{1j}^2)}{n}}
	\Bigg] \\
&\leq pd\left(e^{-c_0\log\max(pd,n)}+e^{-c_1n}+ne^{-c_2n}\right) .
\end{aligned} \end{equation*}
By our choice of $c_0,c_1,c_2$ in Equation~\eqref{eq:LASSO:chooseConstants}, it follows that
\begin{equation} \label{eq:M2}
\begin{aligned}
\IP \left[ M_2\geq \frac{2C_2}{\sqrt{dn}}\log\max\{pd,n\} \right]
&\leq
\max\{pd,n\}^{1-c_0}+e^{-n\left(c_1-\frac{\log pd}{n}\right)}
	+e^{-n\left(c_2-\frac{\log pd}{n}-\frac{\log n}{n}\right)} \\
&\leq 3\epsilon.
\end{aligned} \end{equation}
%\begin{equation*}
%M_2 = \Op{ \frac{1}{\sqrt{nd}} \log \max\{pd,n\} }.
%\end{equation*}

To control $M_1$, we begin by applying Jensen's inequality and using the fact that $Z_{1j}^2-\IE Z_{1j}^2$ is $(\nu_Z,b_Z)$-sub-gamma by assumption to estimate
\begin{equation*} \begin{aligned}
\IP\left[ \frac{1}{n}\sum_{i=1}^n|Z_{ij}|
	>\sqrt{\sqrt{2\nu_Zc_3}+b_Zc_3 +\IE Z_{1j}^2 } \right]
&\le
\IP\left[ \frac{1}{n}\sum_{i=1}^nZ_{ij}^2>\sqrt{2\nu_Zc_3}+b_Zc_3+\IE Z_{1j}^2
	\right]  \\
&=
\IP\left[\sum_{i=1}^nuZ_{ij}^2-\IE Z_{1j}^2)
	>\sqrt{2\nu_Zn~ c_3n}+b_Z c_3n \right] \\
&\leq e^{-c_3n}.
\end{aligned} \end{equation*}
Using $\tilde{C}_1\geq\sqrt{\sqrt{2\nu_Zc_3}+b_Zc_3+\max_{j\in[p]}\IE(Z_{1j}^2)}$, taking a union bound over $j \in [p]$ and recalling our choice of $c_3$ from Equation~\eqref{eq:LASSO:chooseConstants},
\begin{equation} \label{eq:M1:Zbound}
\IP\left[\max_{j\in[p]}\frac{1}{n}\sum_{i=1}^n|Z_{ij}|>\tilde{C}_1\right]
\leq pe^{-c_3n}=e^{-n\left(c_3-\frac{\log p}{n}\right)}
\leq\epsilon.
\end{equation}
Recalling the definition of $M_1$ from Equation~\eqref{eq:max_split}, expanding the matrix product and applying Lemma~\ref{lem:gen_triangle} with $a_{i,k}=\Delta_{ik}Z_{ij}$,
\begin{equation*} \begin{aligned}
M_1
&= \max_{j\in[p]}\frac{2}{nd}\left\|\Delta^\top Z_{\cdot j}\right\|
=\frac{2}{nd}\max_{j\in[p]}\sqrt{\sum_{k=1}^d                                                   \left(\sum_{i=1}^n\Delta_{ik}Z_{ij}\right)^2} \\
&\le \frac{2}{nd}
        \max_{j\in[p]}\sum_{i=1}^n\sqrt{\sum_{k=1}^d\Delta_{ik}^2Z_{ij}^2}
=\frac{2}{nd}\max_{j\in[p]}\sum_{i=1}^n|Z_{ij}|\cdot\left\|\Delta_i\right\| \\
&\le \frac{2}{d} \max_{i\in[n]}\left\|\Delta_i\right\| \max_{j\in[p]}\frac{1}{n}\sum_{i=1}^n\left|Z_{ij}\right|.
\end{aligned} \end{equation*}                                                   Applying Equation~\eqref{eq:M1:Zbound} and using Assumption~\ref{assum:ttiRate} to control the rows of $\Delta$ ,
\begin{align} 
\IP\left(M_1>\frac{2\tilde{C}_1C_1}{d}\ttiRate_n\right)\leq&\IP\left(\frac{2}{d} \max_{i\in[n]}\left\|\Delta_i\right\| \max_{j\in[p]}\frac{1}{n}\sum_{i=1}^n\left|Z_{ij}\right|>\frac{2\tilde{C}_1C_1}{d}\ttiRate_n\right) \nonumber \\
\leq&\IP\left(\frac{2}{d} \max_{i\in[n]}\left\|\Delta_i\right\| \tilde{C}_1>\frac{2\tilde{C}_1C_1}{d}\ttiRate_n\right)+\IP\left(\max_{j\in[p]}\frac{1}{n}\sum_{i=1}^n|Z_{ij}|>\tilde{C}_1\right) \nonumber \\
\leq&\IP\left( \max_{i\in[n]}\left\|\Delta_i\right\| >C_1\ttiRate_n\right)+\epsilon\leq2\epsilon, \label{eq:M1}
\end{align}
by choice of $C_1$.
Now, by choice of $\lassoreg_{0,n}$ and recalling our choice of $K_{\epsilon}$ from Equation~\eqref{eq:def:Keps}, we conclude from Equations~\eqref{eq:M1} and \eqref{eq:M2} that
\begin{align*}
\IP\left(M_1>\frac{\lassoreg_{0,n}}{2\sqrt{d}}\right)
&\leq \IP\left(M_1>\frac{2\tilde{C}_1C_1}{d}\ttiRate_n\right)\leq2\epsilon \\ 
\text{and }~ 
\IP\left(M_2>\frac{\lassoreg_{0,n}}{2\sqrt{d}}\right)
&\leq \IP\left(M_2>\frac{2C_2}{\sqrt{dn}}\log\max(pd,n)\right)
\leq 3 \epsilon.
\end{align*}
Applying the above to bound Equation~\eqref{eq:Tnc_bound},
\begin{equation*} \begin{aligned}
\IP\left[ \calT_n^c \right]\leq2\epsilon+3\epsilon=5\epsilon.
\end{aligned} \end{equation*}
Hence, with probability at least $1-5\epsilon$ the event $\calT_n$ holds.
Hence, we obtain from \eqref{eq:lasso_bound} that, with probability $1-5\epsilon$,
\begin{equation*}
\frac{1}{nd}\left\| Z\left(\Bhat-BQ\right) \right\|_F^2
+\frac{\lassoreg}{\sqrt{d}} \sum_{j=1}^p\left\|\Bhat_j-Q^\top B_j\right\|
\leq \frac{4\lassoreg^2|S|}{\phi_S^2},
\end{equation*}
yielding Equation~\eqref{eq:thm:LASSO:convergence_rate:1}.

We turn now to the proof of Equation~\eqref{eq:thm:LASSO:convergence_rate:2}.
Note that, since $\log pd = o(n)$ by assumption, the constants $c_0,c_1,c_2,c_3$ as defined in \eqref{eq:LASSO:chooseConstants} do not depend on $\epsilon$.
As a consequence, $C_2$ and $\tilde{C}_1$ do also not depend on $\epsilon$.
However, $C_1$ depends, in general on $\epsilon$.
Assumption \ref{assum:ttiRate} implies that $\IP[\max_{i\in[n]}\|\Delta_i\|>\extradeltarate\ttiRate_n]\leq\epsilon$ for any $\epsilon>0$, provided $n$ is large enough, that is, for all $n\geq N_{\epsilon}$ after potentially increasing $N_{\epsilon}$.
We may therefore repeat the previous proof with $\extradeltarate$ in place of $C_1$ and using
\begin{equation*}
\lassoreg_{0,n}
=\max\left\{\frac{4\tilde{C}_1\extradeltarate}{\sqrt{d}}\ttiRate_n,
		\frac{4C_2}{\sqrt{n}}\log\max(pd,n)\right\}.
\end{equation*}
Thus, we have shown that, for any $\epsilon>0$, Equation~\eqref{eq:thm:LASSO:convergence_rate:1} holds with probability $1-5\epsilon$ for $n\geq N_{\epsilon}$ also for $\lassoreg=2\lassoreg_{0,n}$ with $\lassoreg_{0,n}$ as above.
Moreover,
\begin{equation*}
\lassoreg
=
\Op{\max\left\{\frac{\extradeltarate}{\sqrt{d}}\ttiRate_n,
		\frac{\log\max\{pd,n\}}{\sqrt{n}}\right\}}. 
\end{equation*}
Let now $\epsilon>0$ be fixed and suppose that $n\geq N_{\epsilon}$.
We obtain from Equation~\eqref{eq:thm:LASSO:convergence_rate:1} that
\begin{equation*}
\frac{1}{\sqrt{d}}\sum_{j=1}^p\left\|\Bhat_j-Q^\top B_j\right\|
\leq\frac{4\lassoreg|S|}{\phi_S^2}
\end{equation*}
with probability $1-5\epsilon$ for $n$ large enough.
Since $\epsilon>0$ was arbitrary, we conclude
\begin{equation*}
\frac{1}{\sqrt{d}} \sum_{j=1}^p\left\|\Bhat_j-Q^\top B_j\right\|           
=\Op{\frac{|S|}{\phi_S^2}\max\left\{\frac{\extradeltarate}{\sqrt{d}}\ttiRate_n,\frac{\log\max(pd,n)}{\sqrt{n}}\right\}}.
\end{equation*}
This implies \eqref{eq:thm:LASSO:convergence_rate:2} for the special case $\phi_S^{-1}=\Op{ 1 }$, which completes the proof.
\end{proof}

\section{Supporting Results for Spectral Norm Bounds} \label{apx:subspace}

Here we collect results that will be useful in proving our results concerning the CCA-based methods presented in Section~\ref{subsec:cca-results}.
These largely concern the behavior of the matrix of latent positions $X$ and related eigenvalues and subspaces.
Our first such result characterizes the growth rate of the singular values of $X$.

\begin{lemma} \label{lem:svals:X}
Under Assumption~\ref{assum:invertible}, letting $\sigma_k(X)$ denotes the $k$-th singular value of $X$,
\begin{equation*} 
\sigma_k( X ) = \Thetap{ \sqrt{n} }~\text{ uniformly over } k \in [d].
\end{equation*}
\end{lemma}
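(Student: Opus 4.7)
The plan is to work with $X^\top X \in \IR^{d \times d}$ and exploit the fact that its singular values are the squared singular values of $X$. Since Assumption~\ref{assum:invertible} gives us that the rows $X_1, X_2, \dots, X_n$ are i.i.d.~with invertible population covariance $\Sigma_X$, the matrix $X^\top X = \sum_{i=1}^n X_i X_i^\top$ is a sum of i.i.d.~rank-one random matrices whose mean $\E X_1 X_1^\top = \Sigma_X + (\E X_1)(\E X_1)^\top$ is positive definite (being a sum of a positive definite and a positive semidefinite matrix).

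First, I would invoke the weak law of large numbers entry-wise to conclude that
\begin{equation*}
\frac{1}{n} X^\top X \xrightarrow{P} \E X_1 X_1^\top ,
\end{equation*}
which implies that $\| n^{-1} X^\top X - \E X_1 X_1^\top \| = \op{1}$ since we are on a fixed-dimensional ($d \times d$) space. Then by Weyl's inequality, each eigenvalue of $n^{-1} X^\top X$ is within $\op{1}$ of the corresponding eigenvalue of $\E X_1 X_1^\top$. Since the latter matrix is positive definite with fixed (finite) eigenvalues $0 < \lambda_d(\E X_1 X_1^\top) \le \cdots \le \lambda_1(\E X_1 X_1^\top) < \infty$, it follows that, uniformly in $k \in [d]$,
\begin{equation*}
\lambda_k\!\left( \tfrac{1}{n} X^\top X \right) = \Thetap{1} .
\end{equation*}

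Equivalently, $\sigma_k(X)^2 = \lambda_k(X^\top X) = \Thetap{n}$ uniformly in $k \in [d]$, and taking square roots yields $\sigma_k(X) = \Thetap{\sqrt{n}}$ as claimed. No real obstacles arise here because $d$ is fixed and the row-covariance is assumed invertible; the entire argument reduces to a law of large numbers plus Weyl perturbation on a fixed-size matrix. The only minor subtlety is remembering to include the $(\E X_1)(\E X_1)^\top$ term when comparing $\E X_1 X_1^\top$ with $\Sigma_X$, but this does not affect positive definiteness.
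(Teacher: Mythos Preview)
Your proposal is correct and follows essentially the same approach as the paper: apply the law of large numbers to $n^{-1}X^\top X$, note that the limit $\Sigma_X + (\E X_1)(\E X_1)^\top$ is full rank under Assumption~\ref{assum:invertible}, and conclude via the identification of the nonzero eigenvalues of $XX^\top$ with those of $X^\top X$. Your version is slightly more explicit in invoking Weyl's inequality, but the argument is the same.
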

\begin{proof}
By the law of large numbers,
\begin{equation*}
\frac{1}{n} X^\top X \rightarrow \Delta_X~\text{ in probability,}
\end{equation*}
with $\Delta_X = \Sigma_X + (\E X_1)(\E X_1)^\top$ being of full rank by Assumption~\ref{assum:invertible}.
The result follows after observing that the $d$ non-zero eigenvalues of $X X^\top$ are the same as those of $X^\top X$.
\end{proof}

\begin{lemma} \label{lem:control:SigmaX}
% Lemma~\ref{lem:svals:X} assumptions.
Under Assumption~\ref{assum:invertible}, with $\Sigmatilde_X$ as defined in Equation~\eqref{eq:def:SigmatildeMarginal}, it holds that
\begin{equation} \label{eq:SigmatildeX:SigmaX}
\left\| \Sigmatilde_X - \Sigma_X \right\| = \op{ 1 } .
\end{equation}
Further, suppose that
%Used in course of proof.
Assumption~\ref{assum:ttiRate} holds with $\ttiRate_n = o(1)$.
Then
\begin{equation} \label{eq:SigmahatX:SigmatildeX}
\left\| \frac{1}{n} \Xhat^\top M_n \Xhat - \Sigmatilde_{XQ} \right\|
= \Op{ \ttiRate_n } ,
\end{equation}
where $\ttiRate_n$ and $Q$ are the rate and orthogonal matrix guaranteed by Assumption~\ref{assum:ttiRate}, respectively.
\end{lemma}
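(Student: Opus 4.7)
The plan is to prove the two statements separately, handling the first by a law of large numbers and the second by expanding around the orthogonal non-identifiability. For the first bound, I would begin with the identity
\begin{equation*}
\Sigmatilde_X = \frac{1}{n}\sum_{i=1}^n X_i X_i^\top - \bar{X}\bar{X}^\top,
\end{equation*}
where $\bar{X} = n^{-1} X^\top \mathbf{1}$, which follows from $M_n = I - n^{-1} J$. Assumption~\ref{assum:invertible} ensures that $X_1$ has a finite second moment, so the weak law of large numbers yields $n^{-1} \sum_i X_i X_i^\top \to \E X_1 X_1^\top$ and $\bar{X} \to \E X_1$ entrywise in probability. Because $d$ is fixed, entrywise convergence in probability implies spectral-norm convergence, and using $\Sigma_X = \E X_1 X_1^\top - (\E X_1)(\E X_1)^\top$ gives $\|\Sigmatilde_X - \Sigma_X\| = \op{1}$.

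For the second bound, with $\Delta = \Xhat - XQ$ as in Equation~\eqref{eq:def:Delta}, I would expand
\begin{equation*}
\frac{1}{n} \Xhat^\top M_n \Xhat - \Sigmatilde_{XQ} = \frac{1}{n}\bigl(Q^\top X^\top M_n \Delta + \Delta^\top M_n X Q + \Delta^\top M_n \Delta\bigr)
\end{equation*}
and bound each term by submultiplicativity, using $\|M_n\| \le 1$ since $M_n$ is an orthogonal projection and $\|Q\| = 1$. For the $X$ factors I would invoke Lemma~\ref{lem:svals:X} to obtain $\|X\| = \Op{\sqrt{n}}$. The key conversion step is turning the two-to-infinity bound from Assumption~\ref{assum:ttiRate} into a spectral-norm bound via
\begin{equation*}
\|\Delta\| \le \|\Delta\|_F \le \sqrt{n}\,\|\Delta\|_{\tti} = \Op{\sqrt{n}\,\ttiRate_n}.
\end{equation*}
Combining these bounds gives $\|n^{-1} Q^\top X^\top M_n \Delta\| = \Op{\ttiRate_n}$ for each cross term and $\|n^{-1} \Delta^\top M_n \Delta\| = \Op{\ttiRate_n^2} = \op{\ttiRate_n}$ under the assumption $\ttiRate_n = o(1)$, delivering the claimed rate.

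The step requiring the most care is the conversion from the tight two-to-infinity bound on $\Delta$ to a spectral-norm bound, which sacrifices a factor of $\sqrt{n}$. This loss is exactly compensated by the $\sqrt{n}$ size of $X$ together with the $n^{-1}$ normalization, which is precisely why the final rate matches $\ttiRate_n$ rather than being inflated. Beyond this bookkeeping, the argument is a routine application of the triangle inequality and submultiplicativity, and I do not anticipate any substantive obstacle.
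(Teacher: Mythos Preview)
Your proposal is correct and follows essentially the same route as the paper: law of large numbers for the first claim, and for the second, expanding $\Xhat = XQ + \Delta$, bounding via submultiplicativity with $\|M_n\|\le 1$, invoking Lemma~\ref{lem:svals:X} for $\|X\| = \Op{\sqrt{n}}$, and converting $\|\Delta\| \le \sqrt{n}\,\|\Delta\|_{\tti}$. The paper writes the expansion as two terms rather than three, but the arithmetic and the key ingredients are identical.
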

\begin{proof}
Recalling the definition of $\Sigmatilde_X$ from Equation~\eqref{eq:def:SigmatildeMarginal}, Equation~\eqref{eq:SigmatildeX:SigmaX} follows immediately from the law of large numbers applied to
\begin{equation*}
\frac{1}{n} X^\top M_n X
= \frac{1}{n} \sum_{i=1}^n \left( X_i - \xbar \right)
		\left( X_i - \xbar \right)^\top .
\end{equation*}

To establish Equation~\eqref{eq:SigmahatX:SigmatildeX}, we begin by writing
\begin{equation*} 
\frac{1}{n} \Xhat^\top M_n \Xhat - \Sigmatilde_{XQ}
= \frac{1}{n} \left( \Xhat - XQ \right)^\top M_n \Xhat
        + \frac{1}{n} X^\top M_n \left( \Xhat - XQ \right).
\end{equation*}
Applying the triangle inequality and using the fact that $M_n$ is a projection and that $Q$ is orthogonal,
\begin{equation*}
\left\| \frac{1}{n} \Xhat^\top M_n \Xhat - \Sigmatilde_{XQ} \right\|
\le \frac{1}{n} \left\| \Xhat - QX \right\|^2
        + \frac{2}{n} \left\| \Xhat - QX \right\| \| X \|.
\end{equation*}
Applying the trivial upper bound
\begin{equation*}
\left\| \Xhat - QX \right\| \le \sqrt{n} \left\| \Xhat - QX \right\|_{\tti}
\end{equation*}
and using Lemma~\ref{lem:svals:X}, it follows that
\begin{equation*}
\left\| \frac{1}{n} \Xhat^\top M_n \Xhat - \Sigmatilde_{XQ} \right\|
=O_P\left( \left\| \Xhat - QX \right\|_{\tti}^2
        + \left\| \Xhat - QX \right\|_{\tti} \right).
\end{equation*}
Controlling the $(\tti)$-norm with Assumption~\ref{assum:ttiRate} and using our assumption that $\ttiRate_n = o(1)$ yields Equation~\eqref{eq:SigmahatX:SigmatildeX}.
\end{proof}

\begin{lemma} \label{lem:svalgrowth:SigmatildeXX}
% Assumptions for Lemma~\ref{lem:svals:X}
Suppose that Assumption~\ref{assum:invertible} holds and define
\begin{equation} \label{eq:def:SigmatildeP}
\Sigmatilde_{XX^\top} = \frac{1}{n} X X^\top M_n X X^\top .
\end{equation}
Then the $d$-th singular value of $\Sigmatilde_{XX^\top}$ obeys
\begin{equation} \label{eq:SigmatildeP:spectrum}
\sigma_d\left( \Sigmatilde_{XX^\top} \right) = \Omegap{ n }.
\end{equation}
\end{lemma}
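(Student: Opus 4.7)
The plan is to reduce the $n \times n$ problem to a $d \times d$ problem and then apply the law of large numbers together with Assumption~\ref{assum:invertible}. First, observe that $\Sigmatilde_{XX^\top}$ is symmetric positive semidefinite: for any $v \in \IR^n$, writing $w = X^\top v$, we have $v^\top \Sigmatilde_{XX^\top} v = n^{-1} w^\top (X^\top M_n X) w \ge 0$ since $M_n$ is an orthogonal projection. Hence the singular values of $\Sigmatilde_{XX^\top}$ coincide with its eigenvalues, and its rank is at most $d$.

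Next, I would use the standard fact that for any compatible matrices $A$ and $B$, the matrices $AB$ and $BA$ share the same nonzero eigenvalues. Taking $A = X$ and $B = n^{-1} X^\top M_n X X^\top$ shows that the nonzero eigenvalues of $\Sigmatilde_{XX^\top}$ are exactly those of
\begin{equation*}
H_n := \frac{1}{n}\left( X^\top M_n X \right)\left( X^\top X \right) \in \IR^{d \times d}.
\end{equation*}
Thus it suffices to show $\lambda_d(H_n) = \Omegap{n}$.

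Now write $H_n = n \cdot \Sigmatilde_X \cdot (n^{-1} X^\top X)$. By Lemma~\ref{lem:control:SigmaX}, $\Sigmatilde_X \to \Sigma_X$ in probability, and by the strong law of large numbers, $n^{-1} X^\top X \to \Delta_X := \Sigma_X + (\E X_1)(\E X_1)^\top$ in probability. By Assumption~\ref{assum:invertible}, $\Sigma_X$ is invertible, and since $\Delta_X \succeq \Sigma_X$, so is $\Delta_X$. Consequently, $n^{-1} H_n \to \Sigma_X \Delta_X$ in probability, and although $\Sigma_X \Delta_X$ need not be symmetric, it is similar to the positive definite matrix $\Sigma_X^{1/2} \Delta_X \Sigma_X^{1/2}$. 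Using $\Sigma_X^{1/2} \Delta_X \Sigma_X^{1/2} \succeq \lambda_{\min}(\Delta_X)\, \Sigma_X \succeq \lambda_{\min}(\Sigma_X)\lambda_{\min}(\Delta_X) I$, we obtain $\lambda_{\min}(\Sigma_X \Delta_X) \ge \lambda_{\min}(\Sigma_X)\lambda_{\min}(\Delta_X) > 0$.

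Combining these observations via continuity of eigenvalues (Weyl's inequality), $\lambda_d(n^{-1} H_n) \to \lambda_{\min}(\Sigma_X \Delta_X) > 0$ in probability, so $\lambda_d(H_n) = \Omegap{n}$, which gives the claim. No step is particularly delicate here; the only minor subtlety is that $\Sigma_X \Delta_X$ is not symmetric, which is handled by the similarity transform above.
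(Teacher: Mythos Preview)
Your argument is correct and follows essentially the same strategy as the paper: reduce the $n\times n$ matrix to a $d\times d$ matrix built from $\Sigmatilde_X$ and $X^\top X$, then invoke Lemma~\ref{lem:control:SigmaX} and the law of large numbers (equivalently Lemma~\ref{lem:svals:X}) together with Assumption~\ref{assum:invertible}. The paper carries out the reduction via the factorization $\Sigmatilde_{XX^\top}=X\Sigmatilde_X X^\top$ and an SVD of $X$, arriving at the symmetric $d\times d$ matrix $S^{1/2}\Sigmatilde_X S^{1/2}$; you instead use the $AB/BA$ eigenvalue identity to land on the non-symmetric $H_n=\Sigmatilde_X\,(X^\top X)$ and then pass to $\Sigma_X^{1/2}\Delta_X\Sigma_X^{1/2}$ by similarity. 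These are equivalent reductions, and your route has the mild advantage of not needing to fix a particular square root $X=US^{1/2}$.

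One small caveat: Weyl's inequality is stated for Hermitian matrices, so it does not apply directly to $n^{-1}H_n$. Your conclusion still holds, either by noting that the nonzero eigenvalues of $H_n$ coincide with those of the PSD matrix $\Sigmatilde_{XX^\top}$ (or equivalently of the symmetric matrix $\Sigmatilde_X^{1/2}(X^\top X)\Sigmatilde_X^{1/2}$, once $\Sigmatilde_X\succ0$), to which Weyl does apply, or simply by invoking continuity of eigenvalues of a general matrix.
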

\begin{proof}
Recalling the definition of $\Sigmatilde_X$ from Equation~\eqref{eq:def:SigmatildeMarginal},
\begin{equation*}
\Sigmatilde_{XX^\top} 
= X X^\top \frac{1}{n} M_n X X^\top
= X \Sigmatilde_X X^\top.
\end{equation*}
Writing $P = X X^\top = U S U^\top$ via the spectral decomposition, we may take $X = U S^{1/2}$ without loss of generality \citep[see, e.g.,][for further discussion]{AthFisLevLyzParQinSusTanVogPri2018,LLL22},
\begin{equation} \label{eq:SigmatildeP:factorize}
\Sigmatilde_{XX^\top} = X \Sigmatilde_X X^\top
= U S^{1/2} \Sigmatilde_X S^{1/2} U^\top.
\end{equation}

By Lemma~\ref{lem:control:SigmaX}, all $d$ eigenvalues of $\Sigmatilde_X$ are of constant order and bounded away from zero.
Lemma~\ref{lem:svals:X} implies that the diagonal entries of $S = \diag(s_1,s_2,\dots,s_d)$ satisfy
\begin{equation} \label{eq:svalsX:finerpoint}
\min_{k \in [d]} s_k = \Omegap{ n }.
\end{equation}
Write
\begin{equation*}
S^{1/2} \Sigmatilde_X S^{1/2} = \Qtilde \Stilde \Qtilde^\top
\end{equation*}
for $\Stilde \in \R^{d \times d}$ diagonal and $\Qtilde \in \bbO_d$.
The invertibility of $\Sigmatilde_X$ and Equation~\eqref{eq:svalsX:finerpoint} ensure that the entries of $\Stilde$ all grow at rate $\Omegap{n}$, and it follows that, plugging the above display into Equation~\eqref{eq:SigmatildeP:factorize},
\begin{equation*}
\Sigmatilde_{XX^\top} U \Qtilde = U \Qtilde \Stilde,
\end{equation*}
which is to say that $\Sigmatilde_{XX^\top}$ has at least $d$ eigenvalues that grow as $\Omegap{n}$, as we set out to show.
\end{proof}

\section{Proof of Theorem~\ref{thm:CCA:XhatZ}} \label{apx:cca:XtoZ}

Here, we prove Theorem~\ref{thm:CCA:XhatZ}, showing that the CCA coefficient using $\Xhat$ is, in the large-$n$ limit, equivalent to using the true but unknown latent positions $X$.

\begin{proof}[Proof of Theorem~\ref{thm:CCA:XhatZ}]
By definition, $\rho_{\Xhat,Z}$ and $\rho_{X,Z}$ are the leading singular values of, respectively,
\begin{equation*}
\CCA( \Xhat, Z )~\text{ and }~\CCA( X, Z ) .
\end{equation*}
Since singular values are invariant to orthogonal transformation, it will suffice for us to bound
\begin{equation*} %\label{eq:thm:CCA:XhatZ:goal}
\left\| Q^\top \CCA( \Xhat, Z ) - \CCA( X, Z ) \right\| 
\end{equation*}
for some choice of $Q \in \bbO_d$ to be specified below.

Applying the singular value decomposition, 
\begin{equation} \label{eq:MX:SVD}
\frac{ 1 }{ \sqrt{n} } M_n X = W R V^\top,
\end{equation}
where $W \in \R^{n \times d}$ has orthonormal columns, $R=\diag(r_1,r_2,\dots,r_d)$ for $r_1 \ge r_2 \ge \cdots r_d \ge 0$, and $V \in \bbO_d$.
Analogously, write
\begin{equation} \label{eq:MXhat:SVD}
\frac{ 1 }{ \sqrt{n} } M_n \Xhat = \What \Rhat \Vhat^\top,
\end{equation}
where $\What \in \R^{n \times d}$, $\Vhat \in \bbO_d$ and $\Rhat = \diag( \rhat_1,\rhat_2,\dots,\rhat_d)$ with $\rhat_1 \ge \rhat_2 \ge \cdots \rhat_d \ge 0$.
We note that by Lemma~\ref{lem:control:SigmaX}, along with the fact that $\|\tilde{\Sigma}_{XQ_X}\|=\|\tilde{\Sigma}_X\|$ for $Q_X=Q$ from Assumption \ref{assum:ttiRate} and our assumption that $\Sigma_X$ is invertible, the diagonal entries of both $\Rhat$ and $R$ converge to non-zero constants.
Thus, recalling the definitions from Equations~\eqref{eq:def:CCA:XtoZ} and~\eqref{eq:def:CCA:XhatZ} and recalling that $M_n$ is idempotent, we have 
\begin{equation*} \begin{aligned}
Q^\top \CCA( \Xhat, Z ) &- \CCA( X, Z ) \\
&= \left[ Q^\top \left( \frac{\Xhat^\top M_n \Xhat}{n} \right)^{-1/2}
	\frac{ \Xhat^\top M_n }{ \sqrt{ n } }
	- \left( \frac{ X^\top M_n X}{n} \right)^{-1/2}
        \frac{ X^\top M_n }{ \sqrt{ n } }
	\right] \frac{ M_n Z }{ \sqrt{n} } \Sigmatilde_Z^{-1/2} \\
&= \left[ Q^\top \Vhat \Rhat^{-1} \Vhat^\top \Vhat \Rhat \What^\top
	- V R^{-1} V^\top V R W^\top 
	\right] \frac{ M_n Z }{ \sqrt{n} } \Sigmatilde_Z^{-1/2} \\
&= \left( Q^\top \Vhat \What^\top - V W^\top \right)
	 \frac{ M_n Z }{ \sqrt{n} } \Sigmatilde_Z^{-1/2} .
\end{aligned} \end{equation*}
By construction,
\begin{equation*}
\left\| \frac{ M_n Z }{ \sqrt{n} } \Sigmatilde_Z^{-1/2} \right\|
\le 1,
\end{equation*}
and thus submultiplicativity of the norm implies
\begin{equation} \label{eq:plugQhere}
\left\| Q^\top \CCA( \Xhat, Z ) - \CCA( X, Z ) \right\|
\le \left\| Q^\top \Vhat \What^\top - V W^\top \right\|.
\end{equation}

Let $O_W \in \bbO_d$ be the matrix satisfying
\begin{equation*}
O_W \in \arg\min_{O \in \bbO_d} \left\| \What O - W \right\|.
\end{equation*}
By Lemma 1 in \cite{CaiZha2018},
\begin{equation} \label{eq:WhatW:CaiZha:lem1}
\left\| \What O_W - W \right\|
\le \sqrt{2} \left\| \sin \Theta( \What, W ) \right\|,
\end{equation}
where $\Theta( \What, W ) \in \R^d$ denotes the diagonal matrix containing the canonical angles between the subspaces spanned by $\What$ and $W$.
We note that the diagonal entries of $\sin \Theta( \What, W )$ are precisely the non-zero singular values of $(I-\What \What^\top) W$ \citep[see, e.g., Exercise VII.1.10 in][]{Bhatia1997}.

Taking $Q$ in Equation~\eqref{eq:plugQhere} to be equal to
\begin{equation*} 
Q_W = V O_W^\top \Vhat^\top,
\end{equation*}
we then have
\begin{equation} \label{eq:CCA:XtoZ:pluginHere}
\left\| Q_W^\top \CCA( \Xhat, Z ) - \CCA( X, Z ) \right\|
\le \left\| V \left( \What O_W \right)^\top - V W^\top \right\|
%\le \left\| \What O_W - W \right\|
\le \sqrt{2} \left\| \sin \Theta( \What, W ) \right\|,
\end{equation}
where the second inequality follows from unitary invariance of the norm and Equation~\eqref{eq:WhatW:CaiZha:lem1}

Observe that since $M_n$ is idempotent, $\What$ and $W$ also span the leading rank-$d$ eigenspaces of, respectively,
\begin{equation*}
\frac{1}{n} M_n \Xhat \Xhat^\top M_n
= \What \Rhat^2 \What^\top \in \R^{n \times n}
~\text{ and }~
\frac{1}{n} M_n X X^\top M_n
= W R^2 W^\top \in \R^{n \times n} .
\end{equation*}
Thus, Theorem 2 from \cite{YuWanSam2014} implies
\begin{equation*}
\left\| \sin \Theta( \What, W ) \right\|
\le \frac{ \sqrt{d} }{ r_d^2 }
\left\| \frac{1}{n} M_n \Xhat \Xhat^\top M_n
	- \frac{1}{n} M_n X X^\top M_n \right\|
\le \frac{ \sqrt{d} }{ n r_d^2 }
\left\| \Xhat \Xhat^\top - X X^\top \right\|,
\end{equation*}
where the second inequality follows from submultiplicativity of the norm and the fact that $M_n$ is idempotent.
Let $Q_X \in \bbO_d$ denote the matrix guaranteed by Assumption~\ref{assum:ttiRate}.
Adding and subtracting appropriate quantities and applying the triangle inequality, 
\begin{equation*}
\left\| \sin \Theta( \What, W ) \right\|
\le \frac{ \sqrt{d} }{ n r_d^2 }
	\left\| \Xhat Q_X Q_X^\top \Xhat^\top \!-\! X X^\top \right\|
\le \frac{ \sqrt{d} }{ n r_d^2 }
\left[ 2\| \Xhat Q_X \! -\! X \| \|X\| 
	+ \left\| \Xhat Q_X \!-\! X \right\|^2 \right].
\end{equation*}
Applying the trivial upper bound
\begin{equation*}
\| \Xhat Q_X  - X \| \le \sqrt{n} \left\| \Xhat Q_X - X \right\|_{\tti}
\end{equation*}
and applying Lemma~\ref{lem:svals:X},
\begin{equation*}
\left\| \sin \Theta( \What, W ) \right\|
\le \frac{ \sqrt{d} }{ r_d^2 }
\left[ 2\left\| \Xhat Q_X  - X \right\|_{\tti}
+ \left\| \Xhat Q_X - X \right\|^2_{\tti} \right].
\end{equation*}
Using Assumption~\ref{assum:ttiRate}, it follows that 
\begin{equation*}
\left\| \sin \Theta( \What, W ) \right\|
\le \frac{ \sqrt{d} }{ r_d^2 }
	\left( 2\ttiRate_n + \ttiRate_n^2 \right).
\end{equation*}
Plugging this back into Equation~\eqref{eq:CCA:XtoZ:pluginHere}, for suitably-chosen constant $C>0$,
\begin{equation*} 
\left\| Q_W^\top \CCA( \Xhat, Z ) - \CCA( X, Z ) \right\|
\le \frac{ C }{ r_d^2 }
	\left( \ttiRate_n + \ttiRate_n^2 \right),
\end{equation*}
where we have used our assumption that $d$ is constant.
The result now follows from the fact that, since $\Sigma_X$ is invertible by assumption,
\begin{equation*}
r_d^2 = \lambda_d\left( \frac{1}{n} X^\top M_n X \right) 
\rightarrow \lambda_d\left( \Sigma_X \right) > 0,
\end{equation*}
in combination with our assumption that $\ttiRate_n = o(1)$.
\end{proof}

\section{Proof of Theorem~\ref{thm:CCA:fullnet}} \label{apx:cca:fullnet}

In light of the definitions of 
$\rho_{X,Z}$ and $\rho_{A,Z}^{(\regu)}$ 
from Equations
~\eqref{eq:def:rho:XtoZ} and ~\eqref{eq:def:rho:AtoZ}, 
respectively, our proof of Theorem~\ref{thm:CCA:fullnet} requires that we show that the leading singular value of $\CCAregu(A,Z)$ approaches that of $\CCA(X,Z)$.
Directly relating these matrices is a challenge, owing to their dimension mismatch and the fact that $A$ varies randomly about $XX^\top$.
Rather than relating $\CCAregu(A,Z)$ directly to $\CCA(X,Z)$, we will relate it to 
\begin{equation} \label{eq:def:CCA:dagger}
\CCAdagger( XX^\top, Z )
= \Sigmatilde_{XX^\top}^{\dagger/2}
	XX^\top \frac{1}{n} M_n Z \Sigmatilde_Z^{-1/2},
\end{equation}
where for a symmetric, positive semi-definite square matrix $H$ with spectral decomposition $H = UDU^\top$, we write $H^{\dagger/2} = U \tilde{D} U^\top$, where $\tilde{D}$ is diagonal with
\begin{equation*}
\tilde{D}_{ii} = \begin{cases}
		1/\sqrt{D_{ii}} &\mbox{ if } D_{ii}\neq0 \\
		0 &\mbox{ otherwise.} \end{cases}
\end{equation*}
We show in Lemma~\ref{lem:fullnettrue:singval} that the non-zero singular values of $\CCAdagger(XX^\top, Z)$ are precisely those of $\CCA(X,Z)$.
Then, we show that $\CCAregu(A,Z)$ is close to $\CCAdagger(XX^\top, Z)$ in spectral norm to complete the proof of Theorem~\ref{thm:CCA:fullnet}.

\begin{lemma} \label{lem:fullnettrue:singval}
Under Assumption~\ref{assum:invertible}, % $\Sigma_X$ converges in prob to invertible thing and $X \in \R^{n \times d}$ and $Z \in \R^{n \times p}$ are such that $\Sigmatilde_X$ and $\Sigmatilde_Z$ are invertible. 
%Lemma~\ref{lem:svals:X}
the non-zero singular values of $\CCAdagger( XX^\top, Z )$, as defined in Equation~\eqref{eq:def:CCA:dagger}, are the same as those of
\begin{equation*}
\CCA( X, Z ) = \Sigmatilde_{X}^{-1/2} 
	X^\top \frac{1}{n} M_n Z \Sigmatilde_Z^{-1/2}.
\end{equation*}
\end{lemma}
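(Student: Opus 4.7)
The plan is to exhibit an $n \times d$ matrix $Q$ with $Q^\top Q = I_d$ such that $\CCAdagger(XX^\top, Z) = Q \cdot \CCA(X,Z)$; since left-multiplication by such a $Q$ preserves singular values, this identity gives the conclusion. Setting $M = \frac{1}{n} X^\top M_n Z \, \Sigmatilde_Z^{-1/2}$ and using the trivial factorization $XX^\top M_n Z = X (X^\top M_n Z)$, I can write $\CCA(X,Z) = \Sigmatilde_X^{-1/2} M$ and $\CCAdagger(XX^\top, Z) = \Sigmatilde_{XX^\top}^{\dagger/2} X M$. The problem thus reduces to factoring the $n \times d$ matrix $R := \Sigmatilde_{XX^\top}^{\dagger/2} X$ in the form $R = Q\, \Sigmatilde_X^{-1/2}$ with $Q^\top Q = I_d$. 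I work throughout on the event that $X$ has full column rank, on which $\Sigmatilde_X$ and $\CCA(X,Z)$ are well-defined; under Assumption~\ref{assum:invertible} this event carries probability tending to $1$, which suffices for the asymptotic uses of the lemma in Theorem~\ref{thm:CCA:fullnet}.

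The key algebraic identity is $R^\top R = \Sigmatilde_X^{-1}$. To prove it, I first observe that $\Sigmatilde_{XX^\top} = \frac{1}{n} XX^\top M_n XX^\top = X \Sigmatilde_X X^\top$ by definition of $\Sigmatilde_X$, as already recorded in the proof of Lemma~\ref{lem:svalgrowth:SigmatildeXX}. Writing the compact SVD $X = U_X D_X V_X^\top$ with $U_X \in \R^{n \times d}$ having orthonormal columns and $D_X, V_X \in \R^{d \times d}$ invertible, this becomes $\Sigmatilde_{XX^\top} = U_X H U_X^\top$ where $H := D_X V_X^\top \Sigmatilde_X V_X D_X$ is $d \times d$, invertible, and PSD. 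Checking the Moore--Penrose axioms then gives $\Sigmatilde_{XX^\top}^{\dagger/2} = U_X H^{-1/2} U_X^\top$, so that $R = U_X H^{-1/2} D_X V_X^\top$ and
\[
R^\top R = V_X D_X H^{-1} D_X V_X^\top.
\]
Substituting $H^{-1} = D_X^{-1} V_X^\top \Sigmatilde_X^{-1} V_X D_X^{-1}$ and then canceling $D_X D_X^{-1}$ and $V_X V_X^\top = I_d$ collapses this expression to $\Sigmatilde_X^{-1}$, as desired.

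With $R^\top R = \Sigmatilde_X^{-1}$ in hand, the fact that $R$ has full column rank $d$ lets me invoke the polar decomposition: $R = Q P$ with $Q \in \R^{n \times d}$ satisfying $Q^\top Q = I_d$ and $P$ the unique PSD square root of $R^\top R$, forcing $P = \Sigmatilde_X^{-1/2}$. Hence $\CCAdagger(XX^\top, Z) = R M = Q \Sigmatilde_X^{-1/2} M = Q \cdot \CCA(X,Z)$, and the non-zero singular values of the two matrices coincide. The argument is essentially a chain of linear-algebraic identities; the only point demanding care is the handling of the pseudoinverse on the rank-deficient matrix $\Sigmatilde_{XX^\top}$, which is addressed cleanly by the compact SVD of $X$ and the identification $\Sigmatilde_{XX^\top}^{\dagger/2} = U_X H^{-1/2} U_X^\top$.
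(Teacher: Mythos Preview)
Your proof is correct and follows essentially the same approach as the paper: both arguments establish that $\CCAdagger(XX^\top,Z) = Q\cdot\CCA(X,Z)$ for some $Q\in\IR^{n\times d}$ with orthonormal columns, from which equality of non-zero singular values is immediate. The paper reaches this a touch more directly by taking the SVD of $X\Sigmatilde_X^{1/2}=W_1DW_2^\top$ and observing that $(W_1D^2W_1^\top)^{\dagger/2}W_1DW_2^\top=W_1W_2^\top$, whereas you take the SVD of $X$ itself, compute $R^\top R=\Sigmatilde_X^{-1}$, and invoke the polar decomposition; the two routes are minor linear-algebraic variations on the same idea.
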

\begin{proof}
Recalling the definition of $\Sigmatilde_{XX^\top}$ from Equation~\eqref{eq:def:SigmatildeP} and the definition of $\Sigmatilde_X$ from Equation~\eqref{eq:def:SigmatildeMarginal},
\begin{equation} \label{eq:SigmatildeP:rewrite}
\Sigmatilde_{XX^\top}
= \frac{1}{n} XX^\top M_n X X^\top
= X \Sigmatilde_X X^\top .
\end{equation}
Applying this identity,
\begin{equation*}
\CCAdagger( XX^\top, Z )
= \left( X \Sigmatilde_X X^\top \right)^{\dagger/2}
	\frac{1}{n} XX^\top M_n Z \Sigmatilde_Z^{-1/2}
= \left( X \Sigmatilde_X X^\top \right)^{\dagger/2} X
	\Sigmatilde_{X,Z} \Sigmatilde_Z^{-1/2},
\end{equation*}
where $\Sigmatilde_{X,Z}$ is as defined in Equation~\eqref{eq:def:SigmatildeXZ}.
By the law of large numbers and our assumption that $\Sigma_X$ is invertible, we may write
\begin{equation} \label{eq:truefullnet:trueXZ}
\begin{aligned}
\CCAdagger( XX^\top, Z )
&= \left( X \Sigmatilde_X X^\top \right)^{\dagger/2} X \Sigmatilde_X^{1/2}
	\Sigmatilde_X^{-1/2} 
        \Sigmatilde_{X,Z} \Sigmatilde_Z^{-1/2} \\
&= \left( X \Sigmatilde_X X^\top \right)^{\dagger/2} X \Sigmatilde_X^{1/2}
	\CCA(X, Z).	
\end{aligned} \end{equation}

Applying the singular value decomposition, write
\begin{equation*} 
X \Sigmatilde_X^{1/2} = W_1 D W_2^\top,
\end{equation*}
where $W_1 \in \R^{n \times d}$ and $W_2 \in \R^{d \times d}$ have orthonormal columns, and $D \in \R^{d \times d}$ is diagonal.
By Lemma~\ref{lem:svals:X}, the leading $d$ singular values of $X$ are bounded away from zero.
Again using the fact that $\Sigmatilde_X$ converges in probability to the inverible $\Sigma_X$, it follows that the diagonal entries of $D$ are strictly positive (for suitably large $n$), and we have
\begin{equation*}
\left( X \Sigmatilde_X X^\top \right)^{\dagger/2} X \Sigmatilde_X^{1/2}
= \left( W_1 D^2 W_1^\top \right)^{\dagger/2} W_1 D W_2^\top
= W_1 W_2^\top .
\end{equation*}
Plugging this into Equation~\eqref{eq:truefullnet:trueXZ},
\begin{equation*}
\CCAdagger( XX^\top, Z ) = W_1 W_2^\top \CCA(X, Z).
\end{equation*}
Noting that $W_1 W_2^\top$ has orthonormal columns, applying the singular value decomposition of $\CCA(X,Z)$ completes the proof.
\end{proof}

In light of Lemma~\ref{lem:fullnettrue:singval}, our proof of Theorem~\ref{thm:CCA:fullnet} will rely on relating the spectrum of $\CCAregu(A,Z)$ to that of $\CCAdagger(XX^\top, Z)$.
Applying the singular value decomposition, write
\begin{equation} \label{eq:Pproj:SVD}
\frac{1}{\sqrt{n}} X X^\top M_n
= \Wtildepara \Rtildepara \Vtildepara^\top ,
\end{equation}
where $\Wtildepara,\Vtildepara \in \R^{n \times d}$ both have orthonormal columns, and
\begin{equation} \label{eq:def:Rtildepara}
\Rtildepara = \diag(\rtilde_1,\rtilde_2,\dots,\rtilde_d) \in \R^{d \times d} .
\end{equation}

We also define the following sample analogue of the SVD in Equation~\eqref{eq:Pproj:SVD},
\begin{equation} \label{eq:Aproj:decomp}
\frac{1}{\sqrt{n}} A M_n
= \What \Rhat \Vhat^\top
= \Whatpara \Rhatpara \Vhatpara^\top + \Whatperp \Rhatperp \Vhatperp^\top,
\end{equation}
where
\begin{equation} \label{eq:def:Rhat}
\begin{aligned}
\Rhatpara &= \diag( \rhat_1, \rhat_2,\dots,\rhat_d)
\in \R^{d \times d} \\
\Rhatperp &= \diag( \rhat_{d+1}, \rhat_{d+2},\dots, \rhat_n )
\in \R^{(n-d) \times (n-d)},
\end{aligned} \end{equation}
$\Whatpara, \Vhatpara \in R^{n \times d}$ and $\Whatperp, \Vhatperp \in \R^{n \times (n-d)}$ all have orthonormal columns, the columns of $\Vhatpara$ are orthogonal to those of $\Vhatperp$, and similarly for $\Whatpara$ and $\Whatperp$.

Our next two technical lemmas establish the asymptotic behavior of $\Rtildepara$ and $\Rhatpara$.

\begin{lemma} \label{lem:fullnet:rtildegrowth}
Under Assumption~\ref{assum:invertible}, %for Lemma~\ref{lem:svals:X} and~\ref{lem:svalgrowth:SigmatildeXX}
with $\rtilde_1,\rtilde_2,\dots,\rtilde_d$ as defined in Equation~\eqref{eq:def:Rtildepara}, it holds for all $k \in [d]$ that
\begin{equation} \label{eq:rtilde:growth}
\rtilde_k = \Thetap{ \sqrt{n} }.
\end{equation}
\end{lemma}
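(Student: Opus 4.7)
The plan is to relate the squared singular values $\rtilde_k^2$ directly to the eigenvalues of the matrix $\Sigmatilde_{XX^\top}$ studied in Lemma~\ref{lem:svalgrowth:SigmatildeXX}, and then to bound the top and bottom singular values separately using previously established facts about $\|X\|$.

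First, I would observe that because $M_n$ is symmetric and idempotent,
\begin{equation*}
\left( \frac{1}{\sqrt{n}} X X^\top M_n \right)
\left( \frac{1}{\sqrt{n}} X X^\top M_n \right)^\top
= \frac{1}{n} X X^\top M_n X X^\top = \Sigmatilde_{XX^\top},
\end{equation*}
where $\Sigmatilde_{XX^\top}$ is the matrix defined in Equation~\eqref{eq:def:SigmatildeP}. Consequently, for each $k \in [d]$, the squared singular value $\rtilde_k^2$ equals the $k$-th largest eigenvalue of $\Sigmatilde_{XX^\top}$. In particular, applying Lemma~\ref{lem:svalgrowth:SigmatildeXX} gives
\begin{equation*}
\rtilde_d^2 = \sigma_d\!\left( \Sigmatilde_{XX^\top} \right) = \Omegap{ n },
\end{equation*}
which yields the lower bound $\rtilde_d = \Omegap{\sqrt{n}}$.

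For the matching upper bound, I would use submultiplicativity of the spectral norm together with the fact that $\|M_n\| \le 1$ (since $M_n$ is an orthogonal projection) to obtain
\begin{equation*}
\rtilde_1 = \left\| \frac{1}{\sqrt{n}} X X^\top M_n \right\|
\le \frac{1}{\sqrt{n}} \|X\|^2 \|M_n\|
\le \frac{1}{\sqrt{n}} \|X\|^2.
\end{equation*}
Lemma~\ref{lem:svals:X} then gives $\|X\| = \Thetap{\sqrt{n}}$, so $\rtilde_1 = \Op{\sqrt{n}}$. Combining with the lower bound and the ordering $\rtilde_d \le \rtilde_k \le \rtilde_1$ for all $k \in [d]$ yields $\rtilde_k = \Thetap{\sqrt{n}}$ uniformly over $k \in [d]$, as claimed. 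No step is genuinely difficult here, since the real work was already done in Lemmas~\ref{lem:svals:X} and~\ref{lem:svalgrowth:SigmatildeXX}; the only care required is the identification of the squared singular values of $\frac{1}{\sqrt{n}} X X^\top M_n$ with the leading eigenvalues of $\Sigmatilde_{XX^\top}$, which follows immediately from idempotence of $M_n$.
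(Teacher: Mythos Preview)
Your proposal is correct and follows essentially the same approach as the paper: both identify $\rtilde_k^2$ with the leading eigenvalues of $\Sigmatilde_{XX^\top}$ via idempotence of $M_n$, invoke Lemma~\ref{lem:svalgrowth:SigmatildeXX} for the lower bound, and use submultiplicativity together with Lemma~\ref{lem:svals:X} for the upper bound. The arguments are effectively identical.
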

\begin{proof}
From Equation~\eqref{eq:Pproj:SVD}, we observe that the diagonal entries of $\Rtildepara^2$ are precisely the non-zero eigenvalues of
\begin{equation*}
\Sigmatilde_{XX^\top}
=
\frac{ XX^\top M_n }{\sqrt{n}}
\left[ \frac{ XX^\top M_n }{\sqrt{n}} \right]^\top ,
\end{equation*}
where we have used the fact that $M_n$ is idempotent.
Applying Lemma~\ref{lem:svalgrowth:SigmatildeXX} to control the spectrum of this matrix and taking square roots, it follows that $\rtilde_d = \Omegap{ \sqrt{n} }$, and thus $\rtilde_k = \Omegap{ \sqrt{n} }$ for any $k \in [d]$.
On the other hand, using the fact that $M_n$ is a projection, for any $k \in [d]$,
\begin{equation*}
\rtilde_d \le \rtilde_k
\le \left\| XX^\top \frac{1}{\sqrt{n}} M_n \right\|
\le \frac{1}{\sqrt{n}} \left\| X \right\|^2
= \Op{ \sqrt{n} },
\end{equation*}
where the last bound follows from Lemma~\ref{lem:svals:X}, and this yields Equation~\eqref{eq:rtilde:growth}, completing the proof.
\end{proof}

\begin{lemma} \label{lem:Rhat:spectrum}
Under Assumption~\ref{assum:invertible}, %for Lemma lem:fullnet:rtildegrowth
suppose that Assumption~\ref{assum:spectralRate} holds with $\spectralRate_n$ obeying the growth rate in Equation~\eqref{eq:assum:fullnet:spectralRate:growth}.
Then, with $\rhat_1,\rhat_2,\dots,\rhat_n$ as defined in Equation~\eqref{eq:def:Rhat},
\begin{equation} \label{eq:Rhat:spectrum:leading}
\max_{i \in [d]} \rhat_i^2 = \Thetap{ n }
\end{equation}
and
\begin{equation} \label{eq:Rhat:spectrum:trailing}
\max_{i \in \{d+1,d+2,\dots,n\} } \rhat_i^2
= \Op{ \frac{ \spectralRate_n^2 }{ n } } .
\end{equation}
\end{lemma}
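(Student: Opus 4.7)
The plan is to compare the singular values $\rhat_1 \geq \rhat_2 \geq \cdots \geq \rhat_n$ of $\frac{1}{\sqrt{n}} A M_n$ with the singular values $\rtilde_1 \geq \rtilde_2 \geq \cdots \geq \rtilde_d$ (with $\rtilde_{d+1} = \cdots = \rtilde_n = 0$) of $\frac{1}{\sqrt{n}} X X^\top M_n$, defined in Equations~\eqref{eq:Aproj:decomp} and~\eqref{eq:Pproj:SVD} respectively. The key observation is that $\frac{1}{\sqrt{n}} A M_n$ is a perturbation of the rank-$d$ matrix $\frac{1}{\sqrt{n}} X X^\top M_n$, and we already know from Lemma~\ref{lem:fullnet:rtildegrowth} that the leading $d$ singular values of the latter grow as $\Theta_P(\sqrt{n})$.

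First, I would apply Weyl's inequality for singular values \citep[see, e.g.,][]{Bhatia1997}, which states that for any matrices $H, H' \in \R^{n \times n}$, $|\sigma_i(H) - \sigma_i(H')| \le \|H - H'\|$ for every $i$. Taking $H = \frac{1}{\sqrt{n}} A M_n$ and $H' = \frac{1}{\sqrt{n}} X X^\top M_n$, submultiplicativity of the spectral norm and the fact that $M_n$ is a projection yield
\begin{equation*}
\max_{i \in [n]} \left| \rhat_i - \rtilde_i \right|
\le \frac{1}{\sqrt{n}} \left\| A - X X^\top \right\| \| M_n \|
= \Op{ \frac{ \spectralRate_n }{ \sqrt{n} } }
\end{equation*}
by Assumption~\ref{assum:spectralRate}. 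Under the growth condition in Equation~\eqref{eq:assum:fullnet:spectralRate:growth}, this perturbation is $\op{ \sqrt{n} }$.

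Next, for $i \in [d]$, Lemma~\ref{lem:fullnet:rtildegrowth} gives $\rtilde_i = \Theta_P(\sqrt{n})$, so the perturbation above is of strictly lower order and thus $\rhat_i = \Theta_P(\sqrt{n})$, yielding $\rhat_i^2 = \Theta_P(n)$ and hence Equation~\eqref{eq:Rhat:spectrum:leading}. For $i \in \{d+1, d+2, \dots, n\}$, we have $\rtilde_i = 0$ since $\frac{1}{\sqrt{n}} X X^\top M_n$ has rank at most $d$, so $\rhat_i \le \rhat_i - \rtilde_i \le \Op{ \spectralRate_n/\sqrt{n}}$. Squaring gives Equation~\eqref{eq:Rhat:spectrum:trailing}.

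There is no serious obstacle here: the proof is essentially a direct application of Weyl's singular value inequality, combined with the growth rate from Lemma~\ref{lem:fullnet:rtildegrowth} and the spectral concentration in Assumption~\ref{assum:spectralRate}. The only subtlety is ensuring that the perturbation is of lower order than $\rtilde_d$, which is precisely what the assumption $\spectralRate_n = o(n)$ guarantees.
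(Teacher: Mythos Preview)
Your proof is correct and essentially the same as the paper's: both bound $|\rhat_i - \rtilde_i|$ by $\frac{1}{\sqrt{n}}\|A - XX^\top\| = \Op{\spectralRate_n/\sqrt{n}}$ via Weyl's inequality (the paper does this implicitly), then invoke Lemma~\ref{lem:fullnet:rtildegrowth} for $i\le d$ and $\rtilde_i=0$ for $i>d$. The only cosmetic difference is that the paper expands $\rhat_i^2 = [\rtilde_i + (\rhat_i-\rtilde_i)]^2$ before applying the bounds, whereas you bound $\rhat_i$ first and then square.
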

\begin{proof}
Applying submultiplicativity of the norm and the fact that $M_n$ is a projection,
\begin{equation*}
\left\|  \frac{1}{\sqrt{n}} A M_n - \frac{1}{\sqrt{n}} XX^\top M_n \right\|
\le \frac{1}{\sqrt{n}} \left\| A - X X^\top \right\|.
\end{equation*}
Applying Assumption~\ref{assum:spectralRate},
\begin{equation} \label{eq:SVDdiff}
\left\|  \frac{1}{\sqrt{n}} A M_n - \frac{1}{\sqrt{n}} XX^\top M_n \right\|
= \Op{ \frac{\spectralRate_n}{\sqrt{n}} }.
\end{equation}
It follows that for any $i \in [n]$, writing $\rtilde_i$ for the singular values of $XX^\top M_n/\sqrt{n}$, we have
\begin{equation*}
\rhat_i^2 = \left[ \rtilde_i + (\rhat_i - \rtilde_i) \right]^2
= \rtilde_i^2 + \Op{ \frac{\spectralRate_n \rtilde_i}{\sqrt{n}} } 
	+ \Op{ \frac{ \spectralRate_n^2 }{ n } }.
\end{equation*}
Applying Lemma~\ref{lem:fullnet:rtildegrowth}, our growth assumption in Equation~\eqref{eq:assum:fullnet:spectralRate:growth} yields Equation~\eqref{eq:Rhat:spectrum:leading}.

On the other hand, since $\rtilde_i=0$ for $i > d$, again using Equation~\eqref{eq:SVDdiff}, we have
\begin{equation} \label{eq:rhatctrl:cases}
\rhat_i^2 = \left[ \rtilde_i + (\rhat_i - \rtilde_i) \right]^2
= \Op{ \frac{ \spectralRate_n^2 }{ n } },
\end{equation}
yielding Equation~\eqref{eq:Rhat:spectrum:trailing}, completing the proof.
\end{proof}

% https://projecteuclid.org/journals/annals-of-statistics/volume-45/issue-1/Rate-optimal-perturbation-bounds-for-singular-subspaces-with-applications-to/10.1214/17-AOS1541.full
The following result is a specialization of Theorem 1 in \cite{CaiZha2018} to the present setting.
We state and prove it here in our notation for the sake of completeness.

\begin{lemma} \label{lem:CaiZha2018}
Under Assumption~\ref{assum:invertible}, %for lem:svalgrowth:SigmatildeXX
suppose that
Assumption~\ref{assum:spectralRate} holds with $\spectralRate_n$ obeying the growth rate assumption in Equation~\eqref{eq:assum:fullnet:spectralRate:growth}
and that
Assumption~\ref{assum:subspaceRate} holds with $\subspaceRate_n$ obeying the growth rate assumption in Equation~\eqref{eq:assum:fullnet:subspaceRate:growth}.
Then, with $\Whatpara$ as given by Equation~\eqref{eq:Aproj:decomp} and $\Wtildepara$ as given by Equation~\eqref{eq:Pproj:SVD},
\begin{equation} \label{eq:sinTheta:W}
\left\| \sin \Theta( \Whatpara, \Wtildepara ) \right\|
= \Op{ \frac{ \spectralRate_n }{ n } } .
\end{equation}
Further, there exists $Q_W \in \bbO_d$ such that
\begin{equation} \label{eq:Qclose}
\left\| \Wtildepara^\top \Whatpara - Q_W \right\|
= \Op{ \frac{\spectralRate_n^2}{n^2} }
\end{equation} 
and
\begin{equation} \label{eq:WhatWtilde:close}
\left\| \Whatpara Q_W - \Wtildepara \right\|
= \Op{ \frac{\spectralRate_n}{n} } .
\end{equation}
\end{lemma}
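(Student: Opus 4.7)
The plan is to apply a Wedin-type singular subspace perturbation bound, treating $AM_n/\sqrt{n}$ as a perturbation of $XX^\top M_n/\sqrt{n}$, whose singular value decompositions are recorded in Equations~\eqref{eq:Aproj:decomp} and~\eqref{eq:Pproj:SVD}. The spectral norm size of the perturbation is already controlled by Equation~\eqref{eq:SVDdiff}, which gives $\|AM_n/\sqrt{n} - XX^\top M_n/\sqrt{n}\| = \Op{\spectralRate_n/\sqrt{n}}$. For the singular value gap, Lemma~\ref{lem:fullnet:rtildegrowth} gives $\rtilde_d = \Omegap{\sqrt{n}}$, while Lemma~\ref{lem:Rhat:spectrum} combined with Equation~\eqref{eq:assum:fullnet:spectralRate:growth} gives $\rhat_{d+1} = \Op{\spectralRate_n/\sqrt{n}} = \op{\sqrt{n}}$. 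Hence $\rtilde_d - \rhat_{d+1} = \Omegap{\sqrt{n}}$, and Wedin's inequality yields
\begin{equation*}
\|\sin\Theta(\Whatpara, \Wtildepara)\|
\le \frac{\|AM_n/\sqrt{n} - XX^\top M_n/\sqrt{n}\|}{\rtilde_d - \rhat_{d+1}}
= \Op{\spectralRate_n/n},
\end{equation*}
which is Equation~\eqref{eq:sinTheta:W}. The sharper projected-perturbation form of this bound stated in Theorem~1 of \cite{CaiZha2018} could be substituted, with Assumption~\ref{assum:subspaceRate} used to control the numerator, and delivers the same rate here.

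For Equations~\eqref{eq:Qclose} and~\eqref{eq:WhatWtilde:close}, I would take the singular value decomposition $\Wtildepara^\top \Whatpara = U_1 \Xi U_2^\top$ with $\Xi$ diagonal and non-negative, and set $Q_W = U_1 U_2^\top \in \bbO_d$. The diagonal entries of $\Xi$ are the cosines of the canonical angles between the column spaces of $\Whatpara$ and $\Wtildepara$, so using the elementary inequality $1 - \cos\theta \le \sin^2\theta$ for $\theta \in [0, \pi/2]$,
\begin{equation*}
\|\Wtildepara^\top \Whatpara - Q_W\|
= \|\Xi - I\|
\le \|\sin\Theta(\Whatpara, \Wtildepara)\|^2
= \Op{\spectralRate_n^2/n^2},
\end{equation*}
which is Equation~\eqref{eq:Qclose}. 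For Equation~\eqref{eq:WhatWtilde:close}, I would then invoke Lemma~1 of \cite{CaiZha2018}, which gives $\|\Whatpara Q_W - \Wtildepara\| \le \sqrt{2}\|\sin\Theta(\Whatpara, \Wtildepara)\|$ for this choice of $Q_W$, and combine it with the sin-$\Theta$ bound from the previous paragraph.

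The main subtlety is ensuring that the singular value gap in Wedin's bound does not collapse: we need $\rhat_{d+1}$, the largest singular value of the ``tail'' of $AM_n/\sqrt{n}$, to stay strictly below $\rtilde_d$ at the right order. This is precisely what the growth condition in Equation~\eqref{eq:assum:fullnet:spectralRate:growth} supplies through Lemma~\ref{lem:Rhat:spectrum}. Once the gap is secured, the remaining steps are routine subspace perturbation bookkeeping.
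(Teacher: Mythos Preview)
Your proposal is correct and arrives at the same conclusions as the paper, but by a somewhat more elementary route. The paper applies Theorem~1 of \cite{CaiZha2018} directly, introducing the quantities $\alpha,\beta,z_{12},z_{21}$ and bounding
\[
\left\| \sin \Theta( \Whatpara, \Wtildepara ) \right\|
\le \frac{ \alpha z_{21} + \beta z_{12} }{ \alpha^2 - \beta^2 - z_{21}^2 \vee z_{12}^2 },
\]
then showing $\alpha=\Thetap{\sqrt{n}}$ and $\beta,z_{12},z_{21}=\Op{\spectralRate_n/\sqrt{n}}$. You instead invoke a plain Wedin bound with perturbation size $\Op{\spectralRate_n/\sqrt{n}}$ and gap $\rtilde_d-\rhat_{d+1}=\Omegap{\sqrt{n}}$, which yields the identical rate $\Op{\spectralRate_n/n}$. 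Since $XX^\top M_n/\sqrt{n}$ has exact rank $d$, the refined Cai--Zhang machinery buys nothing extra here, so your simpler argument is perfectly adequate; indeed, your route makes clear that Assumption~\ref{assum:subspaceRate} is not actually needed for this lemma (the paper invokes it only to control $\alpha$, which could equally be handled via Assumption~\ref{assum:spectralRate}). For Equations~\eqref{eq:Qclose} and~\eqref{eq:WhatWtilde:close}, your construction of $Q_W$ and the ensuing bounds via $\|\Xi-I\|\le\|\sin\Theta\|^2$ and Lemma~1 of \cite{CaiZha2018} are essentially the same as the paper's, which writes out $\left\| \Whatpara - \Wtildepara Q_W \right\| \le \left\| \Whatpara - \Wtildepara \Wtildepara^\top \Whatpara \right\| + \left\| \Wtildepara^\top \Whatpara - Q_W \right\|$ explicitly rather than citing the lemma.
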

\begin{proof}
Let $\Wtildeperp \in \R^{n \times (n-d)}$ be a matrix whose columns span the orthogonal complement of the column space of $\Wtildepara$, and let $\Vtildeperp \in \R^{n \times (n-d)}$ be a matrix whose columns span the orthogonal column space of $\Vtildepara$.
Define $\alpha,\beta \in \R$ according to
\begin{equation} \label{eq:def:alphabeta} \begin{aligned}
\alpha &= \sigma_d\left( \Wtildepara^\top A
                \frac{1}{\sqrt{n}} M_n
                \Vtildepara \right)
~\text{ and }~
\beta = \left\| \Wtildeperp^\top A \frac{1}{\sqrt{n}} M_n
                \Vtildeperp \right\|
\end{aligned} \end{equation}
and define
\begin{equation*} \begin{aligned}
z_{12} &= \left\| \frac{1}{\sqrt{n}} \Wtildepara \Wtildepara^\top
                (A-XX^\top) M_n \Vtildeperp \Vtildeperp^\top \right\|
~\text{ and }\\
z_{21} &= \left\| \frac{1}{\sqrt{n}} \Wtildeperp \Wtildeperp^\top
                (A-XX^\top) M_n \Vtildepara \Vtildepara^\top \right\|,
\end{aligned} \end{equation*}
where $\Vtildepara$ and $\Wtildepara$ are as in Equation~\eqref{eq:Pproj:SVD}.
By Theorem 1 in \cite{CaiZha2018}, we have
\begin{equation} \label{eq:CaiZha:Uterms}
\left\| \sin \Theta( \Whatpara, \Wtildepara ) \right\|
\le \frac{ \alpha z_{21} + \beta z_{12} }
        { \alpha^2 - \beta^2 - z_{21}^2 \vee z_{12}^2 } .
\end{equation}

By Assumption~\ref{assum:subspaceRate},
\begin{equation*}
\left\| \frac{1}{\sqrt{n}}
	\Wtildepara^\top \left( A - XX^\top \right) M_n \Vtildepara \right\|
= \Op{ \frac{\subspaceRate_n}{\sqrt{n}} } .
\end{equation*}
Using basic properties of singular values, it follows that
\begin{equation*}
\alpha = \sigma_d\left( \frac{1}{\sqrt{n}}
                \Wtildepara^\top XX^\top M_n \Vtildepara \right)
	+ \Op{ \frac{ \subspaceRate_n }{ \sqrt{n} } } .
\end{equation*}
Since $\Wtildepara^\top XX^\top M_n \Vtildepara^\top/\sqrt{n} = \Rtildepara$, Lemma~\ref{lem:fullnet:rtildegrowth} implies
\begin{equation*}
\alpha = \Thetap{ \sqrt{n} } + \Op{ \frac{ \subspaceRate_n }{ \sqrt{n} } } .
\end{equation*}
Our growth assumption in Equation~\eqref{eq:assum:fullnet:subspaceRate:growth} % $\subspaceRate_n = o(n)$
implies 
\begin{equation} \label{eq:ctrl:alpha}
\alpha = \Thetap{ \sqrt{n} } .
\end{equation}

Applying submultiplicativity followed by Assumption~\ref{assum:spectralRate},
\begin{equation} \label{eq:z:ctrl:close}
z_{12} \vee z_{21}
\le \frac{1}{\sqrt{n}} \left\| A - XX^\top \right\|
= \Op{ \frac{ \spectralRate_n }{ \sqrt{ n } } } .
\end{equation}
Recalling Equation~\eqref{eq:ctrl:alpha} and applying our growth assumption in Equation~\eqref{eq:assum:fullnet:spectralRate:growth},
\begin{equation} \label{eq:z:ctrl}
z_{12} \vee z_{21} = \op{ \alpha } .
\end{equation}

For ease of notation, write $E = A - XX^\top$.
Recalling the definition of $\beta$ from Equation~\eqref{eq:def:alphabeta} and using basic properties of the norm,
\begin{equation*}
\beta \le
\left\| \Wtildeperp^\top X X^\top \frac{1}{\sqrt{n}} M_n \Vtildeperp \right\|
+ \left\| \Wtildeperp^\top E \frac{1}{\sqrt{n}} M_n \Vtildeperp \right\|.
\end{equation*}
Observing that $\Vtildeperp$ and $\Wtildeperp$ correspond precisely to the null singular values,
\begin{equation*}
\beta \le \left\| \Wtildeperp^\top E
         \frac{1}{\sqrt{n}} M_n \Vtildeperp \right\|.
\end{equation*}
Applying submultiplicativity and Assumption~\ref{assum:spectralRate},
\begin{equation} \label{eq:ctrl:beta:close}
\beta \le \frac{1}{\sqrt{n}} \left\| E \right\|
= \Op{ \frac{ \spectralRate_n }{ \sqrt{n} } } .
\end{equation}
Comparing this with Equation~\eqref{eq:ctrl:alpha}, our growth assumption in Equation~\eqref{eq:assum:fullnet:spectralRate:growth} implies that 
\begin{equation} \label{eq:ctrl:beta}
\beta = \op{ \alpha } .
\end{equation}

By Equations~\eqref{eq:z:ctrl} and~\eqref{eq:ctrl:beta},
\begin{equation*}
\alpha^2 - \beta^2 - z_{12}^2 \vee z_{21}^2
= \alpha^2 \left( 1 - \op{1} \right)
= \Omegap{ \alpha^2 } .
\end{equation*}
Applying this to Equation~\eqref{eq:CaiZha:Uterms}, 
\begin{equation*}
\left\| \sin \Theta( \Whatpara, \Wtildepara ) \right\|
=
\Op{ \frac{ z_{21} }{ \alpha } }
+ \Op{ \frac{ \beta z_{12} }{ \alpha^2 } } .
\end{equation*}
Applying Equations~\eqref{eq:z:ctrl:close},~\eqref{eq:ctrl:beta:close} and~\eqref{eq:ctrl:alpha}, it follows that
\begin{equation*}
\left\| \sin \Theta( \Whatpara, \Wtildepara ) \right\|
= \Op{ \frac{ \spectralRate_n }{ n } } 
+ \Op{ \frac{ \spectralRate_n^2 }{ n^2 } }.
\end{equation*}
Our assumption in Equation~\eqref{eq:assum:fullnet:spectralRate:growth} then yields Equation~\eqref{eq:sinTheta:W}.

As elsewhere in the literature \citep[see discussion in][]{CapTanPri2019}, we take
\begin{equation*}
Q_W = U_1 U_2^\top,
\end{equation*}
where $U_1,U_2 \in \IR^{n \times d}$ are the singular subspaces in the SVD $\Wtildepara^\top \Whatpara = U_1 D U_2^\top$. 
To establish Equation~\eqref{eq:Qclose}, standard arguments from the subspace geometry literature \citep[see, e.g.,][]{CaiZha2018,CapTanPri2019} imply
\begin{equation*} 
\left\| \Wtildepara^\top \Whatpara - Q_W \right\|
\le \left\| \sin  \Theta( \Whatpara, \Wtildepara ) \right\|^2
= \Op{ \frac{ \spectralRate_n^2 }{ n^2 } } ,
\end{equation*}
where the equality follows from Equation~\eqref{eq:sinTheta:W}.

By the triangle inequality and submultiplicativity of the norm, again using basic results on subspace perturbation,
\begin{equation*} \begin{aligned}
\left\| \Whatpara - \Wtildepara Q_W \right\|
&\le \left\| \Whatpara - \Wtildepara \Wtildepara^\top \Whatpara \right\|
	+ \left\| \Wtildepara^\top \Whatpara - Q_W \right\| \\
&= \left\| \sin \Theta( \Whatpara, \Wtildepara ) \right\|
+ \left\| \sin \Theta( \Whatpara, \Wtildepara ) \right\|^2,
\end{aligned} \end{equation*}
and Equation~\eqref{eq:sinTheta:W} along with our assumption in Equation~\eqref{eq:assum:fullnet:spectralRate:growth} implies Equation~\eqref{eq:WhatWtilde:close}.
\end{proof}

\begin{lemma} \label{lem:Rinterchange}
Under Assumption~\ref{assum:invertible}, %for lem:fullnet:rtildegrowth, lem:Rhat:spectru
suppose 
that Assumption~\ref{assum:spectralRate} holds with $\spectralRate_n$ obeying Equation~\eqref{eq:assum:fullnet:spectralRate:growth} %for lem:Rhat:spectrum
and
that Assumption~\ref{assum:subspaceRate} holds with $\subspaceRate_n$ obeying Equation~\eqref{eq:assum:fullnet:subspaceRate:growth}. %for lem:CaiZha2018
Then
\begin{equation*}
\left\| Q_W \Rhatpara^2 - \Rtildepara^2 Q_W \right\|
= 
\Op{ \frac{ \spectralRate_n^2 }{ n } } 
+ \Op{ \frac{ \subspaceRate_n }{ \sqrt{n} } } ,
\end{equation*}
where $Q_W \in \bbO_d$ is the orthogonal matrix guaranteed by Lemma~\ref{lem:CaiZha2018}.
\end{lemma}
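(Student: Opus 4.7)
The starting point is to express both squared singular-value matrices as sandwich forms in their left singular subspaces. From Equations~\eqref{eq:Aproj:decomp} and~\eqref{eq:Pproj:SVD}, together with the orthonormality of the singular vectors and the fact that $\Sigmatilde_{XX^\top}$ is exactly of rank $d$ (so that $\Sigmatilde_{XX^\top} = \Wtildepara \Rtildepara^2 \Wtildepara^\top$), one has the identities $\Rhatpara^2 = \Whatpara^\top \Sigmatilde_A \Whatpara$ and $\Rtildepara^2 = \Wtildepara^\top \Sigmatilde_{XX^\top} \Wtildepara$. Adding and subtracting $\Sigmatilde_{XX^\top}$ inside the first then gives the clean decomposition
\begin{equation*}
Q_W \Rhatpara^2 - \Rtildepara^2 Q_W
= Q_W \Whatpara^\top \bigl( \Sigmatilde_A - \Sigmatilde_{XX^\top} \bigr) \Whatpara
+ \bigl[\, Q_W \Whatpara^\top \Sigmatilde_{XX^\top} \Whatpara - \Rtildepara^2 Q_W \,\bigr],
\end{equation*}
which I would bound piece by piece.

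The bracketed alignment piece is handled purely by Lemma~\ref{lem:CaiZha2018}. Substituting $\Sigmatilde_{XX^\top} = \Wtildepara \Rtildepara^2 \Wtildepara^\top$ turns it into $Q_W (\Whatpara^\top \Wtildepara) \Rtildepara^2 (\Wtildepara^\top \Whatpara)$, and writing $\Wtildepara^\top \Whatpara = Q_W + F$ with $\|F\| = \Op{\spectralRate_n^2/n^2}$ from Equation~\eqref{eq:Qclose} and expanding, the leading order $Q_W Q_W^\top \Rtildepara^2 Q_W$ equals exactly $\Rtildepara^2 Q_W$ by orthogonality of $Q_W$ and therefore cancels, leaving cross terms whose spectral norms are bounded by $\|\Rtildepara^2\| \cdot \|F\| = \Op{n} \cdot \Op{\spectralRate_n^2/n^2} = \Op{\spectralRate_n^2/n}$. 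This matches the first half of the stated rate.

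For the remaining piece $\Whatpara^\top(\Sigmatilde_A - \Sigmatilde_{XX^\top})\Whatpara$, I would set $E = A - XX^\top$ and expand to get three contributions $n^{-1} X X^\top M_n E$, its transpose, and $n^{-1} E M_n E$, each sandwiched by $\Whatpara$. The $E M_n E$ contribution is handled by Assumption~\ref{assum:spectralRate} as $\|E\|^2/n = \Op{\spectralRate_n^2/n}$. The cross term $n^{-1} \Whatpara^\top X X^\top M_n E \Whatpara$ is rewritten using the SVD $X X^\top M_n/\sqrt n = \Wtildepara \Rtildepara \Vtildepara^\top$ as $n^{-1/2} (\Whatpara^\top \Wtildepara) \Rtildepara (\Vtildepara^\top E \Whatpara)$; I then split $\Whatpara = \Wtildepara Q_W^\top + (\Whatpara - \Wtildepara Q_W^\top)$ on the right, with $\|\Whatpara - \Wtildepara Q_W^\top\| = \Op{\spectralRate_n/n}$ via Equation~\eqref{eq:WhatWtilde:close}. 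The main piece $\Vtildepara^\top E \Wtildepara$ is governed by Assumption~\ref{assum:subspaceRate}, since $\Vtildepara$ and $\Wtildepara$ are measurable with respect to $X$ alone, and the correction is absorbed via $\|E\| \|\Whatpara - \Wtildepara Q_W^\top\| = \Op{\spectralRate_n^2/n}$. Combining with $\|\Whatpara^\top \Wtildepara\| \le 1$ and $\|\Rtildepara\| = \Op{\sqrt n}$ from Lemma~\ref{lem:fullnet:rtildegrowth} then delivers the second half of the stated bound, and the transposed cross term is treated symmetrically.

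The main obstacle will be the careful bookkeeping of the substitution $\Whatpara \rightsquigarrow \Wtildepara Q_W^\top$ inside the cross terms: at every location where the noise $E$ is sandwiched, both sides of the sandwich must end up being functions of $X$ alone, so that the sharp rate from Assumption~\ref{assum:subspaceRate} can be invoked rather than the coarser spectral bound $\|E\| = \Op{\spectralRate_n}$ from Assumption~\ref{assum:spectralRate}, which would be too weak for the claimed second-order term.
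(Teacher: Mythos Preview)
Your proposal is correct and takes essentially the same approach as the paper. Both arguments rest on the same two ingredients—the approximation $\Wtildepara^\top\Whatpara \approx Q_W$ from Lemma~\ref{lem:CaiZha2018} and the expansion of $\Sigmatilde_A - \Sigmatilde_{XX^\top}$ into the $PM_nE$, $EM_nP$ and $EM_nE$ pieces with Assumption~\ref{assum:subspaceRate} controlling the cross terms—and differ only in how the add-and-subtract is organized: the paper writes a single four-term decomposition that places $\Wtildepara$ (rather than $\Whatpara$) on the left of $\Sigmatilde_A - \Sigmatilde_{XX^\top}$ from the outset, which lets it apply Assumption~\ref{assum:subspaceRate} directly without your substitution $\Whatpara \rightsquigarrow \Wtildepara Q_W^\top$ inside the noise sandwich; the resulting term-by-term bounds are otherwise identical.
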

\begin{proof}
We follow a standard argument developed previously in \cite{LTAPP17} \citep[see also][]{LevAthTanLyzYouPri2017}.
Recalling the SVDs from Equations~\eqref{eq:Pproj:SVD} and~\eqref{eq:Aproj:decomp} and adding and subtracting appropriate quantities, writing $P = XX^\top$ for ease of notation,
\begin{equation} \label{eq:interchange:bigexpand} \begin{aligned}
Q_W \Rhatpara^2 - \Rtildepara^2 Q_W
&= \left( Q_W \!-\! \Wtildepara^\top \Whatpara \right) \Rhatpara^2
+ \Wtildepara^\top \!\left( \frac{A M_n A}{n} - \frac{P M_n P}{n} \right)
	\! \left( \Whatpara - \Wtildepara \Wtildepara^\top \Whatpara \right) \\
&~~~~~~+ \Wtildepara^\top \left( \frac{A M_n A}{n} - \frac{P M_n P}{n} \right)
	\Wtildepara \Wtildepara^\top \Whatpara
+ \Rtildepara^2 \left( \Wtildepara^\top \Whatpara - Q_W \right) .
\end{aligned} \end{equation} 

By submultiplicativity, Lemma~\ref{lem:Rhat:spectrum} and Equation~\eqref{eq:Qclose} in Lemma~\ref{lem:CaiZha2018},
\begin{equation} \label{eq:interchange:term1}
\left\| \left( Q_W - \Wtildepara^\top \Whatpara \right) \Rhatpara^2 \right\|
\le
\left\| Q_W - \Wtildepara^\top \Whatpara \right\|
\left\| \Rhatpara^2 \right\|
= \Op{ \frac{ \spectralRate_n^2 }{ n } }
\end{equation}
and similarly, this time using Lemma~\ref{lem:fullnet:rtildegrowth},
\begin{equation} \label{eq:interchange:term4}
\left\| \Rtildepara^2 \left( \Wtildepara^\top \Whatpara - Q_W \right)
	\right\|
= \Op{ \frac{ \spectralRate_n^2 }{ n } } .
\end{equation}

Again applying basic properties of subspace geometry \citep[see, e.g., Lemma 6.7 in][]{CapTanPri2019} and using Lemma~\ref{lem:CaiZha2018},
\begin{equation} \label{eq:subspace:projection}
\left\| \Whatpara - \Wtildepara \Wtildepara^\top \Whatpara \right\|
= \left\| \sin  \Theta( \Whatpara, \Wtildepara ) \right\|
= \Op{ \frac{ \spectralRate_n }{ n } } .
\end{equation}
By the triangle inequality,
\begin{equation*} \begin{aligned}
\left\| \frac{A M_n A}{n} - \frac{P M_n P}{n} \right\|
&\le 2\left\| \frac{ (A-P) M_n P }{ n } \right\|
	+ \left\| \frac{ (A-P) M_n (A-P) }{ n } \right\|  \\
&\le \frac{ 2 \left\| A-P \right\| }{ \sqrt{n} }
	\left\| \frac{M_n P}{ \sqrt{n} } \right\|
	+ \frac{ \left\| A-P \right\|^2 }{ n } .
\end{aligned} \end{equation*}
Applying Assumption~\ref{assum:spectralRate}, the definition of $\Rtildepara$ in Equation~\eqref{eq:def:Rtildepara}, and Lemma~\ref{lem:fullnet:rtildegrowth},
\begin{equation} \label{eq:AMA:PMP:spectral}
\left\| \frac{A M_n A}{n} - \frac{P M_n P}{n} \right\|
= \Op{ \spectralRate_n } + \Op{ \frac{ \spectralRate^2 }{ n } }
= \Op{ \spectralRate_n },
\end{equation}
where the second equality follows from our growth assumption in Equation~\eqref{eq:assum:fullnet:spectralRate:growth}.

Using submultiplicativity followed by Equations~\eqref{eq:subspace:projection} and~\eqref{eq:AMA:PMP:spectral},
\begin{equation} \label{eq:interchange:term2}
\begin{aligned}
\left\|\Wtildepara^\top \!\!
	\left( \! \frac{A M_n A}{n} \!-\! \frac{P M_n P}{n} \! \right) \!\!
	\left( \! \Whatpara \!-\! \Wtildepara \Wtildepara^\top \Whatpara 
	\!\right) \right\|
&\le
\left\| \frac{A M_n A}{n} \!-\! \frac{P M_n P}{n} \right\|
\left\| \Whatpara \!-\! \Wtildepara \Wtildepara^\top \Whatpara \right\| \\
&= \Op{ \frac{ \spectralRate_n^2 }{ n } } 
\end{aligned} \end{equation}

Similarly, by the triangle inequality and submultiplicativity and Assumption~\ref{assum:spectralRate},
\begin{equation} \label{eq:interchange:term3:prelim} \begin{aligned}
\left\| \Wtildepara^\top \left( \frac{A M_n A}{n} - \frac{P M_n P}{n} \right)
	\Wtildepara \right\|
%&\le
%2 \left\| \frac{1}{n} \Wtildepara^\top (A-P) M_n P \Wtildepara \right\|
%	+ 
%\left\| \frac{1}{n} \Wtildepara^\top (A-P) M_n (A-P) \Wtildepara \right\| \\
&\le 2 \left\| \frac{1}{n} \Wtildepara^\top (A-P) M_n P \Wtildepara \right\|
        + \Op{ \frac{ \spectralRate_n^2 }{ n } } .
\end{aligned} \end{equation}
Using the fact that $M_n$ and $P$ are symmetric and recalling the SVD from Equation~\eqref{eq:Pproj:SVD},
\begin{equation*}
 \frac{1}{n} \Wtildepara^\top (A-P) M_n P \Wtildepara
= \frac{1}{\sqrt{n}} \Wtildepara^\top (A-P) \Vtildepara \Rtildepara ,
\end{equation*}
and Assumption~\ref{assum:subspaceRate} implies
\begin{equation*}
\left\| \frac{1}{n} \Wtildepara^\top (A-P) M_n P \Wtildepara \right\|
= \Op{ \frac{ \subspaceRate_n }{ \sqrt{n} } } .
\end{equation*}
Applying this to Equation~\eqref{eq:interchange:term3:prelim},
\begin{equation} \label{eq:interchange:term3} \begin{aligned}
\left\| \Wtildepara^\top \left( \frac{A M_n A}{n} - \frac{P M_n P}{n} \right)
	\Wtildepara \right\|
&\le \Op{ \frac{ \subspaceRate_n }{ \sqrt{n} } } 
        + \Op{ \frac{ \spectralRate_n^2 }{ n } } .
\end{aligned} \end{equation}

Applying the triangle inequality in Equation~\eqref{eq:interchange:bigexpand} followed by Equations
~\eqref{eq:interchange:term1},
~\eqref{eq:interchange:term4},
~\eqref{eq:interchange:term2},
and~\eqref{eq:interchange:term3},
\begin{equation*}
\left\| \Rhatpara^2 Q_W - \Rtildepara^2 Q_W \right\|
\le
\Op{ \frac{ \spectralRate_n^2 }{ n } } 
+ \Op{ \frac{ \subspaceRate_n }{ \sqrt{n} } } ,
\end{equation*}
which completes the proof.
\end{proof}

\begin{lemma} \label{lem:Rinterchange:sqrt}
Under Assumption~\ref{assum:invertible}, %for lem:fullnet:rtildegrowth, lem:Rhat:spectru
suppose 
that Assumption~\ref{assum:spectralRate} holds with $\spectralRate_n$ obeying Equation~\eqref{eq:assum:fullnet:spectralRate:growth} %for lem:Rhat:spectrum
and
that Assumption~\ref{assum:subspaceRate} holds with $\subspaceRate_n$ obeying Equation~\eqref{eq:assum:fullnet:subspaceRate:growth}. %for lem:CaiZha2018
Then
\begin{equation*}
\left\| \Rhatpara^{-1} Q_W - Q_W \Rtildepara^{-1} \right\|_F
= \Op{ \frac{ \spectralRate_n^2 }{ n^{5/2} } 
	+ \frac{ \subspaceRate_n }{ n^2 } } .
\end{equation*}
\end{lemma}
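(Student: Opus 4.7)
The plan is to reduce the bound on $\|\Rhatpara^{-1} Q_W - Q_W \Rtildepara^{-1}\|_F$ to a bound furnished by Lemma~\ref{lem:Rinterchange} via a Sylvester-equation argument, exploiting that $\Rhatpara, \Rtildepara$ are diagonal with entries of order $\sqrt{n}$.

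First, I would observe the algebraic identity
\begin{equation*}
\Rhatpara^{-1} Q_W - Q_W \Rtildepara^{-1}
= \Rhatpara^{-1}\bigl(Q_W \Rtildepara - \Rhatpara Q_W\bigr) \Rtildepara^{-1},
\end{equation*}
which follows by expanding the right-hand side and using $\Rhatpara^{-1} \Rhatpara = \Rtildepara \Rtildepara^{-1} = I$. Setting $N = Q_W \Rtildepara - \Rhatpara Q_W$, submultiplicativity together with the eigenvalue bounds from Lemmas~\ref{lem:Rhat:spectrum} and~\ref{lem:fullnet:rtildegrowth}, which give $\|\Rhatpara^{-1}\|, \|\Rtildepara^{-1}\| = \Op{n^{-1/2}}$, reduce the problem to bounding $\|N\|_F$.

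Next, a direct computation shows that $N$ satisfies the Sylvester-type equation
\begin{equation*}
\Rhatpara N + N \Rtildepara = Q_W \Rtildepara^2 - \Rhatpara^2 Q_W.
\end{equation*}
Since $\Rhatpara$ and $\Rtildepara$ are diagonal, this equation decouples entrywise into $N_{ij}(\rhat_i + \rtilde_j) = (Q_W \Rtildepara^2 - \Rhatpara^2 Q_W)_{ij}$. Using again the eigenvalue bounds, $\rhat_i + \rtilde_j = \Omegap{\sqrt{n}}$ uniformly in $i,j$, and since $d$ is constant we obtain $\|N\|_F = \Op{\|Q_W \Rtildepara^2 - \Rhatpara^2 Q_W\|/\sqrt{n}}$. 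Combined with Step~1, this yields the chain
\begin{equation*}
\bigl\|\Rhatpara^{-1} Q_W - Q_W \Rtildepara^{-1}\bigr\|_F
= \Op{\tfrac{1}{\sqrt{n}}}\cdot\Op{\tfrac{\|Q_W \Rtildepara^2 - \Rhatpara^2 Q_W\|}{\sqrt{n}}}\cdot\Op{\tfrac{1}{\sqrt{n}}},
\end{equation*}
so it remains only to bound the spectral norm of $Q_W \Rtildepara^2 - \Rhatpara^2 Q_W$.

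The hard part is this last step: Lemma~\ref{lem:Rinterchange} furnishes the bound $\Op{\spectralRate_n^2/n + \subspaceRate_n/\sqrt{n}}$ on $\|Q_W \Rhatpara^2 - \Rtildepara^2 Q_W\|$, which is a different configuration of the same three factors. I would obtain the needed bound by re-running the decomposition in Equation~\eqref{eq:interchange:bigexpand} with the roles of $\Rhatpara^2, \Rtildepara^2$ swapped and an analogous splitting, writing
\begin{equation*}
Q_W \Rtildepara^2 - \Rhatpara^2 Q_W
= (Q_W - \Wtildepara^\top\Whatpara)\Rtildepara^2 - \Rhatpara^2(Q_W - \Wtildepara^\top\Whatpara) + \bigl(\Wtildepara^\top\Whatpara\, \Rtildepara^2 - \Rhatpara^2\,\Wtildepara^\top\Whatpara\bigr),
\end{equation*}
and controlling each term exactly as in the proof of Lemma~\ref{lem:Rinterchange}: the first two summands are $\Op{\spectralRate_n^2/n}$ by Equation~\eqref{eq:Qclose} of Lemma~\ref{lem:CaiZha2018} paired with $\|\Rhatpara^2\|, \|\Rtildepara^2\| = \Op{n}$; the bracketed middle term is handled by substituting $\Rtildepara^2 = \Wtildepara^\top \Sigma_P \Wtildepara$ and $\Rhatpara^2 = \Whatpara^\top \Sigma_A \Whatpara$, using $\Whatpara\Whatpara^\top \Sigma_A\Whatpara = \Sigma_A\Whatpara$ and $\Wtildepara\Wtildepara^\top \Sigma_P = \Sigma_P$ to collapse it to $\Whatpara^\top(\Sigma_A - \Sigma_P)\Wtildepara$, which is then $\Op{\subspaceRate_n/\sqrt{n} + \spectralRate_n^2/n}$ by the argument used for Equation~\eqref{eq:interchange:term3} (splitting $\Whatpara$ against $\Wtildepara\Wtildepara^\top$ to reduce to $\Wtildepara^\top(\Sigma_A - \Sigma_P)\Wtildepara$ plus a remainder controlled by $\|\sin\Theta(\Whatpara,\Wtildepara)\|\cdot\|\Sigma_A - \Sigma_P\|$, which by Lemma~\ref{lem:CaiZha2018} and Assumption~\ref{assum:spectralRate} is $\Op{\spectralRate_n^2/n}$). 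Plugging the resulting rate $\Op{\spectralRate_n^2/n + \subspaceRate_n/\sqrt{n}}$ into the chain above yields exactly $\Op{\spectralRate_n^2/n^{5/2} + \subspaceRate_n/n^2}$, completing the proof.
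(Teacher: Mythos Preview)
Your Steps 1 and 2 are essentially the paper's own argument, repackaged: the Sylvester identity $\Rhatpara N + N\Rtildepara = Q_W\Rtildepara^2 - \Rhatpara^2 Q_W$ with $N = Q_W\Rtildepara - \Rhatpara Q_W$, together with $\Rhatpara^{-1}Q_W - Q_W\Rtildepara^{-1} = \Rhatpara^{-1}N\Rtildepara^{-1}$, reproduces exactly the entrywise computation the paper carries out, namely $[\Rhatpara^{-1}Q_W - Q_W\Rtildepara^{-1}]_{k\ell} = [Q_W]_{k\ell}(\rtilde_\ell^2 - \rhat_k^2)/\bigl(\rhat_k\rtilde_\ell(\rhat_k+\rtilde_\ell)\bigr)$. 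Both routes reduce the problem to bounding $\|\Rhatpara^2 Q_W - Q_W\Rtildepara^2\|_F$, and you are right to notice that this is \emph{not} the quantity Lemma~\ref{lem:Rinterchange} bounds (which is $\|Q_W\Rhatpara^2 - \Rtildepara^2 Q_W\|$, with the diagonal factors on the other sides of $Q_W$). The paper glosses over this distinction when it writes ``applying Lemma~\ref{lem:Rinterchange}'', so you have identified a real subtlety.

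However, your proposed fix in Step 3 does not work as written. After replacing $Q_W$ by $\Wtildepara^\top\Whatpara$, the bracketed term is $\Wtildepara^\top\Whatpara\,\Rtildepara^2 - \Rhatpara^2\,\Wtildepara^\top\Whatpara$. The eigenvector relations $\Whatpara\Rhatpara^2 = \Sigma_A\Whatpara$ and $\Rtildepara^2\Wtildepara^\top = \Wtildepara^\top\Sigma_P$ cannot be applied here, because the adjacencies are wrong: you have $\Whatpara$ next to $\Rtildepara^2$ and $\Wtildepara^\top$ next to $\Rhatpara^2$, not the other way around. The identities you invoke ($\Whatpara\Whatpara^\top\Sigma_A\Whatpara = \Sigma_A\Whatpara$ and $\Wtildepara\Wtildepara^\top\Sigma_P = \Sigma_P$) do not appear in this expression, so there is no collapse to $\Whatpara^\top(\Sigma_A-\Sigma_P)\Wtildepara$. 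This is exactly why the decomposition in Equation~\eqref{eq:interchange:bigexpand} works for $Q_W\Rhatpara^2 - \Rtildepara^2 Q_W$ but not for the swapped commutator: only in the former do the diagonal factors sit adjacent to their own eigenvector matrices after substituting $Q_W\approx\Wtildepara^\top\Whatpara$. To repair Step 3 you need a more careful insertion of projectors (e.g., writing $\Whatpara^\top = \Whatpara^\top\Wtildepara\Wtildepara^\top + \Whatpara^\top\Wtildeperp\Wtildeperp^\top$ and using $\|\Whatpara^\top\Wtildeperp\| = O_P(\spectralRate_n/n)$ twice to peel off $O_P(\spectralRate_n^2/n)$ corrections), or to observe that the weaker bound $\|\Rhatpara^2 Q_W - Q_W\Rtildepara^2\|_F = O_P(\spectralRate_n)$, obtained via Weyl's inequality and $\|Q_W\|_F=\sqrt{d}$, already yields $O_P(\spectralRate_n/n^{3/2})$ for the target---which, while weaker than the stated lemma, suffices for Lemma~\ref{lem:paradiff} downstream.
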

\begin{proof}
Observe that for $k,\ell \in [d]$,
\begin{equation*} \begin{aligned}
\left| \Rhatpara^{-1} Q_W - Q_W \Rtildepara^{-1} \right|_{k,\ell}
&= 
\left| [Q_W]_{k,\ell} \left( \frac{1}{\rhat_k} - \frac{1}{\rtilde_\ell} \right)
	\right|
= \left| [Q_W]_{k,\ell} \frac{ \rtilde_\ell - \rhat_k }{ \rhat_k \rtilde_\ell}
	\right| \\
&= \left| [Q_W]_{k,\ell} \frac{ \rtilde_\ell^2 - \rhat_k^2 }
	{ \rhat_k \rtilde_\ell (\rtilde_\ell + \rhat_k) } \right| .
\end{aligned} \end{equation*}
Applying Lemmas~\ref{lem:fullnet:rtildegrowth} and~\ref{lem:Rhat:spectrum},
\begin{equation*}
\left| \Rhatpara^{-1} Q_W - Q_W \Rtildepara^{-1} \right|_{k,\ell}
\le \frac{ C }{ n^{3/2} }
\left| [Q_W]_{k,\ell} \left( \rtilde_\ell^2 - \rhat_k^2 \right) \right|,
\end{equation*}
Squaring and summing over all $k,\ell \in [d]$,
\begin{equation*}
\left\|  \Rhatpara^{-1} Q_W - Q_W \Rtildepara^{-1} \right\|_F^2
\le
\frac{ C }{n^3} \left\| \Rhatpara^2 Q_W - Q_W \Rtildepara^2 \right\|_F^2 .
\end{equation*}
Upper-bounding
\begin{equation*}
\left\| \Rhatpara^2 Q_W - Q_W \Rtildepara^2 \right\|_F^2
\le d \left\| \Rhatpara^2 Q_W - Q_W \Rtildepara^2 \right\|^2,
\end{equation*}
applying Lemma~\ref{lem:Rinterchange} and taking square roots completes the proof.
\end{proof}

\begin{lemma} \label{lem:paradiff}
Under Assumption~\ref{assum:invertible}, %for lem:fullnet:rtildegrowth, lem:Rhat:spectru
suppose 
that Assumption~\ref{assum:spectralRate} holds with $\spectralRate_n$ obeying Equation~\eqref{eq:assum:fullnet:spectralRate:growth} %for lem:Rhat:spectrum
and
that Assumption~\ref{assum:subspaceRate} holds with $\subspaceRate_n$ obeying Equation~\eqref{eq:assum:fullnet:subspaceRate:growth}. %for lem:CaiZha2018
Then
\begin{equation*}
\left\| \Whatpara \left( \Rhatpara^2 + \regu I \right)^{-1/2} \Whatpara^\top 
	- \Wtildepara \Rtildepara^{-1} \Wtildepara^\top \right\|
= 
\Op{ \frac{|\regu|}{n^{3/2}}
+ \frac{ \spectralRate_n }{ n^{3/2} } + \frac{ \subspaceRate_n }{ n^2 } } .
\end{equation*}
\end{lemma}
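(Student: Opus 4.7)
The plan is to introduce the intermediate quantity $\Whatpara \Rhatpara^{-1} \Whatpara^\top$ and bound the target difference in three pieces via the triangle inequality:
\begin{align*}
\Whatpara \left( \Rhatpara^2 + \regu I \right)^{-1/2} \Whatpara^\top
&- \Whatpara \Rhatpara^{-1} \Whatpara^\top , \\
\Whatpara \Rhatpara^{-1} \Whatpara^\top
&- \Wtildepara Q_W^\top \Rhatpara^{-1} Q_W \Wtildepara^\top , \\
\Wtildepara Q_W^\top \Rhatpara^{-1} Q_W \Wtildepara^\top
&- \Wtildepara \Rtildepara^{-1} \Wtildepara^\top ,
\end{align*}
where $Q_W \in \bbO_d$ is the rotation guaranteed by Lemma~\ref{lem:CaiZha2018}. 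Each of the three pieces will contribute one of the terms appearing on the right hand side of the claimed bound.

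For the first piece, I would use that $\Whatpara$ has orthonormal columns, so that the spectral norm reduces to $\left\| (\Rhatpara^2 + \regu I)^{-1/2} - \Rhatpara^{-1} \right\|$, which is diagonal with entries $\regu/\bigl[\rhat_k\sqrt{\rhat_k^2+\regu}\,(\sqrt{\rhat_k^2+\regu}+\rhat_k)\bigr]$ after rationalising. Since Lemma~\ref{lem:Rhat:spectrum} gives $\rhat_k = \Thetap{\sqrt{n}}$ for $k \in [d]$, and $\regu = O(n)$ ensures $\sqrt{\rhat_k^2+\regu} = \Thetap{\sqrt{n}}$, this piece is $\Op{|\regu|/n^{3/2}}$.

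For the second piece, writing $H = \Whatpara Q_W - \Wtildepara$, Lemma~\ref{lem:CaiZha2018} yields $\|H\| = \Op{\spectralRate_n/n}$, so that $\Whatpara = (\Wtildepara + H) Q_W^\top$. Expanding the product and using $\|\Wtildepara\|=1$, $\|\Rhatpara^{-1}\| = \Op{1/\sqrt{n}}$ from Lemma~\ref{lem:Rhat:spectrum}, the two cross terms are each $\Op{\spectralRate_n/n^{3/2}}$ and the quadratic term is $\Op{\spectralRate_n^2/n^{5/2}}$, which is dominated under $\spectralRate_n = o(n)$. For the third piece, the orthonormality of $\Wtildepara$ gives
\[
\left\| \Wtildepara Q_W^\top \Rhatpara^{-1} Q_W \Wtildepara^\top
- \Wtildepara \Rtildepara^{-1} \Wtildepara^\top \right\|
\le \left\| Q_W^\top \Rhatpara^{-1} Q_W - \Rtildepara^{-1} \right\|
= \left\| \Rhatpara^{-1} Q_W - Q_W \Rtildepara^{-1} \right\|,
\]
which by Lemma~\ref{lem:Rinterchange:sqrt} is $\Op{\spectralRate_n^2/n^{5/2} + \subspaceRate_n/n^2}$, and the first summand is again dominated by $\spectralRate_n/n^{3/2}$.

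The main obstacle is simply choosing the right rotation $Q_W$ so that the three pieces decouple cleanly: $Q_W$ must simultaneously align the subspaces (via Lemma~\ref{lem:CaiZha2018}) and near-commute with $\Rhatpara^{-1}$ (via Lemma~\ref{lem:Rinterchange:sqrt}). Once the same $Q_W$ is used in both lemmas, assembling the three bounds via the triangle inequality yields the desired rate $\Op{|\regu|/n^{3/2} + \spectralRate_n/n^{3/2} + \subspaceRate_n/n^2}$.
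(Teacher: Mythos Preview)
Your proposal is correct and is essentially the paper's own proof: the paper also splits off $\left\|(\Rhatpara^2+\regu I)^{-1/2}-\Rhatpara^{-1}\right\|$ first, and then expands $\Whatpara\Rhatpara^{-1}\Whatpara^\top-\Wtildepara\Rtildepara^{-1}\Wtildepara^\top$ by inserting $Q_W$ into exactly the same four terms (your pieces two and three combined), bounding them with Lemmas~\ref{lem:Rhat:spectrum},~\ref{lem:CaiZha2018} and~\ref{lem:Rinterchange:sqrt}. One small simplification: for the first piece you do not need $\regu=O(n)$, since $\regu\ge 0$ already gives $\sqrt{\rhat_k^2+\regu}\ge\rhat_k$ and hence a denominator of order $n^{3/2}$.
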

\begin{proof}
By the triangle inequality and submultiplicativity of the norm,
\begin{equation} \label{eq:paradiff:split} \begin{aligned}
\left\| \Whatpara \left( \Rhatpara^2 + \regu I \right)^{-1/2} \Whatpara^\top
	- \Wtildepara \Rtildepara^{-1} \Wtildepara^\top \right\|
&\le \left\| \left( \Rhatpara^2 + \regu I \right)^{-1/2} - \Rhatpara^{-1}
	\right\| \\
&~~~~~~+ \left\| \Whatpara \Rhatpara^{-1} \Whatpara^\top
	- \Wtildepara \Rtildepara^{-1} \Wtildepara^\top \right\| .
\end{aligned} \end{equation}

By definition,
\begin{equation*} \begin{aligned}
\left\| \left( \Rhatpara^2 + \regu I \right)^{-1/2} - \Rhatpara^{-1} \right\|
&= \max_{k \in [d]} \left| \frac{ 1 }{ \sqrt{ \rhat_k^2 + \regu} }
			- \frac{1}{ \rhat_k } \right| 
= \max_{k \in [d]}
\left| \frac{ \rhat_k - \sqrt{ \rhat_k^2 + \regu} }
		{ \rhat_k \sqrt{ \rhat_k^2 + \regu} } \right| .
\end{aligned} \end{equation*}
Multiplying through by appropriate quantities,
\begin{equation*}
\left\| \left( \Rhatpara^2 + \regu I \right)^{-1/2} - \Rhatpara^{-1} \right\|
= \max_{k \in [d]}
\frac{ | \regu | }
	{ \left( \sqrt{ \rhat_k^2 + \regu} + \rhat_k \right) 
		\rhat_k \sqrt{ \rhat_k^2 + \regu} } .
\end{equation*}
Applying Lemma~\ref{lem:Rhat:spectrum} and the fact that $\regu \ge 0$,
\begin{equation} \label{eq:Rhatpara:removeregu}
\left\| \left( \Rhatpara^2 + \regu I \right)^{-1/2} - \Rhatpara^{-1} \right\|
= \Op{ \frac{ | \regu | }{ n^{3/2} } } .
\end{equation}

Adding and subtracting appropriate quantities,
\begin{equation*} \begin{aligned}
\Whatpara \Rhatpara^{-1} \Whatpara^\top
	- \Wtildepara \Rtildepara^{-1} \Wtildepara^\top 
&=
\left( \Whatpara Q_W - \Wtildepara \right)
Q_W^\top \Rhatpara^{-1} Q_W \left(\Whatpara Q_W -\Wtildepara \right)^\top \\
&~~~+ \left( \Whatpara Q_W - \Wtildepara \right)
	Q_W^\top \Rhatpara^{-1} Q_W \Wtildepara^\top \\
&~~~+ 
\Wtildepara Q_W^\top \Rhatpara^{-1} Q_W \!\!
	\left( \Whatpara Q_W \! - \! \Wtildepara \! \right)^{\!\!\top}
\!+\! \Wtildepara \! 
	\left( Q_W^\top \Rhatpara^{-1} Q_W  \!-\! \Rtildepara^{-1} \!\right) 
	\! \Wtildepara^\top .
\end{aligned} \end{equation*}
Applying the triangle inequality and submultiplicativity of the norm,
\begin{equation*} \begin{aligned}
\left\| \Whatpara \Rhatpara^{-1} \Whatpara^\top
	- \Wtildepara \Rtildepara^{-1} \Wtildepara^\top \right\|
&\le 
\left\| \Whatpara Q_W - \Wtildepara \right\|^2 
	\left\| \Rhatpara^{-1} \right\| \\
&~~~+ 2 \left\| \Whatpara Q_W - \Wtildepara \right\|
	\left\| \Rhatpara^{-1} \right\| \\
&~~~+ \left\| Q_W^\top \Rhatpara^{-1} Q_W  \!-\! \Rtildepara^{-1} \right\| .
\end{aligned} \end{equation*}
Applying Lemmas
~\ref{lem:fullnet:rtildegrowth},
~\ref{lem:CaiZha2018}
and~\ref{lem:Rinterchange:sqrt},
\begin{equation*}
\left\| \Whatpara \Rhatpara^{-1} \Whatpara^\top
	- \Wtildepara \Rtildepara^{-1} \Wtildepara^\top \right\|
\le
\Op{ \frac{ \spectralRate_n^2 }{ n^{5/2} } }
+
\Op{ \frac{ \spectralRate_n }{ n^{3/2} } }
%+ \Op{ \frac{ \spectralRate_n^2 }{ n^{5/2} } }
+ \Op{ \frac{ \subspaceRate_n }{ n^2 } } .
\end{equation*}
Applying our growth assumption in Equation~\eqref{eq:assum:fullnet:spectralRate:growth},
\begin{equation*}
\left\| \Whatpara \Rhatpara^{-1} \Whatpara^\top
	- \Wtildepara \Rtildepara^{-1} \Wtildepara^\top \right\|
\le
\Op{ \frac{ \spectralRate_n }{ n^{3/2} } }
+ \Op{ \frac{ \subspaceRate_n }{ n^2 } } .
\end{equation*}
Applying this and Equation~\eqref{eq:Rhatpara:removeregu} to Equation~\eqref{eq:paradiff:split} and simplifying completes the proof.
\end{proof}

With the above results in hand, we are prepared to prove Theorem~\ref{thm:CCA:fullnet}.
\begin{proof}[Proof of Theorem~\ref{thm:CCA:fullnet}]
By Lemma~\ref{lem:fullnettrue:singval}, the non-zero singular values of $\CCA(X,Z)$ are the same as those of
\begin{equation} \label{eq:def:CCA:PtoZ}
\CCAdagger( XX^\top, Z )
= \Sigmatilde_{XX^\top}^{\dagger/2}
	\frac{1}{n} XX^\top\! M_n ~
	Z ~ \Sigmatilde_Z^{-1/2} \in \R^{n \times p}.
\end{equation}
Thus, recalling the definitions of $\rho_{X,Z}$ and $\rho_{A,Z}^{(\regu)}$ from Equations~\eqref{eq:def:rho:XtoZ} and~\eqref{eq:def:rho:AtoZ}, respectively, we have
\begin{equation} \label{eq:CCA:fullnet:target}
\left|  \rho_{A,Z}^{(\regu)} - \rho_{X,Z} \right|
\le 
\left\| \CCAregu(A,Z) - \CCAdagger( XX^\top, Z ) \right\| .
\end{equation}

Recalling the definition of $\Sigmatilde_{XX^\top}$ from Equation~\eqref{eq:def:SigmatildeP}, we can rewrite Equation~\eqref{eq:def:CCA:PtoZ} as
\begin{equation} \label{eq:CCAdagger:expand}
\CCAdagger( XX^\top, Z )
= \left( \frac{1}{n} XX^\top M_n XX^\top \right)^{\dagger/2}
	\frac{1}{n} XX^\top M_n Z ~\Sigmatilde_Z^{-1/2}.
\end{equation}
Similarly, recalling the definition of $\CCAregu(A,Z)$ from Equation~\eqref{eq:def:CCA:fullnet},
\begin{equation*}
\CCAregu( A, Z )
= \left( \Sigmatilde_A + \regu I \right)^{-1/2}
        \frac{1}{n} A M_n Z \Sigmatilde_Z^{-1/2} 
= \left( \frac{1}{n} A M_n A + \regu I \right)^{-1/2}
	\frac{1}{n} A M_n Z \Sigmatilde_Z^{-1/2} .
\end{equation*}
Adding and subtracting appropriate quantities,
Combining the above two displays and writing $P = XX^\top$ for ease of notation,
\begin{equation} \label{eq:CCAnet:addsub} \begin{aligned}
\CCAregu( \!A, Z ) \!-\! \CCAdagger( XX^\top\!\!, Z ) 
&=
\left( \!\frac{ A M_n A }{n} + \regu I \!\right)^{\!\!-1/2}
	\frac{ \left(A \!-\! P\right) \! M_n Z }{ n } \Sigmatilde_Z^{-1/2} \\
&~~~+
\left[ \! \left( \!\frac{ A M_n A }{n} + \regu I \!\right)^{\!\!-1/2}
		\!\!\!\!-\! 
		\left( \!\frac{P M_n P }{n} \right)^{\!\!-1/2} \right]
	\!\! 	\frac{ PM_n Z }{n} \Sigmatilde_Z^{-1/2} .
\end{aligned} \end{equation}

We note that by construction,
\begin{equation} \label{eq:MZSig:construction}
\left\| \frac{M_n Z}{\sqrt{n}} \Sigmatilde_Z^{-1/2} \right\| = 1 .
\end{equation}
By submultiplicativity of the norm, Assumption~\ref{assum:spectralRate}, and the above display,
\begin{equation} \label{eq:AMAreguAP:submul}
\left\| \left( \!\frac{ A M_n A }{n} + \regu I \!\right)^{\!\!-1/2}
	\frac{ \left(A \!-\! P\right) \! M_n Z }{ n } \Sigmatilde_Z^{-1/2}
\right\|
\le
\left\| \left( \!\frac{ A M_n A }{n} + \regu I \!\right)^{\!\!-1/2} \right\|
\frac{ \spectralRate_n }{ \sqrt{n} } .
\end{equation}
Using the fact that $M_n$ is idempotent and symmetric, and recalling the SVD from Equation~\eqref{eq:Aproj:decomp},
\begin{equation*}
\left\| \left( \!\frac{ A M_n A }{n} + \regu I \!\right)^{\!\!-1/2} \right\|
=
\left\| \left( \!\frac{ A M_n M_n A }{n} 
	+ \regu I \!\right)^{\!\!-1/2} \right\|
= \max_{i \in [n]} \frac{ 1 }{ \sqrt{ \rhat_i^2 + \regu } } .
\end{equation*}
Applying Lemma~\ref{lem:Rhat:spectrum} and using the fact that $\regu > 0$ by assumption,
\begin{equation*}
\left\| \left( \!\frac{ A M_n A }{n} + \regu I \!\right)^{\!\!-1/2} \right\|
= \Op{ \max\left\{ \frac{1}{\sqrt{n}} , \frac{1}{\sqrt{\regu} }
\right\} }
= \Op{ \frac{1}{\sqrt{\regu} } },
\end{equation*}
where the second equality follows from our assumption that $\regu = O(n)$.
Applying this to Equation~\eqref{eq:AMAreguAP:submul},
\begin{equation} \label{eq:AMAreguAP:final}
\left\| \left( \!\frac{ A M_n A }{n} + \regu I \!\right)^{\!\!-1/2}
	\frac{ \left(A \!-\! P\right) \! M_n Z }{ n } \Sigmatilde_Z^{-1/2}
\right\|
= \Op{  \frac{\spectralRate_n}{\sqrt{\regu n} } } .
\end{equation}

Applying the SVDs from Equations~\eqref{eq:Aproj:decomp} and~\eqref{eq:Pproj:SVD} and the fact that $M_n$ is idempotent,
\begin{equation} \label{eq:AMAregPMP:expand} \begin{aligned}
&\left[ \left( \frac{ A M_n A }{n} + \regu I \right)^{-1/2}
		-
		\left( \frac{P M_n P }{n} \right)^{-1/2} \right]
		\frac{ PM_n Z }{n} \Sigmatilde_Z^{-1/2} \\
&~~~~~~~~~= \left[ \Whatpara  
	\left( \Rhatpara^2 + \regu I \right)^{-1/2} \Whatpara^\top
	- \Wtildepara \Rtildepara^{-1} \Wtildepara \right]
	\frac{ PM_n Z }{n} \Sigmatilde_Z^{-1/2} \\
&~~~~~~~~~~~~~~~~~~+ 
\Whatperp \left( \Rhatperp^2 + \regu I \right)^{-1/2}
	 \frac{ \Whatperp^\top P M_n Z \Sigmatilde_Z^{-1/2} }{ n } .
\end{aligned} \end{equation}
Applying Lemma~\ref{lem:paradiff},
\begin{equation*}
\left\| \Whatpara \left( \Rhatpara^2 + \regu I \right)^{-1/2} \Whatpara^\top
	- \Wtildepara \Rtildepara^{-1} \Wtildepara^\top \right\|
=
\Op{ \frac{|\regu|}{n^{3/2}}
+ \frac{ \spectralRate_n }{ n^{3/2} } + \frac{ \subspaceRate_n }{ n^2 } } .
\end{equation*}
Applying submultiplicativity along with the above bound, Lemma~\ref{lem:fullnet:rtildegrowth} and Equation~\eqref{eq:MZSig:construction},
\begin{equation} \label{eq:AMAPMP:leading} \begin{aligned}
& \left\| \left[ \Whatpara  
	\left( \Rhatpara^2 + \regu I \right)^{-1/2} \Whatpara^\top
	- \Wtildepara \Rtildepara^{-1} \Wtildepara \right]
	\frac{ PM_n Z }{n} \Sigmatilde_Z^{-1/2} \right\| \\
&~~~~~~~~~~~~~~~~~~~~~=
\Op{ \frac{|\regu|}{n}
+ \frac{ \spectralRate_n }{ n } + \frac{ \subspaceRate_n }{ n^{3/2} } } .
\end{aligned} \end{equation}

Recalling the SVD in Equation~\eqref{eq:Pproj:SVD}, using the fact that $M_n$ is idempotent and using submultiplicativity of the norm,
\begin{equation*}
\left\| \Whatperp \!\! \left(\! \Rhatperp^2 \!+\! \regu I \!\right)^{-1/2}
	\! \frac{ \Whatperp^\top \! P M_n Z \Sigmatilde_Z^{-1/2} }{ n } 
\right\|
\le
\left\| \left(\! \Rhatperp^2 \!+\! \regu I \!\right)^{-1/2} \right\|
\left\| \Whatperp^\top \Wtildepara \right\|
\left\| \Rtildepara \right\|
\left\| \frac{ M_n Z \Sigmatilde_Z^{-1/2} }{ \sqrt{n} } \right\| .
\end{equation*}
Applying Equation~\eqref{eq:MZSig:construction} and trivially lower-bounding the elements of $\Rhatperp^2$,
\begin{equation} \label{eq:AMAtrailing:IP}
\left\| \Whatperp \!\! \left(\! \Rhatperp^2 \!+\! \regu I \!\right)^{-1/2}
	\! \frac{ \Whatperp^\top \! P M_n Z \Sigmatilde_Z^{-1/2} }{ n } 
\right\|
\le \frac{1}{\sqrt{\regu}} \left\| \Whatperp^\top \Wtildepara \right\|
	\left\| \Rtildepara \right\|.
\end{equation}

By basic results in subspace geometry \citep[see, e.g.,][Chapter VII]{Bhatia1997} and Lemma~\ref{lem:CaiZha2018}, %Ex VII.1.9; Thm VII.3.1
\begin{equation*}
\left\| \Whatperp^\top \Wtildepara \right\|
= \left\| \sin \Theta( \Whatpara, \Wtildepara ) \right\|
= \Op{ \frac{ \spectralRate_n }{ n } } .
\end{equation*}
Applying this bound to Equation~\eqref{eq:AMAtrailing:IP} along with Lemma~\ref{lem:fullnet:rtildegrowth}
\begin{equation} \label{eq:AMA:trailing:final}
\left\| \Whatperp \left( \Rhatperp^2 + \regu I \right)^{-1/2}
	\frac{ \Whatperp^\top P M_n Z \Sigmatilde_Z^{-1/2} }{ n } 
\right\|
= \Op{ \frac{\spectralRate_n}{\sqrt{\regu n} } } .
\end{equation}

Applying the triangle inequality to Equation~\eqref{eq:AMAregPMP:expand} followed by Equations~ \eqref{eq:AMAPMP:leading}, and~ \eqref{eq:AMA:trailing:final},
\begin{equation} \label{eq:AMAPMP:finalbound} \begin{aligned}
& \left\|
\left[ \left( \frac{ A M_n A }{n} + \regu I \right)^{-1/2}
                -
                \left( \frac{P M_n P }{n} \right)^{-1/2} \right]
                \frac{ PM_n Z }{n} \Sigmatilde_Z^{-1/2}
\right\| \\
&~~~~~~~~~=
\Op{ \frac{\spectralRate_n}{\sqrt{\regu n} }
+ 
\frac{|\regu|}{n}
+ \frac{ \spectralRate_n }{ n } + \frac{ \subspaceRate_n }{ n^{3/2} } } .
\end{aligned} \end{equation}

Applying the triangle inequality to Equation~\eqref{eq:CCAnet:addsub} followed by Equations~\eqref{eq:AMAreguAP:final} and~\eqref{eq:AMAPMP:finalbound},
\begin{equation*}
\left\| \CCAregu( \!A, Z ) \!-\! \CCAdagger( XX^\top\!\!, Z ) \right\|
=  \Op{ \frac{\spectralRate_n}{\sqrt{\regu n} }
+ \frac{|\regu|}{n} + \frac{ \spectralRate_n }{ n } 
+ \frac{ \subspaceRate_n }{ n^{3/2} } } .
\end{equation*}
Applying our assumption that $\regu = O(n)$, the first term dominates the third, and we have
\begin{equation*}
\left\| \CCAregu( \!A, Z ) \!-\! \CCAdagger( XX^\top\!\!, Z ) \right\|
=  \Op{ \frac{\spectralRate_n}{\sqrt{\regu n} }
+ \frac{|\regu|}{n} + \frac{ \subspaceRate_n }{ n^{3/2} } } .
\end{equation*}
Finally, applying this to Equation~\eqref{eq:CCA:fullnet:target} completes the proof.
\end{proof}

\section{Covariance Test for group LASSO} \label{apx:ctest_glasso}
The general idea of the covariance test in the case of the regular LASSO is to compute, for each $k\in[d]$, the covariance of $\Xhat_{\cdot k}$ with the predicted values $Z\Bhat_{\textrm{rLASSO},k}(\lassoreg)$, where $\Bhat_{\textrm{rLASSO},k}(\lassoreg)$ is the solution to the optimization problem in Equation~\eqref{eq:model-based:separateregs} with tuning parameter $\lassoreg>0$ chosen suitably.
If the covariates have no impact, this covariance is likely small.
\cite{LTTT14} argue that a good choice of $\lassoreg$ is the lowest value for which only one covariate is selected. This value of $\lassoreg$ can be computed and therefore no computationally intensive cross-validation is required.

In the following, we briefly discuss how to generalize the covariance test from \cite{LTTT14} to \emph{multivariate regression}, in the case where the goal is to test the hypothesis $H_0:\beta=0$. 
More precisely, we consider
\begin{equation}
\label{eq:mrl}
\hat{\beta}(\lassoreg)=\argmin{\beta\in\IR^{p\times d}}\frac{1}{nd}\left\|\hat{X}-Z\beta\right\|_F^2+\frac{\lassoreg}{\sqrt{d}}\sum_{j=1}^p\|\beta_j\|,
\end{equation}
where $\beta_j$ denotes the $j$-th row of $\beta$. This problem is exactly as introduced in Section \ref{subsec:mod_approach}, but we emphasize that the \emph{multivariate regression} settting as above might also be of independent interest.

In this setting, the covariance test statistic is is given by
\begin{equation} \label{eq:cov_test}
T_{\textrm{cov}}
=\frac{1}{\hat{\sigma}^2}
\sum_{i=1}^n\sum_{k=1}^d\hat{X}_{ik}
		\left[Z\hat{\beta}(\lassoreg_2)\right]_{ik},
\end{equation}
where $\lassoreg_2>0$ is the smallest value of $\lassoreg$ at which $\hat{\beta}_j(\lassoreg)\neq0$ for exactly one $j \in [p]$ and $\hat{\sigma}^2$ is an estimator for the error variance (see Remark \ref{rem:var} below).
We fix further notations, which are chosen for easy comparison with the results of \cite{LTTT14} for the standard LASSO.
We denote by $\lassoreg_1$ the smallest value of $\lassoreg$ for which $\hat{\beta}_j(\lassoreg)=0$ for all $j \in [p]$.
Let furthermore
\begin{equation*}
U_k:=\frac{1}{n}\hat{X}^\top Z_{\cdot k},\quad R_{\ell,k}:=\frac{1}{n}[Z^\top Z]_{\ell,k} \text{ for } \ell,k\in[p].
\end{equation*}
To use the test statistic $T_{\textrm{cov}}$ that was introduced in Equation~\eqref{eq:cov_test}, we have to provide a formula for $\lassoreg_2$. Such a formula is tedious to find, but it can be done in this case by characterizing $\hat{\beta}(\lassoreg)$ as defined in Equation~\eqref{eq:mrl} using sub-differential calculus. The objective function on the right hand side of Equation~\eqref{eq:mrl} is not differentiable but convex. Therefore, we may compute its sub-differential, which is a set-valued function. We denote by $B_1(0):=\{x\in\IR^d:\|x\|\leq1\}$ the unit ball around $0$. Then, the sub-differential with respect to $\beta_k$ is given by
$$-\frac{2}{nd}\hat{X}^\top Z_{\cdot k}+\frac{2}{nd}\sum_{\ell=1}^p\left[Z^\top Z\right]_{\ell,k}\beta_{\ell}+\frac{\lassoreg}{\sqrt{d}}\left\{\begin{array}{lll}
\|\beta_k\|^{-1}\beta_k & \textrm{ if }\beta_k\neq0 \\
B_1(0) & \textrm{ if }\beta_k=0
\end{array}\right..$$
Note that the above is a proper set if $\beta_k(\lassoreg)=0$ (namely a shifted and scaled ball). Otherwise, the above is a single number which we interpret as a set with exactly one element. Sub-differential calculus implies that the $0$-vector must be contained in this set when we plug-in the optimizer $\hat{\beta}_k(\lassoreg)$. This has to be true for all $k\in[p]$. In other words, using the previously introduced notation,
\begin{equation} \label{eq:foc_gl}
0\in-\frac{2}{d}U_k+\frac{2}{d}\sum_{\ell=1}^pR_{\ell,k}\hat{\beta}_{\ell}(\lassoreg)+\frac{\lassoreg}{\sqrt{d}}\left\{\begin{array}{lll}
\|\hat{\beta}_k(\lassoreg)\|^{-1}\hat{\beta}_k(\lassoreg) & \textrm{ if }\hat{\beta}_k(\lassoreg)\neq0 \\
B_1(0) & \textrm{ if }\hat{\beta}_k(\lassoreg)=0
\end{array}\right..
\end{equation} 
In the case that $\hat{\beta}_k(\lassoreg)\neq0$, the right hand side above is just a set with a single element and the $\in$-relation can in fact be read as an equality.

We first establish the value of $\lassoreg_1$.
\begin{lemma}
\label{lem:lassoreg1}
Suppose that
$$m:=\argmax{k\in[p]}\frac{2}{\sqrt{d}}\|U_k\|$$
is unique. In the situation above, we have
\begin{equation*}
\lassoreg_1
=\frac{2}{\sqrt{d}}\|U_m\|.
\end{equation*}
\end{lemma}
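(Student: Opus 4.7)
The plan is to use the first-order optimality condition in Equation~\eqref{eq:foc_gl} directly. Since the objective in Equation~\eqref{eq:mrl} is convex, $\hat\beta(\lassoreg)$ is a minimizer if and only if the zero vector lies in its subdifferential at that point. In particular, $\hat\beta(\lassoreg)=0$ (i.e.\ $\hat\beta_k(\lassoreg)=0$ for every $k\in[p]$) if and only if $0$ belongs to the subdifferential at $\beta=0$.

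First I would specialize Equation~\eqref{eq:foc_gl} to $\hat\beta(\lassoreg)=0$. Since the sum $\sum_{\ell=1}^p R_{\ell,k}\hat\beta_\ell(\lassoreg)$ vanishes and we fall into the second branch of the bracketed expression, the condition becomes
\begin{equation*}
0\in -\frac{2}{d}U_k+\frac{\lassoreg}{\sqrt{d}}B_1(0) \quad \text{for all } k\in[p],
\end{equation*}
which is equivalent to $\frac{2}{d}\|U_k\|\leq\frac{\lassoreg}{\sqrt{d}}$, i.e.\ $\frac{2}{\sqrt{d}}\|U_k\|\leq\lassoreg$, for every $k\in[p]$.

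Next I would take the maximum over $k$: the previous display holds for all $k$ if and only if $\lassoreg\geq\max_{k\in[p]}\frac{2}{\sqrt{d}}\|U_k\| = \frac{2}{\sqrt{d}}\|U_m\|$, using the definition of $m$. Hence the set of $\lassoreg$ for which the full-zero solution is optimal is exactly $[\frac{2}{\sqrt{d}}\|U_m\|,\infty)$. By definition $\lassoreg_1$ is the smallest such value, so $\lassoreg_1=\frac{2}{\sqrt{d}}\|U_m\|$.

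There is no real obstacle here; the argument is essentially a clean application of subdifferential calculus. The only subtlety worth a sentence is that the ``$\in$'' in Equation~\eqref{eq:foc_gl} is genuine set-membership in the branch $\hat\beta_k=0$, so the condition translates to a norm inequality rather than an equality, and the uniqueness assumption on $m$ is not needed for the value of $\lassoreg_1$ itself (it only becomes relevant later, to guarantee that exactly one coordinate block enters the active set as $\lassoreg$ decreases past $\lassoreg_1$, which ties into the definition of $\lassoreg_2$).
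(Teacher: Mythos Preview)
Your proof is correct and follows essentially the same route as the paper: specialize the subdifferential condition~\eqref{eq:foc_gl} to the all-zero solution, translate the set-membership into the norm inequality $\frac{2}{\sqrt{d}}\|U_k\|\le\lassoreg$ for every $k$, and identify $\lassoreg_1$ as the maximum over $k$. Your remark that the uniqueness of $m$ is not actually needed for the value of $\lassoreg_1$ (only for the subsequent analysis of $\lassoreg_2$) is accurate and a nice observation.
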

\begin{proof}
We must show that $\hat{\beta}_k(\lassoreg)=0$ for all $k\in[p]$ is a solution to \eqref{eq:mrl} if and only if $\lassoreg\geq\lassoreg_1$. In view of \eqref{eq:foc_gl}, $\hat{\beta}_k(\lassoreg)=0$ for all $k\in[p]$ is equivalent to
$$0\in-\frac{2}{d}U_k+\frac{\lassoreg}{\sqrt{d}}B_1(0)\text{ for all } k\in[p],$$
which is true if and only if 
$$\left\|\frac{2}{\sqrt{d}\lassoreg}U_k\right\|\leq 1\text{ for all } k\in[p],$$
which implies the statement.
\end{proof}

In Section 4.3.1 of \cite{HTW15}, some computational details for the group LASSO are considered. A main challenge is that the solution of the group LASSO problem has no closed-form solution.
However, in the multivariate regression setting that we consider here, we can formulate the solution explicitly for the special case of the first variable entering the regression.
We will show in the following that $\hat{\beta}_m(\lassoreg)$ is the first non-zero variable for $\lassoreg<\lassoreg_1$. In the first step, we assume that this is the case, and compute an explicit formula for $\hat{\beta}_m(\lassoreg)$.

\begin{lemma} \label{lem:beta_one}
Suppose $\lassoreg<\lassoreg_1$ is such that $\hat{\beta}_m(\lassoreg)\neq0$ and $\hat{\beta}_k(\lassoreg)=0$ for all $k\neq m$. Then
\begin{equation*}
\hat{\beta}_m(\lassoreg)
=\frac{1}{R_{m,m}}\left(1-\lassoreg\frac{\sqrt{d}}{2\|U_m\|}\right)U_m.
\end{equation*}
\end{lemma}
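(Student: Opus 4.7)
The plan is to derive the formula by directly applying the first-order optimality condition in Equation~\eqref{eq:foc_gl} at the index $k = m$. Since by hypothesis $\hat{\beta}_m(\lassoreg) \neq 0$, the sub-differential at the optimum is single-valued there, and the set inclusion collapses to a genuine equation
\begin{equation*}
0 = -\frac{2}{d} U_m + \frac{2}{d}\sum_{\ell=1}^p R_{\ell,m}\hat{\beta}_\ell(\lassoreg) + \frac{\lassoreg}{\sqrt{d}}\frac{\hat{\beta}_m(\lassoreg)}{\|\hat{\beta}_m(\lassoreg)\|}.
\end{equation*}
Using the hypothesis that $\hat{\beta}_k(\lassoreg) = 0$ for all $k \neq m$, only the $\ell = m$ term survives in the sum, leaving
\begin{equation*}
U_m = R_{m,m}\hat{\beta}_m(\lassoreg) + \frac{\lassoreg\sqrt{d}}{2}\frac{\hat{\beta}_m(\lassoreg)}{\|\hat{\beta}_m(\lassoreg)\|}.
\end{equation*}

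Next, I would read off from this identity that $\hat{\beta}_m(\lassoreg)$ must be a scalar multiple of $U_m$, say $\hat{\beta}_m(\lassoreg) = c\, U_m$ for some $c \in \R$. Substituting this ansatz, dividing by $U_m$ (which is non-zero, as otherwise $m$ would not be the unique argmax), and simplifying $c\,U_m / (|c|\,\|U_m\|) = \operatorname{sgn}(c)\, U_m/\|U_m\|$ yields the scalar equation
\begin{equation*}
1 = R_{m,m} c + \operatorname{sgn}(c) \frac{\lassoreg\sqrt{d}}{2\|U_m\|}.
\end{equation*}
Solving this under the sign choice $c > 0$ gives precisely the claimed formula.

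The final step is to justify that $c > 0$, so that the sign term equals $+1$. This is where the assumption $\lassoreg < \lassoreg_1$ enters: by Lemma~\ref{lem:lassoreg1}, $\lassoreg_1 = 2\|U_m\|/\sqrt{d}$, so $\lassoreg < \lassoreg_1$ means $1 - \lassoreg\sqrt{d}/(2\|U_m\|) > 0$, which (together with $R_{m,m} > 0$, as $R_{m,m} = n^{-1}\|Z_{\cdot m}\|^2$) makes the candidate value of $c$ strictly positive and hence consistent with the sign assumption. The main (though modest) obstacle is being careful with this sign argument and with the degenerate possibilities (e.g.\ $R_{m,m} = 0$, which we may exclude without loss of generality since otherwise the $m$-th column of $Z$ is identically zero and the hypothesis that only $\hat{\beta}_m$ is active becomes vacuous). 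No further computations are needed beyond these algebraic manipulations.
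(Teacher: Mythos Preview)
Your proposal is correct and follows essentially the same route as the paper: both start from the optimality condition~\eqref{eq:foc_gl} at $k=m$, observe that $\hat{\beta}_m(\lassoreg)$ must be a scalar multiple of $U_m$, and then solve for that scalar. The only cosmetic difference is that the paper rearranges to $\hat{\beta}_m = (\text{positive scalar})\,U_m$ and takes norms to identify $\|\hat{\beta}_m\|$ first (which sidesteps the sign discussion), whereas you write $\hat{\beta}_m = c\,U_m$ and resolve the sign of $c$ via $\lassoreg < \lassoreg_1$; both arguments are equivalent.
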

\begin{proof}
By Equation~\eqref{eq:foc_gl}, for $\hat{\beta}_m(\lassoreg)$, we have
\begin{equation*}
0=-\frac{2}{d}U_m+\frac{2}{d}R_{m,k}\hat{\beta}_m(\lassoreg)
+\frac{\lassoreg}{\sqrt{d}\|\hat{\beta}_m(\lassoreg)\|}
	\hat{\beta}_m(\lassoreg) .
\end{equation*}
Rearranging the above equation yields
\begin{equation} \label{eq:beta_raw}
\hat{\beta}_m(\lassoreg)=\frac{2}{d}\left(\frac{2}{d}R_{m,m}+\frac{\lassoreg}{\sqrt{d}\|\hat{\beta}_m(\lassoreg)\|}\right)^{-1}U_m.
\end{equation}
Taking the norm on both sides, noting that $R_{m,m}\geq0$,
\begin{equation*}
\|\hat{\beta}_m(\lassoreg)\|=\frac{2}{d}\left(\frac{2}{d}R_{m,m}+\frac{\lassoreg}{\sqrt{d}\|\hat{\beta}_m(\lassoreg)\|}\right)^{-1}\|U_m\|.
\end{equation*}
Solving the above equation for $\|\hat{\beta}_m(\lassoreg)\|$ yields
\begin{equation*}
\|\hat{\beta}_m(\lassoreg)\|=\frac{2\|U_m\|-\lassoreg\sqrt{d}}{2R_{m,m}}.
\end{equation*}
Replacing the above in Equation~\eqref{eq:beta_raw} yields
\begin{equation*} \begin{aligned}
\hat{\beta}_m(\lassoreg)&=\frac{2}{d}\left(\frac{2}{d}R_{m,m}+\frac{\lassoreg}{\sqrt{d}\frac{2\|U_m\|-\lassoreg\sqrt{d}}{2R_{m,m}}}\right)^{-1}U_m \\
&=\frac{1}{R_{m,m}}\left(1+\frac{\lassoreg\sqrt{d}}{2\|U_m\|-\lassoreg\sqrt{d}}\right)^{-1}U_m
=\frac{1}{R_{m,m}}\left(\frac{2\|U_m\|}{2\|U_m\|-\lassoreg\sqrt{d}}\right)^{-1}U_m \\
&=\frac{1}{R_{m,m}}\left(1-\frac{\lassoreg\sqrt{d}}{2\|U_m\|}\right)U_m,
\end{aligned} \end{equation*}
which is the statement we wanted to prove.
\end{proof}

We can now show that there is indeed $\lassoreg_2$ such that, for all $\lassoreg\in[\lassoreg_2,\lassoreg_1)$, $hat{\beta}_m(\lassoreg)\neq0$ and $\hat{\beta}_k(\lassoreg)=0$ for all $k\neq m$.
We will also provide a formula for $\lassoreg_2$.
We will achieve this by showing that $\hat{\beta}_m(\lassoreg)$ as in Lemma \ref{lem:beta_one} and $\hat{\beta}_k(\lassoreg)=0$ for $k\neq m$ provides a set of solution for the system in Equation~\eqref{eq:foc_gl} when $\lassoreg$ is chosen suitably.
Note firstly that, by construction, Equation~\eqref{eq:foc_gl} always holds true for $k=m$.
For $k\neq m$, Equation~\eqref{eq:foc_gl} reads (replacing the formula from Lemma \ref{lem:beta_one})
$$0\in-\frac{2}{d}U_k+\frac{2}{d}\frac{R_{m,k}}{R_{m,m}}\left(1-\frac{\lassoreg\sqrt{d}}{2\|U_m\|}\right)U_m+\frac{\lassoreg}{\sqrt{d}}B_1(0).$$
Noting that the right hand side above is a shifted and scaled ball, it is clear that the above relation holds if and only if
$$\left\|\frac{2}{\lassoreg\sqrt{d}}\left(U_k-\frac{R_{m,k}}{R_{m,m}}U_m\right)+\frac{R_{m,k}}{R_{m,m}}\cdot\frac{U_m}{\|U_m\|}\right\|^2\leq1.$$
Expanding the norm and rearranging the equation, finally yields
\begin{align}
\lassoreg^2\left(\frac{R_{m,k}^2}{R_{m,m}^2}-1\right)+\lassoreg\cdot\frac{R_{m,k}}{R_{m,m}}\cdot\frac{4\left(U_k^\top U_m-\frac{R_{m,k}}{R_{m,m}}\|U_m\|^2\right)}{\sqrt{d}\|U_m\|}+\frac{4\left\|U_k-\frac{R_{m,k}}{R_{m,m}}U_m\right\|^2}{d}\leq0. \label{eq:sol_condition}
\end{align}
We are going to prove that the above statement, and thus \eqref{eq:foc_gl}, can always be made true by choosing $\lassoreg<\lassoreg_1$ suitably. We define for ease of notation
$$A_k:=\frac{R_{m,k}}{R_{m,m}}\text{ and }B_k:=U_k^\top U_m-\frac{R_{m,k}}{R_{m,m}}\|U_m\|^2.$$
Now, viewing the left hand side of Inequality~\eqref{eq:sol_condition} as a second order polynomial in $\lassoreg$, the roots are given by
\begin{equation} \label{eq:abc_solution}
\frac{-A_k\cdot\frac{4B_k}{\sqrt{d}\|U_m\|}\pm\sqrt{A_k^2\cdot\frac{16B_k^2}{d\|U_m\|^2}-\frac{16\left(A_k^2-1\right)\left\|U_k-A_kU_m\right\|^2}{d}}}{2\left(A_k^2-1\right)}
\end{equation}
Note at this point that, in the case where $R_{m,k}^2\leq R_{m,m}^2$, that is, $A_k\in[-1,1]$, the term inside the square root is clearly non-negative and hence there are two solutions. In order to see that there are two solutions also in the case where $R_{m,k}^2\geq R_{m,m}^2$, note that, by tedious but elementary algebra, the term inside of the root in Equation~\eqref{eq:abc_solution} can be rewritten as
\begin{align*}
\frac{16}{d\|U_m\|^2R_{m,m}^2}\left(\left[U_k^\top U_mR_{m,k}-R_{m,m}\|U_m\|^2\right]^2+\left(R_{m,k}^2-R_{m,m}^2\right)\left(\|U_m\|^4-\|U_k\|^2\|U_m\|^2\right)\right).
\end{align*}
In this notation, it is evident that for $R_{m,k}^2\geq R_{m,m}^2$ the above term and thus the term inside the root of Equation~\eqref{eq:abc_solution} is also non-negative by choice of $m$.
Therefore, \eqref{eq:sol_condition} holds always with equality for exactly two values of $\lassoreg$.
Hence, it it is always fulfilled for some range of $\lassoreg$.
Moreover, by plugging in $\lassoreg_1$ from Lemma \ref{lem:lassoreg1} into Inequality~\eqref{eq:sol_condition}, we can see after some elementary algebra that $\lassoreg_1$ fulfills Inequality~\eqref{eq:sol_condition} for all $k\neq m$.
Therefore, we conclude the existence of $\lassoreg_2<\lassoreg_1$ such that for all $\lassoreg\in[\lassoreg_2,\lassoreg_1)$, $\hat{\beta}_k(\lassoreg)=0$ for $k\neq m$ and $\hat{\beta}_m(\lassoreg)$ is given by the Formula from Lemma \ref{lem:beta_one}.

The following lemma provides a specific formula for $\lassoreg_2$.

\begin{lemma} \label{lem:lassoreg2}
In the situation of Lemma \ref{lem:lassoreg1}, it is true that
$$\lassoreg_2=\max_{k\neq m}\frac{-A_k\frac{4B_k}{\sqrt{d}\|U_m\|}-\sqrt{\frac{16A_k^2B_k^2}{d\|U_m\|^2}-\frac{16\left(A_k^2-1\right)\left\|U_k-A_kU_m\right\|^2}{d}}}{2\left(A_k^2-1\right)}.$$
\end{lemma}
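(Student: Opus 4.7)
My plan is to extract $\lassoreg_2$ as the boundary of the region in which the candidate solution (namely, $\hat\beta_k(\lassoreg)=0$ for $k\neq m$ together with the formula for $\hat\beta_m(\lassoreg)$ from Lemma~\ref{lem:beta_one}) continues to satisfy the subdifferential optimality condition \eqref{eq:foc_gl} for every $k\neq m$. As argued in the paragraph immediately preceding the lemma, that requirement is exactly the scalar quadratic inequality \eqref{eq:sol_condition}, which I write as $f_k(\lassoreg):=a_k\lassoreg^2+b_k\lassoreg+c_k\le 0$ with $a_k=A_k^2-1$, $b_k=A_k\cdot 4B_k/(\sqrt{d}\|U_m\|)$, and $c_k=4\|U_k-A_kU_m\|^2/d\ge 0$. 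For each $k\neq m$ I will identify the smallest positive $\lassoreg$ at which $f_k(\lassoreg)\le 0$ holds; call this value $\lassoreg_2^{(k)}$. Because the candidate is valid if and only if \emph{all} of the constraints $f_k(\lassoreg)\le 0$ hold simultaneously, I will then conclude $\lassoreg_2=\max_{k\neq m}\lassoreg_2^{(k)}$, which matches the formula stated in the lemma.

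The detailed case analysis splits on the sign of $a_k$. The paragraph after \eqref{eq:sol_condition} in the excerpt already verifies, via the algebraic rearrangement of the discriminant and the defining maximality of $m$, that the discriminant $\Delta_k=b_k^2-4a_kc_k$ is non-negative for every $k\neq m$, so $f_k$ has two real roots $\lassoreg_-^{(k)}\le \lassoreg_+^{(k)}$. If $a_k>0$ (parabola opening upward), then $f_k\le 0$ on the compact interval $[\lassoreg_-^{(k)},\lassoreg_+^{(k)}]$; since $\lassoreg_1$ lies in this interval (shown in the excerpt), the smallest admissible $\lassoreg$ is $\lassoreg_2^{(k)}=\lassoreg_-^{(k)}=(-b_k-\sqrt{\Delta_k})/(2a_k)$. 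If $a_k<0$ (parabola opening downward), then $f_k\le 0$ on $(-\infty,\lassoreg_-^{(k)}]\cup[\lassoreg_+^{(k)},\infty)$; because $f_k(0)=c_k\ge 0$ the two roots straddle $0$, so the only positive admissible portion is $[\lassoreg_+^{(k)},\infty)$, forcing $\lassoreg_2^{(k)}=\lassoreg_+^{(k)}$. In both cases, by direct inspection of the quadratic formula, the numerator $-b_k-\sqrt{\Delta_k}$ over $2a_k$ returns exactly the required root: when $a_k>0$ the minus sign selects the smaller root, and when $a_k<0$ division by the negative $2a_k$ flips the ordering and selects the larger root. Thus the unified formula
\begin{equation*}
\lassoreg_2^{(k)}
=\frac{-A_k\frac{4B_k}{\sqrt{d}\|U_m\|}-\sqrt{\frac{16A_k^2B_k^2}{d\|U_m\|^2}-\frac{16(A_k^2-1)\|U_k-A_kU_m\|^2}{d}}}{2(A_k^2-1)}
\end{equation*}
is valid in either regime, and taking the maximum over $k\neq m$ yields the claim.

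The main obstacle will be the second case, $a_k<0$: the formula collapses to the larger root, which is not the ``usual'' choice in the quadratic formula and requires the observation that $f_k(0)\ge 0$ to rule out the other branch. I also need to handle the degenerate boundary $a_k=0$ (i.e.\ $A_k^2=1$) separately, in which case $f_k$ is linear and $\lassoreg_2^{(k)}=-c_k/b_k$ arises as the limit of the displayed formula after a standard L'Hôpital-type cancellation; I will sketch this limit to justify including the case in the $\max$. Finally, one should check the mild non-degeneracy condition $U_k\neq A_kU_m$ so that $c_k>0$ makes the root strictly positive, and verify $\lassoreg_2^{(k)}\le \lassoreg_1$, which follows from the excerpt's observation that $\lassoreg_1$ already satisfies each $f_k\le 0$.
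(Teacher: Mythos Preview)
Your proposal is correct and follows essentially the same approach as the paper's proof: both reduce to a two-case analysis on the sign of $a_k=A_k^2-1$, identify in each case the relevant root of the quadratic \eqref{eq:sol_condition}, verify that the formula $(-b_k-\sqrt{\Delta_k})/(2a_k)$ selects the correct root in both regimes, and then take the maximum over $k\neq m$. Your use of $f_k(0)=c_k\ge 0$ to pin down the correct branch when $a_k<0$ and your explicit treatment of the degenerate case $a_k=0$ are minor embellishments not present in the paper, but the core argument is the same.
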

\begin{proof}
By Lemma \ref{lem:lassoreg1}, we have that $\hat{\beta}_k=0$ for all $k\in[p]$ for $\lassoreg\geq\lassoreg_1$.
Hence, we have left to show that \eqref{eq:sol_condition} holds true for all $k\neq m$ and $\lassoreg\in[\lassoreg_2,\lassoreg_1)$ but not for $\lassoreg<\lassoreg_2$.

Let $\lassoreg_{k,-}$ denote the root of the left hand side of Inequality~\eqref{eq:sol_condition} understood as a polynomial in $\lassoreg$ given by Equation~\eqref{eq:abc_solution} when choosing $-$ in place of $\pm$.
Analogously, $\lassoreg_{k,+}$ denotes the same for $+$ in place of $\pm$.
With this notation, the statement of the Lemma can be restated as $\lassoreg_2=\max_{k\neq m}\lassoreg_{k,-}$.
We prove that this number has the right properties.
We distinguish two cases: \\
\underline{$R_{m,k}^2<R_{m,m}^2$:} \\
In this case, the parabola (in $\lassoreg$) to the left of Inequality~\eqref{eq:sol_condition} is downwoards facing.
Hence, every $\lassoreg$ larger than the larger root of the parabola provides a valid solution.
Since $A_k^2-1<0$, we see that $\lassoreg_{k,-}$ is the larger root.
Therefore, we have to choose $\lassoreg\geq\lassoreg_{k,-}$ for all $k$ for which $R_{m,k}^2<R_{m,m}^2$.
In particular, for $\lassoreg\in[\lassoreg_2,\lassoreg_1)$, Inequality~\eqref{eq:sol_condition} holds true. \\

\underline{$R_{m,k}^2\geq R_{m,m}^2$:} \\
In this case, the parabola on the left of \eqref{eq:sol_condition} is upwards facing, and Inequality~\eqref{eq:sol_condition} holds true for $\lassoreg\in[\lassoreg_{k,-},\lassoreg_{k,+}]$ (note that in this case $\lassoreg_{k,-}\leq\lassoreg_{k,+}$).
Hence, in order to prove that \eqref{eq:sol_condition} holds true for all $\lassoreg\in[\lassoreg_2,\lassoreg_1)$, we have to show that $\lassoreg_{k,+}\geq\lassoreg_1$.
Denote the parabola (in $\lassoreg$) to the left of Inequality~\eqref{eq:sol_condition} by $P(\lassoreg)$, and recall that it is upwards facing.
By elementary computations, we see that $P(\lassoreg_1)<0$ if $m$ is the unique maximizer of $k\mapsto\|U_k\|$, which we assume.
It follows that $\lassoreg_1\in[\lassoreg_{k,-},\lassoreg_{k,+}]$.

\vspace{0.5cm}

Hence, in summary, we have proven that Inequality~\eqref{eq:sol_condition} holds true for all $\lassoreg\in[\lassoreg_2,\lassoreg_1)$.
We have left to argue that, in the case $\lassoreg<\lassoreg_2$, Inequality~\eqref{eq:sol_condition} is violated for at least one $k\neq m$.
If $\lassoreg<\lassoreg_2$, then $\lassoreg<\lassoreg_{k_0,-}$ for some $k_0\neq m$.
The discussion about the location of the roots of $P(\lassoreg)$ and the orientation of the parabolas in either of the previous steps shows that Inequality~\eqref{eq:sol_condition} is violated for this choice $k_0$.
The result follows.
\end{proof}
We finally mention a simple form of $T_{\textrm{cov}}$ that is analogue to what \cite{LTTT14} show in their Lemma 1.
Using Lemma \ref{lem:beta_one} and Lemma \ref{lem:lassoreg1} in Equality~\eqref{eq:cov_test}, we get
\begin{align*}
T_{\textrm{cov}}=&\frac{1}{\hat{\sigma}^2}\sum_{i=1}^n\sum_{j=1}^d\hat{X}_{ij}\left[Z\hat{\beta}(\lassoreg_2)\right]_{i,j}=\frac{1}{\hat{\sigma}^2}\sum_{i=1}^n\sum_{j=1}^d\hat{X}_{ij}Z_{im}\frac{1}{R_{m,m}}\left(1-\frac{\lassoreg_2}{\lassoreg_1}\right)U_{m,j} \\
=&\frac{n}{\hat{\sigma}^2R_{m,m}}\left(1-\frac{\lassoreg_2}{\lassoreg_1}\right)\|U_m\|^2=\frac{n}{\hat{\sigma}^2R_{m,m}}\left(1-\frac{\lassoreg_2}{\lassoreg_1}\right)\frac{\lassoreg_1^2d}{4}=\frac{nd}{4\hat{\sigma}^2R_{m,m}}\lassoreg_1\left(\lassoreg_1-\lassoreg_2\right).
\end{align*}

\begin{remark}
\label{rem:var}
For a given dataset, we estimate $\hat{\sigma}^2$ by computing the empirical variance of the residuals of a group LASSO fitted with a cross-validated $\lassoreg$. For a single data set this is computationally feasible. For the permutation test, however, we want to understand the behavior of the test statistic on the hypothesis. In this case, we therefore use an estimator that is valid under $H_0$, i.e., $B=0$. On the hypothesis, no covariate is active, and therefore we choose simply the empirical variance of the observations as $\hat{\sigma}^2$.
\end{remark}

\section{Additional empirical results}
\label{apx:additional_emp_res}

Here we collect a handful of additional experimental results that complement our main findings in Section~\ref{sec:expts}.

\subsection{Additional visualisations of the simulations in Section \ref{subsec:simulations}}
\label{apx:additional_emp_res:simulations}
In order to investigate the distribution of the test statistic on the hypothesis in more detail, we compute the distributions of the $p$-values generated by the different permutation tests.
When $H_0$ holds, the $p$-values should be uniformly distributed on $[0,1]$. 
Figure \ref{fig:qq_scen_iv} shows, in the case of Scenario (iv) (i.e., the high-correlation setting), QQ-plots that compare the empirical distribution of the $p$-values to a uniform distribution.
Recall that the LASSO permutation test involves a Bonferroni-correction. Therefore, we have multiplied the resulting $p$-value by the estimated number of latent dimensions, and hence it can take values larger than one.
Thus, we cannot expect that the QQ-plot, in which we compare to a uniform distribution on $[0,1]$, lies on the diagonal.
The lower quantiles do lie on the diagonal, however, and therefore we see that the permutation test will hold conventional levels.
Moreover, the slight deviation of the LASSO test that we have observed in Table \ref{tab:H0level} in Scenario (iv) appears visually not relevant in Figure \ref{fig:qq_scen_iv}.
For all other methods, we see that the QQ-plots lie very closely to the diagonal indicating a correct distribution of the p-values.
For other scenarios, we observe the same behavior. We show the QQ-plots also for the network Scenario (v) in Figure \ref{fig:qq_scen_v}.
The effect of the Bonferroni-correction is not so strong in this scenario because the latent dimension is on average chosen smaller. Since each latent dimension is tested separately, a lower number of latent dimensions means that the multiple testing issue is not so pronounced.

\begin{figure}
\centering
\includegraphics[height=0.9\textheight]{./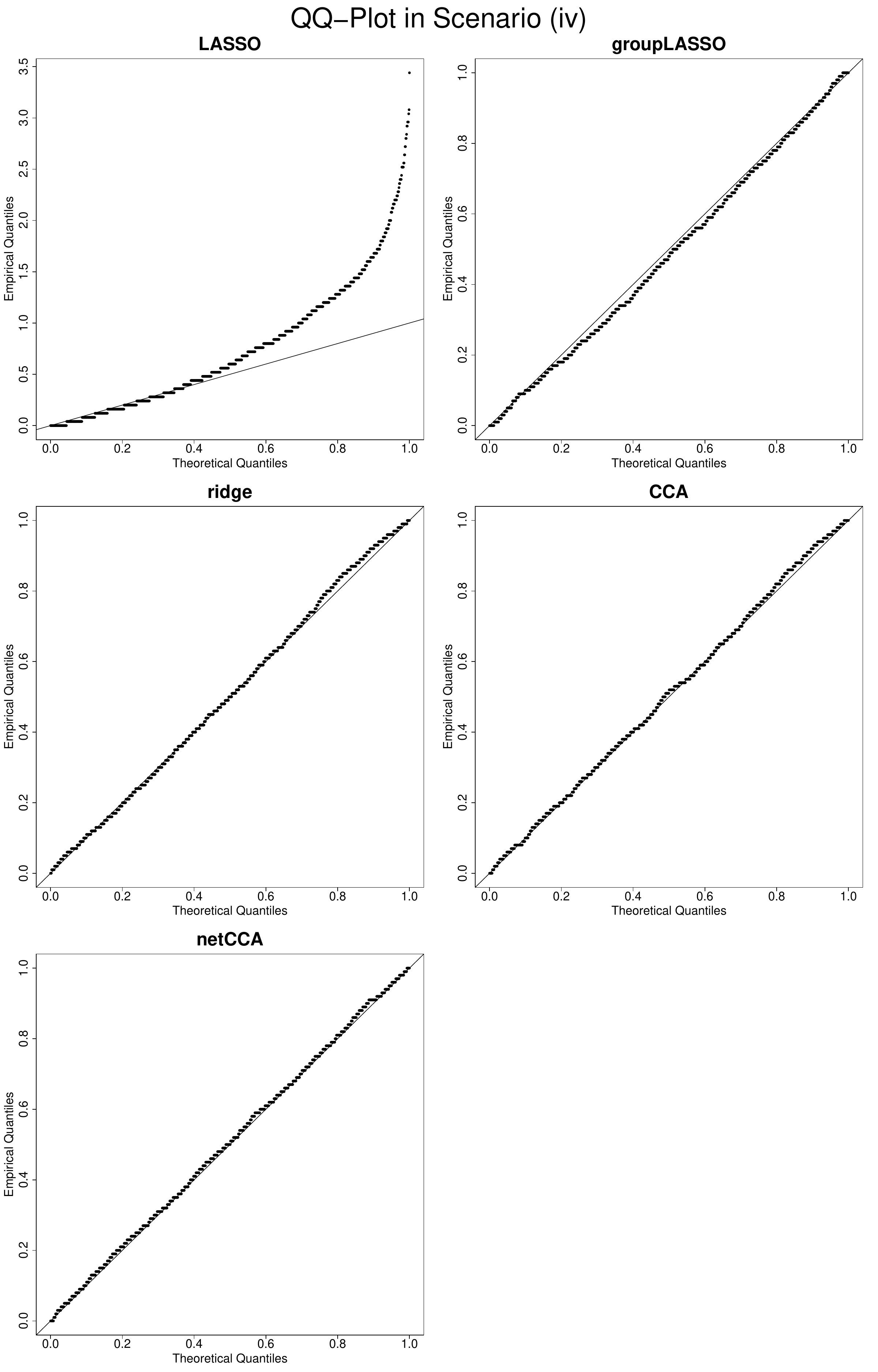}
\caption{QQ-plot of the empirical distributions of the p-values generated from the different methods in Scenario (iv) vs. a uniform distribution.}
\label{fig:qq_scen_iv}
\end{figure}

\begin{figure}
\centering
\includegraphics[height=0.9\textheight]{./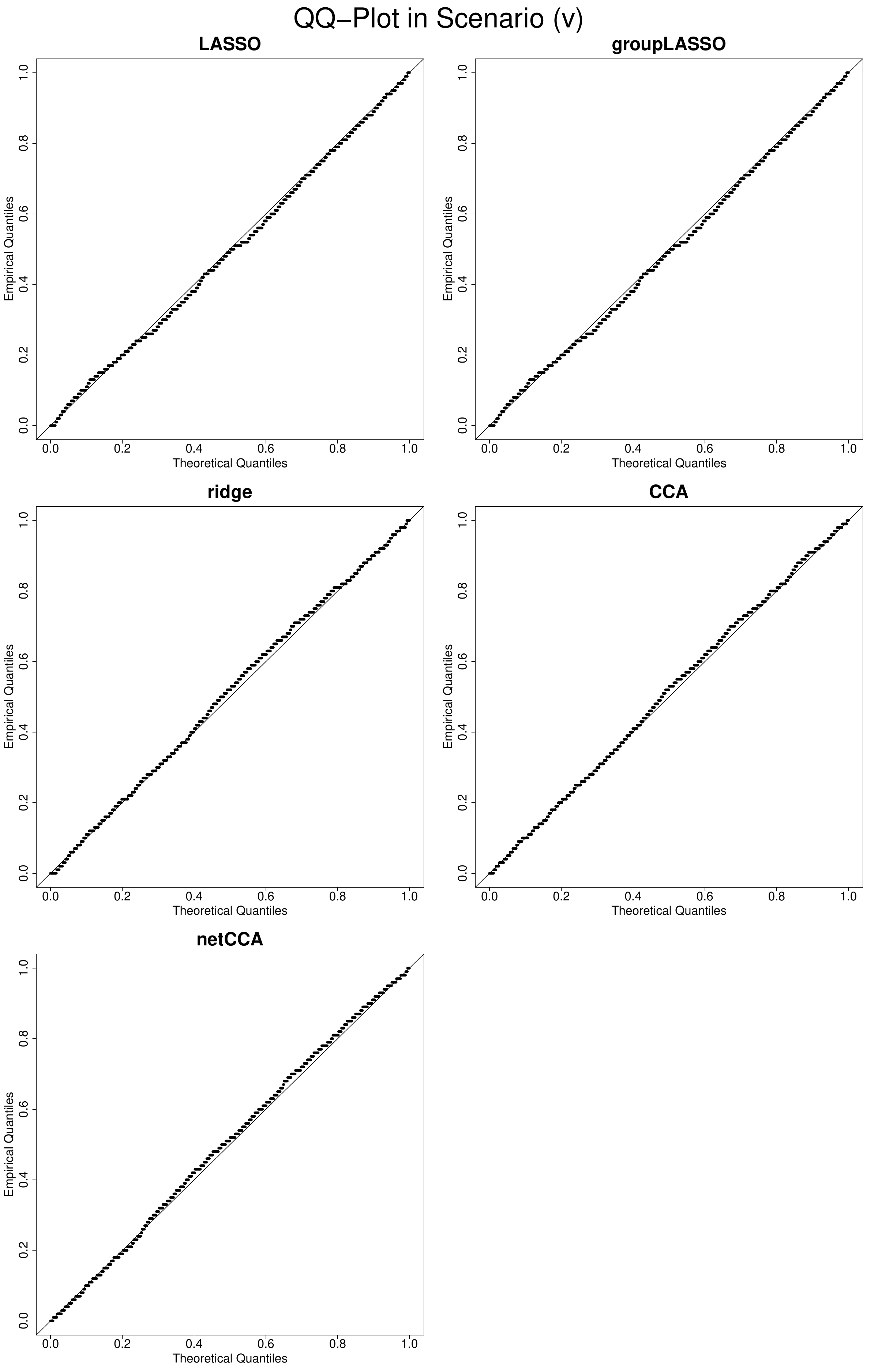}
\caption{QQ-plot of the empirical distributions of the p-values generated from the different methods in Scenario (v) vs. a uniform distribution.}
\label{fig:qq_scen_v}
\end{figure}

\subsection{Further plots illustrating the data analysis of Section \ref{subsec:data_analysis}}
\label{apx:additional_emp_res:data}
In Figure \ref{fig:spectrum} we show scree plots of the eigenvalues of the adjacency matrices of the networks analyzed in Section \ref{subsec:data_analysis}.
The selected latent dimensions are $14$ in case of the Protein dataset and $55$ in case of the Wikipedia dataset.
In both cases, Figure \ref{fig:spectrum} indicates that the selected number of eigenvalues cover a significant portion of the structure in the adjacency matrix.

\begin{figure}
    \centering
    \begin{subfigure}[t]{0.48\textwidth}
        \centering
        \includegraphics[width=\linewidth]{./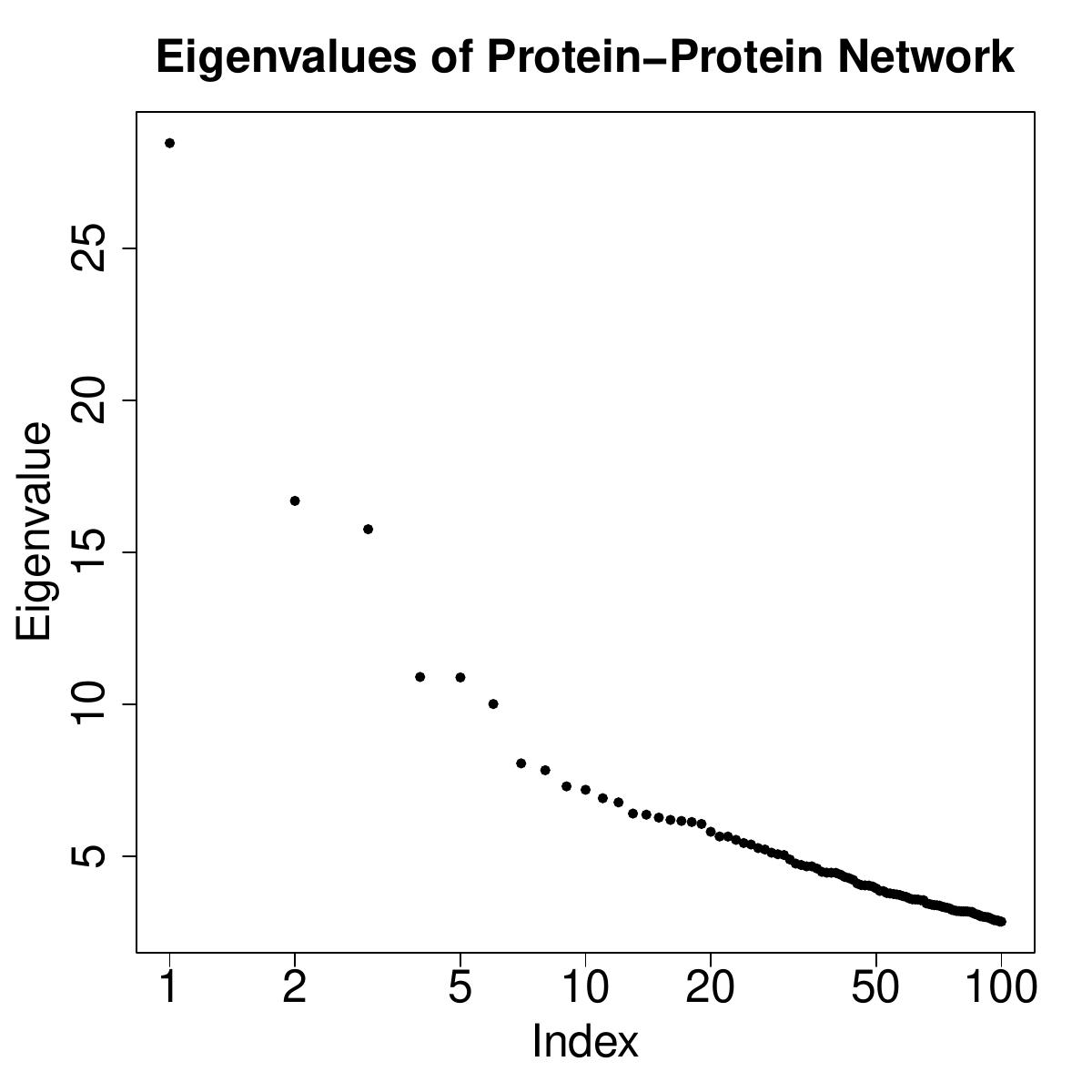}
        \caption{Protein interaction data set}
        \label{fig:spectrum:protein}
    \end{subfigure}
    \hfill
    \begin{subfigure}[t]{0.48\textwidth}
        \centering
        \includegraphics[width=\linewidth]{./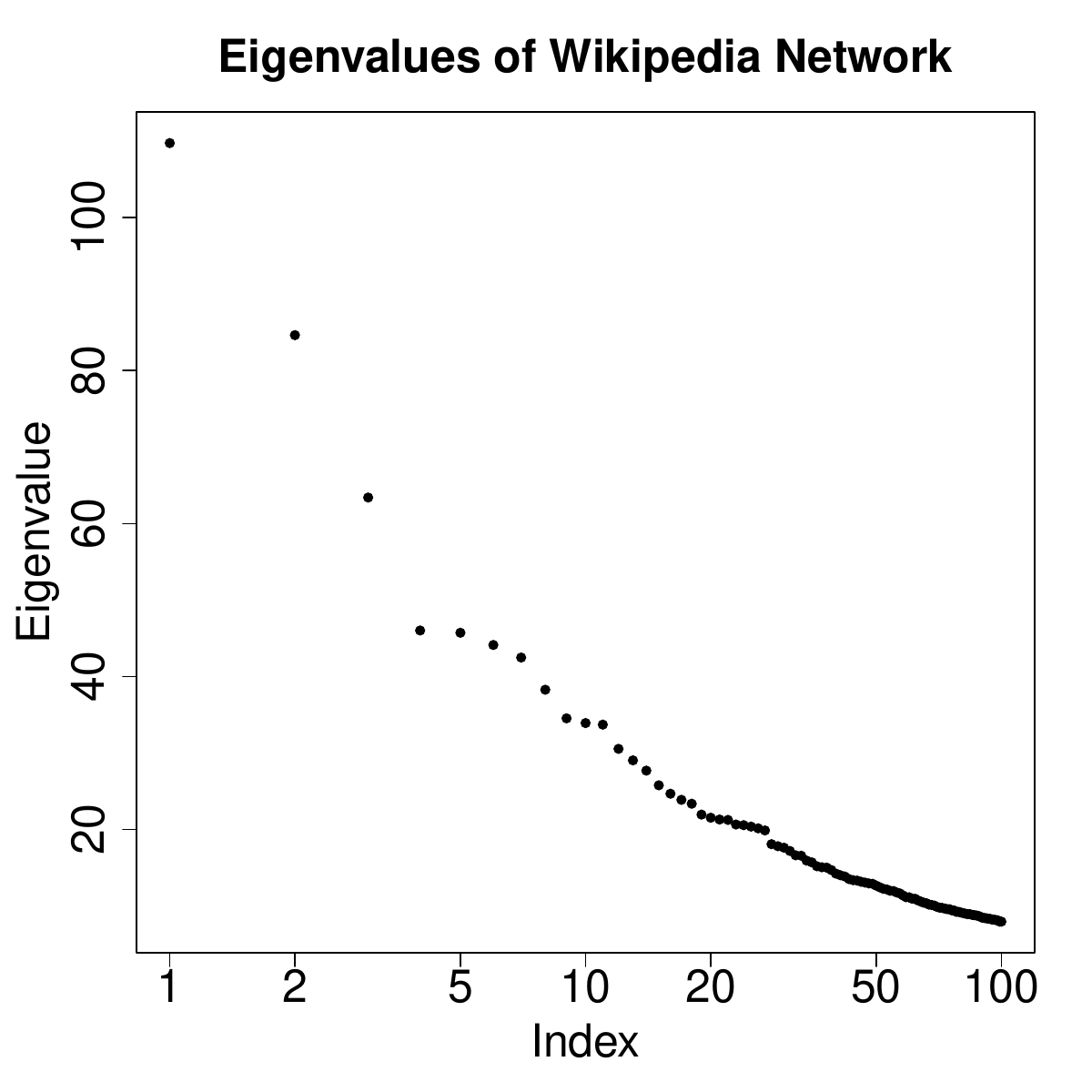}
        \caption{Wikipedia data set}
		\label{fig:spectrum:chameleon}
    \end{subfigure}
	\caption{Eigenvalues (largest to smallest) of the adjacency matrices of the respective real world data set studied in Section \ref{subsec:data_analysis}.}
    \label{fig:spectrum}
\end{figure}

To further illustrate the utility of our novel group LASSO method, we visualize the feature projections, that is, the columns of $Z\Bhat^{\gLASSO}$, or, in other words, the latent positions predicted by our model. $Z\hat{B}^{\gLASSO}$ contains in its $14$ columns a feature projection for each of the $14$ latent dimensions. Figures \ref{fig:protein_LD1} and \ref{fig:protein_LD2} show the network plot, where the vertex colors are indicative for the feature projection (there is one network plot for each of the $14$ feature projections). Different feature projections appear to be used to distinguish some clusters of proteins from the others, e.g., feature projection $1$ in Figure \ref{fig:protein_LD1} identifies higher degree vertices in the central cluster. Feature projection $4$ in the same Figure seems to differentiate between three smaller clusters below the central cluster.

\begin{figure}
\centering
\includegraphics[height=0.9\textheight]{./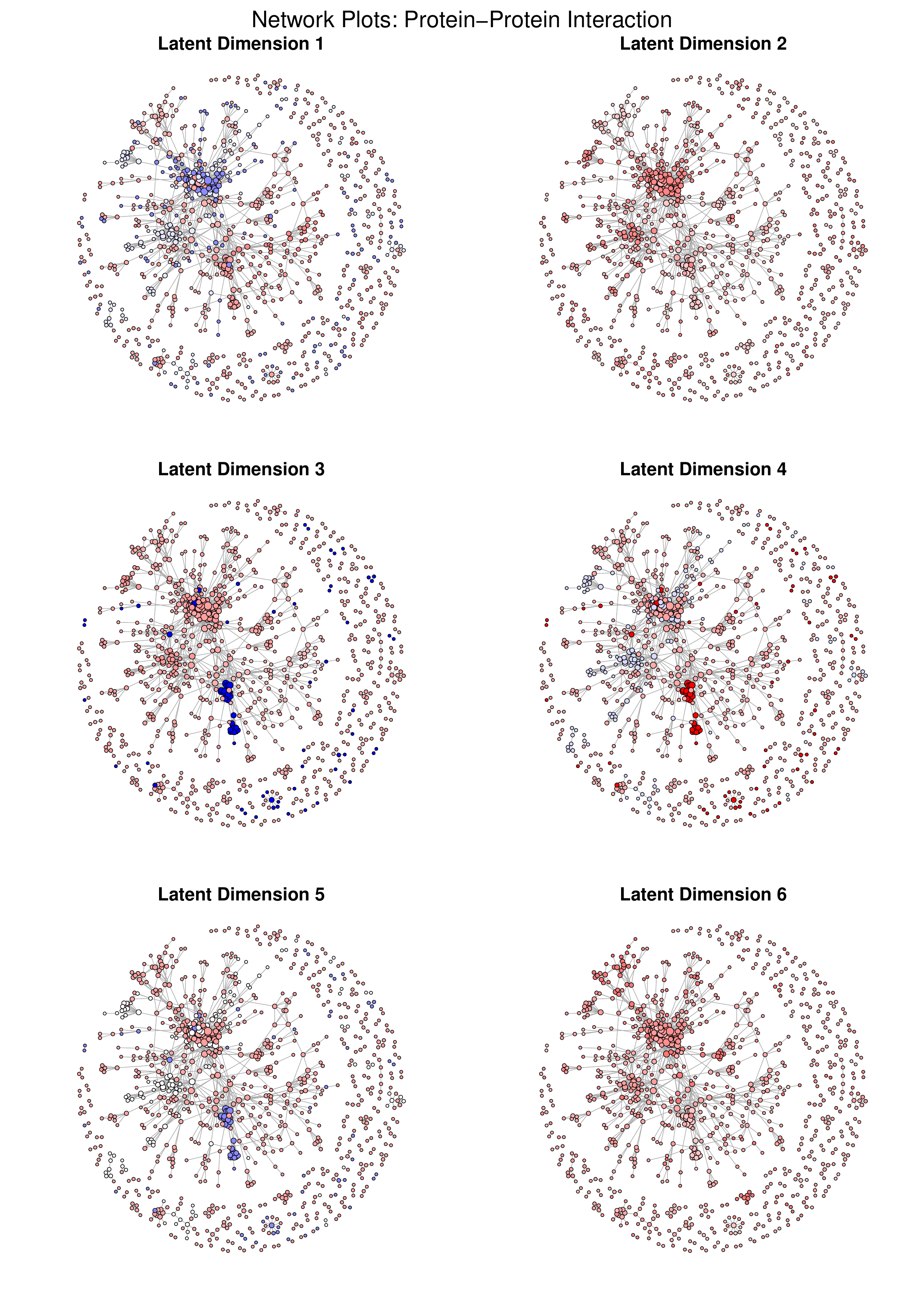}
\caption{Protein network analyzed in Section \ref{subsubsec:data_analysis:protein}. Vertex sizes are proportional to their degree. Vertex colors in plot $k$ correspond to the values of the $k$-th column of $Z\hat{B}^{\gLASSO}$.}
\label{fig:protein_LD1}
\end{figure}

\begin{figure}
\centering
\includegraphics[height=0.9\textheight]{./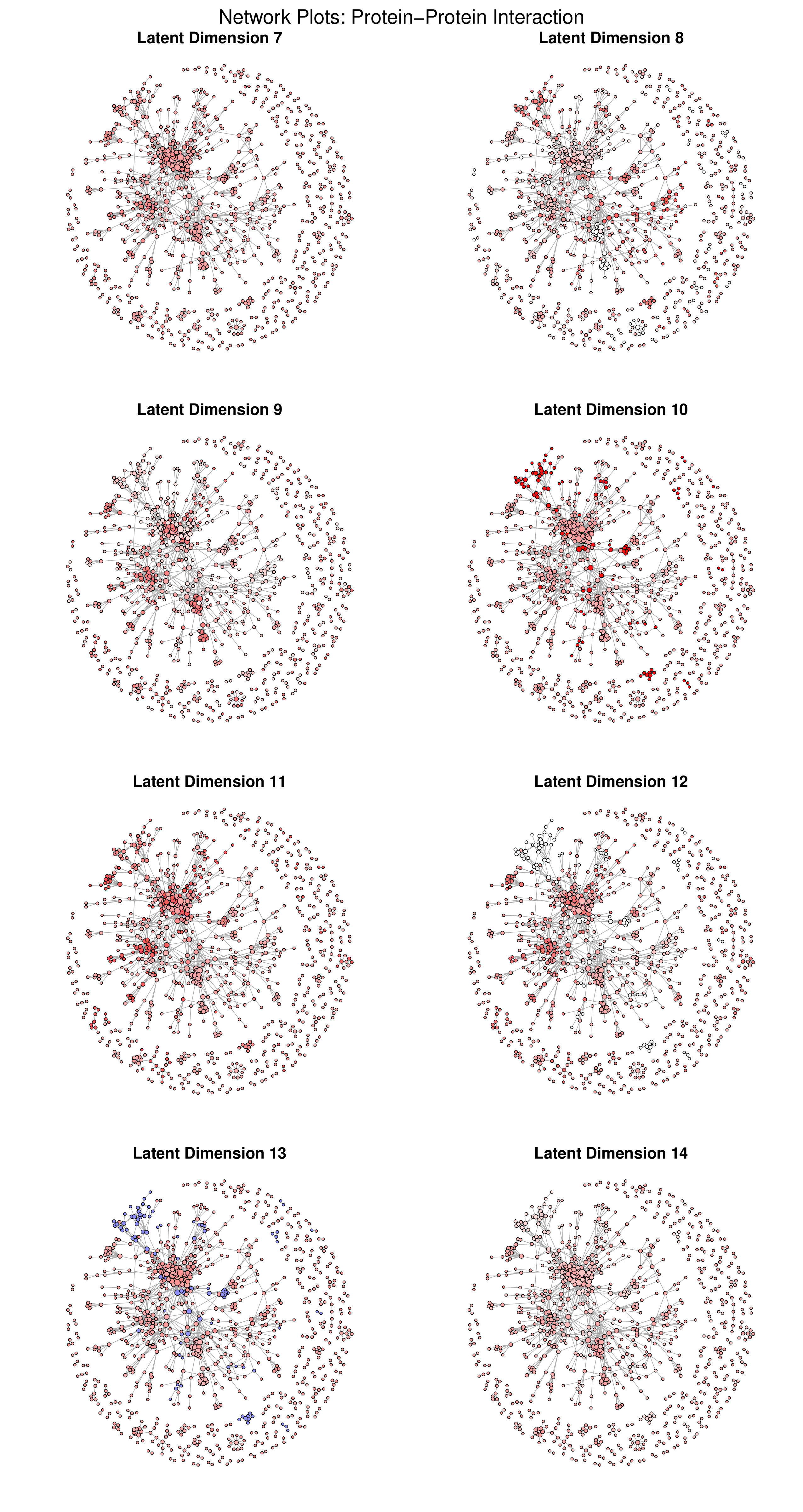}
\caption{Protein network analyzed in Section \ref{subsubsec:data_analysis:protein}. Vertex sizes are proportional to their degree. Vertex colors in plot $k$ correspond to the values of the $k$-th column of $Z\hat{B}^{\gLASSO}$.}
\label{fig:protein_LD2}
\end{figure}

\end{document}